\newcommand{\Clearn}{C}
\newcommand{\epsuppbd}{10^{-5}}
\newcommand{\edgelowbd}{1/20}
\title{Chow-Liu++: Optimal Prediction-Centric \\ Learning of Tree Ising Models}
\author{Enric Boix-Adser\`a\thanks{Massachusetts Institute of Technology. Department of EECS. Email: \texttt{eboix@mit.edu}. Supported by an NSF Graduate Fellowship, a Siebel Scholarship, and NSF TRIPODS award 1740751.} \and Guy Bresler\thanks{Massachusetts Institute of Technology. Department of EECS. Email: \texttt{guy@mit.edu}. Supported by MIT-IBM Watson AI Lab and NSF CAREER award CCF-1940205.} \and Frederic Koehler\thanks{Massachusetts Institute of Technology. Department of Mathematics. Email: \texttt{fkoehler@mit.edu}. This work was supported in part by NSF CAREER Award CCF-1453261, NSF Large CCF-1565235 and Ankur Moitra's ONR Young Investigator Award.} }
\DeclareMathOperator{\pa}{pa}
\DeclareMathOperator{\E}{E}
\DeclareMathOperator{\LCA}{\mathsf{LCA}}
\DeclareMathOperator{\sgn}{sgn}
\DeclareMathOperator{\Rad}{Rad}
\newcommand{\LCAi}[3]{\mathsf{LCA}^{#1}_{#2}(#3)}
\renewcommand{\d}[2]{\frac{d #1}{d #2}} %
\let\baraccent=\= %
\renewcommand{\=}[1]{\stackrel{#1}{=}} %
\providecommand{\RR}{\mathbb{R}}
\providecommand{\cV}{\mathcal{V}}
\providecommand{\cX}{\mathcal{X}}
\providecommand{\sT}{\mathsf{T}}
\providecommand{\sT}{\mathsf{T}}
\providecommand{\PP}{\mathbb{P}}
\providecommand{\EE}{\mathbb{E}}
\providecommand{\e}{\epsilon}
\providecommand{\eps}{\epsilon}
\providecommand{\al}{\alpha}
\mathchardef\mhyphen="2D %
\providecommand{\sm}{\setminus}
\newcommand{\interior}[1]{%
  {\kern0pt#1}^{\mathrm{o}}%
}
\newtheorem{theorem}{Theorem}[section]
\newtheorem{lemma}[theorem]{Lemma}
\newtheorem{claim}[theorem]{Claim}
\newtheorem{fact}[theorem]{Fact}
\newtheorem{proposition}[theorem]{Proposition}
\newtheorem{corollary}[theorem]{Corollary}
\theoremstyle{definition}
\newtheorem{definition}[theorem]{Definition}
\newtheorem{remark}[theorem]{Remark}
\newcommand{\loctv}{\mathsf{locTV}}
\newcommand{\loctvk}[1]{\mathsf{locTV}^{(#1)}}
\newcommand{\tv}{\mathsf{TV}}
\newcommand{\muh}[1]{{\hat{\mu}_{#1}}}
\newcommand{\mup}[1]{{\mu'_{#1}}}
\newcommand{\ah}{{\hat a}}
\newcommand{\bh}{{\hat b}}
\newcommand{\TCL}{\sT^{\mathsf{CL}}} %
\newcommand{\TCLprime}{\sT^{\mathsf{CL'}}} %
\newcommand{\ReduceToFerromagnetic}{\textup{\textsc{LearnModel}}}
\newcommand{\LearnFerromagneticModel}{\textup{\textsc{LearnFerroModel}}}
\newcommand{\LearnLowerboundedModel}{\textup{\textsc{LearnLwrBddModel}}}
\newcommand{\Desteinerize}{\textup{\textsc{Desteinerize}}}
\newcommand{\AdditiveMetricReconstruction}{\textsc{AdditiveMetricReconstruction}}
\newcommand{\TreeMetricReconstruction}{\textsc{TreeMetricReconstruction}}
\newcommand{\thetat}{\tilde{\theta}}
\newcommand{\Pt}{\tilde{P}}
\renewcommand{\d}{{\mathsf{d}}}
\renewcommand{\a}{{\mathsf{a}}}
\renewcommand{\e}{{\mathsf{e}}}
\newcommand{\csf}{{\mathsf{c}}}
\newcommand{\cT}{\mathcal{T}}
\newcommand{\mut}[1]{\tilde{\mu}_{{#1}}}
\newcommand{\dt}{\tilde{\d}}
\newcommand{\dtpre}{\dt_{\mathrm{pre}}}
\renewcommand{\dh}{\hat{\d}}
\newcommand{\Th}{\widehat{\mathsf{T}}}
\newcommand{\thetah}[1]{\hat{\theta}_{{#1}}}
\newcommand{\Ph}{\hat{P}}
\newcommand{\pih}{\hat{\pi}}
\newcommand{\pis}{\pi}
\newcommand{\T}{\mathsf{T}}
\newcommand{\Ts}{{\mathsf{T}}}
\newcommand{\thetas}[1]{{\theta_{#1}}}
\newcommand{\mus}[1]{{\mu_{#1}}}
\newcommand{\picl}{{\pi^{\mathsf{CL}}}}
\newcommand{\epslearning}{\eps\mbox{-tree-learning}}
\newcommand{\Vset}{\mathcal{V}}
\newcommand{\inv}{^{-1}}
\newcommand{\crad}{C_\mathrm{radius}}
\newcommand{\bout}{b^{\mathrm{out}}}
\newcommand{\bin}{b^{\mathrm{in}}}
\newcommand{\Lupp}{L_{\mathrm{upp}}}
\newcommand{\CLpp}{\textup{\textsf{Chow-Liu++}}}
\begin{document}

\maketitle
\begin{abstract}
    We consider the problem of learning a tree-structured Ising model from data, such that subsequent predictions computed using the model are accurate. Concretely, we aim to learn a model such that posteriors $P(X_i|X_S)$ for small sets of variables $S$ are accurate. 
    Since its introduction more than 50 years ago, the Chow-Liu algorithm, which efficiently computes the maximum likelihood tree, has been the benchmark algorithm for learning tree-structured graphical models. 
    A bound on the sample complexity of the Chow-Liu algorithm with respect to the prediction-centric local total variation loss was shown in \cite{breslerkarzand19}. While those results demonstrated that it is possible to learn a useful model even when recovering the true underlying graph is impossible, their bound depends on the maximum strength of interactions and thus does not achieve the information-theoretic optimum. In this paper, we introduce a new algorithm that carefully combines elements of the Chow-Liu algorithm with tree metric reconstruction methods to efficiently and optimally learn tree Ising models under a prediction-centric loss. Our algorithm is robust to model misspecification and adversarial corruptions. In contrast, we show that the celebrated Chow-Liu algorithm can be arbitrarily suboptimal. 
\end{abstract}

\section{Introduction}

\paragraph{}

Undirected graphical models (also known as Markov random fields) are a flexible and powerful paradigm for modeling high-dimensional data, where variables are associated to nodes in a graph and edges indicate interactions between the variables.  
One of the primary strengths of graphical models is that the graph structure often facilitates efficient inference computations \cite{wainwright2008graphical}, including computation of marginals, maximum a posteriori (MAP) assignments, and posteriors. Popular algorithms for these tasks include belief propagation and other message-passing algorithms, variational methods, and Markov chain Monte Carlo. Once a graphical model representing the data of interest has been determined, the prototypical use scenario entails observing values for a subset $S$ of the variables, computing posteriors of the form $P(X_i|X_S)$ for $i$ in some set $I$, and then forming predictions for the variables in $I$. 

In modern applications an approximate graphical model must typically be learned from data, and
it therefore makes sense to evaluate the learned model with respect to accuracy of posteriors $P(X_i|X_S)$ \cite{heinemann2014inferning,wainwright2006estimating,breslerkarzand19}. 
Aside from facilitating computation of posteriors and other inference tasks, the underlying structure of a graphical model is also sometimes of interest in its own right, for instance in certain biological or economic settings. In this context it makes sense to try to reconstruct the graph underlying the model assumed to be generating the data, and most papers on the topic have used the 0-1 loss (equal to zero if the learned graph $\widehat G$ is equal to the true graph $G$ and one otherwise). In this paper we will be primarily concerned with learning a model such that posteriors $P(X_i|X_S)$ are accurate, even if the structure is not, but we will also deduce results for reconstructing the graph.

The posteriors $P(X_i|X_S)$ of interest often have small conditioning sets $S$, for instance in a model capturing a user's preferences (like or dislike) for items in a recommendation system, the set $S$ would consist of items for which a user has already provided feedback.
It turns out that a \emph{local} version of total variation, as studied in \cite{rebeschini2015can,breslerkarzand19}, captures accuracy of posteriors $P(X_i|X_S)$ averaged over the values $X_S$. Specifically, let
\begin{equation}\label{eq:loctv}
    \loctvk{k}(P,Q) = \max_{|A|=k} \tv(P_A,Q_A)
\end{equation}
be the maximum over sets of variables $A$ of size $k$ of the total variation between the marginals $P_A$ and $Q_A$. As noted in Section~\ref{sec:predictionLearning}, this quantity controls accuracy of posteriors $P(X_i|X_S)$ for $|S|\leq k-1$.

Instead of learning a model close in local total variation, one might use (global) total variation as done in the recent papers \cite{bhattacharyya2020near,daskalakis2020tree} which determined the optimal sample complexity for learning tree-structured models to within small total variation. Total variation distance also translates to a guarantee on the accuracy of posteriors $P(X_i|X_S)$, but does not exploit the small size of $S$. Indeed, as we show in Section~\ref{sec:TVlower}, learning to within small total variation has a high sample complexity of at least $\Omega(n\log n)$ for trees on $n$ nodes, and does not yield meaningful implications in the high-dimensional setting of modern interest (where number of samples is much smaller than number of variables). Furthermore, both \cite{bhattacharyya2020near} and \cite{daskalakis2020tree} are based on the Chow-Liu algorithm, and as we will show in this paper, the Chow-Liu algorithm can be arbitrarily suboptimal with respect to local total variation. Thus, accuracy with respect to local versus global total variation results in a completely different learning problem with distinct algorithmic considerations.

There is a large literature on structure learning  (see \cite{ravikumar2010high,vuffray2016interaction,bresler2015efficiently,klivans2017learning,wu2019sparse,bresler2019learning,kelner2019learning,goel2019learning,shah2021learning,} and references therein), but prediction-centric learning remains poorly understood.
In this paper, we focus on tree Ising models. In this case, the minimax optimal algorithm for structure learning is the well-known Chow-Liu algorithm introduced more than 50 years ago (\cite{ChowL,breslerkarzand19,tan2011large}), which efficiently computes the maximum likelihood tree. 
Learning tree Ising models with respect to the prediction-centric $\loctv$ was studied in \cite{breslerkarzand19}, where the main take-away was that it is possible to learn a good model even when there is insufficient data to recover the underlying graph. They proved upper and lower bounds on the sample complexity, but left a sizable gap. Their upper bounds were based on analyzing the Chow-Liu algorithm, and left open several core questions:
Can the optimal sample complexity for learning trees under the $\loctv$ loss be achieved by an efficient algorithm? In particular, the guarantee in \cite{breslerkarzand19} depends on the maximum edge strength of the model generating the data -- is this necessary? Finally, \cite{breslerkarzand19} assumed that the observed data was from a tree Ising model. Can one give sharp learning guarantees under model misspecification, where the learned model is a tree Ising model but the model generating the samples need not be? More broadly, can one robustly learn from data subject to a variety of noise processes, including from adversarially corrupted data? 
An initial foray towards learning with noise was made in \cite{nikolakakis2021predictive}, but their approach is based on modifying the techniques of \cite{breslerkarzand19}, which as discussed below cannot yield sharp results.

\subsection{Our Contribution} In this paper we give an algorithm, \CLpp, that is \emph{simultaneously optimal} for both the prediction-centric $\loctv$ loss defined in \eqref{eq:loctv} and the structure learning $0$-$1$ loss, and that is also robust to model misspecification and adversarial corruptions. Our algorithm takes as input the $n^2$ pairwise correlations between variables and has $O(n^2)$ runtime, linear in the input.

\begin{theorem}[Informal statement of Theorem~\ref{t:CL++}]
If $P$ is the distribution of a tree Ising model $(\Ts, \thetas{})$ on $n$ nodes with correlations $\mu_{ij} = \EE_P[X_i X_j]$, then when given as input approximate correlations $\mut{ij}$ satisfying $|\mut{ij} - \mu_{ij}|\leq \eps$ the
{\CLpp } algorithm outputs a tree Ising model $(\Th,\thetah{})$ with distribution $\Ph$ such that
$
\loctvk2(\Ph,P)\leq C\eps
$. 
\end{theorem}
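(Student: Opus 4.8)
The plan is to first reduce the $\loctvk{2}$ guarantee to a statement about pairwise correlations, and then to recover those correlations by exploiting that a tree Ising model's correlations form an additive tree metric. Since $P$ and $\Ph$ are zero-field tree Ising models, the marginal of either on a pair $\{X_i,X_j\}$ is $\tfrac{1}{4}(1+\mu_{ij}x_ix_j)$ (with $\hat\mu_{ij}:=\EE_{\Ph}[X_iX_j]$), so $\loctvk{2}(\Ph,P)=\tfrac{1}{2}\max_{i\ne j}|\hat\mu_{ij}-\mu_{ij}|$, and it suffices to output a tree Ising model with $|\hat\mu_{ij}-\mu_{ij}|=O(\eps)$ for all $i,j$. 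The reason this is achievable with error independent of the maximum interaction strength is the asymmetry of this requirement: a correlation of magnitude $\Theta(1)$ must be matched to small \emph{relative} error $O(\eps/|\mu_{ij}|)$, whereas one of magnitude $o(\eps)$ need only be matched by something that is itself small. Because correlations multiply along tree paths, $\d_{ij}:=-\log|\mu_{ij}|=\sum_{e\in\mathrm{path}(i,j)}(-\log|\mu_e|)$ is an additive tree metric, and writing $\dt_{ij}:=-\log|\mut{ij}|$ the hypothesis $|\mut{ij}-\mu_{ij}|\le\eps$ gives $|\dt_{ij}-\d_{ij}|=O(\eps/|\mu_{ij}|)$, which is $O(\eps)$ \emph{exactly on the pairs whose correlation is bounded below}. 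So we have an additively $O(\eps)$-accurate view of the tree metric on balls of constant radius, and essentially nothing outside them.

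The algorithm proceeds through a chain of reductions. A sign-gauge reduction (\LearnModel\ calling \LearnFerromagneticModel), flipping each $X_v$ by the product of correlation signs along its path to a fixed root, removes all negative correlations while preserving local-TV behavior, so we may assume the model is ferromagnetic. Next, \Stringify\ subdivides each edge into a string of one or more sub-edges --- which does not alter the induced distribution on the original variables --- chosen, under the nondegeneracy assumption $|\mu_e|\le 1-\epsuppbd$, so that every sub-edge correlation lands in the fixed band $[\edgelowbd,\,1-\epsuppbd]$; this produces a \emph{lower-bounded} instance (\LearnLowerboundedModel), whose tree metric has all edge lengths in a fixed interval. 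On this instance the algorithm forms the $\dt$-metric, runs a four-point / \AdditiveMetricReconstruction\ routine on each radius-$H$ neighborhood --- each involving only a constant number of merges, so the classical constant-factor blowup keeps every \emph{per-edge} length accurate to $O(\eps)$ and determines the local topology, using the $\edgelowbd$ lower bound on edge correlations --- glues the overlapping neighborhoods into one global tree (\TreeMetricReconstruction), using a Chow--Liu-style maximum-weight spanning tree to organize the pieces, and calls \Desteinerize\ to contract the subdivision vertices and snap each reconstruction-artifact Steiner vertex onto the genuine variable it stands for, which the analysis places within metric distance $\localdesteinerdist$. Reading off $\hat\mu_e=e^{-\dh_e}$ (clipped to the same band) and undoing the gauge yields $(\Th,\thetah{})$.

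To conclude, one bounds $|\hat\mu_{ij}-\mu_{ij}|$ for every pair. Because the topology is correct for all edges of length $\gtrsim\eps$ and each output edge obeys $|\dh_e-\d_e|=O(\eps)$, the $\Th$-path between $i$ and $j$ matches the true path and $\hat\mu_{ij}=\mu_{ij}\,e^{\pm O(m_{ij}\eps)}$, where $m_{ij}$ is its number of edges. If $m_{ij}\eps\le 1$ then $\hat\mu_{ij}=\mu_{ij}(1\pm O(m_{ij}\eps))$, and since all edge correlations are $\le 1-\epsuppbd$ we get $|\mu_{ij}|\le(1-\epsuppbd)^{m_{ij}}$, hence $m_{ij}|\mu_{ij}|\le\max_m m(1-\epsuppbd)^m=O(1/\epsuppbd)$ and the error is $O(\eps/\epsuppbd)=O(\eps)$; if instead $m_{ij}\eps>1$ then $|\mu_{ij}|$ and $|\hat\mu_{ij}|$ are both at most $(1-\epsuppbd)^{m_{ij}}\le e^{-\Omega(1/\eps)}\ll\eps$, so the error is $O(\eps)$ again. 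Substituting into $\loctvk{2}(\Ph,P)=\tfrac{1}{2}\max_{i\ne j}|\hat\mu_{ij}-\mu_{ij}|$ gives $\loctvk{2}(\Ph,P)\le C\eps$.

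The step I expect to be the main obstacle is the reconstruct-and-glue procedure in the lower-bounded case. A single global run of additive-metric reconstruction amplifies the $\ell_\infty$ error by a factor exponential in the number of merges, which for a tree of large diameter would destroy the $O(\eps)$ accuracy; the key is to reconstruct only within constant-radius neighborhoods, so that the blowup is a fixed constant, and then stitch the local pieces together so that the global topology is consistent and per-edge accuracy is preserved. Making this go through requires choosing the radius $H$, the neighborhood overlap, and the de-Steinerization scale $\localdesteinerdist$ so that each neighborhood's topology is identifiable from $O(\eps)$-accurate short distances, every short-distance pair is pinned down by boundedly many overlapping reconstructions, and snapping a Steiner vertex never moves a branch point past a genuine variable. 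Robustness to misspecification and adversarial corruption is then essentially automatic, as every operation used --- thresholding, logarithms of bounded-below correlations, four-point tests, maximum-weight spanning trees, and nearest-neighbor snapping --- is a constant-Lipschitz function of the input correlation matrix.
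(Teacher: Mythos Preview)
Your proposal has a genuine gap in the reduction chain. The \Stringify\ step, as you describe it, subdivides ``each edge'' of the model so that sub-edge correlations land in $[\edgelowbd,\,1-\epsuppbd]$. But the algorithm never knows the true tree $\Ts$ or its edge set: the only input is the matrix $\mut{}$. So there is nothing to subdivide. Even read as an analysis device (``pretend the tree has extra latent nodes''), it does not help, because the subdivided model has Steiner vertices on which you have \emph{no} correlation data, so the $O(\eps)$-accurate local distances you need for the four-point tests are unavailable precisely at those nodes. More seriously, your entire downstream argument hinges on the ``nondegeneracy assumption $|\mu_e|\le 1-\epsuppbd$'': the decay bound $|\mu_{ij}|\le(1-\epsuppbd)^{m_{ij}}$ and hence the $m_{ij}|\mu_{ij}|=O(1)$ step both use it. The theorem has no such assumption --- edge correlations may be arbitrarily close to $1$ --- and in fact the paper shows (Section~\ref{sec:CLfails}) that this strong-edge regime is exactly where na\"ive approaches break.

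The paper's route is essentially the opposite of yours. It does \emph{not} try to upper-bound edge strengths; instead it uses Chow--Liu only to locate the \emph{weak} edges (those with $\mut{e}\le 1/10$) and proves a structural lemma that removing these edges from $\TCL$ yields vertex groups $V_1,\dots,V_m$ that are each connected in the unknown $\Ts$, with all $\Ts$-edges inside a group having correlation $\ge 1/20$. Tree metric reconstruction is then run separately on each $V_i$ (lower-bounded but with \emph{no} upper bound on edge strengths), and the guarantee proved is $|\dh(u,v)-\d(u,v)|\le C(\d(u,v)/L+1)\eps$ --- additive for close pairs, multiplicative for far ones --- which is converted directly to $|\muh{uv}-\mu_{uv}|=O(\eps)$ by a short calculus argument (Lemma~\ref{lem:approx-suffices}) that never decomposes the error edge-by-edge. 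The groups are then stitched along the weak Chow--Liu edges, and a composition argument shows errors across groups attenuate geometrically because each crossing edge has correlation $\le 1/10$. Your per-edge error accumulation with a $(1-\epsuppbd)^{m}$ decay is replaced by this weak-edge attenuation plus the mixed additive/multiplicative metric guarantee; neither step needs any ceiling on $|\mu_e|$, and neither assumes the reconstructed topology matches $\Ts$.
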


\begin{corollary} Given data from a distribution $P$, {\CLpp} learns a tree Ising model with distribution $\Ph$ such that $\loctvk k(\Ph,P)$ is within a constant factor of optimal. Moreover
\begin{enumerate}
    \item[(i)] {\CLpp} is robust to model misspecification, or equivalently, adversarial corruptions of a fraction of the data.
    \item[(ii)] In the case that $P$ is a tree Ising model, then {\CLpp} returns the correct tree with optimal number of samples up to a constant factor.
\end{enumerate}

\end{corollary}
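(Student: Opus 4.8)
The plan is to obtain the corollary as a consequence of the deterministic guarantee in Theorem~\ref{t:CL++}, supplying three ingredients: a concentration bound converting samples into $\eps$-accurate correlations, the matching information-theoretic lower bound, and two short reductions for parts~(i) and~(ii). For the main claim, from $N$ i.i.d.\ samples $X^{(1)},\dots,X^{(N)}\sim P$ form the empirical correlations $\mut{ij}=\frac1N\sum_{t=1}^N X_i^{(t)}X_j^{(t)}$. Since each $X_i^{(t)}X_j^{(t)}\in\{\pm1\}$, Hoeffding's inequality and a union bound over the $\binom n2$ pairs give $\max_{i,j}|\mut{ij}-\mu_{ij}|\le\eps$ with probability $\ge1-\delta$ as soon as $N\gtrsim\eps^{-2}\log(n/\delta)$. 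Feeding $\mut{}$ to \CLpp{} and invoking Theorem~\ref{t:CL++} (in its general-$k$ form, whose constant $C_k$ depends only on $k$, which we treat as fixed; note $\tv$ of marginals is monotone under enlarging the index set, so $\loctvk{2}\le\loctvk{k}$) yields $\loctvk k(\Ph,P)\le C_k\eps$. Hence $N=O(\eps_0^{-2}\log(n/\delta))$ samples suffice to reach loss $\eps_0$, crucially with \emph{no dependence on the edge strengths of $P$}, which is the whole point relative to the Chow-Liu analysis of \cite{breslerkarzand19}.

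To justify "within a constant factor of optimal" I pair the above with the information-theoretic lower bound (the hard family of tree Ising models in which a vertex has two almost-equally-correlated candidate neighbors whose identity cannot be resolved, cf.\ \cite{breslerkarzand19}): any estimator using $N$ samples must incur $\loctvk 2$-loss $\Omega(\sqrt{\log n/N})$ on some tree Ising model, which our upper bound meets up to a constant. The one delicate point in making this literal is reconciling the $\log n$ produced by the union bound with the lower bound: I would isolate a per-pair robustness statement showing that the marginal $\Ph_{\{i,j\}}$ output by \CLpp{} depends only on boundedly many input correlations (the ones along the relevant path in the reconstructed metric), so that a per-pair failure argument, rather than a black-box bound requiring all $\binom n2$ correlations simultaneously, controls the $\loctvk 2$ error. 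This, together with verifying the general-$k$ extension of Theorem~\ref{t:CL++}, is where the real work sits; everything else is routine.

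\textbf{Part (i): misspecification $\equiv$ corruption.} For arbitrary $P$, let $P^\star$ be a tree Ising model minimizing $\loctvk 2(P,\cdot)$ and set $\mathrm{OPT}_k=\loctvk k(P,P^\star)$. Because $X_iX_j\in\{\pm1\}$, $|\mu_{ij}(P)-\mu_{ij}(P^\star)|\le 2\,\tv(P_{\{i,j\}},P^\star_{\{i,j\}})\le 2\,\mathrm{OPT}_k$ for every $i,j$, so the hypothesis of Theorem~\ref{t:CL++} holds \emph{with respect to $P^\star$} with $\eps = 2\,\mathrm{OPT}_k+(\text{sampling error})$; the triangle inequality then gives $\loctvk k(\Ph,P)\le \loctvk k(\Ph,P^\star)+\loctvk k(P^\star,P)\le (2C_k+1)\,\mathrm{OPT}_k + O\big(\sqrt{\log(n/\delta)/N}\big)$, i.e.\ \CLpp{} is competitive with the best tree approximation of $P$. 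The equivalence with adversarial corruption is then immediate: replacing an $\eta$-fraction of the $N$ samples arbitrarily shifts each $\mut{ij}$ by at most $2\eta$, so it is absorbed into $\eps$ exactly as a misspecification at level $O(\eta)$.

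\textbf{Part (ii): exact structure recovery.} If $P$ is genuinely a tree Ising model with edge correlations bounded away from $0$ and $\pm1$, choose $\eps$ to be a sufficiently small constant multiple of the minimum gap, in the induced additive tree metric, between a true-edge distance and any non-edge distance. For $\eps$ below this threshold the tree-metric reconstruction stage inside \CLpp{} returns the exact topology --- this is the standard stability of additive (four-point) metric reconstruction once every pairwise distance is estimated to within half the shortest edge --- so $\Th=\Ts$, and the parameters are then pinned down to within $O(\eps)$ as well. Since such an $\eps$ is a constant depending only on the edge strengths, $N=O(\eps^{-2}\log(n/\delta))=O(\log n)$ samples suffice, matching the classical $\Omega(\log n)$ lower bound for tree-structure learning. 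The only thing to check is that the reconstruction subroutine is stable under a \emph{worst-case} $\eps$-perturbation of its input metric, not merely accurate on average; this holds because its decisions reduce to sign tests of quartet distance differences that are bounded below by the gap.
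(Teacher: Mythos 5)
Your treatment of the main claim and of part (i) is essentially the paper's own route: empirical correlations plus Hoeffding and a union bound, then the deterministic guarantee of Theorem~\ref{t:CL++} applied with the best tree approximation $P^\star$ as the reference model, then the triangle inequality (this is exactly Claim~\ref{claim:epslearningrelationtomodelmisspecification} and Corollary~\ref{c:oracle2}). Two small remarks there. First, the passage from the $\loctvk2$ guarantee to $\loctvk k$ is not "real work" you need to do from scratch: the paper invokes the inequality $\loctvk k(P,Q)\le k2^k\loctvk2(P,Q)$ for pairs of \emph{tree} models from \cite{breslerkarzand19} (equation \eqref{eqn:loctv-k-comparison}), applied to $\Ph$ and $P^\star$, and then adds $\Delta$ by the triangle inequality; note that the monotonicity you cite, $\loctvk2\le\loctvk k$, goes in the useless direction. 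Second, your worry about "reconciling the $\log n$ from the union bound with the lower bound" via a per-pair locality argument is a red herring: the minimax lower bound for $\loctvk2$ over trees already carries the $\log n$ factor, so the union-bound rate matches it directly and no per-pair robustness statement is needed (nor is one true in any useful form, since the reconstructed path between $i$ and $j$ can involve many input correlations).

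Part (ii) has a genuine gap. You argue that for $\eps$ below a constant threshold "the tree-metric reconstruction stage inside \CLpp{} returns the exact topology" by appeal to quartet/four-point stability. But \CLpp{} is not a pure tree-metric reconstruction algorithm: it first partitions the vertices using the weak edges $W$ of the Chow-Liu tree and only runs metric reconstruction \emph{within} each component; the cross-component edges of the output are the Chow-Liu edges themselves, and the structural lemma (Lemma~\ref{lem:struct}) only guarantees these go between the correct \emph{sets} $V_i$, not that they are the correct edges of $\Ts$ — indeed an $\eps$-perturbation can make a non-edge pair $(b_i',a_{i+1}')$ beat the true edge $(b_i,a_{i+1})$ in the max-weight spanning tree whenever $\eps$ exceeds roughly $\beta$ times the edge correlation. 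So exactness of the within-component reconstruction does not give $\Th=\Ts$, and the mechanism you describe (sign tests of quartet differences) is not what $\AdditiveMetricReconstruction$ plus $\Desteinerize$ actually does. The paper avoids all of this by proving a purely metric statement (Theorem~\ref{thm:prediction-to-structure}): if every edge correlation of $P$ lies in $[\alpha,1-\beta]$, then \emph{any} tree Ising model within $\loctvk3$ distance $\alpha\beta/8$ of $P$ must have the same tree. Structure recovery then follows as a black-box consequence of the $\loctvk3$ accuracy guarantee with $m=O(\log(n/\delta)/\alpha^2\beta^2)$ samples, with no need to reason about the algorithm's internals. To repair your part (ii) you should either prove such a "closeness in $\loctvk3$ implies same structure" statement, or substantially strengthen your analysis of the Chow-Liu stitching step.
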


These results, as well as others, are stated more formally in Section~\ref{sec:contrib}.

\subsection{Techniques and Main Ideas}\label{ssec:techniques}

Our {\CLpp } algorithm combines the Chow-Liu algorithm with tree metric reconstruction methods, together with new analysis for both. Our main insight is that the Chow-Liu algorithm and tree metric algorithms work in complementary ranges of interaction strengths -- by carefully combining them we obtain an algorithm that works regardless of the minimum and maximum edge strengths. Somewhat counterintuitively, we use the Chow-Liu algorithm indirectly to help us to partition the problem instance and only use actual edges from the algorithm that may well be incorrect.  Below, we highlight some of the main challenges and the ideas we introduce to overcome them. We focus on the case of $k=2$ in the loss $\loctvk k$, since the result for general $k > 2$ can be obtained by means of an inequality for tree-structured distributions proved in \cite{breslerkarzand19} that bounds the $\loctvk{k}$ distance in terms of the $\loctvk{2}$ distance.

\paragraph{Challenge 1: Failure of Chow-Liu.}
The main difficulty in proving our result is that the classical Chow-Liu algorithm, which is well-known to be optimal for structure learning and was recently shown to be optimal for total variation \cite{bhattacharyya2020near,daskalakis2020tree}, does not work for learning to within $\loctv$. Concretely, we show that the Chow-Liu algorithm is not robust to model misspecification -- i.e., if the model is not a tree-structured model, but is instead only \emph{close} in $\loctv$ to a tree model -- then Chow-Liu can fail. We prove this as Theorem~\ref{thm:chowliufailsmodelmisspecification}.

\begin{theorem}[Informal statement of Theorem~\ref{thm:chowliufailsmodelmisspecification}]
Chow-Liu fails with model misspecification.
\end{theorem}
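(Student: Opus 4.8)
The plan is to exhibit an explicit family of distributions $P$ that are arbitrarily close in $\loctvk{2}$ to a genuine tree Ising model, and yet on which the (population) Chow--Liu algorithm outputs a tree model $\Ph$ with $\loctvk{2}(\Ph,P) = \Omega(1)$. The mechanism I want to exploit is that Chow--Liu reproduces the empirical pairwise marginals \emph{exactly} on the edges of the tree it selects, so that a small error on a strong edge correlation gets amplified into a constant-size error on a long-range correlation once it is propagated multiplicatively through the tree-factorization formula. This is precisely the regime (strong edges) in which the sample-complexity bound of \cite{breslerkarzand19} degrades.

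Concretely, the construction I would use: fix $\rho$ close to $1$, set $d = \lceil 1/(1-\rho)\rceil$ and $\alpha = 1-\rho$, and take the path on nodes $v_0,v_1,\dots,v_d$. Let $Q$ be the tree Ising model on this path with zero external field and every edge correlation equal to $\rho$, let $U$ be the uniform distribution on $\{\pm1\}^{d+1}$, and define the (misspecified) data distribution $P = (1-\alpha)Q + \alpha U$. Since $P$ is obtained from $Q$ by mixing in weight $\alpha$ of another distribution, $\loctvk{2}(P,Q)\le \alpha/2 = (1-\rho)/2$, which tends to $0$; thus there is a bona fide tree Ising model within $\loctvk{2}$-distance $O(1-\rho)$ of $P$, and $P$ is manifestly a valid distribution (the reason to use a mixture rather than directly perturbing the correlations of $Q$ is that realizability of the perturbed correlation matrix then comes for free).

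Next I would work out what Chow--Liu does on $P$. All pairwise marginals of $P$ are symmetric, with $\EE_P[X_{v_i}X_{v_j}] = (1-\alpha)\rho^{|i-j|}$, a strictly decreasing function of the path-distance $|i-j|$; since the mutual information of a symmetric binary pair is increasing in $|\EE[X_iX_j]|$, the unique maximum-weight spanning tree is the path itself, and fitting the maximum-likelihood tree model with this structure reproduces the edge marginals, i.e.\ yields edge correlation $\bar\rho := (1-\alpha)\rho$ (and uniform node marginals, so $\Ph$ is a zero-field tree Ising model). Hence $\EE_{\Ph}[X_{v_0}X_{v_d}] = \bar\rho^{\,d} = (1-\alpha)^d\rho^d$, whereas the true value is $(1-\alpha)\rho^d$ — Chow--Liu loses a factor $(1-\alpha)$ at every edge, while the mixture retains the long-range correlation. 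By convexity of $t\mapsto t^d$ these differ, and
\[
\loctvk{2}(\Ph,P)\ \ge\ \tfrac12\bigl|(1-\alpha)\rho^d - (1-\alpha)^d\rho^d\bigr|\ =\ \tfrac12\,\rho^{\,d+1}\bigl(1-\rho^{\,d-1}\bigr),
\]
which, with $d = \lceil 1/(1-\rho)\rceil$, converges to $\tfrac{1}{2e}(1-\tfrac1e) > 0$ as $\rho\to 1$. Comparing with $\loctvk{2}(P,Q) = O(1-\rho)$, the suboptimality ratio is $\Omega(1/(1-\rho))$, which is unbounded; this gives both non-robustness to model misspecification and arbitrarily bad $\loctvk{2}$-suboptimality. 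I would also note that the same family defeats the finite-sample Chow--Liu (empirical correlations are within $O(1/\sqrt N)$ of the population values, so the argument is unchanged for $N$ large), and that here Chow--Liu actually selects the "correct" tree structure, so the failure is purely one of prediction accuracy.

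The routine parts are the verification that the maximum-likelihood tree parameters are the empirical edge marginals and that mutual information is monotone in $|\EE[X_iX_j]|$ for symmetric binary pairs. The conceptually essential step — and the part that takes thought — is identifying the right regime: the failure is \emph{not} caused by Chow--Liu choosing the wrong spanning tree (near-ties among spanning trees turn out not to be damaging), but by the combination of a strong edge correlation ($\rho\to 1$) with a path long enough ($d\asymp 1/(1-\rho)$) that an $O(1-\rho)$ edge discrepancy, once forced to propagate multiplicatively along the whole path, becomes an $\Omega(1)$ endpoint discrepancy. This is exactly why {\CLpp} cannot rely on Chow--Liu for strong edges and must handle them via tree-metric reconstruction instead.
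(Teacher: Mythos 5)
Your construction is internally consistent and does exhibit a real deficiency of the full Chow--Liu pipeline (structure selection followed by maximum-likelihood parameter fitting): mixing $\alpha = 1-\rho$ of the uniform distribution into a length-$d$ path with $d \asymp 1/(1-\rho)$ indeed makes the ML edge correlations $(1-\alpha)\rho$ compound to $((1-\alpha)\rho)^d$ while the true endpoint correlation stays at $(1-\alpha)\rho^d$, and the gap tends to $e^{-1}(1-e^{-1})>0$. However, this proves a strictly weaker statement than Theorem~\ref{thm:chowliufailsmodelmisspecification}. Part (ii) of that theorem asserts that \emph{every} tree-structured distribution $Q$ on the Chow--Liu tree $\TCL$ satisfies $\loctvk2(Q,P)\geq 0.05$. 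In your example the Chow--Liu tree is the \emph{correct} path, and the unperturbed model $Q$ itself is a $\TCL$-structured distribution within $\loctvk2$-distance $O(1-\rho)$ of $P$. So the failure you exhibit is entirely repairable by a different parameter assignment on the same tree, and part (ii) is false for your $P$. The paper explicitly emphasizes that its failure mode is stronger: not only is the specific Chow--Liu output bad, but no reweighting of the Chow--Liu tree can be good.

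The missing idea is a construction that forces Chow--Liu into a \emph{topologically} wrong tree. The paper takes two parallel chains $X_1,\dots,X_n$ and $Y_1,\dots,Y_n$ with $\EE[X_iY_i]=e^{-2\delta}$ nearly $1$ and consecutive correlations $e^{-\delta}$ slightly larger; the max-weight spanning tree then consists of the two chains joined by a \emph{single} cross edge $(X_i,Y_j)$, so in any model on that tree the correlation between $X_n$ and $Y_n$ must be routed through a path of length $\Omega(n)$ and decays to essentially $0$, whereas the true correlation is $e^{-2\delta}\approx 1$. This also means your closing diagnosis --- that the failure ``is not caused by Chow--Liu choosing the wrong spanning tree'' --- is the opposite of the mechanism the theorem is actually about: the wrong spanning tree is precisely the unfixable error, and it is why \CLpp{} must discard the strong Chow--Liu edges and rebuild those components via tree-metric reconstruction rather than merely refit parameters.
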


To prove this theorem we construct a distribution that is close to a tree Ising model in \textit{local} total variation, such that even when given population (infinite sample) values for the correlations
the Chow-Liu algorithm makes local errors that accumulate at global scales. This issue does not arise in the analysis of \cite{bhattacharyya2020near,daskalakis2020tree}, since they assume a model is $\eps$-close in total variation to a tree model, and thus the input distribution is extremely close to a tree model in a global sense.
Furthermore, in converting this to a finite sample result they assume access to at least $n\log n$ samples, which results in extremely small local estimation errors that do not appreciably accumulate.

\paragraph{Main Idea 1: Structural lemma for Chow-Liu.} 
When there is insufficient data for the Chow-Liu algorithm (or any other algorithm) to reconstruct the correct tree, it turns out that there is still a precise sense in which the output of the Chow-Liu algorithm \emph{resembles} the correct tree. We will momentarily describe the notion of approximation upon which our structural lemma is based, but we first explain specifically which issue it addresses.

Suppose that all edge correlations of the model generating the samples lie in the interval $[\alpha, 1-\beta]$. 
As shown in~\cite{breslerkarzand19}, learning the structure of tree Ising models requires more samples as 
$\alpha$ or $\beta$ get smaller, and no algorithm can reconstruct the exact structure of the tree given $O((\al\beta)^{-2}\log n)$ samples. Turning to the $\loctv$ loss, our counterexample in Theorem~\ref{thm:chowliufailsmodelmisspecification} shows that strong edges remain problematic for Chow-Liu. 
However, surprisingly, it turns out that Chow-Liu is useful in a precise sense, even in the presence of weak edges that preclude structure learning, as shown by our structural lemma. See Section~\ref{ssec:chowliuoverview} for more details.  

\begin{lemma}[Structural Lemma for Chow-Liu]\label{lem:struct}
Let $\TCL$ be the tree returned by the Chow-Liu algorithm and $W$ be the subset of weak edges with empirical correlation $\leq 1/10$. Let $V_1,\ldots,V_m$ denote the connected components of vertices in $\TCL$ upon removal of edges in $W$. Then
\begin{enumerate}
    \item[(i)] The sets of vertices $V_1,\ldots,V_m$ are each connected within the true tree $\Ts$, and
    \item[(ii)] If we contract the vertex sets $V_1,\ldots,V_m$ in $\TCL$, then we obtain the same tree as when we contract these vertex sets in $\Ts$, as long as all edges have correlation $\Omega(\eps)$.
\end{enumerate}
\end{lemma}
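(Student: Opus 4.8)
The plan is to reduce both parts to combinatorial statements about spanning trees, using two structural inputs. First, since the algorithm is fed the pairwise correlations, the Chow-Liu tree $\TCL$ is a maximum-weight spanning tree of the complete graph on $[n]$ for the edge weights $|\tilde\mu_{ij}|$ (empirical mutual informations are monotone increasing in $|\tilde\mu_{ij}|$), so I can invoke the cut and cycle optimality properties of maximum spanning trees. Second, in a tree Ising model the correlations factor multiplicatively along paths, $\mu_{ij}=\prod_{e\in\mathrm{path}_{\Ts}(i,j)}\mu_e$; hence $-\log|\mu_{ij}|$ is an additive metric on $\Ts$, and in particular every vertex $u$ on the $\Ts$-path between $i$ and $j$ satisfies $|\mu_{iu}|\ge|\mu_{ij}|$ and $|\mu_{uj}|\ge|\mu_{ij}|$: intermediate vertices of a path are at least as correlated to its endpoints as the endpoints are to each other.

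For part (i) I would first observe that $V_1,\dots,V_m$ are exactly the connected components of $G:=\{(i,j):|\tilde\mu_{ij}|>1/10\}$. Strong edges of $\TCL$ are $G$-edges, so the components of $\TCL\setminus W$ refine those of $G$; conversely, if two vertices lay in the same $G$-component but in different components of $\TCL\setminus W$, the $\TCL$-path between them would contain a weak edge $e$, which by the maximum-spanning-tree cut property is a heaviest edge across the cut it induces in $\TCL$ --- contradicting that some $G$-edge of weight $>1/10$ crosses the same cut. Now, to see each $V_\ell$ induces a connected subtree of $\Ts$ it is enough (the $G$-edges inside $V_\ell$ already connect it) to show that for every $G$-edge $(u,v)$ with $u,v\in V_\ell$ the $\Ts$-path between $u$ and $v$ stays inside $V_\ell$. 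This is exactly the multiplicative structure: any vertex $w$ on that path has $|\mu_{uw}|\ge|\mu_{uv}|>1/10-\eps$, so $w$ is (up to the $\eps$ discrepancy between $\mu$ and $\tilde\mu$) strongly correlated with $u$ and hence lies in the same component. The one delicate point is the $O(\eps)$ slack at the threshold $1/10$, which I would absorb by carrying out the comparison in the true correlations together with the fixed separation between the ``too weak to use'' threshold $\edgelowbd$ and the clustering threshold.

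For part (ii): each $V_\ell$ is a subtree of $\TCL$ by construction and of $\Ts$ by part (i), so contracting $V_1,\dots,V_m$ produces trees $\bar\TCL$ and $\bar\Ts$ on the $m$ super-nodes whose edge sets are, respectively, $W$ (the inter-cluster $\TCL$-edges) and the $m-1$ inter-cluster $\Ts$-edges; moreover an inter-cluster edge of either tree lies in no induced subtree on a single cluster, so deleting it keeps every cluster whole and descends to a cut of the super-nodes. I claim $\bar\TCL=\bar\Ts$. The key additional fact is that an inter-cluster $\Ts$-edge $f=(x,y)$ is the $|\mu|$-heaviest pair across the vertex cut of $\Ts$ it induces: if removing $f$ splits $\Ts$ into $\Ts_A\ni x$ and $\Ts_B\ni y$, then any $p\in\Ts_A$, $q\in\Ts_B$ has $|\mu_{pq}|=|\mu_{px}|\,|\mu_f|\,|\mu_{yq}|\le|\mu_f|$. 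Suppose for contradiction $\bar\TCL\neq\bar\Ts$. By the symmetric exchange property for bases of the graphic matroid, there are $g\in\bar\TCL\setminus\bar\Ts$ --- the image of a weak edge $w=(a,b)$ --- and $g'\in\bar\Ts\setminus\bar\TCL$ --- the image of an inter-cluster $\Ts$-edge $f=(x,y)$ --- such that both $\bar\TCL-g+g'$ and $\bar\Ts-g'+g$ are spanning trees. From the first exchange, $f$ crosses the $\TCL$-cut induced by removing $w$, so the cut property gives $|\tilde\mu_w|\ge|\tilde\mu_f|$, hence $|\mu_w|\ge|\mu_f|-2\eps$. From the second, $w$ crosses the $\Ts$-cut induced by removing $f$, so $f$ lies on the $\Ts$-path between $a$ and $b$ and therefore $|\mu_w|=|\mu_{ax}|\,|\mu_f|\,|\mu_{yb}|$. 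Combining, $|\mu_{ax}|\,|\mu_{yb}|\ge 1-2\eps/|\mu_f|$, and I finish with a dichotomy: if $|\mu_{ax}|\,|\mu_{yb}|\le 1/2$ then $|\mu_f|\le 4\eps$, contradicting the hypothesis that every edge of $\Ts$ has correlation $\Omega(\eps)$ (with a suitable constant); otherwise $|\mu_{ax}|,|\mu_{yb}|>1/2$, so $a,x$ lie in a common cluster and $b,y$ lie in a common cluster (correlation above $1/2$ forces being in the same component of $G$), which makes $g$ and $g'$ the same inter-cluster pair --- contradicting $g\notin\bar\Ts$, $g'\in\bar\Ts$. Hence $\bar\TCL=\bar\Ts$.

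The hard part is this exchange argument in part (ii): it is the only place where the input accuracy $\eps$, the $\Omega(\eps)$ lower bound on edge correlations, and the multiplicative structure all have to be balanced against one another, and keeping the constants in the closing dichotomy compatible with the clustering threshold takes care. Minor further points are the $O(\eps)$ threshold bookkeeping in part (i), and the fact that part (ii) genuinely uses the two-sided (symmetric) matroid exchange rather than the one-sided version one gets for free from a single fundamental cycle.
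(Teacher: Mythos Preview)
Your argument for part~(i) has a genuine gap at the threshold step. You correctly identify that $V_1,\ldots,V_m$ are the connected components of $G=\{(i,j):|\tilde\mu_{ij}|>1/10\}$ and correctly reduce to showing that for each $G$-edge $(u,v)$ the $\Ts$-path stays in $V_\ell$. But your implication --- that $|\mu_{uw}|\ge|\mu_{uv}|>1/10-\eps$ forces $w$ into the same component --- does not close: it only yields $|\tilde\mu_{uw}|>1/10-2\eps$, which is perfectly compatible with $(u,w)\notin G$. This is not an $O(\eps)$-slack issue fixable by separating the thresholds $1/10$ and $1/20$: using a single factor, the most you can extract from ``$w\notin V_\ell$'' (via the very MST cut property you already established) is $|\mu_{uw}|\le 1/10+\eps$, and that does not contradict $|\mu_{uw}|>1/10-\eps$. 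The paper's fix is to use \emph{both} factors of the product: if $w\notin V_\ell$ lies on the $\Ts$-path from $u$ to $v$, then $|\tilde\mu_{uw}|\le 1/10$ \emph{and} $|\tilde\mu_{wv}|\le 1/10$, so
\[
|\mu_{uv}| = |\mu_{uw}|\,|\mu_{wv}| \le (1/10+\eps)^2,
\]
which is well below $1/10-\eps$ and does contradict $(u,v)\in G$. The squaring is what creates the room; your one-factor bound cannot.

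Your argument for part~(ii) via the symmetric matroid exchange is correct and takes a genuinely different route from the paper. The paper proves the one-sided statement directly: for each inter-cluster $\Ts$-edge $(u,v)$ with $|\theta_{uv}|\ge 4\eps$, it walks the $\Th$-path from $u$ to $v$, locates the first edge $(a,b)$ crossing the $\Ts$-cut induced by $(u,v)$, and shows that if $a$ landed in a third cluster then $\tilde\mu_{ab}<\tilde\mu_{uv}$, so swapping $(a,b)$ for $(u,v)$ would produce a heavier spanning tree --- contradicting maximality of $\TCL$. Since both contracted graphs are trees on $m$ nodes, this injection from $\bar\Ts$-edges to $\bar\TCL$-edges forces equality by counting. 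Your exchange argument is more symmetric and avoids the path-walking, at the cost of invoking Brualdi's symmetric exchange rather than the elementary cycle/cut property; the closing dichotomy (product $>1/2$ forces both $|\mu_{ax}|,|\mu_{yb}|>1/2$, hence $a,x$ and $b,y$ share clusters, hence $g=g'$) is tight and works as stated.
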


In other words, the weak edges learned by Chow-Liu correctly partition the vertices into connected components. The subgraphs on these connected components can be badly erroneous, and the weak edges themselves may be between incorrect vertices, but they are between the correct \emph{sets} of vertices. The constant $1/10$ in the definition of weak edges is an arbitrary, albeit sufficiently small, constant. Note also that item (ii) of the structural lemma comes with the qualifier that all edges have correlation at least of order $\epsilon$, where $\eps$ as in Theorem~\ref{t:CL++} above denotes the accuracy of approximate correlations. The other case turns out to be easy to deal with via a direct argument, because all correlations along paths that include the edge are very weak.

The next two challenges address separately the problem of global error accumulation due to incorrectly placed weak edges and accurate reconstruction within components.

\paragraph{Challenge 2: Accumulation of errors on long paths.}

The structural lemma that we prove for Chow-Liu has seemingly nothing to do with the objective of reconstructing the tree in $\loctvk{k}$ distance -- it is merely a combinatorial result. It is necessary to leverage this structural result to resolve the core issue of error accumulation. 

\paragraph{Main Idea 2: Error composition theorem for Chow-Liu.} 
We postpone the challenge of accurately reconstructing a model within each component $V_i, i\in [m]$, to be dealt with in a moment.
Our composition lemma states that if for each $i \in [m]$ we have a good $\loctvk 2$ reconstruction $(\T_i', \theta_i')$ of the model on the subtree $\Ts[V_i]$ induced by $V_i$, then we can stitch these together along with the weak edges $W$ of the Chow-Liu tree defined in Lemma~\ref{lem:struct}, in order to obtain a good $\loctvk 2$ reconstruction of the overall tree.

\begin{lemma}[Informal statement of composition lemma for Chow-Liu (Proposition~\ref{prop:preprocessing})]
Suppose that for each $i \in [m]$, $(\T'_i, \theta'_i)$ is $\eps$-close in $\loctvk 2$ distance to the tree model $(\T, \theta)$ on vertex set $V_i$.
Then the tree model $(\Th, \thetah{})$ with tree $\Th = W \cup (\cup_{i \in [m]} \T_i')$ and edge weights $\thetah{} = \thetat_{W} \cup (\cup_{i \in [m]} \theta'_i)$ is $O(\eps)$-close in $\loctvk 2$ distance to $(\Ts, \theta)$.
\end{lemma}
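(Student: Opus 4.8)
The plan is to bound $\loctvk{2}(\Ph, P)$ by controlling, for each pair of vertices $\{a,b\}$, the total variation distance between the pairwise marginals $\Ph_{ab}$ and $P_{ab}$, and to do this by walking along the (unique) path from $a$ to $b$ in $\Ts$ and accumulating error contributions. Since both $(\Ts,\theta)$ and $(\Th,\thetah{})$ are tree Ising models, a pairwise marginal $P_{ab}$ is determined entirely by the correlation $\mu_{ab} = \prod_{e \in \mathrm{path}_{\Ts}(a,b)} \mu_e$, and the total variation distance between two Ising pairwise marginals with the same single-site marginals is a Lipschitz function of the difference of correlations (indeed, for mean-zero symmetric Ising variables $\tv(P_{ab}, \Ph_{ab}) = \tfrac14|\mu_{ab} - \hat\mu_{ab}|$; in general one needs the single-site marginals to be close too, which follows from the component guarantees). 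So the task reduces to showing $|\hat\mu_{ab} - \mu_{ab}| = O(\eps)$ for every pair $\{a,b\}$, where $\hat\mu_{ab}$ is the product of edge correlations along $\mathrm{path}_{\Th}(a,b)$.

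The key structural input is Lemma~\ref{lem:struct}: the vertex sets $V_1,\dots,V_m$ are connected in $\Ts$, and contracting them yields the same tree from $\TCL$ (restricted to the weak edges $W$) as from $\Ts$. Consequently the path in $\Th = W \cup (\cup_i \T_i')$ from $a$ to $b$ visits the same sequence of components $V_{i_0}, V_{i_1}, \dots, V_{i_\ell}$, connected by the same weak edges $w_1, \dots, w_\ell \in W$, as the path in $\Ts$ visits (this uses that $\Th$ and $\Ts$ have the same contraction). Within each component $V_{i_j}$ the path in $\Th$ runs between two "port" vertices through $\T_{i_j}'$, while in $\Ts$ it runs between the same two ports through $\Ts[V_{i_j}]$. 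The $\eps$-closeness in $\loctvk{2}$ of $(\T_{i_j}', \theta_{i_j}')$ to the restricted model tells us that the correlation between those two ports computed in $\T_{i_j}'$ differs from the one computed in $\Ts[V_{i_j}]$ by $O(\eps)$. The weak edges $w_1,\dots,w_\ell$ are common to both trees, but $\Th$ uses the \emph{empirical} weight $\tmu_{w}$ while $\Ts$ uses $\mu_w$, so each contributes an additional $O(\eps)$ discrepancy; crucially each such weight has absolute value $\le 1/10 + \eps \le 1/5$, which is what prevents blow-up.

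Now I would assemble the telescoping estimate. Write $\mu_{ab} = \prod_{j=0}^{\ell} c_j \cdot \prod_{t=1}^{\ell} \mu_{w_t}$ where $c_j$ is the within-component correlation between consecutive ports in $\Ts[V_{i_j}]$, and $\hat\mu_{ab} = \prod_{j=0}^{\ell} \hat c_j \cdot \prod_{t=1}^{\ell} \tmu_{w_t}$ with $|\hat c_j - c_j| = O(\eps)$ and $|\tmu_{w_t} - \mu_{w_t}| \le \eps$. All factors lie in $[-1,1]$, and — this is the point — \emph{at least half of the factors}, namely the $\ell$ weak-edge weights, are bounded by $1/5$ in absolute value (when $\ell \ge 1$; the $\ell = 0$ case is a single component and is immediate from its guarantee). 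A standard hybrid argument for products: replacing one factor at a time, the error introduced at step $j$ is $O(\eps)$ times the product of the magnitudes of all the \emph{other} factors, and since the remaining factors still include at least $\ell - 1$ weak weights bounded by $1/5$, that product is $O((1/5)^{\ell-1})$; summing the $2\ell+1$ such contributions gives a geometric series bounded by $O(\eps)$, uniformly in $\ell$. Hence $|\hat\mu_{ab} - \mu_{ab}| = O(\eps)$, and therefore $\tv(\Ph_{ab}, P_{ab}) = O(\eps)$ for all pairs, giving $\loctvk{2}(\Ph, P) = O(\eps)$.

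The main obstacle I anticipate is not the product/hybrid estimate but \emph{correctly formalizing the "same sequence of components and weak edges" claim} and tracking the port vertices: one must verify that the within-component guarantee from $(\T_i',\theta_i')$ is being applied to exactly the pair of vertices where the path enters and exits $V_i$, and that these ports are consistent between $\Th$ and $\Ts$. This requires care because a weak edge $w \in W$ connects two specific (possibly wrong) vertices, one in each of two components, and we need the path in $\T_i'$ to genuinely pass through those endpoints — which is automatic since they are the unique vertices of $V_i$ incident to $w$ in $\Th$, but the analogous statement in $\Ts$ (that the $\Ts$-path between those same endpoints stays within $V_i$) is exactly part (i) of Lemma~\ref{lem:struct}. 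A secondary technical point is handling single-site marginals: the component guarantees are in $\loctvk{2}$, which controls single-site marginals too, so one propagates closeness of the marginals at the ports along with closeness of correlations; this is routine but must be stated. Finally, one should note that the case excluded from Lemma~\ref{lem:struct}(ii) — an edge with correlation $o(\eps)$ — does not interfere here, since such an edge contributes a factor of magnitude $O(\eps)$ to every correlation passing through it, making all affected pairwise marginals trivially $O(\eps)$-close regardless of reconstruction.
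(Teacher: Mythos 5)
There is a genuine gap in your handling of the ``ports'' and the between-component edges. You assert that ``in $\Ts$ it runs between the same two ports through $\Ts[V_{i_j}]$'' and that ``the weak edges $w_1,\dots,w_\ell$ are common to both trees.'' Neither is true. The structural result (Claim~\ref{claim:pathsagreeglobally}) only guarantees that $\pih(u,v)$ and $\pis(u,v)$ visit the components $V_{\sigma(1)},\dots,V_{\sigma(t)}$ in the same order; the entry/exit vertices $a_i',b_i'$ on $\pih(u,v)$ and $a_i,b_i$ on $\pis(u,v)$ can be \emph{different} vertices of $V_{\sigma(i)}$, and the Chow-Liu weak edge $(b_i',a_{i+1}')\in W$ is generally not the edge $(b_i,a_{i+1})\in E(\Ts)$ crossing between those components in the true tree. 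Your ``obstacle'' paragraph gestures at this but then resolves it incorrectly: part (i) of the structural lemma says the $\Ts$-path between $a_i'$ and $b_i'$ stays in $V_i$, which is true but irrelevant---the $\Ts$-path from $u$ to $v$ simply does not pass through $a_i'$ or $b_i'$ in general, so the within-component $\loctvk2$ guarantee applied at the $\Th$-ports does not directly compare to the factor $\mus{a_ib_i}$ appearing in the decomposition of $\mus{uv}$.

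The paper resolves exactly this by introducing the hybrid quantity $\Upsilon_{uv} = \prod_i \mus{a_i'b_i'}\prod_i \mus{b_i'a_{i+1}'}$, which evaluates the \emph{true} correlations along the \emph{estimated} path, and splits $|\muh{uv}-\mus{uv}| \le |\muh{uv}-\Upsilon_{uv}| + |\Upsilon_{uv}-\mus{uv}|$. Your argument essentially proves (a version of) the first bound: same ports, model mismatch, telescoping with geometric attenuation from the $\leq 1/10$ weak-edge weights. But the second bound, $|\Upsilon_{uv}-\mus{uv}| \le O(\eps)$, is where the port mismatch lives, and it is genuinely harder: it needs the direction $\Upsilon_{uv}\le\mus{uv}$ from the triangle inequality (Fact~\ref{fact:triangleineq}), the identity $\mus{b_i'a_{i+1}'} = \mus{b_i'b_i}\mus{b_ia_{i+1}}\mus{a_{i+1}a_{i+1}'}$ from the fact that $(b_i,a_{i+1})$ separates the two components in $\Ts$, and crucially a comparison (Claim~\ref{claim:auxintersetthetadiff}) showing $\mus{b_i'a_{i+1}'}$ and $\mus{b_ia_{i+1}}$ are within $2\eps$ --- which itself rests on both $\TCL$ and $\Ts$ being maximum spanning trees with respect to $\mut{}$ and $\mus{}$ respectively, so that both crossing edges are the argmax over pairs in $V_{\sigma(i)}\times V_{\sigma(i+1)}$ of their respective (close) weight functions. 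None of this is present or replaceable by ``part (i) of the structural lemma'' in your sketch, so as written the proof does not close.

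One smaller point: the claim that $\tv(P_{ab},\Ph_{ab}) = \tfrac14|\mu_{ab}-\hat\mu_{ab}|$ for zero-field Ising marginals is right (Fact~\ref{fact:corrLocTV} gives the constant $\tfrac12$), and the reduction to bounding $|\hat\mu_{ab}-\mu_{ab}|$ is exactly what the paper does; the hybrid/telescoping bound you describe matches Lemma~\ref{lem:technicaltelescopingpreprocessingbound}. So the skeleton is on the right track; the missing piece is the $\Upsilon_{uv}$ splitting and the delicate bound on the $\Upsilon_{uv}$--$\mus{uv}$ gap.
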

The proof of this composition result requires careful control over accumulation of errors along paths. Consider vertices $u,v$ and the paths $\pih(u,v)$ and $\pis(u,v)$ between these vertices in $\Th$ and $\Ts$, respectively. 
We can decompose each of these paths into subpaths that visit different subtrees on the sets $V_1,\ldots,V_m$. Crucially, the structural lemma guarantees that these decompositions are consistent. For the portion of the path $\pih(u,v)$ that lies within a set $V_i$ of the partition, we can bound its error relative to $\pis(u,v)$
by $\eps$ by using the $\loctvk2$ guarantee between $\T'_i$ and $\Ts[V_i]$ on this subtree. However, a path $\pis(u,v)$ between vertices $u$ and $v$ in $\Ts$ may visit many distinct sets $V_i$ of the partition, but in this case the path must contain weak edges between the sets and this suppresses the errors and counteracts the accumulation that would otherwise occur.

\paragraph{Challenge 3: Dealing with strong edges.}

The error composition lemma for Chow-Liu reduces the learning problem to reconstructing an accurate tree model on each subset $V_i$ of vertices. Note that the edges $W$ whose removal defined the partition have correlation $\leq 1/10$, so assuming the input correlations have accuracy $\eps\leq 1/20$ implies that the true model on $V_i$ has edge strengths lower-bounded by $1/20$. Thus, the reconstruction problem on $V_i$ is potentially simpler.
However, our counterexample construction (which has lower-bounded edge strengths) implies that the Chow-Liu algorithm will not succeed even on such models, so we need a different algorithm. 

\paragraph{Main Idea 3: Tree metric reconstruction.} We show how to adapt and repurpose classical tree metric reconstruction algorithms to solve the learning problem in the case that all edge strengths are lower bounded. 
There are two central difficulties in applying tree metric reconstruction algorithms. 
The first is that the tree metric reconstruction problem formulation entails outputting a tree that has \emph{Steiner nodes}, which in graphical models terminology are latent unobserved variables, and our problem formulation requires outputting a tree model without any Steiner nodes. We overcome this difficulty via a novel desteinerization procedure that removes Steiner nodes and is guaranteed to preserve the quality of approximation.
The second difficulty is that the sort of approximation guarantee produced by tree metric reconstruction algorithms, i.e., small additive error in evolutionary distance (defined in Section~\ref{sec:evol}), is impossible to obtain in our setting. Instead, we devise a variant of the algorithm that obtains a multiplicative approximation to the distance between far away vertices, which turns out to be enough to ensure reconstruction in $\loctv$.

\paragraph{Combining ideas: {\CLpp}.} The {\CLpp} algorithm combines Chow-Liu with an approach based on tree metric reconstruction to produce a tree accurate in $\loctvk{2}$ distance. Our algorithm is simple and robust to noise.
We remark that neither the Chow-Liu algorithm nor the tree metric reconstruction algorithm is sufficient on its own. Recall that the Chow-Liu algorithm fails when there is no upper bound on the strengths of the edges in the tree model (Section~\ref{sec:CLfails}). We show in Appendix~\ref{ssec:treemetricfails} that the tree reconstruction algorithm fails when there is no lower bound on the strengths of the edges in the tree model, since then the obtained approximation of the evolutionary distance is too weak. It is therefore necessary to combine the two approaches in order to obtain the best of both worlds.

\subsection{Related work}
\label{sec:related}

\paragraph{Learning in global TV or KL.}
Several papers have studied the problem of learning graphical models to within small total variation or KL-divergence. As we show in Appendix~\ref{sec:TVlower}, learning tree Ising models to within a given total variation requires $\Omega(n \log n)$ samples.
Thus, guarantees for learning to within total variation (or KL-divergence, which by Pinsker's inequality is even more stringent) are not relevant to the high-dimensional regime (with far fewer samples than number of variables) of interest in modern applications. We summarize the results in this line of work below.

Abbeel et al. \cite{abbeel2006learning} considered the problem of learning general bounded-degree factor graphical models to within small KL-divergence and showed polynomial bounds on time and sample complexity. Devroye et al. \cite{devroye2020minimax} bounded the minimax (information theoretic) rates for learning various classes of graphical models in total variation. For the case of tree Ising models on $n$ nodes, they showed that given $m$ samples the total variation distance of the learned model from the one generating the samples decays at rate $O\Big(\sqrt{\frac{n\log(n)}{m}}\Big)$. In Appendix~\ref{sec:TVlower} of the current paper we show this to be tight. The statistical rates analyzed by \cite{devroye2020minimax} are achieved by exhaustive search algorithms and the task of finding efficient algorithms remained. 
Recently Bhattacharyya et al. \cite{bhattacharyya2020near} and Daskalakis and Pan \cite{daskalakis2020tree} studied exactly this problem, showing that the classical Chow-Liu algorithm achieves the optimal sample complexity $O(n \log n)$. They also showed robustness to model misspecification. As we show in Section~\ref{sec:CLfails}, the Chow-Liu algorithm can be arbitrary suboptimal under the local total variation loss, and moreover, the technical challenges we outlined in Section~\ref{ssec:techniques} are unique to local total variation.

\paragraph{Learning trees with noise.} A number of papers have recently considered the problem of learning Ising models (both trees and general graphs) with either stochastic or adversarial noise. 

In the stochastic noise setting, 
Nikolakakis et al. 
\cite{NikolakakisKS:19treestructures} give guarantees for structure learning using the Chow-Liu algorithm, when the variables are observed after being passed through a binary symmetric channel.
Goel et al.
\cite{goel2019learning} show that an appropriate modification of the Interaction Screening objective from \cite{vuffray2016interaction} yields an algorithm that can learn the structure of Ising models on general graphs of low width (which generalizes node degree) 
when each entry of each of the observed samples is missing with some fixed probability. 
Katiyar et al. \cite{katiyar2020robust} observe that learning the structure of tree Ising models with stochastic noise of unknown magnitude is in general non-identifiable, but they characterize a small equivalence class of models and give an algorithm for learning a member of the equivalence class.

Lindren et al. \cite{lindgren2019robust} give non-matching lower and upper bounds for learning to within small $\ell_2$ error in parameters, where the samples are subjected to adversarial corruptions. As discussed in \cite{klivans2017learning}, this also gives the graph structure under natural assumptions on the magnitudes of weights. 
Prasad et al.
\cite{prasad2020learning} consider learning Ising models under Huber's contamination model (in which the adversary is weaker than in the $\eps$-corruption model). They restrict attention to the high-temperature regime (where Dobrushin's condition holds), and prove bounds on the $\ell_2$ error. 
Diakonikolas et al. \cite{diakonikolas2021outlier} study a noise model with a stronger adversary than Huber's contamination model, but weaker than $\eps$-corruption, and again give bounds on the $\ell_2$ error of the parameters when the original model satisfies Dobrushin's condition. (Under Dobrushin's condition accuracy of parameters in $\ell_2$ translates also to accuracy in total variation.) 

\paragraph{Learning for predictions.}
Wainwright \cite{wainwright2006estimating} observed that when using an approximate inference algorithm such as loopy belief propagation, learning an incorrect graphical model may yield more accurate predictions than in the true model. Heinemann and Globerson
\cite{heinemann2014inferning} considered the task of learning a model from data such that subsequent inference computations are accurate, which is our goal in this paper. They considered the class of models on graphs with high girth, lower-bounded interaction strength, and also sufficiently weak interactions that the model satisfies correlation decay, in which case belief propagation is approximately correct. They
proposed an algorithm which always returns a model from this class and showed finite sample guarantees for its success.

Bresler and Karzand \cite{breslerkarzand19} aimed to remove the conditions in \cite{heinemann2014inferning} on the interaction strengths and formulated the problem of learning a tree Ising model to within local total variation. They proved guarantees on the performance of the Chow-Liu algorithm which were independent of the minimum edge interaction strength, but were suboptimal in their dependence on maximum interaction strength. 
Nikolakakis et al. \cite{nikolakakis2021predictive} generalized the results of \cite{breslerkarzand19} to the setting where each variable is flipped with some known probability $q$. Their algorithm preprocesses the data to estimate the correlations which are then input to the Chow-Liu algorithm.

\paragraph{Phylogenetic reconstruction.}
A related area with a different focus is \emph{phylogenetic tree reconstruction}. In this setting, we assume that the data is generated by an unknown tree graphical model and want to estimate the tree topology given access only to the leaves (corresponding to e.g. extant species). The Steel evolutionary metric (see Section~\ref{sec:evol}) and the closely related \emph{quartet test} have been used extensively in this literature: see for example \cite{steel1994recovering,daskalakis2011evolutionary,mossel2005learning,steel2016phylogeny,erdHos1999few,agarwala1998approximability}. In this setting the main goal is to recover the structure (because it encodes e.g. the history of evolution); furthermore, since most of the nodes in the tree model are latent, it is information-theoretically impossible to achieve meaningful reconstruction in this setting unless edge interactions are sufficiently strong, see e.g. \cite{daskalakis2011evolutionary}. This is not a concern in our setting because there are no latent nodes.
Applications to phylogeny motivated work on the closely related problem of estimating a tree metric (with latent/Steiner nodes) from approximate distances \cite{agarwala1998approximability,ailon2005fitting,farach1995robust}. The most relevant work in this line is \cite{agarwala1998approximability}, which showed how to reconstruct a tree metric given distances within an $\ell_{\infty}$ ball; as we explain later, this does not hold for far away nodes when we estimate the evolutionary metric from $O(\log n)$ samples, so we cannot apply their algorithm. A key insight of theirs which we do build on is a connection between tree and ultrametric reconstruction: see Section~\ref{sec:global-reconstruction} for further discussion.

\subsection{Outline}

The rest of the paper is organized as follows. In Section~\ref{sec:prelim} we give necessary background on tree-structured Ising models, formally define the learning problem we consider, and review the Chow-Liu algorithm and tree metric reconstruction. In Section~\ref{sec:contrib} we state our results. We show that the Chow-Liu algorithm is not robust to model misspecification in Section~\ref{sec:CLfails}. We describe our algorithm, and give the high-level ideas of the proof, in Section~\ref{sec:algorithm}. Most of the proofs appear in Sections~\ref{sec:preprocessing} and~\ref{sec:global-reconstruction}.

\subsection*{Acknowledgment}
GB thanks Mina Karzand for many discussions on topics related to those of this paper. We also thank the anonymous reviewers for their helpful and constructive feedback.
This work was done in part while the authors were participating in the program \textit{Probability, Geometry, and Computation in High Dimensions} at the Simons Institute for the Theory of Computing.

\section{Preliminaries}
\label{sec:prelim}

\subsection{Notation}
We write $[t] = \{1,2,\dots, t\}$. Let $\Rad(q)$ denote the distribution such that $X\sim\mathrm{Rad}(q)$ has $\PP(X=1)=q$ and $\PP(X=-1) = 1-q$. The graphs in this paper are all undirected. For an undirected graph $G = (V,E)$ and two vertices $u,v\in V$, we denote by $(u,v)$ the undirected edge between these vertices. For a subset of vertices $U\subseteq V$, $G[U]=(U,E')$ denotes the subgraph of $G$ induced by $U$, where $E' = \{(u,v)\in E: u,v\in U\}$. Also, let $V(G)$ and $E(G)$ denote the vertex set and edge set of $G$, respectively.

Throughout, $\eps$ will always be a positive real number less than or equal to $\epsuppbd$. We use hat, as in $\Th$, $\thetah{}$, or $\muh{}$, to denote quantities associated to an estimator. We use tilde, as in $\dt$ or $\mut{}$, to denote approximate quantities typically given as inputs to algorithms.

\subsection{Tree-structured Ising models}

In this paper, we consider tree-structured Ising models with no external field. It will be convenient to use a nonstandard parameterization in terms of edge correlations. The models are defined by a pair $(\T, \theta)$, where $\T$ is a tree $\T = (\Vset, E)$ on $n$ nodes and each edge $e \in E$ has corresponding parameter $\theta_e\in [-1,1]$. Each node $i\in \Vset$ is associated to a binary random variable $X_i\in \{-1,+1\}$, and each configuration of the variables $x \in \{-1,+1\}^n$ is assigned probability 
\begin{equation}
\label{eq:isingmodeldef}
P(x) = \frac{1}{Z} \exp\Big(\sum_{(i,j) \in E}\tanh^{-1}(\theta_{ij})  x_ix_j  \Big) %
\end{equation} 
where $Z$ is the normalizing constant.
For each edge $(i,j)\in E$ the correlation between the variables at its endpoints is given by $\EE_P[ X_i X_j ]= \theta_{ij}$. One obtains the standard parameterization $P(x) = Z^{-1} \exp (\sum_{(i,j)\in E} J_{ij} x_i x_j)$ upon substituting $\theta_{ij} = \tanh J_{ij}$. 

In general, we could have an external field term $\sum_{i\in \Vset} \theta_i x_i $ in the exponent of \eqref{eq:isingmodeldef}. The assumption of no external field (i.e., $\theta_i=0$) implies that the nodewise marginals satisfy $P(X_i = +1) = P(X_i = -1) = 1/2$ for all $i$. As in \cite{breslerkarzand19}, this assumption helps to make the analysis tractable and at the same time captures the central features of the problem.

We write $\mu_{uv}$ for the correlation $\EE_P[X_uX_v]$, and also write $\mu_e = \mu_{uv}$ for an edge $e = (u,v) \in E$. It turns out that in tree models each correlation $\mu_{uv}$ is given by the product of correlations along the path connecting $u$ and $v$. This fact is an easy consequence of the Markov property implied by the factorization~\eqref{eq:isingmodeldef}, and will be used often.

\begin{fact}[Correlations in tree models] 
\label{fact:corrPaths}
For any tree Ising model $(\T,\theta)$ with distribution $P$ as in \eqref{eq:isingmodeldef}, the correlation $\mu_{uv}$ between the variables at nodes $u$ and $v$ is given by
$$
\mu_{uv} = \EE_{P}[X_u X_v]= \prod_{e\in \pi(u,v)}\mu_{e} = \prod_{e\in \pi(u,v)}\theta_{e}\,.
$$
Here $\pi(u,v)$ is the path between $u$ and $v$ in $\T$. 
\end{fact}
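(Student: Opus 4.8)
The plan is to prove the identity by induction on the length $k$ of the path $\pi(u,v)$ in $\T$, exploiting the Markov property that the factorization~\eqref{eq:isingmodeldef} confers on $P$. The base cases are immediate: for $k=0$ we have $u=v$ and $\EE_P[X_u^2]=1$, matching the empty product; for $k=1$ the edge $(u,v)$ lies in $E$, and by the parameterization $\EE_P[X_uX_v]=\theta_{uv}=\mu_{uv}$, as already recorded after~\eqref{eq:isingmodeldef}.

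For the inductive step, assume the claim for all pairs joined by a path of length $k-1$, and let $\pi(u,v)=(u=v_0,v_1,\dots,v_k=v)$ with $k\ge 2$. Write $w=v_{k-1}$. Since $w$ is an internal vertex of the unique $u$–$v$ path, deleting $w$ disconnects $u$ from $v$ in the tree, so the Markov property gives $X_u\indy X_v\mid X_w$ and hence
\[
\EE_P[X_uX_v]=\EE_P\!\big[\EE_P[X_u\mid X_w]\,\EE_P[X_v\mid X_w]\big].
\]
The key elementary fact is that $\EE_P[X_v\mid X_w]=\theta_{vw}X_w$: because $(X_v,X_w)$ has uniform marginals and correlation $\EE_P[X_vX_w]=\theta_{vw}$, its joint law is $P(X_v=a,X_w=b)=\tfrac14(1+ab\,\theta_{vw})$, so $P(X_v=a\mid X_w=b)=\tfrac12(1+ab\,\theta_{vw})$ and $\EE_P[X_v\mid X_w=b]=b\,\theta_{vw}$. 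Substituting this and pulling the $\sigma(X_w)$-measurable factor $\theta_{vw}X_w$ through the tower property,
\[
\EE_P[X_uX_v]=\theta_{vw}\,\EE_P\!\big[X_w\,\EE_P[X_u\mid X_w]\big]=\theta_{vw}\,\EE_P[X_uX_w].
\]
Since $\pi(u,w)$ has length $k-1$, induction gives $\EE_P[X_uX_w]=\prod_{e\in\pi(u,w)}\theta_e$, and appending the edge $(w,v)$ yields $\EE_P[X_uX_v]=\prod_{e\in\pi(u,v)}\theta_e$. As $\mu_e=\theta_e$ on every edge, this also equals $\prod_{e\in\pi(u,v)}\mu_e$, completing the induction.

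I do not expect a genuine obstacle. The only points needing care are invoking the Markov property with the correct separating vertex $w$, which forces the case split $k\le 1$ versus $k\ge 2$ so that $w$ is genuinely internal to the path, and the conditional-expectation identity $\EE_P[X_v\mid X_w]=\theta_{vw}X_w$, whose proof uses the no-external-field assumption through the uniformity of the nodewise marginals. An alternative route is to root $\T$, write $P(x)=2^{-1}\prod_{i\neq\mathrm{root}}\tfrac12(1+\theta_{i,\pa(i)}x_ix_{\pa(i)})$, and sum out the off-path variables one leaf at a time; this is the same telescoping in a more computational guise, so I would present the Markov-property version above.
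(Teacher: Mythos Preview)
Your proof is correct and follows exactly the approach the paper suggests: the paper does not actually give a proof but merely states that the fact ``is an easy consequence of the Markov property implied by the factorization~\eqref{eq:isingmodeldef}.'' Your induction on path length, using the conditional independence $X_u\indy X_v\mid X_w$ and the identity $\EE_P[X_v\mid X_w]=\theta_{vw}X_w$ (which relies on the no-external-field assumption), is precisely the standard way to make this ``easy consequence'' rigorous.
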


Since the correlation between a pair of vertices has at most unit magnitude, Fact~\ref{fact:corrPaths} implies a triangle inequality for the correlations:

\begin{fact}[Triangle inequality] \label{fact:triangleineq}
For any tree Ising model $(T,\theta)$ with distribution $P$ as in \eqref{eq:isingmodeldef}, and any vertices $u,v,w$, it holds that $|\mu_{uv}| \leq |\mu_{uw}||\mu_{wv}|$.
\end{fact}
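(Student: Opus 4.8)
The plan is to reduce everything to Fact~\ref{fact:corrPaths}, which already expresses each pairwise correlation as the product of edge parameters along the connecting path, together with one elementary structural property of trees. Namely, for any three vertices $u,v,w$ there is a unique vertex $m$ (the median of $u,v,w$) lying on all three paths $\pi(u,v)$, $\pi(u,w)$, $\pi(w,v)$, and each of these paths decomposes at $m$ into two edge-disjoint halves; consequently, as a multiset of edges, $\pi(u,w)$ together with $\pi(w,v)$ equals $\pi(u,v)$ together with two copies of the ``detour'' path $\pi(m,w)$.

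First I would apply Fact~\ref{fact:corrPaths} to all three pairs, so that $|\mu_{uw}|\,|\mu_{wv}| = \prod_{e \in \pi(u,w)} |\theta_e| \cdot \prod_{e\in\pi(w,v)}|\theta_e|$. Grouping the factors according to the multiset identity above, this product equals $\big(\prod_{e\in\pi(u,v)}|\theta_e|\big)\big(\prod_{e\in\pi(m,w)}|\theta_e|\big)^2 = |\mu_{uv}|\,\mu_{mw}^2$, using Fact~\ref{fact:corrPaths} once more to identify the two path-products with $\mu_{uv}$ and $\mu_{mw}$. This yields the exact relation $|\mu_{uw}|\,|\mu_{wv}| = |\mu_{uv}|\,\mu_{mw}^2$, and since every correlation has magnitude at most $1$ we get $|\mu_{uw}|\,|\mu_{wv}| \le |\mu_{uv}|$, with equality whenever $w$ lies on the path $\pi(u,v)$ (in which case $m=w$ and the detour is empty). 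Equivalently, with $d_{ab} := -\log|\mu_{ab}|$, this is the triangle inequality $d_{uv}\le d_{uw}+d_{wv}$, which is exactly how the fact is used later in the additive/evolutionary-metric arguments.

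The only real obstacle is the bookkeeping in the median-vertex decomposition, and that is routine; one can also bypass $m$ altogether by noting $\pi(u,v)\subseteq\pi(u,w)\cup\pi(w,v)$ and that the remaining edges occur with even multiplicity, which is the same computation. I should, however, flag that this argument establishes $|\mu_{uv}|\ge|\mu_{uw}||\mu_{wv}|$, so the inequality as typeset in the statement is oriented the wrong way: it cannot hold for all triples --- take the three-vertex path on $u,m,v$ with a pendant vertex $w$ attached at $m$ and $|\mu_{mw}|<1$ --- and the intended content is the ``$\ge$'' version just proved, equivalently $|\mu_{uw}||\mu_{wv}|\le|\mu_{uv}|$. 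I would state and use the fact in that form.
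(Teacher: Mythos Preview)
Your argument is correct and is exactly the mechanism the paper has in mind: the sentence preceding the Fact says only that ``since the correlation between a pair of vertices has at most unit magnitude, Fact~\ref{fact:corrPaths} implies a triangle inequality,'' and your median-vertex bookkeeping (or the equivalent observation that $\pi(u,v)\subseteq\pi(u,w)\cup\pi(w,v)$ with the remaining edges appearing with even multiplicity) is just the explicit version of that one-liner.

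You are also right that the inequality is typeset in the wrong direction. What actually follows from Fact~\ref{fact:corrPaths} is $|\mu_{uw}|\,|\mu_{wv}|\le|\mu_{uv}|$, and this is precisely the direction the paper uses downstream: in Claim~\ref{claim:upsilonthetadiff} it invokes Fact~\ref{fact:triangleineq} to get $\Upsilon_{uv}\le\mu_{uv}$, and in \eqref{ineq:triangspecial} it writes $\mu_{a'_ib'_i}\ge\mu_{a'_ia_i}\mu_{a_ib_i}\mu_{b_ib'_i}$. So your corrected orientation matches both the proof and all later applications.
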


\subsection{Prediction-centric learning}

\label{sec:predictionLearning}
\subsubsection{The basic learning problem} We now formally describe the learning problem in the simpler well-specified case. We are given $m$ samples $X^{(1)},\ldots,X^{(m)}$ generated i.i.d. from a tree-structured Ising model $(\T, \theta)$ on $n$ nodes with distribution $P$. The goal is to use these samples to construct a tree-structured Ising model $(\tilde{\sT}, \tilde{\theta})$ with distribution $\tilde P$ such that the posterior $\tilde P(x_i|x_S)\approx P(x_i|x_S)$ for any $i\in \Vset$ and assignment $x_S$ to a small subset of variables $S$. As discussed in \cite{breslerkarzand19}, the accuracy of posteriors $\Pt(x_i|x_S)$ for $|S|=k-1$ is captured by the \emph{local total variation} distance of order $k$, defined as
$$
\loctvk k(P,Q):=\max_{A:|A|=k}\tv(P_A,Q_A)\,.
$$
(Here $P_A$ is the marginal of $P$ on the set of variables in $A$, and analogously for $Q$.) From the definition of conditional probability that the absolute deviation of a given posterior averaged over assignments is controlled as 
$$
\EE \big| P(x_i|X_S) - Q(x_i|X_S) \big| \leq 2\cdot  \loctvk {|S|+1}(P,Q)\qquad  \text{for any } i\text{ and } S\,.
$$ Therefore, the goal is to learn a tree-structured Ising model $\tilde{P}$ that is close to $P$ in $\loctvk{k}$.

In order to achieve this, we will use the following basic facts. 
One may readily check that $\loctvk k$ for any $k$ satisfies the triangle inequality. 
Additionally, it was shown in \cite{breslerkarzand19} that for tree models $P$ and $Q$, 
\begin{equation}\label{eqn:loctv-k-comparison}
\loctvk k (P,Q)\leq k 2^k \cdot \loctvk 2(P,Q).
\end{equation}
Finally, the following is useful in order to bound $\loctvk{2}$.
\begin{fact}\label{fact:corrLocTV}
The joint distribution of a pair of unbiased $\pm1$ binary variables $x_u,x_v$ is a function of their correlation given by $P(x_u,x_v) = (1+x_u x_v \EE_P[X_u X_v])/4$. It follows that if the singleton marginals of distributions $P$ and $Q$ over $\{-1,+1\}^n$ are unbiased, then $$\loctvk 2(P,Q) = \frac{1}{2} \max_{u,v} \big|\EE_P[X_u X_v] - \EE_Q[X_u X_v]\big|\,.$$
\end{fact}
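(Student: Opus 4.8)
The plan is to first pin down the parameterization asserted in the statement, and then reduce the $\loctvk 2$ distance to an elementary pointwise computation on pairs.

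\textbf{Step 1: the two-variable law is determined by its correlation.} Fix $u \neq v$ and set $\rho = \EE_P[X_u X_v]$. A distribution on $(x_u,x_v) \in \{-1,+1\}^2$ is specified by the four numbers $P(x_u,x_v)$. Unbiasedness of the singleton marginals gives $P(X_u=1)=P(X_u=-1)=1/2$ and likewise for $X_v$; together with $\sum_{x_u,x_v} P(x_u,x_v)=1$ these are three independent linear constraints, so the family of admissible joint laws is one-dimensional, parameterized by $\rho$. Solving the resulting linear system (or just verifying directly) gives $P(1,1)=P(-1,-1)=(1+\rho)/4$ and $P(1,-1)=P(-1,1)=(1-\rho)/4$, i.e.
\[
P(x_u,x_v) = \frac{1 + x_u x_v\, \EE_P[X_u X_v]}{4},
\]
since $x_u x_v \in \{-1,+1\}$. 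In particular $\rho$ ranges over $[-1,1]$ and the map from correlation to joint law is a bijection.

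\textbf{Step 2: compute $\tv$ on a fixed pair, then maximize.} Let $P,Q$ have unbiased singleton marginals and fix a two-element set $A=\{u,v\}$. Applying Step 1 to both $P_A$ and $Q_A$,
\[
P_A(x_u,x_v) - Q_A(x_u,x_v) = \frac{x_u x_v}{4}\big(\EE_P[X_u X_v] - \EE_Q[X_u X_v]\big),
\]
so each of the four summands in $\tv(P_A,Q_A) = \tfrac12 \sum_{x_u,x_v}|P_A(x_u,x_v)-Q_A(x_u,x_v)|$ has absolute value $\tfrac14 |\EE_P[X_u X_v] - \EE_Q[X_u X_v]|$ (using $|x_u x_v|=1$). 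Summing the four terms and halving gives $\tv(P_A,Q_A) = \tfrac12|\EE_P[X_u X_v] - \EE_Q[X_u X_v]|$. Since by definition $\loctvk 2(P,Q) = \max_{|A|=2}\tv(P_A,Q_A)$ and every size-$2$ set is a pair $\{u,v\}$ with $u\neq v$, taking the maximum yields $\loctvk 2(P,Q) = \tfrac12 \max_{u\neq v}|\EE_P[X_u X_v] - \EE_Q[X_u X_v]|$, as claimed.

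There is no substantive obstacle here: the only points needing care are checking that the marginal constraints leave precisely one free parameter (so that the correlation genuinely determines the joint law) and recalling that the maximum defining $\loctvk 2$ ranges over $2$-element subsets, i.e., over distinct pairs of coordinates.
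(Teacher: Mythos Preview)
Your proof is correct. The paper states this as a Fact without proof, so there is no ``paper's own proof'' to compare against; your argument is the natural one and fills in exactly what the paper leaves implicit. One minor remark: the paper writes $\max_{u,v}$ rather than $\max_{u\neq v}$, but since $\EE_P[X_u^2]=\EE_Q[X_u^2]=1$ the diagonal terms contribute zero and the two formulations agree.
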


\subsubsection{Learning from pairwise statistics: the $\epslearning$ problem} We now consider a general estimation scenario that we call the \emph{$\epslearning$ problem}.
We emphasize that this problem is a \emph{fully deterministic} algorithmic problem. Our results in both the well-specified and misspecified settings will follow from our result for this more general problem. 
 
 \begin{definition}[$\epslearning$ problem]
 	An algorithm is given $0 < \eps < 1$ as well as estimates $\mut{ij}$ for $1\leq i,j\leq n$ satisfying $|\mut{ij} - \mus{ij}|\leq \eps$,  for some unknown tree Ising model $(\Ts,\theta)$ with distribution $P$ having correlations $\mus{ij} = \EE_P[X_i X_j]$.
 	The algorithm is said to \emph{solve the $\epslearning$ problem} (with some approximation constant $C$) if it is guaranteed to output a tree Ising model $(\Th,\thetah{})$ with distribution $\Ph$ satisfying $\loctvk2(\Ph, P) \leq C\eps$.
 \end{definition}
 
It is worth emphasizing that these estimates can be chosen adversarially subject to the accuracy constraint, potentially even in a way that is not realizable by any probability distribution. Note that the problem is well-defined: while in general there can be many tree models $P$ with correlations within $\epsilon$ of the given estimates, all are within $\loctvk2$ of at most $2\eps$ of one another by the triangle inequality.
Hence it suffices to have the output be close in $\loctvk2$ to \emph{any} tree Ising model. 

What this means is that exhaustively searching over tree models with an appropriate grid over parameters and outputting one with pairwise correlations close to the given $\mut{}$ estimates constitutes a valid solution of the $\epslearning$ problem. Obviously such a procedure is not computationally feasible, and the primary technical contribution of this paper is to give an efficient algorithm for $\epslearning$. 

An algorithm for $\epslearning$ immediately implies that one can efficiently learn from $O((\log n)/\eps^2)$ samples in the PAC model, even under possible model misspecification.

\begin{claim}[$\epslearning$ implies learning under model misspecification]\label{claim:epslearningrelationtomodelmisspecification} Let $P$ be an arbitrary distribution over $\{+1,-1\}^{\Vset}$, and let $\Delta = \inf_{Q \in \cT} \loctvk{k}(Q,P)$ be the amount of misspecification in the problem, where here the infinimum is over tree Ising models. Given an approximation parameter $\eps > 0$, the misspecification $\Delta > 0$, and $m = O(\log (|\Vset|/\delta)/\eps^2)$ samples from $P$, 
there is a procedure that takes $O(|\Vset|^2m)$ time and one call to an algorithm solving $\epslearning$ with approximation constant $C$, and with probability at least $1-\delta$ yields a tree Ising model $(\Th, \theta{})$ with distribution $\Ph$ satisfying
$$\loctvk{k}(\Ph,P) \leq Ck2^k (\epsilon + \Delta).$$
\end{claim}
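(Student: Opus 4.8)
The plan is to reduce to the deterministic $\epslearning$ problem, anchored not at $P$ itself (which is neither a tree model nor guaranteed to have uniform marginals) but at the best tree approximation of $P$. Concretely, the procedure is: form the empirical pairwise correlations $\muh{ij} = \frac1m\sum_{t=1}^m X_i^{(t)}X_j^{(t)}$, which costs $O(|\Vset|^2 m)$ time; then make a single call to the assumed $\epslearning$ algorithm on the estimates $\muh{\cdot}$ with accuracy parameter $\eps' := \eps + 2\Delta$ (the procedure knows $\eps$ and $\Delta$ and needs nothing else), and output whatever it returns. For the sampling step, each summand lies in $\{-1,+1\}$, so Hoeffding's inequality gives $\PP(|\muh{ij} - \EE_P[X_iX_j]| > \eps)\le 2e^{-m\eps^2/2}$, and a union bound over the $\le |\Vset|^2$ pairs shows that for $m = O(\log(|\Vset|/\delta)/\eps^2)$ we have $|\muh{ij} - \EE_P[X_iX_j]|\le\eps$ for all $i,j$ simultaneously, with probability at least $1-\delta$. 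Condition on this event henceforth.

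Next I would fix a tree Ising model $Q^\ast\in\cT$ attaining $\loctvk{k}(Q^\ast,P)=\Delta$ — the infimum is attained since on each of the finitely many tree topologies the parameter space is compact and $\loctvk{k}(\cdot,P)$ depends continuously on the parameters; if one prefers not to rely on attainment, work with $\Delta+\eta$ in place of $\Delta$ for vanishing $\eta$. For $k\ge 2$ (the case $k=1$ is trivial, since every tree Ising model has uniform singleton marginals, so any output already has $\loctvk{1}(\Ph,P)=\Delta\le Ck2^k(\eps+\Delta)$), the function $x\mapsto x_ix_j$ is bounded in $[-1,1]$, hence
$$\big|\EE_{Q^\ast}[X_iX_j] - \EE_P[X_iX_j]\big| \le 2\,\tv\big((Q^\ast)_{ij},P_{ij}\big) \le 2\,\loctvk{2}(Q^\ast,P) \le 2\,\loctvk{k}(Q^\ast,P) = 2\Delta,$$
using monotonicity of total variation under marginalization for the second-to-last step. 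Combining this with the concentration bound via the triangle inequality, $|\muh{ij} - \EE_{Q^\ast}[X_iX_j]|\le \eps + 2\Delta = \eps'$ for every pair. We may assume $\eps'<1$, since otherwise $Ck2^k(\eps+\Delta)\ge 1\ge\loctvk{k}(\Ph,P)$ and the conclusion holds for any output.

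Thus $(\muh{\cdot},\eps')$ is a legitimate instance of the $\epslearning$ problem, witnessed by the hidden tree model $Q^\ast$, so the single call returns a tree Ising model $(\Th,\thetah{})$ with distribution $\Ph$ satisfying $\loctvk{2}(\Ph,Q^\ast)\le C\eps'$. Since $\Ph$ and $Q^\ast$ are both tree models, \eqref{eqn:loctv-k-comparison} upgrades this to $\loctvk{k}(\Ph,Q^\ast)\le k2^kC\eps'$, and the triangle inequality for $\loctvk{k}$ then gives
$$\loctvk{k}(\Ph,P) \le \loctvk{k}(\Ph,Q^\ast) + \loctvk{k}(Q^\ast,P) \le k2^kC(\eps+2\Delta) + \Delta \le 3Ck2^k(\eps+\Delta),$$
which is the claimed bound after absorbing the constant factor into $C$. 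I expect the only non-routine step to be the middle paragraph: because $P$ is arbitrary, Fact~\ref{fact:corrLocTV} cannot be applied to it directly, so one must introduce the best tree approximation $Q^\ast$ as an anchor and convert $\loctvk{k}$-closeness of $P$ and $Q^\ast$ into $\ell_\infty$-closeness of their correlation matrices — this is exactly what permits the distribution-free, adversarial guarantee of $\epslearning$ to be invoked with $Q^\ast$ playing the role of the unknown tree model. Everything else (the Hoeffding-plus-union-bound sample count, and the final bookkeeping using \eqref{eqn:loctv-k-comparison} and the triangle inequality) is standard.
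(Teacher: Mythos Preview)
Your proposal is correct and follows essentially the same approach as the paper: compute empirical pairwise correlations, use concentration to get them $\eps$-close to $\EE_P[X_iX_j]$, anchor at a best tree approximation $Q^\ast$, convert $\loctvk{k}$-closeness of $P$ and $Q^\ast$ into $\ell_\infty$-closeness of correlations, invoke the $\epslearning$ oracle with the inflated accuracy parameter, and finish with \eqref{eqn:loctv-k-comparison} plus the triangle inequality. The only notable difference is that the paper invokes Fact~\ref{fact:corrLocTV} to bound the correlation discrepancy by $\Delta/2$, whereas you (correctly noting that $P$ need not have unbiased singleton marginals, so Fact~\ref{fact:corrLocTV} does not directly apply) use the general bound $|\EE_P[X_iX_j]-\EE_{Q^\ast}[X_iX_j]|\le 2\,\tv(P_{ij},Q^\ast_{ij})\le 2\Delta$; this yields $\eps'=\eps+2\Delta$ rather than $\eps+\Delta/2$, but the difference is absorbed into the final constant either way.
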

\begin{proof}
First, from the samples $x_1,\ldots,x_m$ compute the empirical correlation $\mut{u,v} = \frac{1}{m} \sum_{i=1}^m x_{i,u} x_{i,v}$ for each $u,v \in \Vset$. Condition on the event that $|\mut{u,v} - \EE_P[X_uX_v]| \leq \eps$ for all $u,v \in \Vset$. This occurs with probability $1-\delta$ by a Chernoff bound and a union bound. Let $Q$ be a tree-structured distribution achieving $\loctvk{k}(Q,P) = \Delta$ which exists by compactness of $\cT$. Let $\mus{u,v}$ be the pairwise correlations of $Q$ and note that for all $u,v \in \Vset$ $$|\mut{u,v} - \mus{u,v}| \leq \eps + \Delta/2$$ by Fact~\ref{fact:corrLocTV} and the triangle inequality.
Run the $\epslearning$ algorithm with approximation parameter $\eps + \Delta/2$ and inputs $\mut{}$, to obtain a tree model $(\Th, \thetah{})$ with distribution $\Ph$ that satisfies the guarantee $\loctvk{2}(\Ph,Q) \leq C(\epsilon + \Delta/2)$. By \eqref{eqn:loctv-k-comparison}, this means $\loctvk{k}(\Ph,Q) \leq Ck2^k (\epsilon + \Delta/2)$, so by triangle inequality $\loctvk{k}(\Ph,P) \leq Ck2^k (\epsilon + \Delta/2) + \Delta \leq Ck2^k (\epsilon + \Delta)$, since $Ck2^k \geq 2$.
\end{proof}
We remark that the amount of misspecification $\Delta$ does not need to be known, since we can essentially run an exponential search with an extra runtime factor overhead of $O(\log(1/\Delta))$, and lose only a constant factor in the quality of the reconstruction.
\begin{remark}[Distributional Robustness Interpretation]
The proof of Claim~\ref{claim:epslearningrelationtomodelmisspecification} has a natural interpretation in terms of \emph{distributionally robust optimization} (DRO), see e.g. \cite{wiesemann2014distributionally,lee2017minimax,esfahani2018data}. We started with the empirical measure $P_m$ given by $m$ samples from $P$ and observed that $P$ lies within the \emph{ambiguity set} 
\[ \mathcal{P} = \{R : \loctvk2(P_m, R) \le \epsilon,\inf_{Q \in \mathcal{T}}\loctvk{k}(Q,R) \le \Delta \}. \]
Since the true measure $P$ is unknown, the DRO approach would consider the minimax optimization
\begin{equation}\label{eqn:dro} \inf_{\hat P \in \mathcal{T}} \sup_{R \in \mathcal{P}} \loctvk{k}(R,\hat P).
\end{equation}
The assumptions guarantee, by triangle inequality, that the minimizer $\hat P$ of \eqref{eqn:dro} has objective value at most $O(k2^k \epsilon + \Delta)$. However, because the set of tree measures $\mathcal T$ is nonconvex (e.g. as a subset of the space of all probability measures), there is no obvious computationally efficient way to solve this minimization. Instead of solving it directly, we observed that the combination of an $\epsilon$-learning algorithm and the triangle inequality enable us to compute an approximate minimizer $\hat P$ satisfying
\[ \sup_{R \in \mathcal{P}} \loctvk{k}(R,\hat P) \le Ck2^k(\epsilon + \Delta) \]
which in particular implies the result when we plug in $P$ for $R$. Important to our application, the uncertainty parameter $\epsilon$ can be taken to depend only logarithmically in the dimension $|\Vset|$ while containing the true measure $P$; this contrasts with the usual ``curse of dimensionality'' which requires $\epsilon$ to have an exponential dependence on the dimension (e.g. for Wasserstein uncertainty, see the discussion in \cite{blanchet2019robust}).
\end{remark}

\subsection{The Chow-Liu algorithm} \label{sec:CL}

The Chow-Liu algorithm is a classical algorithm that finds the maximum-likelihood tree model given observations. In the case of Ising models with no external field that we consider, it can be motivated based on the following observation.

\begin{claim}\label{claim:spanningtreetheta}
Given a tree model $(\T,\theta)$ with correlations $\mu$, $\T$ is a maximum weight spanning tree for weights $|\mu|$.
\end{claim}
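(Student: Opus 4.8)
The plan is to verify the \emph{cycle optimality condition} for maximum weight spanning trees, namely: a spanning tree is of maximum weight (with respect to given edge weights) if and only if for every non-tree edge $e=(u,v)$, the weight of $e$ is at most the weight of every edge on the tree path $\pi_\T(u,v)$. The key observation is that this condition is an immediate consequence of Fact~\ref{fact:corrPaths}: for any pair of vertices $u,v$ we have $|\mu_{uv}| = \prod_{f\in\pi(u,v)}|\mu_f|$, and since each factor has magnitude at most $1$, dropping all but one factor gives $|\mu_{uv}| \le |\mu_f|$ for every edge $f$ on the path $\pi(u,v)$ (this is also just the iterated triangle inequality, Fact~\ref{fact:triangleineq}). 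So with weights $|\mu|$, each edge is dominated by every edge on the $\T$-path between its endpoints, which is exactly the optimality condition.

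To keep the argument self-contained I would then include the short exchange proof that this condition implies maximum weight. Suppose not, and among all maximum-weight spanning trees pick one, $\T^*$, sharing the most edges with $\T$. Choose $e=(a,b)\in\T\setminus\T^*$ and let $A,B$ be the two components of $\T-e$. Adding $e$ to $\T^*$ creates a unique cycle $C$; since a cycle crosses the cut $(A,B)$ an even number of times and does so once via $e$, it contains another edge $f$ with one endpoint in $A$ and the other in $B$. Then $f\notin\T$ (the only edge of $\T$ across $(A,B)$ is $e$), so $f$ is a non-tree edge of $\T$ whose $\T$-path necessarily uses $e$; by the inequality above, $|\mu_f|\le|\mu_e|$. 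Now $\T^* - f + e$ is again a spanning tree (it is obtained from $\T^*+e$ by deleting an edge of its unique cycle), has weight at least that of $\T^*$, hence is also of maximum weight, and shares one more edge with $\T$ than $\T^*$ does — contradicting the choice of $\T^*$. Therefore $\T$ is itself a maximum weight spanning tree for the weights $|\mu|$.

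There is essentially no hard step here; the only point requiring a little care is arguing that the swapped-in edge $f$ has a $\T$-path through $e$, so that the inequality $|\mu_f|\le|\mu_e|$ applies — this follows from $e$ being the unique edge of $\T$ across the cut $(A,B)$. Alternatively, one may avoid the exchange argument entirely by invoking the standard cycle characterization of maximum weight spanning trees, which reduces the whole proof to the one-line observation that $|\mu_{uv}|\le|\mu_e|$ for every $e\in\pi(u,v)$.
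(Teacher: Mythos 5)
Your proof is correct and follows the same route as the paper: both verify the cycle optimality condition by noting that $|\mu_{uv}| = \prod_{e\in\pi(u,v)}|\mu_e| \le \min_{e\in\pi(u,v)}|\mu_e|$. The only difference is that you also spell out the standard exchange argument showing the cycle condition implies maximality, which the paper simply takes as known.
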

\begin{proof}
$\T$ is a maximum spanning tree if for any vertices $u,v$ in $\T$, all of the edges $e$ on the path $\pi(u,v)$ from $u$ to $v$ have correlation $|\mu_{e}| \geq |\mu_{u,v}|$. This is the case, since $|\mu_{u,v}| = \prod_{e\in \pi(u,v)}|\mu_{e}|\leq\min_{e\in \pi(u,v)}|\mu_{e}|$. 
\end{proof}

Based on this observation, given estimates $\mut{}$ for the correlations, it makes sense to try to learn a tree based on the max weight spanning tree with weights $|\mut{}|$, and indeed as shown in \cite{breslerkarzand19} the Chow-Liu algorithm (which computes the maximum likelihood tree Ising model with no external field) does exactly this. We take this as the definition.
\begin{definition}
Given approximate correlations $\mut{}$, the Chow-Liu tree $\TCL$ is a maximum spanning tree for weights $|\mut{}|$, breaking ties arbitrarily.
\end{definition}

As we will show, the Chow-Liu algorithm does not solve the $\epslearning$ problem, as it can err when there are edges with strong correlation in the true model. However, it will be an important ingredient of our final algorithm, {\CLpp}. 

\subsection{Evolutionary distance and tree metric reconstruction}
\label{sec:evol}
In the {\CLpp} algorithm, we will also use a connection between tree models and tree metrics via the insight of Steel \cite{steel1994recovering}, that on all tree models there is a natural measure of distance between the vertices called the \emph{evolutionary distance} which, furthermore, can be estimated from samples without knowing the tree topology. 
\begin{definition}[Evolutionary distance] \label{def:evolutionarydistance}
	In the special case of interest (pairwise interactions with unbiased binary spins), the evolutionary distance of a tree model $(\T,\Theta)$ with distribution $P$ is simply
\[\d(u,v) := -\log \EE_P[X_u X_v] = -\log \mu_{uv}\,. \]
\end{definition}

It is well-known that the evolutionary distance is a tree metric, defined formally as follows. 
\begin{definition}[Tree Metric]\label{def:treeMetric}
Let $\T$ be a tree with vertex set $\Vset$. We say that $\d:\Vset\times \Vset\to \mathbb{R}_{\geq 0}$ is a tree metric over $\T$ if for any $u,v \in \Vset$ and $u = u_0,\ldots,u_k = v$ the unique path from $u$ to $v$ in $\T$ that
$\d(u,v) = \sum_{i = 0}^{k - 1} \d(u_i,u_{i + 1})$. Equivalently, $\d$ is the shortest path metric when edges $(u,v)\in E(\T)$ are weighted by $\d(u,v)$.
\end{definition}

We will also consider tree metrics that are allowed to have additional Steiner nodes (i.e. latent nodes in the graphical models terminology). The metric associated to a tree with Steiner nodes is called an additive metric.

\begin{definition}[Additive metric]\label{def:additiveMetric}
Let $\Vset$ be a finite set. We call $\d:\Vset \times \Vset\to \mathbb{R}_{\geq 0}$ an \emph{additive metric} if it is the restriction to $\Vset$ of some tree metric $\d':\cX\times \cX\to \mathbb{R}_{\geq 0}$ over a tree with vertex set $\cX\supseteq \Vset$. The nodes in $\cX\sm \Vset$ are called \emph{Steiner} nodes. 
\end{definition}

\section{Results}
\label{sec:contrib}

Our main result is the following theorem, the main ideas of which are presented in Section~\ref{sec:algorithm}. 
\begin{theorem}[Guarantee for {\CLpp}]\label{t:CL++}
Let $P$ be the distribution of a tree Ising model $(\Ts, \thetas{})$ on $n$ binary $\pm 1$ variables $X_1,\dots, X_n$ with correlations $\mu_{ij} = \EE_P[X_i X_j]$. Given $0 < \eps < \epsuppbd$ and estimates $\mut{ij}$ for $1\leq i,j\leq n$ satisfying $|\mut{ij} - \mu_{ij}|\leq \eps$, {\CLpp} runs in $O(n^2)$ time and solves the $\epslearning$ problem, i.e., there is a universal constant $C$ such that it returns a tree Ising model $(\Th,\thetah{})$ with distribution $\Ph$ such that
$$
\loctvk2(\Ph,P)\leq C\eps\,.
$$
\end{theorem}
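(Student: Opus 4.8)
The plan is to realize the three-part strategy outlined in Section~\ref{ssec:techniques}, treating the input as an instance of the (deterministic) $\epslearning$ problem. The \CLpp{} algorithm proceeds as follows: (1) compute the Chow-Liu tree $\TCL$ from the input correlations, let $W = \{e \in E(\TCL) : |\mut{e}| \le 1/10\}$ be its weak edges, and let $V_1,\dots,V_m$ be the vertex sets of the connected components of $\TCL$ after removing the edges $W$; (2) for each $i \in [m]$, run a subroutine based on tree metric reconstruction on the input correlations restricted to $V_i$, producing a tree Ising model $(\T'_i, \theta'_i)$ on vertex set $V_i$; (3) output $(\Th, \thetah{})$ with $\Th = W \cup \bigcup_{i \in [m]} \T'_i$ and $\thetah{} = \thetat_W \cup \bigcup_{i \in [m]} \theta'_i$, where $\thetat_W$ assigns the input estimate $\mut{e}$ to each $e \in W$. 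One verifies that $\Th$ is a tree: $\bigcup_i \T'_i$ is a spanning forest of the sets $V_i$, and since contracting the $V_i$ in the tree $\TCL$ leaves a tree, $W$ consists of exactly $m-1$ edges joining the $V_i$ acyclically, so $\Th$ has $n-1$ edges and is connected. The runtime is $O(n^2)$: computing $\TCL$ and the partition takes $O(n^2)$ (e.g.\ Prim's algorithm on the $n \times n$ matrix of absolute correlations $|\mut{ij}|$), each component reconstruction with its desteinerization step runs in time quadratic in $|V_i|$ (hence $O(n^2)$ in total), and the stitching in step (3) is $O(n)$.

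The correctness analysis rests on the Structural Lemma (Lemma~\ref{lem:struct}) and the composition lemma (Proposition~\ref{prop:preprocessing}). Part (i) of Lemma~\ref{lem:struct} gives that each $V_i$ induces a connected subtree $\Ts[V_i]$ of the true tree; moreover a short cut argument shows that every edge $(u,v)$ of $\Ts$ whose endpoints lie in the same component satisfies $|\mu_{uv}| > 1/10 - \eps \ge \edgelowbd$. Indeed, deleting $(u,v)$ from $\Ts$ separates the vertices into two sides, and the $\TCL$-path between $u$ and $v$ must contain an edge $(w,w')$ crossing this cut; the $\Ts$-path between $w$ and $w'$ passes through $(u,v)$, so $|\mu_{ww'}| \le |\mu_{uv}|$ by Fact~\ref{fact:corrPaths} and hence $|\mut{ww'}| \le |\mu_{uv}| + \eps$; if $|\mu_{uv}| \le 1/10 - \eps$ this would make $(w,w')$ weak, contradicting that the $\TCL$-path between $u$ and $v$ avoids $W$. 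Thus each subinstance on $V_i$ has all edge strengths bounded below by $\edgelowbd$ --- precisely the regime where Chow-Liu may fail but tree metric reconstruction succeeds. Part (ii) of Lemma~\ref{lem:struct} says that contracting the $V_i$ yields the same quotient tree in $\TCL$ and in $\Ts$, so the weak edges $W$ connect the correct \emph{sets} of vertices. Given these facts together with $\loctvk2$ reconstructions $(\T'_i, \theta'_i)$ of the $\Ts[V_i]$, Proposition~\ref{prop:preprocessing} certifies $\loctvk2(\Ph, P) \le O(\eps)$. Its proof controls error accumulation: by Fact~\ref{fact:corrLocTV} it suffices to bound $|\hat\mu_{uv} - \mu_{uv}|$ over all pairs; one decomposes the $\Th$-path and the $\Ts$-path between $u$ and $v$ into segments lying inside the $V_i$ and edges of $W$ crossing between them --- these decompositions are consistent by Lemma~\ref{lem:struct} --- bounds the within-component contributions using the $\loctvk2$ guarantees, and notes that each crossing into a new component contributes a factor of magnitude $\le 1/10$ that suppresses the error and prevents the geometric blow-up edge-by-edge composition would suffer. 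The degenerate case excluded from Lemma~\ref{lem:struct}(ii), where some edge of $\Ts$ has correlation $O(\eps)$, is handled by a direct argument: every pairwise correlation routed through such an edge is $O(\eps)$, so that edge may be treated as essentially absent, splitting the instance into near-independent pieces each handled as above.

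It remains to reconstruct, for each $i$, a tree Ising model on $V_i$ that is $O(\eps)$-close in $\loctvk2$ to $\Ts[V_i]$, given approximate correlations and the promise that every edge of $\Ts[V_i]$ has strength exceeding $\edgelowbd$; chaining these $O(\eps)$ guarantees through Proposition~\ref{prop:preprocessing} then yields the theorem. We work with the evolutionary metric $\d(u,v) = -\log \mu_{uv}$ (Definition~\ref{def:evolutionarydistance}), a tree metric on $\Ts[V_i]$, and its empirical proxy $\dt(u,v) = -\log|\mut{uv}|$. For a fixed pair, $|\mut{uv} - \mu_{uv}| \le \eps$ gives $|\hat\mu_{uv} - \mu_{uv}| = O(\eps)$ whenever $\mu_{uv}$ is not negligible and "both negligible" otherwise --- but on the metric side this is only a \emph{multiplicative} guarantee on $\d$ at long range and gives no additive control there, so the classical $\ell_\infty$-robust tree metric algorithms (e.g.\ \cite{agarwala1998approximability}) do not apply directly. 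The plan is to (a) run a reconstruction algorithm tailored to this "accurate at short range, multiplicatively accurate at long range" regime, exploiting the connection between tree metrics and ultrametrics, to recover a topology with Steiner (latent) nodes together with edge lengths; (b) apply a novel desteinerization procedure that contracts away the Steiner nodes while provably preserving the quality of approximation; and (c) read off edge correlations from the lengths. A $(1 + O(\eps))$-multiplicative control on the reconstructed distances suffices for $\loctvk2$, since under $\mu = e^{-\d}$ such control yields additive $O(\eps)$ accuracy on every pairwise correlation (the worst case being $\mu_{uv}$ of constant order, where $\mu^{1-\gamma} - \mu^{1+\gamma} = O(\gamma)$). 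This step --- making tree metric reconstruction work with only long-range multiplicative accuracy and removing Steiner nodes without loss --- is the main obstacle, and is where essentially all the technical work (Section~\ref{sec:global-reconstruction}) is concentrated.
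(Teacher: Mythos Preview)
Your proposal is essentially correct and follows the same three-layer architecture as the paper's proof: reduce to lower-bounded-edge subproblems via the Chow-Liu structural lemma, solve each subproblem by tree metric reconstruction plus desteinerization, and stitch the pieces back together with the weak Chow-Liu edges using the composition argument. The only substantive omission is that you never isolate the reduction to the \emph{ferromagnetic} case (the paper's Proposition~\ref{prop:reducetoferromagnetic}): you write $\d(u,v) = -\log \mu_{uv}$ as though $\mu_{uv} > 0$, but then take $\dt(u,v) = -\log|\mut{uv}|$, and you assign signed $\mut{e}$ to the weak edges without explaining how signs are recovered inside the components. The paper handles this by first passing to $|\mut{}|$, running the entire pipeline on the resulting ferromagnetic instance, and then restoring signs edge-by-edge from $\sgn(\mut{e})$ with a short argument (Claim~\ref{claim:signsmultiply}) that signs propagate correctly along paths whenever correlations exceed $\eps$; this is easy but does need to be said.
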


By Claim~\ref{claim:epslearningrelationtomodelmisspecification}, this theorem implies the following oracle inequality result (in terms of $\loctvk{k}$) for estimation from i.i.d. samples. 
We emphasize, however, that our main theorem applies even in the case that the samples are not generated i.i.d. according to \emph{any} distribution. 

\begin{corollary}[Oracle inequality for $\loctvk2$]\label{c:oracle2}
Let $P$ be an arbitrary distribution on $n$ binary $\pm1$ variables. Let $\Delta = \inf_{Q\in \cT} \loctvk2 (Q,P),$ where the infimum is over tree Ising models.
There is a numerical constant $C>0$ such that given $m \geq C\log (n/\delta) /\eps^2$ i.i.d. samples from $P$ and the misspecification error $\Delta$, with probability at least $1-\delta$, there is an algorithm that runs in $O(n^2m)$ time and returns a tree Ising model $(\Th,\thetah{})$ with distribution $\Ph$  satisfying
$$
\loctvk{k}(\Ph, P) \leq Ck2^k (\Delta + \eps)\,.
$$
\end{corollary}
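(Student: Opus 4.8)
The plan is to obtain this as an immediate consequence of Theorem~\ref{t:CL++} together with the reduction already recorded in Claim~\ref{claim:epslearningrelationtomodelmisspecification}. Theorem~\ref{t:CL++} asserts that {\CLpp} solves the $\epslearning$ problem with some universal approximation constant $C_0$ in $O(n^2)$ time, and Claim~\ref{claim:epslearningrelationtomodelmisspecification} converts any such solver into a misspecification-robust learner from i.i.d.\ samples. So the proof is essentially bookkeeping: instantiate the procedure from the proof of Claim~\ref{claim:epslearningrelationtomodelmisspecification} with {\CLpp} as the $\epslearning$ oracle and track the constants and the runtime.

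Concretely: first, from the $m$ samples compute all $\binom n2$ empirical pairwise correlations $\mut{uv} = \tfrac1m\sum_{i=1}^m x_{i,u}x_{i,v}$; this costs $O(n^2 m)$ time and dominates the runtime, since the subsequent call to {\CLpp} takes only $O(n^2)$ time. Second, each $\mut{uv}$ is an average of $m$ i.i.d.\ $\pm1$ random variables with mean $\EE_P[X_uX_v]$, so a Chernoff/Hoeffding bound gives $\PP(|\mut{uv}-\EE_P[X_uX_v]|>\eps) \le 2e^{-m\eps^2/2}$; choosing $m \ge C\log(n/\delta)/\eps^2$ and union bounding over the $\binom n2$ pairs, with probability at least $1-\delta$ we have $|\mut{uv}-\EE_P[X_uX_v]|\le\eps$ for all $u,v$. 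Third, pick a tree Ising model $Q\in\cT$ attaining $\loctvk2(Q,P)=\Delta$ (it exists by compactness of $\cT$); by Fact~\ref{fact:corrLocTV} the pairwise correlations of $Q$ are within $\Delta/2$ of those of $P$, hence within $\eps+\Delta/2$ of the $\mut{}$. Fourth, run {\CLpp} on inputs $\mut{}$ with accuracy parameter $\eps+\Delta/2$; by Theorem~\ref{t:CL++} it returns a tree Ising model $(\Th,\thetah{})$ with $\loctvk2(\Ph,Q)\le C_0(\eps+\Delta/2)$, and \eqref{eqn:loctv-k-comparison} upgrades this to $\loctvk k(\Ph,Q)\le C_0 k2^k(\eps+\Delta/2)$. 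Finally, the triangle inequality for $\loctvk k$ yields
\[
\loctvk k(\Ph,P)\le \loctvk k(\Ph,Q)+\loctvk k(Q,P)\le C_0k2^k(\eps+\Delta/2)+\Delta\le Ck2^k(\eps+\Delta)
\]
for a sufficiently large universal constant $C$ (using $k2^k\ge 2$).

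The one point requiring care is the hypothesis $\eps<\epsuppbd$ in Theorem~\ref{t:CL++}: if $\eps+\Delta/2\ge\epsuppbd$ we cannot invoke the theorem, but then the claimed bound $Ck2^k(\eps+\Delta)$ already exceeds $1\ge\loctvk k(\Ph,P)$ once $C$ is taken large enough (since $k2^k\ge2$ and $\eps+\Delta\ge\epsuppbd$), so the conclusion holds trivially by returning any tree model; otherwise every invocation above is legitimate. I do not expect a genuine obstacle here — all the difficulty lives in Theorem~\ref{t:CL++} itself, and once that is in hand the corollary is a routine composition of the reduction in Claim~\ref{claim:epslearningrelationtomodelmisspecification}, the comparison inequality \eqref{eqn:loctv-k-comparison}, and a concentration bound. (As remarked after Claim~\ref{claim:epslearningrelationtomodelmisspecification}, knowledge of $\Delta$ could be dispensed with at the cost of an $O(\log(1/\Delta))$ runtime factor via exponential search over guesses for $\Delta$, but since $\Delta$ is supplied in the statement we use it directly.)
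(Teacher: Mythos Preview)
Your proof is correct and follows essentially the same route as the paper, which derives the corollary directly from Theorem~\ref{t:CL++} via Claim~\ref{claim:epslearningrelationtomodelmisspecification}. One minor slip (shared with the paper's own proof of that claim): Fact~\ref{fact:corrLocTV} gives $|\EE_P[X_uX_v]-\EE_Q[X_uX_v]|\le 2\,\loctvk2(P,Q)=2\Delta$, not $\Delta/2$, but this only changes the absorbed constant.
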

The sample-complexity dependence on $n$ and $\eps$ is optimal up to constant factors, by Corollary 3.6 of \cite{breslerkarzand19}, and the runtime complexity matches that of the Chow-Liu algorithm.
Furthermore, we remark that the result can be adapted so that the amount of model misspecification $\Delta$ does not need to be known, as we can run exponential search over $\Delta$, and only invoke an extra runtime factor of $O(\log(1/\Delta))$.

Theorem~\ref{thm:prediction-to-structure} in Appendix~\ref{sec:predImpliesStruct} shows that sufficient accuracy in $\loctvk3$ implies correct structure recovery; thus the learning for predictions problem \emph{generalizes} the structure learning problem. This is used to show Corollary~\ref{cor:minimaxStructure} which states that {\CLpp} is optimal also for structure learning. Thus, given data from a tree Ising model, {\CLpp} learns the correct structure if that is possible, and regardless it learns a model that yields good predictions.

We also prove in the next section the surprising result that the Chow-Liu algorithm fails under model misspecification, even when given population values for the correlations.
\begin{theorem}\label{thm:chowliufailsmodelmisspecification}
For any $\delta > 0$, there is a distribution $P$ such that
\begin{enumerate}
    \item[(i)] there is a tree-structured distribution $P'$ with $\loctvk2(P', P)\leq \delta$, and 
    \item[(ii)] the Chow-Liu algorithm, given population values for the correlations, outputs a tree $\TCL$ such that all $\TCL$-structured distributions $Q$ have $\loctvk2(Q,P)\geq 0.05$.
\end{enumerate}
\end{theorem}

\section{The Chow-Liu Algorithm Fails with Model Misspecification} \label{sec:CLfails}

In this section, we establish Theorem~\ref{thm:chowliufailsmodelmisspecification}, which shows a surprising brittleness of the Chow-Liu algorithm to model misspecification. In particular, suppose that the distribution $P$ is not a tree-structured model, but is $\delta$-close in $\loctvk2$ distance to a tree-structured model $P'$ for some small $\delta > 0$. We show that the Chow-Liu algorithm does not achieve nontrivial reconstruction guarantees even when run with infinitely many samples from $P$ (i.e., with the population values for the correlations). 

Informally, the reason for this failure is that the greedy Chow-Liu algorithm can be tricked into taking too many shortcuts when constructing its tree, resulting in significant distortion of the correlation between distant nodes. Note that we prove a rather strong failure of the Chow-Liu algorithm: not only is the specific distribution prescribed by the Chow-Liu algorithm (given by a tree and edge weights) a poor approximation to the one generating the data, but so too is \emph{any} tree-structured distribution that has the Chow-Liu tree structure.

The proof of Theorem~\ref{thm:chowliufailsmodelmisspecification} is as follows.
The construction of $P$ is given in Section~\ref{ssec:nonrobustnessconstruction}. Its closeness to a tree-structured distribution is shown in Section~\ref{ssec:closetreestructuredconstruction}. The failure of the Chow-Liu algorithm is proved in Section~\ref{ssec:chowliufailureproof}.

\subsection{Construction of distribution $P$}\label{ssec:nonrobustnessconstruction}
Fix a parameter $0 < \delta < 0.1$ and let $n = \lceil 1 / \delta \rceil$. Let $P = P_{\delta}$ denote the distribution over the variables $(X_1,\ldots,X_n,Y_1,\ldots,Y_n)$ that is sampled as follows.
Let $A_1,\ldots,A_n,B,C$ be independent $\{+1,-1\}$-valued random variables such that for all $i \in [n]$ we have $\EE[A_i] = e^{-\delta}$, $\EE[B] = e^{-2\delta}$ and $\EE[C] = 0$.
Now let $$X_i = C  \prod_{t=1}^i A_t \quad \text{and}\quad Y_i = BC \prod_{t=1}^i A_t \quad \text{for all }i \in [n]\,.$$

By a straightforward calculation for each $i,j \in [n]$ the first two moments are:
$$\EE_P[X_i] = \EE_P[Y_i] = 0.$$
$$\EE_P[X_iX_j] = \EE_P[Y_iY_j] = \exp(-\delta|i-j|).$$
$$\EE_P[X_iY_j] = \exp(-\delta|i-j|-2\delta).$$

\subsection{Closeness to tree-structured distribution $P'$}\label{ssec:closetreestructuredconstruction}

We construct a tree-structured distribution $P'$ over the variables $(X_1,\ldots,X_n,Y_1,\ldots,Y_n)$ as follows. Let $W,Z_1,\ldots,Z_{n}$ be independent $\{+1,-1\}$-valued random variables such that for all $i \in [n]$ we have $\EE[Z_i] = e^{-\delta}$ and $\EE[W] = 0$. Let $$X_i = Y_i = W \prod_{t=1}^i Z_t.$$ This can be represented by a tree-structured distribution, where the tree structure is a path in order $X_1,Y_1,X_2,Y_2,\ldots,X_n,Y_n$ (see panel (a) in Figure~\ref{fig:CLfails}). 
For all $i,j \in [n]$, the first two moments are:
$$\EE_{P'}[X_i] = \EE_{P'}[Y_i] = 0,$$
$$ \EE_{P'}[X_i X_j] = \EE_{P'}[X_i Y_j] = \EE_{P'}[Y_i Y_j] = \exp(-\delta|i-j|).$$

\begin{figure}
    \centering
    \begin{tabular}{ccc}
    \includegraphics[scale=0.5]{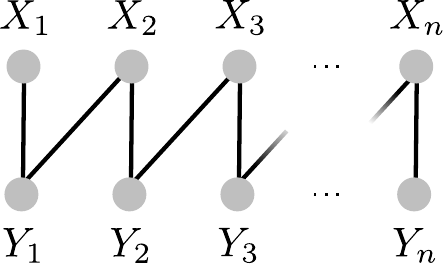} & \qquad\qquad\qquad & \includegraphics[scale=0.5]{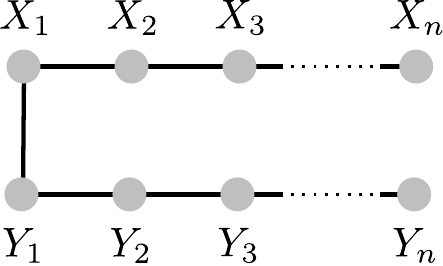} \\
    (a) Tree that is close to $P$ & & (b) Tree returned by Chow-Liu
    \end{tabular}
    \caption{Example capturing the failure of Chow-Liu under model misspecification. The true distribution $P$ is within $\loctvk2$ distance $\delta$ of a tree model $P'$ structured as in (a), however Chow-Liu returns a tree model (b) for which all parameter choices result in $\Omega(1)$ distance in $\loctvk2$ to $P$. We remark that the picture (b) is only one of the possible trees returned by Chow-Liu, since there are many ties in our construction, but we show that all of them fail. }
    \label{fig:CLfails}
\end{figure}

Therefore, the correlations of $P$ and $P'$ match up to error $2\delta$. For any $i,j \in [n]$:
\begin{align*} \EE_P[X_iX_j] = \EE_{P'}[X_iX_j], \quad \EE_P[Y_iY_j] = \EE_{P'}[Y_iY_j]\end{align*}
\begin{align*} &\big|\EE_P[X_iY_{j}] - \EE_{P'}[X_iY_{j}]\big| = \big|e^{-\delta|i-j|-2\delta} - e^{-\delta|i-j|}| \leq 1 - e^{-2\delta} \leq 2\delta\end{align*}

Since the correlations under $P$ and $P'$ are $2\delta$-close and the variables are unbiased, Fact~\ref{fact:corrLocTV} implies \begin{equation}\loctvk2(P,P')\leq 2\delta.\end{equation}

\subsection{Failure of the Chow-Liu algorithm}\label{ssec:chowliufailureproof}
We now show that the Chow-Liu algorithm, when given as input the population correlations under $P$, outputs a tree $\TCL$ such that all $\TCL$-structured distributions are $0.1$-far from $P$ in $\loctvk2$ distance. This shows the failure of the Chow-Liu algorithm. First, define the set $S = \{(X_i,X_{i+1})\}_{i \in [n-1]} \cup \{(Y_i,Y_{i+1})\}_{i \in [n-1]}$. Note that all pairs of variables in $S$ have correlation $e^{-\delta}$ under $P$, and all other pairs have correlation at most $e^{-2\delta}$. Since the edges in $S$ yield no cycles, the Chow-Liu (max-weight spanning) tree $\TCL$ contains the edges $S$. Thus, the Chow-Liu tree $\TCL$ consists of the paths $X_1,\ldots,X_n$ and $Y_1,\ldots,Y_n$ as well as one edge $(X_i,Y_j)$ between these paths.

Fix any $\TCL$-structured distribution $Q$. We wish to show that $\loctvk{2}(Q,P) \geq 0.05$. Suppose that $i \leq n/2$ (a symmetric argument using $\EE_Q[X_1Y_1]$ works if $i > n/2$):
\begin{align*}
\big|\EE_Q[X_nY_n]\big| &= \big|\EE_Q[X_nX_i]\EE_Q[X_iY_n]\big| & \mbox{using Fact~\ref{fact:corrPaths}} \\
&\leq \big|\EE_Q[X_nX_i]\big| 
\\ &\leq \big|\EE_P[X_nX_i]\big| + 2 \loctvk2(Q,P) & \mbox{ using Fact~\ref{fact:corrLocTV}}
\\ &\leq e^{-\delta n /2} + 2 \loctvk2(Q,P), & \mbox{using $i \leq n/2$ and Fact~\ref{fact:corrLocTV}}
\end{align*}
In the other direction, again by Fact~\ref{fact:corrLocTV} $$\big|\EE_Q[X_nY_n]\big| \geq \big|\EE_P[X_nY_n]\big| - 2\loctvk2(Q,P) = e^{-2\delta} - 2\loctvk2(Q,P)\,,$$
so putting these inequalities together $$\loctvk{2}(Q,P) \geq \frac{e^{-2\delta} - e^{-\delta n / 2}}{4} > 0.05,$$ by our assumption that $n \geq 1/\delta$ and $\delta < 0.1$.

\qed

\section{Algorithm}
\label{sec:algorithm}

In this section, we present our robust algorithm, {\CLpp}, for prediction-centric learning. The formal proof of correctness is given in this section, but most of the details are deferred to Sections~\ref{sec:preprocessing}, \ref{sec:global-reconstruction}, and  \ref{sec:desteinerize}.

\subsection{Overview}

The two most popular classical approaches for reconstructing tree models under noisy data observations are (i) the Chow-Liu algorithm, and (ii) tree metric reconstruction based on the evolutionary distance. Surprisingly, neither of these algorithms solves the $\epslearning$ problem by itself. As we have shown in Section~\ref{sec:CLfails}, the Chow-Liu algorithm fails when the edges are allowed to be arbitrarily strong. And, on the other hand, as we show in Section~\ref{ssec:treemetricfails}, the tree metric reconstruction approach fails when the edge strengths are allowed to be arbitrarily weak.

However, by a stroke of luck, the algorithms happen to be effective in contrasting edge strength regimes. The Chow-Liu algorithm is known to work when all edge strengths are \textit{upper-bounded} by a constant \cite{breslerkarzand19}\footnote{Our proof of this fact is different from the proof given by \cite{breslerkarzand19} and we can handle model misspecification.}. And we prove that a tree metric reconstruction approach works when all edge strengths are \textit{lower-bounded} by a constant. By stitching these algorithms together, we obtain the best of both worlds: a single algorithm, {\CLpp}, that succeeds for models with arbitrary edge weights.

For simplicity, we present our algorithm in the restricted case that the model is ferromagnetic (the edge correlations are nonnegative). The following proposition, proved in Appendix~\ref{app:toferromagnetic} shows that restricting to ferromagnetic models comes at no loss of generality: 
\begin{proposition}\label{prop:reducetoferromagnetic} The $\epslearning$ problem for general models on $n$ vertices can be solved in $O(n^2)$ time and one call to an oracle $\LearnFerromagneticModel$ for the $\epslearning$ problem restricted to ferromagnetic models.
\end{proposition}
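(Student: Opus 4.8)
The plan is to reduce the general $\epslearning$ problem to the ferromagnetic case by a sign-flipping (gauge) transformation on the variables, combined with a consistent sign assignment to the pairwise correlations. First I would observe that for a tree Ising model $(\Ts, \theta)$, the sign of each correlation $\mu_{ij}$ is exactly $\prod_{e \in \pi(i,j)} \sgn(\theta_e)$ by Fact~\ref{fact:corrPaths}, so the signs $\sgn(\mu_{ij})$ are consistent in the sense that for any vertices $i,j,k$ we have $\sgn(\mu_{ij}) \sgn(\mu_{jk}) = \sgn(\mu_{ik})$ (when none of the correlations vanish). This means the relation $i \sim j$ iff $\mu_{ij} > 0$ (together with $i \sim i$) is an equivalence relation, and there is a $\pm 1$ labeling $\sigma : \Vset \to \{+1,-1\}$, computable by BFS/DFS, such that $\sgn(\mu_{ij}) = \sigma_i \sigma_j$ for all $i,j$. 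Flipping the sign of $X_i$ whenever $\sigma_i = -1$ transforms $P$ into the distribution of a ferromagnetic tree model on the same tree with edge parameters $|\theta_e|$ and all correlations $|\mu_{ij}| \ge 0$.

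The issue is that the algorithm is only given noisy estimates $\mut{ij}$, and when $|\mu_{ij}|$ is small (say $\le \eps$) the sign of $\mut{ij}$ may be wrong or meaningless. To handle this I would not try to recover the true sign pattern exactly; instead I would pick any vertex, say vertex $1$, set $\sigma_1 = +1$, and for every other vertex $i$ set $\sigma_i = \sgn(\mut{1i})$ (breaking ties arbitrarily, e.g. $\sgn(0) = +1$). Then define flipped estimates $\mut{ij}' = \sigma_i \sigma_j \mut{ij}$ and feed these to the ferromagnetic oracle $\LearnFerromagneticModel$, which returns a ferromagnetic tree model $(\Th, \thetah{}')$; finally I would output $(\Th, \thetah{})$ where $\thetah{e} = \sigma_u \sigma_v \thetah{e}'$ for $e = (u,v)$, i.e. I undo the flip on the learned model. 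The key claim to verify is that $\mut{ij}'$ are valid inputs to the ferromagnetic oracle: there exists a \emph{ferromagnetic} tree Ising model whose true correlations are within $\eps$ of $\mut{ij}'$. I would exhibit it explicitly: take the sign-flip of $P$ by the \emph{true} sign labeling $\sigma^*$ (with $\sigma^*_1 = +1$), which is a ferromagnetic model $P^*$ with correlations $\mu^*_{ij} = |\mu_{ij}| = \sigma^*_i \sigma^*_j \mu_{ij} \ge 0$. The point is that $\mut{ij}'$ is close to $\mu^*_{ij}$: if $\sigma_i \sigma_j = \sigma^*_i \sigma^*_j$ then $|\mut{ij}' - \mu^*_{ij}| = |\mut{ij} - \mu_{ij}| \le \eps$ directly; and the only way $\sigma_i \sigma_j \ne \sigma^*_i \sigma^*_j$ can happen is if for one of $i,j$, say $i$, we have $\sgn(\mut{1i}) \ne \sigma^*_1 \sigma^*_i = \sgn(\mu_{1i})$, which forces $|\mu_{1i}| \le \eps$ (a sign error in an $\eps$-accurate estimate only happens near zero), and then by the triangle inequality (Fact~\ref{fact:triangleineq}) $|\mu_{ij}| \le |\mu_{1i}| \le \eps$, hence $|\mu^*_{ij}| \le \eps$ and also $|\mut{ij}'| = |\mut{ij}| \le |\mu_{ij}| + \eps \le 2\eps$, so $|\mut{ij}' - \mu^*_{ij}| \le |\mut{ij}'| + |\mu^*_{ij}| \le 3\eps$. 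Thus running the oracle with accuracy parameter $3\eps$ on inputs $\mut{ij}'$ is legitimate and yields $\loctvk2(\Ph', P^*) \le 3C\eps$.

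To conclude, I would check that sign-flipping is an isometry for $\loctvk2$: flipping the signs of a subset of variables is a bijection between configurations that maps $P$ to $P^*$ and $\Ph$ to $\Ph'$ and preserves all two-variable marginals up to relabeling of spin values, so $\loctvk2(\Ph, P) = \loctvk2(\Ph', P^*) \le 3C\eps$. (Concretely, by Fact~\ref{fact:corrLocTV}, $\loctvk2(\Ph,P) = \tfrac12 \max_{u,v}|\EE_{\Ph}[X_uX_v] - \EE_P[X_uX_v]| = \tfrac12 \max_{u,v}|\sigma_u\sigma_v(\EE_{\Ph'}[X_uX_v] - \EE_{P^*}[X_uX_v])| = \loctvk2(\Ph', P^*)$, using that $\EE_{\Ph}[X_uX_v] = \sigma_u\sigma_v \EE_{\Ph'}[X_uX_v]$, which follows since $(\Th,\thetah{})$ is the $\sigma$-flip of $(\Th, \thetah{}')$ and Fact~\ref{fact:corrPaths} shows correlations multiply by $\sigma_u\sigma_v$ under the flip.) The runtime is $O(n^2)$: computing $\sigma$ takes $O(n)$ time, forming the $\mut{ij}'$ and the final $\thetah{e}$ takes $O(n^2)$ and $O(n)$ time respectively, plus one oracle call. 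I expect the main obstacle to be the bookkeeping around vanishing or near-vanishing correlations — making sure the chosen sign labeling $\sigma$, even when it disagrees with the true $\sigma^*$, still produces inputs that are legitimately close to \emph{some} ferromagnetic model — which is exactly the case analysis above using Fact~\ref{fact:triangleineq} to propagate smallness of $|\mu_{1i}|$ to $|\mu_{ij}|$.
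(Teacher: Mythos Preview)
Your argument breaks down at the step where you invoke Fact~\ref{fact:triangleineq} to conclude that $|\mu_{ij}| \le |\mu_{1i}|$. The correct inequality (note the paper's statement has a typo; the proof in the paper uses the correct direction) is $|\mu_{ij}| \ge |\mu_{i1}|\,|\mu_{1j}|$, i.e.\ products of correlations along a detour \emph{underestimate} the direct correlation. There is no reason that a small $|\mu_{1i}|$ should force $|\mu_{ij}|$ to be small. Concretely, take the path $1 - j - i$ with $\theta_{1j} = 0.02$, $\theta_{ji} = 0.4$, and $\eps = 0.01$. Then $|\mu_{1i}| = 0.008 < \eps$, so an adversary may set $\mut{1i} = -0.002$ and make $\sigma_i = -1$; meanwhile $|\mu_{1j}| = 0.02 > \eps$ so $\sigma_j = +1$ is forced. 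But $|\mu_{ij}| = 0.4$, so your flipped estimate is $\mut{ij}' \approx -0.4$, which is about $0.8 = 80\eps$ away from $\mu^*_{ij} = 0.4$. The inputs you hand to the ferromagnetic oracle are therefore \emph{not} close to any ferromagnetic model, and the reduction fails. The underlying issue is that a reference vertex can be weakly correlated with a vertex $i$ (making $\sigma_i$ unreliable) while $i$ is strongly correlated with many other vertices; your sign errors then corrupt large correlations, not just small ones.

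The paper sidesteps this entirely by never trying to pick a global sign labeling before the oracle call. It simply feeds $|\mut{ij}|$ to the oracle --- valid because $\bigl||\mut{ij}| - |\mu_{ij}|\bigr| \le |\mut{ij} - \mu_{ij}| \le \eps$ and $(|\mu_{ij}|)_{i,j}$ are the correlations of the ferromagnetic model $(\Ts, |\theta|)$ --- and only \emph{after} obtaining the learned tree $\Th$ does it assign signs, edge by edge, via $\thetah{e} = \sgn(\mut{e})\cdot \theta'_e$. The analysis then shows that whenever $|\muh{uv}|$ is large, every edge $e$ on the path in $\Th$ from $u$ to $v$ has $|\mu_e|$ large enough that $\sgn(\mut{e})$ is correct, and a parity argument (Claim~\ref{claim:signsmultiply}) propagates this to $\sgn(\muh{uv}) = \sgn(\mu_{uv})$; when $|\muh{uv}|$ is small, both sides are $O(\eps)$ and the sign is irrelevant. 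The key difference is that signs are read off along the \emph{learned} tree, where largeness of the endpoint correlation forces largeness of every edge correlation on the path, rather than relative to a fixed root whose correlations with distant nodes can be arbitrarily small.
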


In Section~\ref{ssec:chowliuoverview},  we present our algorithm $\LearnFerromagneticModel$, which solves the reconstruction problem for ferromagnetic models, and is based off of the Chow-Liu algorithm. This algorithm has a key subroutine, $\LearnLowerboundedModel$, which we present in Section~\ref{ssec:tmroverview}. The subroutine $\LearnLowerboundedModel$ is inspired by the tree metric reconstruction literature, and learns a model assuming that the model's edge strengths are lower-bounded by a constant. The results in Sections~\ref{ssec:chowliuoverview} and \ref{ssec:tmroverview} are summarized by:
\begin{proposition}\label{prop:reducetolowerbounded}\label{prop:preprocessing}
$\LearnFerromagneticModel(\mut,\eps)$ solves the ferromagnetic $\epslearning$ problem in $O(n^2)$ time and $m = O(n)$ calls to $\LearnLowerboundedModel$.
Furthermore, letting $t_1,\ldots,t_m$ denote the number of input vertices in the calls to $\LearnLowerboundedModel$, we have $\sum_{i=1}^m t_i \leq n$.
\end{proposition}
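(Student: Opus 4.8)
The plan is to use the Structural Lemma (Lemma~\ref{lem:struct}) together with the composition lemma to build $\LearnFerromagneticModel$ as follows. First, compute the Chow-Liu tree $\TCL$ from the input correlations $\mut{}$; this takes $O(n^2)$ time. Let $W$ be the set of weak edges of $\TCL$ (those with $\mut{e} \le 1/10$), and let $V_1,\dots,V_m$ be the connected components of $\TCL \setminus W$. For each component $V_i$, we observe that every edge of the true tree $\Ts$ inside $V_i$ must have correlation bounded below: indeed, by Lemma~\ref{lem:struct}(i), $V_i$ is connected in $\Ts$, so $\Ts[V_i]$ is a subtree; and since $\eps \le \edgelowbd$, any edge $e$ of $\Ts[V_i]$ with $\mu_e \le 1/10 - \eps$ would have $\mut{e} \le 1/10$, but one can argue (using that the components were formed by removing exactly the weak Chow-Liu edges, and Lemma~\ref{lem:struct}(ii)) that this cannot happen without splitting $V_i$ — so all edges inside $V_i$ have true correlation $\ge 1/10 - \eps \ge \edgelowbd$, hence the preconditions of $\LearnLowerboundedModel$ are met. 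We then run $\LearnLowerboundedModel$ on the submatrix of $\mut{}$ restricted to $V_i$ (with the relevant lower bound constant) to obtain $(\T_i', \theta_i')$, and finally stitch: output $\Th = W \cup \bigcup_i \T_i'$ with weights $\thetah{} = \thetat_W \cup \bigcup_i \theta_i'$, where $\thetat_W$ are the input correlations on the weak edges.

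For correctness, by Lemma~\ref{lem:struct} and Proposition~\ref{prop:reducetolowerbounded}'s guarantee on each component, each $(\T_i',\theta_i')$ is $O(\eps)$-close in $\loctvk2$ to $(\Ts[V_i], \theta|_{V_i})$; the composition lemma (informal statement just above, formally this very Proposition) then yields $\loctvk2(\Ph, P) = O(\eps)$, which is exactly the $\epslearning$ guarantee. The two cases flagged in Lemma~\ref{lem:struct} must be handled: when some edge of $\Ts$ has correlation $o(\eps)$, part (ii) of the structural lemma does not apply directly, but in that regime all correlations along any path through that edge are $O(\eps)$, so — using Fact~\ref{fact:corrLocTV} — the contributions of such far-apart pairs to $\loctvk2$ are automatically $O(\eps)$ on both sides, and one can treat that edge as effectively cutting the tree; I would fold this into the component analysis by arguing the partition is still consistent up to $O(\eps)$-sized correlations.

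For the runtime and accounting claims: the Chow-Liu step is $O(n^2)$; since $V_1,\dots,V_m$ partition the $n$ vertices, we have $\sum_i t_i = \sum_i |V_i| = n \le n$ (in fact equality, giving the stated bound), and $m \le n$ since each component is nonempty; the stitching is $O(n)$. The restriction of $\mut{}$ to each $V_i$ and the final assembly are all within $O(n^2)$ total. The main obstacle I anticipate is the careful path-decomposition argument inside the composition lemma — precisely controlling how errors accumulate along a path $\pih(u,v)$ versus $\pis(u,v)$ when the path crosses many components $V_i$. The key point (as sketched in the paper) is that whenever $\pis(u,v)$ visits $\ell$ distinct components it must traverse $\ell - 1$ weak edges of $W$, each with correlation $\le 1/10$, and these factors geometrically suppress the product of correlations so that the additive $O(\eps)$ errors from each component's $\loctvk2$ guarantee, when propagated multiplicatively through Fact~\ref{fact:corrPaths}, telescope to a total of $O(\eps)$ rather than blowing up linearly in $\ell$; making this quantitative — i.e., choosing the right way to compare $\prod \mu_e$ terms and bounding the discrepancy — is where the real work lies, and it relies crucially on the consistency of the two path-decompositions guaranteed by Lemma~\ref{lem:struct}.
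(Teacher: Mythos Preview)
Your proposal follows essentially the same approach as the paper: the algorithm description, the runtime and vertex-count accounting, and the path-decomposition with geometric decay from the weak edges in $W$ are all correctly identified. The paper makes the last step precise by introducing an intermediate quantity $\Upsilon_{uv}$ (the product of \emph{true} correlations along the \emph{reconstructed} path decomposition) and bounding $|\muh{uv}-\Upsilon_{uv}|$ and $|\Upsilon_{uv}-\mus{uv}|$ separately via a telescoping lemma; your sketch anticipates this correctly.

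There is one genuine gap, in your handling of the very-weak-edge case. You write that when $\pis(u,v)$ contains an edge of correlation $O(\eps)$, ``the contributions of such far-apart pairs to $\loctvk2$ are automatically $O(\eps)$ on both sides.'' This gives $|\mus{uv}|=O(\eps)$ but \emph{not} $|\muh{uv}|=O(\eps)$: the reconstructed path $\pih(u,v)$ lives in a different tree and there is no a priori reason it should contain a weak edge. The paper proves this separately (Claim~\ref{claim:pathswithweakedges}): if $\pis(u,v)$ has an edge with $\thetas{e}\le 4\eps$, then $\pih(u,v)$ must contain an edge with $\thetah{e'}\le 5\eps$. The argument uses the maximum-spanning-tree property of \emph{both} $\TCL$ and $\Ts$: if every edge on the Chow--Liu path $\picl(u,v)$ had true correlation exceeding $4\eps$, then the cycle formed by $\picl(u,v)\cup\pis(u,v)$ would let one swap out the weak edge of $\Ts$ for a strictly heavier one, contradicting that $\Ts$ is a maximum spanning tree for $\mus{}$ (Claim~\ref{claim:spanningtreetheta}). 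Hence some Chow--Liu path edge has $\mus{}\le 4\eps$, so $\mut{}\le 5\eps$, so it lies in $W\subset E(\Th)$ and appears on $\pih(u,v)$. Your proposed alternative of ``folding this into the component analysis by arguing the partition is still consistent up to $O(\eps)$-sized correlations'' does not work as stated, since Lemma~\ref{lem:struct}(ii) explicitly requires the $\Omega(\eps)$ edge lower bound and fails in exactly this regime; the case genuinely needs the separate spanning-tree argument above.

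A smaller point: you invoke Lemma~\ref{lem:struct} as a black box, but in the paper its content (Claims~\ref{claim:preprocessingconnectedsubtrees}, \ref{claim:strongTbetweensetsarealsoinTprime}, \ref{claim:pathsagreeglobally}) is proved \emph{inside} the proof of this proposition, so you should expect to supply those arguments rather than cite them.
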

\begin{proposition}
\label{prop:tmr}
The algorithm $\LearnLowerboundedModel$ solves the $\epslearning$ problem on tree models with $n$ variables with edge correlations $\geq \edgelowbd$ in time $O(n^2)$.
\end{proposition}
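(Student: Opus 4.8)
The plan is to reduce the problem to approximate tree-metric reconstruction using the evolutionary distance. By Fact~\ref{fact:corrPaths} and Definition~\ref{def:evolutionarydistance}, the map $\d(u,v) = -\log\mu_{uv}$ is a tree metric on $\Ts$ whose edge lengths are $-\log\theta_e$, and the hypothesis $\theta_e \ge \edgelowbd$ means every edge length is bounded, in $[0,\log(1/\edgelowbd)]$. From the input correlations we form approximate distances $\dt(u,v) = -\log(\max\{\mut{uv},\, \eps\})$ (clipping handles nonpositive or tiny estimates). The essential feature to keep track of is that the quality of $\dt$ is highly non-uniform: when $\mu_{uv} = \Omega(1)$ the bound $|\mut{uv}-\mu_{uv}|\le\eps$ gives $|\dt(u,v)-\d(u,v)| = O(\eps)$, and more generally this additive accuracy degrades gracefully until $\mu_{uv}$ drops to the noise floor $\Theta(\eps)$, i.e.\ until $\d(u,v)\approx\log(1/\eps)$; beyond that range $\dt$ carries essentially no additive information, but we still retain the one-sided fact that $\dt(u,v)$ is large whenever $\d(u,v)$ is large. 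This is precisely why the $\ell_\infty$-robust tree-fitting algorithm of \cite{agarwala1998approximability} cannot be used as a black box here.

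First I would invoke the tree-metric reconstruction subroutine of Section~\ref{sec:global-reconstruction}, which is built to require only this weaker, ``locally additive up to scale $\log(1/\eps)$, plus monotone at large distances'' form of approximation. Following the reduction from tree metrics to ultrametrics (the key insight from \cite{agarwala1998approximability}), it fixes a reference vertex, passes to Gromov products, reconstructs a nearby ultrametric by a linkage-type clustering, and converts back, producing an additive metric $\dh$ realized (in the sense of Definition~\ref{def:additiveMetric}) by a tree $\cX\supseteq\Vset$ with Steiner nodes, with all edge lengths bounded. The guarantees to extract are: (i) $|\dh(u,v)-\d(u,v)| = O(\eps)$ for every pair with $\d(u,v)\le\log(1/\eps)+O(1)$, and (ii) $\dh(u,v)\ge\log(1/\eps)$ whenever $\d(u,v)\ge\log(1/\eps)+O(1)$.

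Next I would apply the desteinerization procedure of Section~\ref{sec:desteinerize} to eliminate the Steiner nodes, obtaining a tree $\Th$ on exactly the $n$ observed vertices together with edge lengths $\dh(e)$ whose induced shortest-path metric still satisfies (i) and (ii); setting $\thetah{e} = e^{-\dh(e)}$ (clipped into $[0,1]$) defines a ferromagnetic tree Ising model with pairwise correlations $\hat\mu_{uv} = e^{-\dh(u,v)}$ by Fact~\ref{fact:corrPaths}. To conclude, by Fact~\ref{fact:corrLocTV} it suffices to show $\max_{u,v}|\hat\mu_{uv}-\mu_{uv}| = O(\eps)$. Split on $\d(u,v)$: if $\d(u,v)\le\log(1/\eps)+O(1)$, then since $x\mapsto e^{-x}$ is $1$-Lipschitz on $[0,\infty)$ we get $|\hat\mu_{uv}-\mu_{uv}|\le|\dh(u,v)-\d(u,v)| = O(\eps)$ by (i); if $\d(u,v)>\log(1/\eps)+O(1)$, then $\mu_{uv} = O(\eps)$ and, by (ii), $\hat\mu_{uv}\le e^{-\log(1/\eps)} = \eps$, so $|\hat\mu_{uv}-\mu_{uv}| = O(\eps)$ as well. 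Each step — computing $\dt$, the reconstruction, desteinerization, and reading off $\thetah{}$ — runs in $O(n^2)$ time, which is linear in the input.

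The main obstacle is step two: designing the reconstruction so that it tolerates the non-uniform input accuracy. Standard tree-metric algorithms propagate errors globally and are only analyzed under uniform ($\ell_\infty$) error, so the challenge is to show the ultrametric/linkage approach localizes errors, remains topologically correct on the ``near'' scale $\le\log(1/\eps)$, and never collapses a genuinely large distance to a small one — all while the far-apart input distances may be arbitrarily corrupted. The desteinerization step is a secondary difficulty, since Steiner nodes carry no data and their edge weights must be redistributed onto real vertices without inflating any pairwise correlation error beyond $O(\eps)$.
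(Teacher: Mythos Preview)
Your overall strategy matches the paper's: form the evolutionary distance, invoke the additive-metric reconstruction of Section~\ref{sec:global-reconstruction}, desteinerize via Section~\ref{sec:desteinerize}, and read off $\thetah{}$. However, the intermediate guarantee you claim to extract from the reconstruction is too strong, and your final step depends on it.

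Your guarantee (i) --- that $|\dh(u,v)-\d(u,v)|=O(\eps)$ whenever $\d(u,v)\le\log(1/\eps)+O(1)$ --- is information-theoretically unachievable from correlations known to $\pm\eps$. Consider a path of $k=\Theta(D/L)$ edges with true total length $D$: each edge has correlation $\Omega(1)$, so each edge length is known only to additive $\pm\Theta(\eps)$, and the sum is known only to $\pm\Theta(k\eps)=\Theta(D\eps/L)$. At $D=\log(1/\eps)$ this is $\Theta(\eps\log(1/\eps))$, not $O(\eps)$. Accordingly, what the paper's reconstruction actually delivers (Theorem~\ref{thm:treemetricreconstruction}) is
\[
|\dh(u,v)-\d(u,v)|\le C\bigl(\d(u,v)/L+1\bigr)\eps,
\]
i.e.\ additive $O(\eps)$ only at the constant scale $O(L)$ and multiplicative $(1+O(\eps/L))$ beyond. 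With this correct guarantee your $1$-Lipschitz step fails in Case~1: it only gives $|\hat\mu_{uv}-\mu_{uv}|\le C(\d(u,v)/L+1)\eps$, which is $\Theta(\eps\log(1/\eps))$ at the boundary of your split. The paper instead proves (Lemma~\ref{lem:approx-suffices}) by a short calculus argument that $|e^{-\d}-e^{-\dh}|\le e^{-\d}\bigl|e^{\d-\dh}-1\bigr|\le 2C\eps$ uniformly in $\d\ge 0$, exploiting that $e^{-t}(t+1)$ is bounded; this handles all scales at once and makes your case split at $\log(1/\eps)$ unnecessary.

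A smaller issue: your clipping $\dt(u,v)=-\log\max\{\mut{uv},\eps\}$ does not yield the one-sided bound $\dtpre\ge\d$ that Theorem~\ref{thm:additivemetricreconstruction} requires as an input hypothesis (when $\mut{uv}>\mu_{uv}$ you get $\dt<\d$). The paper uses $\dtpre(u,v)=-\log\max\{0,\mut{uv}-\eps\}$ precisely to enforce this.
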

Combining these propositions gives the {\CLpp} algorithm, and allows us to formally prove Theorem~\ref{t:CL++}.
\begin{proof}[Proof of Theorem~\ref{t:CL++}]
By Proposition~\ref{prop:tmr}, the $\LearnLowerboundedModel$ algorithm is guaranteed to solve the $\epslearning$ problem in $O(n^2)$ time when the edge weights are lower-bounded by $\edgelowbd$. By Proposition~\ref{prop:preprocessing}, we can use it as a subroutine of the $\LearnFerromagneticModel$ algorithm to solve the $\epslearning$ problem in $O(n^2)$ time for ferromagnetic models. And by Proposition~\ref{prop:reducetoferromagnetic} we can use $\LearnFerromagneticModel$ as a subroutine of $\ReduceToFerromagnetic$ (i.e., the {\CLpp} algorithm) given in Appendix~\ref{app:toferromagnetic} to solve the $\epslearning$ problem in $O(n^2)$ time for general models.
\end{proof}

\subsection{Chow-Liu strikes back: handling weak edges}\label{ssec:chowliuoverview}

We present our algorithm $\LearnFerromagneticModel$, which learns any ferromagnetic model (i.e., any model with arbitrary nonnegative edge strengths). The overall scheme of this algorithm is to (i) use the Chow-Liu algorithm to break the learning problem into small subproblems, each corresponding to a subtree with lower-bounded edge strengths, and (ii) solve each subproblem with the subroutine $\LearnLowerboundedModel$. In this scheme, the Chow-Liu algorithm handles the weak edges of the model, and the $\LearnLowerboundedModel$ subroutine handles the strong edges. A surprising takeaway is that Chow-Liu algorithm  redeems itself -- the failure of Chow-Liu proved in Section~\ref{sec:CLfails} is due only to strong edges in the tree model, as Chow-Liu can handle the weak edges well. Pseudocode for $\LearnFerromagneticModel$ is given in Algorithm~\ref{alg:chowliustep}.

\begin{algorithm}\SetAlgoLined\DontPrintSemicolon
    \KwIn{
    \begin{tabular}{l}$0 < \eps < \epsuppbd$ \\  $\eps$-approximate correlations $\mut{}$ for ferromagnetic model $(\Ts,\thetas{})$\end{tabular}
    }
    \KwOut{Model $(\Th,\thetah{})$ that $O(\eps)$-approximates $(\Ts,\thetas{})$ in $\loctvk2$ distance.}
    
    \BlankLine

    \tcp{Run Chow-Liu}
    
    ${\TCL} \gets \mathrm{MaximumSpanningTree}({\mut{}})$
    
    \tcp{Use the result to partition the vertices $\Vset$ into $\sqcup_i V_i$}
    
    $W \gets \{e \in E(\TCL) : \mut{e} \leq 1/10\}$
    
    $V_1,\ldots,V_m \gets \mathrm{ConnectedComponents}({\TCL} \sm W)$. \label{step:componentcreation}

    \tcp{Learn a tree model on each set $V_i$}
    $(\Th^{(i)},\theta^{(i)}) \gets \LearnLowerboundedModel(\{\mut{uv}:u,v\in V_i \}, \eps)$ for each $i \in [m]$

    \tcp{Stitch together the learned models with the weak edges $W$ from Chow-Liu}
    
    $(\Th,\thetah{}) \gets ((\cup_i \Th^{(i)} \cup W), (\cup_i \theta^{(i)} \cup \mut{W}))$
    
    Return $(\Th, \thetah{})$

    \caption{$\LearnFerromagneticModel(\mut,\eps)$}
\label{alg:chowliustep}
\end{algorithm}

In words, $\LearnFerromagneticModel$ proceeds as follows. We first run Chow-Liu, obtaining the tree $\TCL$. Denoting by $W $ the weak edges in $\TCL$ with empirical correlations $\leq 1/10$, we partition the vertices $\Vset$ into sets $V_1,\ldots,V_m$ according to the connected components in the forest with edges $E(\TCL)\setminus W$. Now, perhaps counterintuitively, we keep only the weak edges $W$ in the Chow-Liu tree, and discard the stronger edges whose only purpose was to define the groups of relatively strongly dependent variables. Crucially, we can prove that each of these vertex sets $V_i$ induces a subtree $\Ts[V_i]$ of the original model (or of any model close in $\loctvk2$ in the misspecified case), with edge weights lower-bounded by $1/10 - \eps \geq \edgelowbd$. We then reconstruct the model separately on each subtree $\Ts[V_i]$ using the $\LearnLowerboundedModel$ procedure, taking advantage of the lower bound on the edge strength within the subtree. Finally, we glue together the reconstructed trees on vertex sets $V_1,\ldots,V_m$ using the weak edges $W$ from the Chow-Liu tree.  Figure~\ref{fig:preprocessingdiagram} illustrates the algorithm.

\begin{figure}
    \centering
    \includegraphics[scale=0.75]{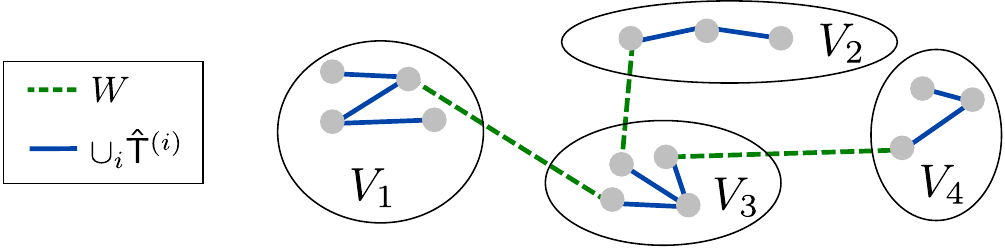}
    \caption{Diagram illustrating the tree $\Th$ returned by $\LearnFerromagneticModel$ (Algorithm~\ref{alg:chowliustep}). The green dashed edges are the weak edges $W$ of the Chow-Liu tree. They connect between sets of vertices $V_1,\ldots,V_m$. The solid blue edges within each set $V_i$ are the tree $\Th^{(i)}$ constructed by the $\LearnLowerboundedModel$ subroutine run on $V_i$.}
    \label{fig:preprocessingdiagram}
\end{figure}

The algorithm's formal correctness guarantee is given in Proposition~\ref{prop:preprocessing}. We sketch the proof, deferring details to Section~\ref{sec:preprocessing}. We wish to prove that for any vertices $u$ and $v$, that the correlation in the reconstructed tree model approximates the true correlation: $$|\muh{u,v} - \mus{u,v}| \leq O(\eps).$$
Our proof separates the problem into two different cases, depending on the edge strengths of the path $\pis(u,v)$ in $\Ts$ between $u$ and $v$. In the first case, all of the edges $e$ of $\pis(u,v)$ have strength at least $\thetas{e} \geq C\epsilon$ for some large enough constant $C$; in the second case there is some edge $e$ with correlation strength $\thetas{e} < C\epsilon$. 

\paragraph{Case 1} Suppose that the edges $e$ in $\pis(u,v)$ all have true correlation $\thetas{e} \geq C\epsilon$. Then we can prove the following critical structural lemma: the reconstructed path $\pih(u,v)$ in $\Th$ between $u$ and $v$ visits sets $V_{l_1},\ldots,V_{l_t}$ in the same order as the path $\pis(u,v)$ in $\Ts$. Specifically, we can decompose the paths $\pis(u,v)$ and $\pih(u,v)$ as a concatenation of paths and edges,
$$\pih(u,v) = \pih(\hat{a}_1,\hat{b}_1) \circ (\hat{b}_1,\hat{a}_2) \circ \pih(\hat{a}_2,\hat{b}_2) \circ \dots \circ (\hat{b}_{t-1},\hat{a}_t) \circ \pih(\hat{a}_t,\hat{b}_t), $$
$$ \pis(u,v) = \pis(a_1,b_1) \circ (b_1,a_2) \circ \pis(a_2,b_2) \circ \dots \circ (b_{t-1},a_t) \circ \pis(a_t,b_t), $$ where for any $i \in [t]$, the paths $\pis(a_i,b_i)$ and $\pih(\hat{a}_i,\hat{b}_i)$ stay within the same set $V_{l_i}$. Here $a_i$ and $b_i$ denote the first and last vertices visited by $\pis(u,v)$ in $V_{l_i}$. And $\hat{a}_i$ and $\hat{b}_i$ are similarly defined for $\pih(u,v)$. (In particular $a_1 = \ah_1 = u$ and $b_t = \bh_t = v$.) See Figure~\ref{fig:errordecaysufficientlybounded} for an illustration.

\begin{figure}[h]
    \centering
    \includegraphics[scale=0.75]{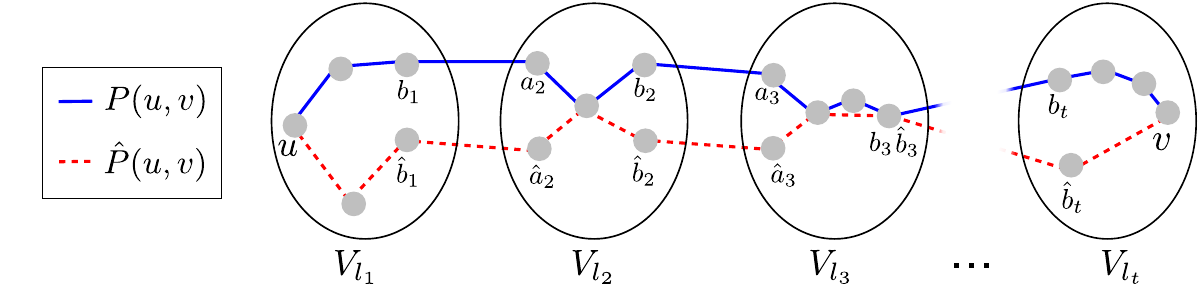} 
  \caption{If all edges $e$ in the true path $\pis(u,v)$ have strength at least $C\epsilon$, then the paths $\pis(u,v)$ and $\pih(u,v)$ visit the sets of the partition $\Vset = \sqcup V_i$ in the same order.}
    \label{fig:errordecaysufficientlybounded}
\end{figure}

Using this decomposition of $\pih(u,v)$ and $\pis(u,v)$ into a concatenation of paths and edges, we can write $\muh{u,v}$ and $\mus{u,v}$ as a product of correlations of the edges and paths in the decomposition:
\begin{align}\muh{u,v} = \muh{\ah_1,\bh_1}\muh{\bh_1, \ah_2} \muh{\ah_2, \bh_2} \dots \muh{\bh_{t-1},\ah_t} \muh{\ah_t,\bh_t}.\label{eq:muhdecomposition}\end{align}
\begin{align}\mus{u,v} = \mus{a_1,b_1} \mus{b_1,a_2} \mus{a_2, b_2} \dots \mus{b_{t-1},a_t} \mus{a_t,b_t},\label{eq:musdecomposition}\end{align}
For our analysis, we also define the quantity $\Upsilon_{u,v}$ by replacing each $\muh{}$ with $\mus{}$ in equation \eqref{eq:muhdecomposition}:
\begin{align}\Upsilon_{u,v} = \mus{\ah_1,\bh_1}\mus{\bh_1, \ah_2} \mus{\ah_2, \bh_2} \dots \mus{\bh_{t-1},\ah_t} \mus{\ah_t,\bh_t}.\label{eq:upsilondecomposition}\end{align}

We are now ready to prove our goal, which is to show the error bound $|\muh{u,v} - \mus{u,v}| \leq O(\eps)$. We bound the error by the triangle inequality, writing $$|\muh{u,v} - \mus{u,v}| \leq |\muh{u,v} - \Upsilon_{u,v}| + |\Upsilon_{u,v} - \mus{u,v}|.$$

This bounds the error by two terms where each term has a nice intuitive interpretation: (i) the first term $|\muh{u,v} - \Upsilon_{u,v}|$ represents the error caused because the edge weights $\thetah{}$ for $\Th$ are based on approximate correlations rather than on the true population correlations, and (ii) the second term $|\Upsilon_{u,v} - \mus{u,v}|$ represents the error caused because the paths $\pih(u,v)$ and $\pis(u,v)$ are different, since the tree $\Th$ might not equal the true tree $\Ts$. We now bound both terms by $O(\eps)$, which will prove the desired result.

\textit{First term}: We show $|\muh{u,v} - \Upsilon_{u,v}| \leq O(\epsilon)$.
For each $i \in [t-1]$, the edge $(\bh_i,\ah_{i+1})$ lies between distinct sets $V_{l_i},V_{l_{i+1}}$, so it belongs to the set $W$ of weak edges of the Chow-Liu tree. So by construction it has weight equal to the empirical correlation: $\muh{\bh_i,\ah_{i+1}} = \mut{\bh_i,\ah_{i+1}}$. Since $\mut{}$ and $\mus{}$ are $\epsilon$-close by assumption, this means $$|\muh{\bh_i,\ah_{i+1}} - \mus{\bh_i,\ah_{i+1}}| \leq O(\eps).$$ Furthermore, for each $i \in [t]$, recall that the path $\pis(a_i,b_i)$ lies fully in $V_{l_i}$, so it lies in a subtree of $\Th$ outputted by the reconstruction subroutine $\LearnLowerboundedModel$. By the correctness guarantee of the subroutine (Proposition~\ref{prop:tmr}), we also have
$$|\muh{\hat{a}_i \hat{b}_i} - \mus{\hat{a}_i \hat{b}_i}| \leq O(\eps).$$
Therefore, comparing formula \eqref{eq:muhdecomposition} for $\muh{u,v}$ and formula \eqref{eq:upsilondecomposition} for $\Upsilon_{u,v}$, we notice that both quantities are a product of $2t-1$ factors, such that each factor in the formula \eqref{eq:muhdecomposition} is $O(\epsilon)$-close to the corresponding factor in the formula \eqref{eq:upsilondecomposition}. Since all factors have magnitude bounded by $1$, this immediately implies that $|\muh{u,v} - \Upsilon_{u,v}| \leq O(\epsilon t)$. Unfortunately, this crude bound is insufficient, since we do not have any control on $t$, the number of vertex sets $V_{l_1},\ldots,V_{l_t}$ visited by the path $\pih(u,v)$. The bound fails because it simply adds up the $O(\epsilon)$ approximation error incurred each time we transition from one set $V_{l_i}$ to the next set $V_{l_{i+1}}$.

Fortunately, a more refined argument can show that the errors do not accumulate linearly in $t$. Notice that for any $i \in [t-1]$ the edge $(\bh_i,\ah_{i+1})$ is one of the weak edges $W$ of the Chow-Liu tree, since $\bh_i$ and $\ah_{i+1}$ are in distinct sets $V_{l_i}$ and $V_{l_{i+1}}$. This means that $\muh{\bh_i,\ah_{i+1}} \leq 1/10$. Hence, $\muh{u,v}$ and $\Upsilon_{u,v}$ decay geometrically in the number of sets $t$ visited by the path: $$\muh{u,v} \leq (1/10)^t, \quad \Upsilon_{u,v} \leq (1/10 + O(\epsilon))^t.$$ Thus, if $t \geq \log(1/\epsilon)$, then $|\muh{u,v} - \Upsilon_{u,v}| \leq |\muh{u,v}| + |\Upsilon_{u,v}| \leq O(\epsilon)$. Combined with the crude error bound of $O(\epsilon t)$ for $t \leq \log(1/\epsilon)$, this shows that $|\muh{u,v} - \Upsilon_{u,v}| \leq O(\epsilon \log(1/\epsilon))$ independently of $t$. In the formal proof, we shave off the factor of $\log(1/\epsilon)$ with a more careful analysis and show $|\muh{u,v} - \Upsilon_{u,v}| \leq O(\epsilon)$.

\textit{Second term}: To show $|\Upsilon_{u,v} - \mus{u,v}| \leq O(\epsilon)$, we perform a similar analysis. We consider the formula \eqref{eq:musdecomposition} for $\mus{u,v}$ and the formula \eqref{eq:upsilondecomposition} for $\Upsilon_{u,v}$ and again compare these formulas factor by factor.
However, this analysis is substantially more delicate than the analysis for the first term. A main obstacle is that some factors in \eqref{eq:musdecomposition} may not $O(\epsilon)$-approximate the corresponding factors of \eqref{eq:upsilondecomposition}. Concretely, we may have the bad case that $|\mus{a_i,b_i} - \mus{\ah_i,\bh_i}| \gg \epsilon$ for some $i \in [t]$. To overcome this obstacle, we prove that if this bad case happens, then $\mus{u,v}$ and $\Upsilon_{u,v}$ decay sufficiently quickly to zero that they are $O(\epsilon)$-close anyway. We defer further details of the bound of this second term to the formal proof in Section~\ref{sec:preprocessing}.

\paragraph{Case 2} Now suppose that the path $\pis(u,v)$ contains a very weak edge $e$ with strength $\thetas{e} < C\epsilon$. In this case, we no longer have the key structural lemma that $\pih(u,v)$ and $\pis(u,v)$ visit vertex sets $V_{l_1},\ldots,V_{l_t}$ in the same order.  For example, if all vertices are perfectly uncorrelated (all edge strengths are $0$), then any true tree $\Ts$ is possible, so such a structural result on the reconstruction $\Th$ is impossible to guarantee.

Although we cannot prove the same structural lemma, we can prove that the algorithm still works. We prove that $\pih(u,v)$ must also have an edge $\hat{e}$ of strength $\thetah{\hat{e}} \leq O(\epsilon)$. Since $\mus{u,v} \leq \thetas{e} < C\eps$ and $\muh{u,v} \leq \thetah{\hat{e}} \leq O(\eps)$,  then $|\mus{u,v} - \muh{u,v}| \leq |\mus{u,v}| + |\muh{u,v}| \leq O(\eps)$, as desired.

\subsection{Tree metric reconstruction: learning lower-bounded models}\label{ssec:tmroverview}
It remains to present the subroutine $\LearnLowerboundedModel$, which solves the learning problem in the case that the model's edge weights are lower bounded by a constant. Pseudocode is given in Algorithm~\ref{alg:learnlowerbounded}.

\begin{algorithm}\SetAlgoLined\DontPrintSemicolon
    \KwIn{\begin{tabular}{l}$0 < \eps < \epsuppbd$ \\  $\eps$-approximate correlations $\mut{}$ for model $(\Ts,\thetas{})$ with edge strengths $\thetas{} \geq \edgelowbd$\end{tabular}}
    \KwOut{Model $(\Th,\thetah{})$ that $O(\eps)$-approximates $(\Ts,\thetas{})$ in $\loctvk2$ distance.}
    
    \BlankLine
    
    \tcp{Construct approximation to evolutionary distance, 
    $\dtpre : \Vset \times \Vset \to [0,\infty]$}
    $\dtpre(u,v) \gets -\log(\max(0,\mut{u,v}-\eps))$ for all $u,v \in \Vset$\label{step:dtinit}

    \tcp{Run tree metric reconstruction on evolutionary distance}
    $(\Th, \dh) \gets \TreeMetricReconstruction(\dtpre, \log(1/\edgelowbd),200000\eps)$ \label{step:treemetricreconstruction}
    
    \tcp{Convert back to Ising model}
    $\thetah{e} \gets \exp(-\dh(e))$ for all edges $e \in E(\Th)$
   
    Return $(\Th, \thetah{})$
   
    \caption{$\LearnLowerboundedModel(\mut,\eps)$}
\label{alg:learnlowerbounded}
\end{algorithm}

The algorithm is based on our solution to a particular tree metric reconstruction problem. The connection to tree metrics is via the beautiful insight of Steel \cite{steel1994recovering}, that on all tree models there is a natural measure of distance between the vertices called the \emph{evolutionary distance} which can be estimated from samples without knowing the tree topology. Recall from Definition~\ref{def:evolutionarydistance} that the evolutionary distance between $u,v \in \Vset$ is given by:
$$\d(u,v) = -\log \mus{u,v}.$$
It is known that $\d$ is a tree metric on the tree $\Ts$ (see Definition~\ref{def:treeMetric}), so this suggests applying a tree metric reconstruction algorithm to learn $\Ts$ given an approximation $\dtpre$ of $\d$. Unfortunately, as we discuss below in Section~\ref{sssec:tmrtheoremoverview}, classical tree metric reconstruction algorithms are insufficient for this task. Instead we provide our own algorithm, which we call $\TreeMetricReconstruction$, to reconstruct the evolutionary distance with the following guarantee.
\begin{theorem}[Tree Metric Reconstruction]\label{thm:treemetricreconstruction}
There exists an absolute constant $C > 0$ such that the following result is true for arbitrary $\epsilon, L > 0$.
Let $(\Ts,\d)$ be an unknown tree metric on vertex set $\Vset$. Suppose that neighboring vertices $u,v$ satisfy $\d(u,v) \le L$, and suppose we have access to a local distance estimate $\dtpre$ such that
\begin{align*}
\d(u,v) &\leq \dtpre(u,v), \qquad & \mbox{ for all } u,v \in \Vset \\
\dtpre(u,v) &\leq \d(u,v) + \eps &\mbox{ for all } u,v \in \Vset \mbox{ such that }\d(u,v) \leq 3\Lupp \end{align*}
Given this information, there exists a $O(n^2)$-time algorithm $\TreeMetricReconstruction$ which constructs a tree ${\Th}$ equipped with tree metric ${\dh}$ such that
\begin{equation}\label{eqn:tmrappxgoal}
    |{\dh}(u,v) -\d(u,v)| \le C (\d(u,v)/L + 1) \epsilon
\end{equation}
for all vertices $u,v$.
\end{theorem}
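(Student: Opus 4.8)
My plan is to reduce the additive-metric/tree-metric reconstruction problem to an \emph{ultrametric} reconstruction problem, following the classical insight (attributed in the excerpt to Agarwala et al.) that tree metrics and ultrametrics are closely linked, but adapting it so the final approximation guarantee is the multiplicative-in-distance bound~\eqref{eqn:tmrappxgoal} rather than a uniform additive $\ell_\infty$ bound (which, as the paper notes, is unattainable here because the estimate $\dtpre$ is only accurate for nearby pairs). Concretely, first I would fix a reference root vertex $r \in \Vset$ and form the Farris/Gromov transform $g(u,v) = \tfrac12(\dtpre(r,u) + \dtpre(r,v) - \dtpre(u,v))$, which on an exact tree metric equals the distance from $r$ to the meeting point of the paths to $u$ and $v$, and is therefore an exact ultrametric (the Gromov product); on the noisy input it is an approximate ultrametric. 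I would then run single-linkage / agglomerative clustering on $g$ to build a rooted hierarchy, assign heights, and read off a rooted tree with a metric $\dh$; finally un-root it to get $(\Th,\dh)$. The $O(n^2)$ running time is immediate since this is one pass of MST-style clustering on the $n^2$ transformed distances.

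The key steps, in order: (1) reduce to the ferromagnetic/rooted picture and define the Gromov transform $g$ relative to a root $r$; argue that on the true metric $\d$, $g$ is an ultrametric whose ball structure encodes the topology of $\Ts$. (2) Prove a \emph{local} error bound: for $u,v$ with $g$-value (equivalently, LCA-from-$r$ depth) at most $O(L)$, the transformed estimate $\tilde g$ satisfies $|\tilde g(u,v) - g(u,v)| \le O(\eps)$; this follows by writing $\tilde g - g$ as a combination of three $\dtpre$-versus-$\d$ errors, each of which is within $\eps$ by hypothesis provided the relevant distances are $\le 3\Lupp$, which holds because we only claim this for shallow pairs. (3) Show the clustering is \emph{topologically correct up to the resolution set by $L$ and $\eps$}: because every true edge has length $\le L$ and clustering merges in order of increasing $g$, a merge decision between two subtrees that are genuinely separated by a path of length $\ell$ is made correctly as long as the relevant $g$-gap exceeds the accumulated error, and one shows the accumulated error along a chain of $k$ shallow merges is $O(k\eps)$ while the true gaps sum appropriately — here the edge-length upper bound $L$ is what prevents a single "hop" from being so long that the estimate is useless. (4) Convert the height/ultrametric error into the stated bound on $|\dh(u,v) - \d(u,v)|$: decompose $\pi(u,v)$ into its portion near the root-LCA, use the reverse Gromov/Farris inversion $\d(u,v) = g(r,u)+g(r,v)-2g(u,v)$-type identity, and observe that a pair at true distance $D$ passes through roughly $D/L$ "segments" each contributing $O(\eps)$ error, plus an $O(\eps)$ term near the endpoints — giving exactly $C(\d(u,v)/L + 1)\eps$.

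The main obstacle I expect is step~(3)–(4): controlling how errors \emph{compound along long paths} while only having accurate estimates for short distances. Unlike the classical setting where one has a uniform $\ell_\infty$ guarantee on all distances, here $\dtpre$ can be arbitrarily bad for far-apart pairs (the $-\log(\max(0,\mut{u,v}-\eps))$ clamp can even send it to $+\infty$), so one must ensure that \emph{every structural decision the algorithm makes depends only on pairs that are close enough to be estimated well}. This is precisely why the algorithm must be built around an edge-length bound $L$ and must never compare two far-apart leaves directly; the proof has to verify that single-linkage clustering, run on the ultrametric transform, only ever "looks at" local distances when deciding merges, so that the global reconstruction is assembled purely from trustworthy local information, and then bound the resulting error telescopically to get the $(\d(u,v)/L+1)$ factor. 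A secondary subtlety is that the one-sided hypothesis $\d(u,v)\le\dtpre(u,v)$ (the estimate only over-estimates distance, i.e. under-estimates correlation) must be used to argue no spurious early merges occur; handling the interaction of this one-sidedness with the ultrametric transform (which mixes three distances with signs) requires a little care, and I would expect to route around it by working with a slightly inflated "safe" distance and arguing monotonicity of the clustering.
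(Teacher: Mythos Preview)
There is a genuine gap in your approach, centered on step~(2). Your Gromov transform $\tilde g(u,v) = \tfrac12(\dtpre(r,u)+\dtpre(r,v)-\dtpre(u,v))$ uses the raw root-to-leaf estimates $\dtpre(r,\cdot)$, but the hypothesis only controls $\dtpre(r,u)$ when $\d(r,u)\le 3L$. For any vertex $u$ farther than $3L$ from $r$ --- which in a tree of diameter $\gg L$ is almost every vertex --- the quantity $\dtpre(r,u)$ can be $+\infty$ (as you note yourself) or arbitrarily wrong. Consequently $\tilde g(u,v)$ is meaningless for essentially \emph{all} pairs, not just far-apart ones: two leaves $u,v$ that are neighbors in $\Ts$ but both far from $r$ have $\tilde g(u,v)$ dominated by two uncontrolled terms $\dtpre(r,u),\dtpre(r,v)$. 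Your step~(2) claims the transform is $O(\eps)$-accurate for ``shallow'' pairs, but shallow in the Gromov sense means $\LCA_r(u,v)$ is close to $r$, which says nothing about $\d(r,u)$ or $\d(r,v)$ individually. So the single-linkage clustering is being fed garbage everywhere, and the assertion that it ``only looks at local distances'' does not hold: every entry of $\tilde g$ already depends on a global, uncontrolled quantity.

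The paper's fix, which your proposal is missing, is to first replace $\dtpre(r,\cdot)$ by the \emph{shortest-path} distance $\dt(r,\cdot)$ in the weighted graph with edge weights $\dtpre$; Lemma~6.5 shows this already satisfies the target guarantee $|\dt(r,u)-\d(r,u)|\le(\d(r,u)/L+1)\eps$ globally, because the true tree has edges of length $\le L$ and so every long distance can be chopped into $O(\d/L)$ hops each estimated to within $\eps$. Only after this preprocessing is the centroid/ultrametric machinery applied. Even then, a second missing ingredient is needed: to lower-bound the subdominant ultrametric (and hence $\dh$), the paper constructs a \emph{witness} additive metric $\d'$ (Lemma~6.7) that exactly matches $\dt(r,\cdot)$ on root-to-leaf distances while staying $(\d/L+1)\eps$-close to $\d$; this is what certifies the subdominant ultrametric cannot collapse distances. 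Your sketch has no analogous lower-bound mechanism. Finally, the paper's ultrametric step produces an additive metric with Steiner nodes, requiring a separate $\Desteinerize$ procedure; your single-linkage variant might sidestep this, but the inverse Farris transform you invoke in step~(4) again needs accurate $\dh(r,u)$ for all $u$, which loops back to the same missing shortest-path preprocessing.
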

In words, for vertices $u,v$ that are at a low distance $\d(u,v) = O(L)$, the reconstructed tree metric $\dh$ offers a $O(\eps)$-additive approximation to $\d(u,v)$. For vertices $u,v$ that are far apart, at distance $\d(u,v) = \Omega(L)$, the metric $\dh$ offers a $(1+O(\eps/L))$-multiplicative approximation to $\d(u,v)$. In Section~\ref{sssec:formalproptmr} we prove that Theorem~\ref{thm:treemetricreconstruction} is sufficient to guarantee correctness of the $\LearnLowerboundedModel$ algorithm.
In Section~\ref{sssec:tmrtheoremoverview}, we prove Theorem~\ref{thm:treemetricreconstruction}, relying on results from Sections~\ref{sec:global-reconstruction} and \ref{sec:desteinerize}, and explain how it differs from and extends known tree metric reconstruction algorithms.

\subsubsection{Proof of Proposition~\ref{prop:tmr}, correctness of $\LearnLowerboundedModel$}\label{sssec:formalproptmr}
We prove the correctness of $\LearnLowerboundedModel$ assuming Theorem~\ref{thm:treemetricreconstruction}. The runtime is $O(n^2)$ since the $\TreeMetricReconstruction$ subroutine runs in $O(n^2)$ time by Theorem~\ref{thm:treemetricreconstruction}. It remains to prove that the reconstruction $(\Th, \thetah{})$ returned by the algorithm is accurate in local total variation. In what follows, let $$\Lupp = -\log(\edgelowbd).$$

We prove that the function $\dtpre : \Vset \times \Vset \to [0,\infty]$ constructed by $\LearnLowerboundedModel$ satisfies the following approximation guarantee with respect to $\d$.
\begin{lemma}[Approximation guarantee of $\dtpre$]\label{lem:dtpreappxguarantee} Define $\dtpre : \Vset \times \Vset \to [0,\infty]$ as in Step~\ref{step:dtinit} of $\LearnLowerboundedModel$. Then
\begin{align*}
\d(u,v)& \leq \dtpre(u,v)  & \mbox{ for all } u,v \\
\dtpre(u,v)& \leq \d(u,v) + 200000\eps  &\mbox{ for all } u,v \mbox{ such that }\d(u,v) \leq 3\Lupp \end{align*}
\end{lemma}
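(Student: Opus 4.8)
The plan is to verify the two displayed inequalities directly from the definition $\dtpre(u,v)=-\log(\max(0,\mut{uv}-\eps))$ in Step~\ref{step:dtinit}, the input accuracy $|\mut{uv}-\mus{uv}|\le\eps$, and the identity $\d(u,v)=-\log\mus{uv}$; throughout I use the convention $-\log 0=+\infty$.

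\emph{Lower bound.} From $\mut{uv}\le\mus{uv}+\eps$ we get $\mut{uv}-\eps\le\mus{uv}$, and since the model is ferromagnetic $\mus{uv}\ge 0$, hence $\max(0,\mut{uv}-\eps)\le\mus{uv}$. Applying the decreasing map $t\mapsto-\log t$ gives $\dtpre(u,v)\ge-\log\mus{uv}=\d(u,v)$. This direction is immediate and needs no case analysis.

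\emph{Upper bound.} Restrict to pairs with $\d(u,v)\le 3\Lupp$, where $\Lupp=\log(1/\edgelowbd)=\log 20$, so $3\Lupp=\log 8000$. Since $\d(u,v)=\log(1/\mus{uv})$, this yields $\mus{uv}\ge 1/8000$. Because $\eps\le\epsuppbd=10^{-5}$ is far below $1/8000$, the quantity $\mut{uv}-\eps\ge\mus{uv}-2\eps$ is strictly positive, so the truncation in the definition of $\dtpre$ is inactive and $\dtpre(u,v)=-\log(\mut{uv}-\eps)\le-\log(\mus{uv}-2\eps)$. Subtracting $\d(u,v)=-\log\mus{uv}$,
\[
\dtpre(u,v)-\d(u,v)\;\le\;\log\frac{\mus{uv}}{\mus{uv}-2\eps}\;=\;-\log\!\Bigl(1-\tfrac{2\eps}{\mus{uv}}\Bigr).
\]
Since $2\eps/\mus{uv}\le 16000\,\eps\le 0.16<1/2$, the elementary bound $-\log(1-x)\le 2x$ valid on $[0,1/2]$ controls the right side by $4\eps/\mus{uv}\le 32000\,\eps\le 200000\,\eps$, as required. (If $\log$ denotes a base other than $e$ the constant $2$ changes, but the slack in $200000$ absorbs it.)

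I expect no substantive obstacle: the lemma follows from the two-sided accuracy of $\mut{}$ together with the a priori bound $\mus{uv}\ge 1/8000$ that holds for nearby vertices. The only points needing a line of care are (i) verifying that the truncation $\max(0,\cdot)$ never fires in the relevant range, which is exactly why $\epsuppbd$ is chosen small relative to $\edgelowbd^{3}$, and (ii) confirming that the generous factor $200000$ comfortably dominates the constant coming out of the logarithmic comparison.
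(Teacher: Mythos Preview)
Your proof is correct and follows essentially the same approach as the paper. Both arguments use $\mut{uv}-\eps\le\mus{uv}$ for the lower bound and, for the upper bound, convert $\d(u,v)\le 3\Lupp$ into $\mus{uv}\ge(1/20)^3=1/8000$, then bound the logarithmic gap; the only cosmetic difference is that the paper invokes the Lipschitz bound $\log a-\log b\le(a-b)/b$ to get $2\eps/(\mus{uv}-2\eps)$, whereas you use $-\log(1-x)\le 2x$ on $[0,1/2]$ to get $4\eps/\mus{uv}$, and both land comfortably under $200000\eps$.
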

\begin{proof}
Recall that $|\mut{u,v} - \mus{u,v}| \leq \eps$ for all $u,v \in \Vset$. Thus, the first inequality follows from $$\d(u,v) = -\log(\mus{u,v}) \leq -\log(\max(0,\mut{u,v} - \eps)) \leq  \dtpre(u,v).$$
For the second inequality, consider $u,v$ such that $\d(u,v) \leq 3\Lupp$. Then $\mus{u,v} = \exp(-\d(u,v)) \geq \exp(-3\Lupp) = (\edgelowbd)^3 > 2 \cdot \epsuppbd > 2\eps$, so $$\dtpre(u,v) = -\log(\mut{u,v} - \eps) \leq -\log(\mus{u,v} - 2\eps) \leq -\log(\mus{u,v}) + \frac{2\eps}{\mus{u,v} - 2\eps} \leq \d(u,v) + 200000\epsilon.$$ The second-to-last inequality comes from $1/(\mus{u,v} - 2\eps)$-Lipschitzness of $\log(t)$ for $t \geq 1/(\mus{u,v} - 2\eps)$. The last inequality comes from $\mus{u,v} > (1/20)^3 > 2 \times \epsuppbd > 2\eps$. 

\end{proof}
We observe that $\Lupp$ is an upper bound on the edge length in the evolutionary tree metric $(\Ts,\d)$, since each edge has correlation at least $\edgelowbd$. Thus, we can plug the approximation guarantee of Lemma~\ref{lem:dtpreappxguarantee} into
Theorem~\ref{thm:treemetricreconstruction}, and ensure\textit{} that the tree metric approximation $(\Th,\dh)$ computed in $\LearnLowerboundedModel$ satisfies, for some constant $C$ independent of $\eps$:
\begin{align}
|\dh(u,v) - \d(u,v)| \leq C(\d(u,v)/\Lupp + 1)\eps \leq C(\d(u,v) + 1)\eps,\label{ineq:thappxguarantee}
\end{align} where we have used that $\Lupp \geq 1$ is a uniformly lower-bounded constant.
This guarantee turns out to be sufficient to solve the problem of learning in total variation, since the model $(\Th, \thetah{})$ outputted by $\LearnLowerboundedModel$, which has pairwise correlations $\muh{u,v} = \exp(-\dh(u,v))$, is guaranteed to be close to the true model by the following lemma:
\begin{lemma}\label{lem:approx-suffices}
Let $\mu_{uv} = e^{- \d(u,v)}$ and $ \muh{uv} = e^{-\dh(u,v)}$. If $\dh(u,v)$ satisfies the approximation guarantee \eqref{ineq:thappxguarantee}
with respect to $\d(u,v)$, then $|\mu_{uv} - \hat{\mu}_{uv}|\leq 2C\epsilon$ .
\end{lemma}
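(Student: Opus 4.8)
The plan is to reduce the claim to an elementary estimate for the function $t \mapsto e^{-t}$ together with the hypothesis \eqref{ineq:thappxguarantee}, which we abbreviate as $|\dh(u,v) - \d(u,v)| \le C(\d(u,v)+1)\eps$ (valid since $\Lupp \ge 1$). Note first that since the model has edge correlations at least $\edgelowbd > 0$, every pairwise correlation $\mu_{uv}$ is strictly positive, so $\d(u,v) = -\log\mu_{uv}$ is finite, and likewise $\dh(u,v)$ is finite. By the mean value theorem there is a $\xi$ between $\d(u,v)$ and $\dh(u,v)$ with
\[
|\mu_{uv} - \muh{uv}| = \bigl|e^{-\d(u,v)} - e^{-\dh(u,v)}\bigr| = e^{-\xi}\,|\dh(u,v) - \d(u,v)| \;\le\; e^{-\min(\d(u,v),\,\dh(u,v))}\,C(\d(u,v)+1)\,\eps,
\]
so it suffices to bound $e^{-\min(\d(u,v),\dh(u,v))}\,(\d(u,v)+1)$ by the absolute constant $2$. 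The one elementary fact I would use repeatedly is that $(t+1)e^{-t}\le 1$ for all $t\ge 0$ (immediate from $\tfrac{d}{dt}[(t+1)e^{-t}] = -t e^{-t}\le 0$ and the value $1$ at $t=0$).

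We may assume $C\eps \le 1/2$: otherwise $2C\eps\ge 1 \ge |\mu_{uv}-\muh{uv}|$ since both correlations lie in $(0,1]$, and we are done. Now split into cases. If $\dh(u,v)\ge \d(u,v)$, then $\min(\d(u,v),\dh(u,v)) = \d(u,v)$ and directly $e^{-\d(u,v)}(\d(u,v)+1)\le 1$, giving $|\mu_{uv}-\muh{uv}|\le C\eps \le 2C\eps$. If instead $\dh(u,v) < \d(u,v)$, then $\min(\d(u,v),\dh(u,v)) = \dh(u,v)$; here I would rearrange the hypothesis $\d(u,v) \le \dh(u,v) + C(\d(u,v)+1)\eps$ into $\d(u,v)+1 \le \tfrac{\dh(u,v)+1}{1-C\eps}$, whence $e^{-\dh(u,v)}(\d(u,v)+1) \le \tfrac{(\dh(u,v)+1)e^{-\dh(u,v)}}{1-C\eps}\le \tfrac{1}{1-C\eps}\le 2$, giving $|\mu_{uv}-\muh{uv}|\le 2C\eps$.

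The only subtle point is the case $\dh(u,v) < \d(u,v)$, where the exponential weight is naturally controlled at $\dh(u,v)$ rather than at $\d(u,v)$; the multiplicative form of the error bound in \eqref{ineq:thappxguarantee} is exactly what lets us convert a bound involving $\dh(u,v)+1$ back into one involving $\d(u,v)+1$, at the harmless cost of the factor $1/(1-C\eps)\le 2$, which is the source of the constant $2$ in the conclusion. Nothing deeper is needed, and all remaining steps are the routine calculus facts quoted above.
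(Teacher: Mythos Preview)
Your proof is correct and somewhat cleaner than the paper's. Both arguments dispose of the case $C\eps \ge 1/2$ identically, and both ultimately rely on the boundedness of $(t+1)e^{-t}$ on $[0,\infty)$. The paper, however, does not use the mean value theorem: it writes $\mu_{uv} - \muh{uv} = e^{-\d(u,v)}\bigl(e^{\d(u,v)-\dh(u,v)} - 1\bigr)$, uses the hypothesis to sandwich this between $e^{-t}\bigl(e^{-(t+1)C\eps}-1\bigr)$ and $e^{-t}\bigl(e^{(t+1)C\eps}-1\bigr)$ with $t = \d(u,v)$, and then explicitly optimizes each of these one-variable expressions over $t \ge 0$ by differentiation, locating the critical point and verifying the extreme value is at most $2C\eps$ in absolute value. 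Your mean-value-theorem reduction bypasses this two-sided optimization: you obtain $|\mu_{uv} - \muh{uv}| \le e^{-\min(\d,\dh)}(\d+1)C\eps$ in one stroke and need only the single calculus fact $(t+1)e^{-t}\le 1$, plus in the case $\dh < \d$ a rearrangement of the hypothesis to transfer the factor $\d+1$ to $\dh+1$. The paper's calculation yields a slightly asymmetric bound ($-C\eps \le \mu_{uv}-\muh{uv} \le 2C\eps$), while yours gives the opposite asymmetry; both deliver the stated $2C\eps$ conclusion, but your route is shorter and avoids the explicit extremum computation.
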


\begin{proof}
Suppose that $2C \epsilon < 1$ without loss of generality, since $\mu_{uv},\muh{uv} \in [0,1]$. From
$|\mu_{uv} - \hat{\mu}_{uv}|= e^{-\d(u,v)} \big|e^{\d(u,v) - \dh(u,v)} - 1\big| $ and the guarantee in \eqref{eqn:tmrappxgoal} we see that
$$
e^{-\d(u,v)}\Big(e^{-(\d(u,v) + 1) \cdot C\epsilon} - 1 \Big)
\leq \mu_{uv} - \muh{uv}\leq e^{-\d(u,v)}\Big(e^{(\d(u,v) + 1) \cdot C\epsilon} - 1 \Big)\,.
$$
Now we differentiate the RHS to get
$$
\frac d{dt}\Big(e^{-t}\big( e^{(t+1)C\epsilon} - 1 \big)\Big) = e^{-t}\Big( 1- (1-C\epsilon) e^{(t+1)C\epsilon}\Big)\,.
$$
This is positive for $t\in [0,\frac1{C\epsilon} \log \frac1{1-C\epsilon} -1)$ and is negative for larger $t$, so it suffices to bound the RHS for $\d(u,v) = \frac1{C\epsilon} \log \frac1{1-C\epsilon} -1$, and it is easily verified that the RHS is at most $2C\epsilon$.

For the LHS, we differentiate to get
$$
\frac d{dt}\Big(e^{-t}\big( e^{-(t+1)C\epsilon} - 1 \big)\Big) = e^{-t}\Big( 1- (1+C\epsilon) e^{-(t+1)C\epsilon}\Big)\,,
$$
and we see that the LHS is decreasing precisely on $[0,\frac{1}{C\epsilon}\log (1+C\epsilon)-1)$. Plugging into the LHS we see that the minimum value it can take is at least $(1+C\epsilon)^{-1}-1 \geq - C\epsilon$. 
\end{proof}
Since all pairwise correlations are $2C\eps$-close, the model $(\Th,\thetah{})$ is $2C\eps$-close to $(\Ts,\thetas{})$ in $\loctvk{2}$, and Proposition~\ref{prop:tmr} follows.
\qed

\subsubsection{Proof of Theorem~\ref{thm:treemetricreconstruction}, tree metric reconstruction algorithm}\label{sssec:tmrtheoremoverview}

Here we present the $\TreeMetricReconstruction$ algorithm and prove the correctness guarantee, Theorem~\ref{thm:treemetricreconstruction}. The pseudocode for $\TreeMetricReconstruction$ is given as Algorithm~\ref{alg:treemetricreconstruction}. In the first step, $\AdditiveMetricReconstruction$ (given in Section~\ref{sec:global-reconstruction}) reconstructs an additive metric approximation. In the second step, $\Desteinerize$ (given in Section~\ref{sec:desteinerize}) converts the reconstructed additive metric to a tree metric without Steiner nodes.

\begin{algorithm}\SetAlgoLined\DontPrintSemicolon
    \KwIn{$\dtpre,L,\eps$ satisfying the conditions of Theorem~\ref{thm:treemetricreconstruction}}
    \KwOut{Tree metric $(\Th, \dh)$ satisfying approximation guarantee \eqref{eqn:tmrappxgoal}}
    
    $(\Th', \dh') \gets \AdditiveMetricReconstruction(\dtpre,L,\eps)$
    
    $(\Th, \dh) \gets \Desteinerize(\Th',\dh')$
    
    Return $(\Th, \dh)$
   
    \caption{$\TreeMetricReconstruction(\dtpre,L,\epsilon)$}
\label{alg:treemetricreconstruction}
\end{algorithm}

Although there is a rich literature that studies tree metric reconstruction problems, known algorithms do not suffice to obtain the guarantee of Theorem~\ref{thm:treemetricreconstruction}. The most relevant result from the literature is the landmark algorithm of Agarwala et al. \cite{agarwala1998approximability} for additive metric approximation (recall from Definition~\ref{def:additiveMetric} that additive metrics are tree metrics with Steiner nodes). Given a $O(\eps)$-approximation $\dt$ to an additive metric $\d$, the algorithm outputs an additive metric $\dh$ that $O(\eps)$-approximates $\d$.
Two major obstacles arise when applying this result. The first obstacle is that the algorithm of Agarwala et al. outputs an additive metric as opposed to a tree metric. If we convert this additive metric into a tree model, then the Steiner nodes in the additive metric correspond to latent variables in the model. This is undesirable, as we wish to learn a model without latent variables. The second obstacle is that it is impossible to compute a $O(\eps)$-additive approximation to the evolutionary distance between all pairs of nodes. The culprits are far-away nodes: as the correlations decrease to 0, the evolutionary distance grows to infinity, and so a $O(\eps)$-additive approximation in the correlations does not yield a $O(\eps)$-additive approximation in the evolutionary distance. 

Nevertheless, although the additive metric reconstruction theorem of Agarwala et al. \cite{agarwala1998approximability} does not directly apply, our proof of Theorem~\ref{thm:treemetricreconstruction} leverages the important connection that Agarwala et al. found between the tree metric reconstruction problem and reconstruction of ultrametrics. Through this connection to ultrametric reconstruction, in Section~\ref{sec:global-reconstruction} we derive the algorithm $\AdditiveMetricReconstruction$, which obtains an additive metric reconstruction of the evolutionary distance satisfying the approximation guarantee \eqref{eqn:tmrappxgoal}, where distances between far-away nodes are multiplicatively-approximated. Then, in Section~\ref{sec:desteinerize} we show how to convert this additive metric to a tree metric with an algorithm $\Desteinerize$ that clusters the Steiner nodes to their closest non-Steiner node. The guarantees that we prove for $\AdditiveMetricReconstruction$ and $\Desteinerize$ are stated below:

\begin{theorem}[Additive Metric Reconstruction]\label{thm:additivemetricreconstruction}
There exists a constant $C > 0$ such that the following result is true for arbitrary $\epsilon > 0$ and any $L \geq 100 \epsilon$.
Let $(\Ts,\d)$ be an unknown tree metric on vertex set $\Vset$. Suppose that neighboring vertices $u,v \in \Vset$ satisfy $\d(u,v) \le L$, and suppose we have access to a distance estimate $\dtpre : \cV \times \cV \to \RR \cup \{\infty\}$ such that
\begin{align*}
\d(u,v) &\leq \dtpre(u,v), \qquad & \mbox{ for all } u,v \in \Vset \\
\dtpre(u,v) &\leq \d(u,v) + \eps &\mbox{ for all } u,v \in \Vset \mbox{ such that }\d(u,v) \leq 3\Lupp \end{align*}
Given this information, there exists a $O(|\Vset|^2)$-time algorithm $\AdditiveMetricReconstruction$ which constructs an additive metric $(\Th, \dh)$ with $|V(\Th)| \leq O(|\Vset|)$ Steiner nodes such that
\begin{equation*}
    |{\dh}(u,v) -\d(u,v)| \le C (\d(u,v)/L + 1) \epsilon
\end{equation*}
for all $u,v \in \Vset$.
\end{theorem}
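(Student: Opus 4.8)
The plan is to implement the route sketched in the surrounding text: reduce additive metric reconstruction to \emph{ultrametric} reconstruction following Agarwala et al.~\cite{agarwala1998approximability}, solve the ultrametric problem optimally, and transform back --- but carrying out the whole error analysis in a scale-sensitive way, so that the output error is additive $O(\eps)$ for nearby pairs and only multiplicative for far-apart pairs. Concretely, I would adjoin an auxiliary reference point $\rho$ (destined to become a Steiner node) and pass to the Gromov products $g(u,v) = \tfrac12(\dtpre(u,\rho) + \dtpre(v,\rho) - \dtpre(u,v))$; for an exact tree metric, $g$ is, up to an additive shift, an ultrametric, because among any three vertices the two whose paths toward $\rho$ separate last have equal and maximal Gromov product. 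The algorithmic engine is the classical fact that the best $\ell_\infty$ ultrametric fit of $g$, namely its subdominant ultrametric $U^-(i,j) = \min_{P : i \to j}\max_{(a,b)\in P} g(a,b)$, is computable in $O(|\Vset|^2)$ time via single linkage (a minimum-bottleneck-path / MST computation), and that if there is an ultrametric $U$ sandwiched as $U \le g \le U + \eta$ then $\|U^- - U\|_\infty \le \eta$. Transforming $U^-$ back to an additive metric then yields the output tree metric $\dh$, with the Steiner nodes in bijection with internal merges of the clustering, hence $O(|\Vset|)$ of them.

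The subtlety, and the reason the bare reduction does not suffice, is that $\dtpre$ is trustworthy only on nearby pairs (additive $\eps$ error when $\d(u,v) = O(L)$), while on far pairs it is merely an overestimate --- possibly $+\infty$ --- so $g$ is badly corrupted there and ``output error $=$ input error'' is vacuous. The observation I would exploit is that single linkage is inherently local: $U^-(i,j)$ is realized along a minimum-bottleneck path whose every hop is between pairs at true distance $\Theta(L)$ or less, so the corrupted long-distance entries of $g$ never enter the computation. (Equivalently, one can cover $\Ts$ by overlapping radius-$\Theta(L)$ neighborhoods, on each of which $\dtpre$ is uniformly $\eps$-accurate, reconstruct each via the ultrametric primitive with additive $O(\eps)$ error, and glue along overlaps, with $O(1)$ overlap keeping the Steiner count at $O(|\Vset|)$.) Either way one obtains that $\dh$ is correct up to additive $O(\eps)$ on all pairs at distance $O(L)$ --- in particular on every edge, whose endpoints are neighbors at distance $\le L$ --- and that the coarse topology of $\Th$ at scale $L$ agrees with $\Ts$. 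For a far pair $u,v$, one then walks along the $\Ts$-path placing waypoints $u = w_0, w_1, \dots, w_k = v$ in $\Vset$ with consecutive distances in $[L, 3L]$, so $k = O(\d(u,v)/L + 1)$; applying the local additive $O(\eps)$ bound to each segment and using that these waypoints lie on the $\Th$-path as well, we get $|\dh(u,v) - \d(u,v)| \le \sum_i |\dh(w_i,w_{i+1}) - \d(w_i,w_{i+1})| = O((\d(u,v)/L + 1)\eps)$, which is exactly the claimed bound (and the hypothesis $L \ge 100\eps$ is what keeps the per-segment error small relative to segment length).

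I expect the main obstacle to be making this localization rigorous: proving that single linkage on the badly corrupted $g$ (or, in the covering formulation, the overlap-gluing) nonetheless recovers the correct scale-$L$ topology and scale-$L$ distances, and that the inverse Gromov transform does not amplify errors --- in effect, re-deriving Agarwala et al.'s additive-metric guarantee under the present one-sided, non-uniform error model rather than a global $\ell_\infty$ ball, and choosing $\dtpre(u,\rho)$ so that the sandwiching $U \le g \le U + \eta$ holds at the scales that matter. The remaining work is bookkeeping: certifying that only $O(|\Vset|)$ Steiner nodes are introduced and that every step runs in $O(|\Vset|^2)$ time (for which the single-linkage formulation is convenient). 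Converting the resulting additive metric to an honest tree metric without Steiner nodes is deferred to $\Desteinerize$, so for this theorem it suffices to control $\dh$ on $\Vset \times \Vset$.
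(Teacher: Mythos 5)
Your proposal follows the same high-level route as the paper (reduce to subdominant-ultrametric computation \`a la Agarwala et al., carried out with a scale-sensitive error analysis), but it stops short of the two ideas that actually make the argument work, and you yourself flag them as ``the main obstacle'' without resolving them. First, the reference-point distances. Your Gromov products $g(u,v)$ require $\dtpre(u,\rho)$ for \emph{every} $u$, but for $u$ far from $\rho$ this quantity is badly corrupted or $+\infty$, so the centroid shift $\ell_u$ is garbage and the additive-to-ultrametric reduction collapses before single linkage ever runs. The paper's fix is to take $\rho$ to be an actual vertex and to replace $\dtpre(\rho,\cdot)$ by the \emph{shortest-path metric} $\dt(\rho,\cdot)$ over the graph of upper confidence bounds (computed by Dijkstra); Lemma~\ref{lem:sp-good} shows this satisfies $\d \le \dt \le \d + (\d/L+1)\eps$ at all scales, which is exactly the accuracy needed for the centroid metric. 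Locality of single linkage alone does not substitute for this, because the corrupted entries enter through $\csf$, not only through the pairwise terms.

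Second, and more seriously, the lower bound on $\dh(u,v)$. The sandwich fact you invoke ($U \le g \le U+\eta \Rightarrow \|U^- - U\|_\infty \le \eta$) requires \emph{exhibiting} an ultrametric $U$ below the input dissimilarity that is close to the target; equivalently, an additive metric close to $\d$ whose root-to-leaf distances \emph{exactly} equal the estimated ones $\dt(\rho,\cdot)$ (otherwise $\d + \csf$ need not be an ultrametric at all, by Lemma~\ref{lem:additive-to-ultra}). Constructing this ``witness tree'' is the hardest step of the paper's proof (Lemma~\ref{lem:witness}, a two-phase surgery on the ground-truth tree that shortens/lengthens edges at a net of scale-$L$ anchor vertices and then relocates each vertex by $O(\eps)$); without it, the subdominant ultrametric could shrink distances arbitrarily. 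Your alternative waypoint argument for far pairs only yields the upper bound $\dh(u,v) \le \d(u,v) + O(k\eps)$ by the triangle inequality; the matching lower bound needs the waypoints to lie \emph{in order on the $\Th$-path}, a topological claim about the output that you assert (``the coarse topology of $\Th$ at scale $L$ agrees with $\Ts$'') but do not prove, and which the paper never needs because the witness-tree route gives the two-sided bound directly. So the skeleton is right, but the proof as proposed has genuine gaps precisely at the load-bearing steps.
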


\begin{theorem}[Removing Steiner nodes]\label{thm:desteinerize}
There exists a constant $C > 0$ such that the following is true. 
Let $L \ge 2\epsilon \ge 0$ and suppose $(\Ts,\d)$ is an unknown tree metric on vertex set $\Vset$ with maximum edge length $L$.
Suppose $\dh$ is an additive metric on $\Vset$ with Steiner tree representation $\Th$ such that
\begin{equation*} |\d(x,y) - \dh(x,y)| \le (\d(x,y)/L + 1)\epsilon \end{equation*}
for all $x,y \in \Vset$. Then \textsc{Desteinerize} with $(\Th,\dh)$ as input runs in $O(|\Vset|^2 + |V(\Th)|)$ time and outputs a tree metric $(\T',\d')$ on $\Vset$ such that
\[ |\d(x,y) - \d'(x,y)| \le (\d(x,y)/L + 1)\epsilon + C\epsilon \]
for all $x,y \in \Vset$.
\end{theorem}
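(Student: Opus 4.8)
The plan is to analyze $\Desteinerize$ as a two-step procedure and then bound the distortion it introduces. \emph{Step one (preprocessing):} first suppress every degree-$2$ Steiner node of $\Th$ by merging its two incident edges; this leaves the metric $\dh$ unchanged, and afterwards every Steiner node of $\Th$ has degree at least $3$. \emph{Step two (clustering and contraction):} for each remaining Steiner node $s$, let $\phi(s)\in\Vset$ be the non-Steiner vertex minimizing $\dh(s,\phi(s))$ (fixing an arbitrary tie-breaking rule), and set $C_t=\{t\}\cup\phi^{-1}(t)$ for $t\in\Vset$. A first observation is that each $C_t$ is a connected subtree of $\Th$: any vertex on the $\Th$-path from $s$ to $\phi(s)$ must be a Steiner node (a non-Steiner vertex there would be strictly closer to $s$ than $\phi(s)$), and by the same inequality applied at that vertex it has the same nearest non-Steiner vertex $\phi(s)$. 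Hence contracting all the sets $C_t$ turns $\Th$ into a tree $\T'$ on vertex set $\Vset$; weighting each surviving (inter-cluster) edge by its $\Th$-weight and letting $\d'$ be the induced shortest-path metric gives a genuine tree metric on $\Vset$ with $\d'(x,y)=\dh(x,y)-R(x,y)$, where $R(x,y)\ge 0$ is the total $\Th$-weight of the intra-cluster edges lying on the $\Th$-path from $x$ to $y$.

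The heart of the argument is the structural lemma that every Steiner node $s$ of $\Th$ satisfies $\dh(s,\phi(s))\le C\epsilon$, and this is precisely where we use that the target metric $\d$ has \emph{no} Steiner nodes, so the median in $\Ts$ of any three vertices of $\Vset$ is again a vertex of $\Vset$. Given a degree-$\ge 3$ Steiner node $s$, I would choose, in three of the subtrees hanging off $s$, a terminal as close to $s$ as possible, say $a_1,a_2,a_3$ with $\dh(s,a_i)=d_i$, so that $s$ is the $\Th$-median of the $a_i$ and $\dh(a_i,a_j)=d_i+d_j$. Let $m\in\Vset$ be the $\Ts$-median of $a_1,a_2,a_3$. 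Writing each $\d(a_i,m)$ as the Gromov product $\tfrac12(\d(a_i,a_j)+\d(a_i,a_k)-\d(a_j,a_k))$ (and the same with hats), and using the stated approximation of $\d$ by $\dh$, one gets $\dh(a_i,m)=d_i\pm O\!\big((\max_\ell d_\ell/L+1)\epsilon\big)$ for each $i$; then, looking at which subtree contains $m$ and using, for an index $j$ with $a_j$ in a different subtree, the exact identity $\dh(a_j,m)=d_j+\dh(s,m)$, one concludes $\dh(s,m)=O\!\big((\max_\ell d_\ell/L+1)\epsilon\big)$. Since $m\in\Vset$ this bounds $\dh(s,\phi(s))$; combined with a preliminary estimate that the relevant scale $\max_\ell d_\ell$ is $O(L)$ — which holds because $\d$ has edge lengths $\le L$, so terminals are ``$L$-dense'' along every $\Ts$-path, a density that transfers to $\Th$ up to additive $O(\epsilon)$ via the four-point condition — we get $\dh(s,\phi(s))=O(\epsilon)$. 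Closing the self-referential constant needs a little care since the hypothesis only supplies $L\ge 2\epsilon$; the clean route is to first prove a crude bound $\max_\ell d_\ell=O(L)$ and then bootstrap.

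Granting the lemma, every cluster $C_t$ is a subtree of $\dh$-diameter $\le 2C\epsilon$, so the portion of the $\Th$-path from $x$ to $y$ inside any one cluster has length $\le 2C\epsilon$, and each cluster contributes to $R(x,y)$ only through the amount its terminal lies off that path. \textbf{The main obstacle} is to show these per-cluster contributions do not simply add up over the (possibly many) clusters the path meets, but instead total only $O(\epsilon)$. The resolution I have in mind is in the spirit of the error-composition analysis used elsewhere in the paper: the clusters met by the path, read in path order, have their terminals appearing in the same order along the true $\Ts$-path from $x$ to $y$, and the geometry forced by the four-point condition (again using that $\d$ has no Steiner nodes, so every branch point of the true tree is a genuine vertex) confines the off-path excursions tightly enough that $R(x,y)\le C'\epsilon$. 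One then combines $|\dh(x,y)-\d'(x,y)|=R(x,y)\le C'\epsilon$ with the hypothesis $|\d(x,y)-\dh(x,y)|\le(\d(x,y)/L+1)\epsilon$ by the triangle inequality to obtain the claimed bound. For the running time, suppressing degree-$2$ nodes and reading off the contracted tree is $O(|V(\Th)|)$, and computing the nearest-terminal assignment $\phi$ throughout $\Th$ can be done by a constant number of tree traversals in $O(|\Vset|^2+|V(\Th)|)$ time.
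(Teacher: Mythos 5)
Your overall architecture matches the paper's: suppress degree-$1$/$2$ Steiner nodes, cluster each remaining Steiner node to its nearest terminal, show each cluster is a subtree (the paper's Claim~\ref{claim:f-subtree}), and prove via a three-terminal/Gromov-product argument that every Steiner node is within $O(\eps)$ of a terminal (the paper's Lemma~\ref{lem:steinernodeclose}, so that $\crad = O(\eps)$). Those parts are sound in outline.

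The genuine gap is exactly at the step you flag as ``the main obstacle,'' and your proposed resolution is false. You define $\d'$ by keeping the $\Th$-weights of inter-cluster edges and discarding intra-cluster edges, so that $\d'(x,y) = \dh(x,y) - R(x,y)$, and you claim $R(x,y) \le C'\eps$. This fails: $R(x,y)$ can grow linearly in the number of clusters the path visits. Concretely, let $\Ts$ be a caterpillar with backbone $x = v_0, v_1, \dots, v_k = y$ (edges of length $L$) and a pendant leaf $t_i$ attached to each $v_i$ by an edge of length $L$. Let $\Th$ insert a degree-$3$ Steiner node $s_i$ with edges $(v_i,s_i)$ of length $\delta = 1.4\eps$, $(s_i,v_{i+1})$ of length $L-\delta$, and $(s_i,t_i)$ of length $L-\delta$. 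One checks that every pairwise error is at most $2\delta \le 3\eps$ and occurs only for pairs at true distance $\ge 2L$, so the hypothesis $|\d-\dh|\le(\d/L+1)\eps$ holds; yet $f(s_i)=v_i$, each edge $(v_i,s_i)$ is an intra-cluster edge lying on the $x$--$y$ path, and $R(x,y) = 1.4(k-1)\eps$, which exceeds $(\d(x,y)/L+1)\eps + C\eps = (k+1+C)\eps$ for all large $k$. So the variant of $\Desteinerize$ you analyze does not satisfy the theorem's conclusion, and no ``same-order-along-the-true-path'' argument can rescue it.

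The paper avoids this by \emph{not} reusing the $\Th$ edge weights. After contracting clusters, it fixes a root $\rho$ and assigns each edge $(u,v)$ of $\T'$ (in BFS order) the length $\max(0,\dh(\rho,v)-\d'(\rho,u))$, so that root distances are repeatedly reset rather than accumulated: Claim~\ref{claim:errortorootdesteinerize} gives $\d'(\rho,u)-2\crad \le \dh(\rho,u)\le \d'(\rho,u)$ for every $u$, with no dependence on the number of clusters traversed. General pairs are then handled through the identity $\d(\LCAi{\T}{\rho}{u,v},\rho)=\tfrac12(\d(u,\rho)+\d(v,\rho)-\d(u,v))$ (Claim~\ref{claim:crossing}) applied in both $\Th$ and $\T'$, yielding a uniform $12\crad \le 360\eps$ distortion. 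If you want to salvage your route, you would need to replace the inter-cluster edge weights by quantities that compensate for the discarded intra-cluster lengths; the paper's root-anchored assignment is one clean way to do this.
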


Theorems~\ref{thm:additivemetricreconstruction} and \ref{thm:desteinerize} are proved in Sections~\ref{sec:global-reconstruction} and \ref{sec:desteinerize}, respectively. Combined, these two theorems immediately imply the correctness and runtime of $\TreeMetricReconstruction$, Theorem~\ref{thm:treemetricreconstruction}:

\begin{proof}[Proof of Theorem~\ref{thm:treemetricreconstruction}]
By Theorems~\ref{thm:additivemetricreconstruction} and \ref{thm:desteinerize}, $\TreeMetricReconstruction$ runs in $O(n^2)$ time. Furthermore, there are constants $C',C''$ such that if $\eps > 0$, $L \geq 100\epsilon$, $L \geq 2C'\epsilon$, then the outputted tree metric satisfies the approximation guarantee $$|\d(x,y) - \dh(x,y)| \leq C' (\d(x,y)/L + 1)\epsilon + C'C''\epsilon.$$ Taking $C = \max(100, 2C', C' (C'' + 1))$, we can therefore ensure \eqref{eqn:tmrappxgoal}
for any $\epsilon,L > 0$. (Since if $L < 100\epsilon$ or $L < 2C'\epsilon$, outputting $\dh(x,y) = 0$ for all $x,y$ works.)
\end{proof}

\begin{remark}
Theorem~\ref{thm:treemetricreconstruction} is much easier to prove if we do not require the output $\dh$ to be a tree metric, since essentially looking at the shortest path metric of $\dt$ gives a global metric satisfying a guarantee like \eqref{eqn:tmrappxgoal}; see Lemma~\ref{lem:sp-good} for a proof. However, straightforward approaches to extract a tree from this metric --- e.g. taking a shortest path tree containing all shortest paths from some arbitrary root node $\rho$, or a minimum cost spanning tree --- are easily seen to fail on a path graph with edge distances $\Theta(\epsilon)$. If we use a minimum cost spanning tree then, in the case where $\d$ is the evolutionary metric, the resulting tree is the Chow-Liu Tree and its failure is explained in Section~\ref{sec:CLfails}. If we use a shortest path tree started from the first vertex of the path, then the tree can fail in the same way as Chow-Liu: by modifying all distances by $\Theta(\epsilon)$ we can make the shortest path to the final vertex in the path go through every other vertex in the graph, except possibly at the end, making the tree match the Chow-Liu tree in Figure~\ref{fig:CLfails} and making distances between some neighbors off by the entire diameter of the graph.
\end{remark}

\section{Chow-Liu: handling weak edges}\label{sec:preprocessing}

This section is devoted to the proof of Proposition~\ref{prop:reducetolowerbounded}, which shows correctness of the algorithm $\LearnFerromagneticModel$, based on the Chow-Liu algorithm. Section~\ref{ssec:ProofForLearnFerro} formally proves the proposition, using claims whose proofs are deferred to Section~\ref{ssec:preprocessingauxiliary}.

\subsection{Proof of Proposition~\ref{prop:reducetolowerbounded}}
\label{ssec:ProofForLearnFerro}
First, the output $\Th$ of $\LearnFerromagneticModel$ is a tree, because each $T_i'$ is a tree on the set $V_i$, and by construction the edge set $W$ forms a tree when the sets of the partition $V_1,\ldots,V_m$ are contracted. The runtime guarantee of the algorithm is also straightforward, since the maximum spanning tree and connected components can be found in $O(|\cV|^2)$ time. It remains to show the $\loctvk2$ approximation guarantee. By Fact~\ref{fact:corrLocTV}, this reduces to proving that there is an absolute constant $\Clearn > 0$ such that $$|\muh{uv} - \mus{uv}| \leq \Clearn \eps \mbox{ for all } u,v \in \Vset.$$

To show this, let $\pis(a,b) \subset \Ts$ and $\pih(a,b) \subset \Th$ denote the paths from vertex $a$ to vertex $b$ in $\Ts$ and $\Th$, respectively. Then consider any pair of vertices $u,v \in \Vset$. If there is a very weak edge $e \in \pis(u,v)$ such that $\thetas{e} \leq 4\eps$, then Claim~\ref{claim:pathswithweakedges} below shows that there is an edge $e' \in \pih(u,v)$ such that $\thetah{e'} \leq 5\eps$, and hence 
$$|\muh{uv} - \mus{uv}| \leq |\muh{uv}| + |\mus{uv}| \leq |\thetah{e'}|+ |\thetas{e}|  \leq 9\eps \leq \Clearn \eps\,.$$
So without loss of generality $\thetas{e} > 4\eps$ for all edges $e \in \pis(u,v)$. Under this edge strength lower bound condition, Claim~\ref{claim:pathsagreeglobally} below shows that $\pis(u,v)$ and $\pih(u,v)$ are equal if we contract the vertex sets $V_1,\ldots,V_m$. In other words, we can decompose the paths as $$ \pis(u,v) = \pis(a_1,b_1) \circ (b_1,a_2) \circ \pis(a_2,b_2) \circ \dots \circ (b_{t-1},a_t) \circ \pis(a_t,b_t), $$ $$\pih(u,v) = \pih(a'_1,b'_1) \circ (b'_1,a'_2) \circ \pih(a'_2,b'_2) \circ \dots \circ (b'_{t-1},a'_t) \circ \pih(a'_t,b'_t), $$  where $a_i,b_i,a'_i,b'_i \in V_{\sigma(i)} \mbox{ for all } i \in [t]$ for some permutation $\sigma: [m]\to [m]$. 

Hence, 
$$\mus{uv} = \prod_{i=1}^t \mus{a_i b_i} \prod_{i=1}^{t-1} \mus{b_i a_{i+1}} ,$$ $$\muh{uv} = \prod_{i=1}^t \muh{a'_i b'_i} \prod_{i=1}^{t-1} \muh{b'_i a'_{i+1}}.$$ 
In order to bound the difference between these two quantities, we also define 
$$
\Upsilon_{uv} = \prod_{i=1}^t  \mus{a'_i b'_i} \cdot \prod_{i=1}^{t-1} \mus{b'_i a'_{i+1}},
$$ 
which is the product along the estimated path in $\Th$ of the correlations as computed in $\Ts$.
Now applying the triangle inequality yields 
$$|\muh{uv} - \mus{uv}| \leq |\muh{uv} - \Upsilon_{uv}| + |\Upsilon_{uv} - \mus{uv}|.$$

Claim~\ref{claim:upsilonthetaprimediff} below shows that $|\muh{uv} - \Upsilon_{uv}| \leq C''\eps$ for some absolute constant $C''$. The rough idea is that, for each $i \in [t]$, $\muh{a_i' b_i'}$ is an estimated correlation within the partition $V_{\sigma(i)}$ and $\Ts[V_{\sigma(i)}]$ has all edge weights of magnitude at least $\edgelowbd$ by Claim~\ref{claim:preprocessingconnectedsubtrees} below. Hence $\muh{a_i' b_i'}$ is a
 $C'\eps$-approximation of $\mu_{a_i' b_i'}$ for a universal constant $C'$ by the correctness of \LearnLowerboundedModel. By the fact that $\mut{}$ is an $\eps$-approximation of $\mus{}$, we also know that for each $i \in [t-1]$, $\muh{b_i' a_{i+1}'}$ is a $2\eps$-approximation of $\mu_{b_i' a_{i+1}'}$. Furthermore, as discussed in the overview~\ref{ssec:chowliuoverview}, the errors from each term in the product do not accumulate as $t$ increases, but rather add with geometric attenuation, because  $\mu{b_i' a_{i+1}'} \leq 1/10 + \eps \leq 1/5$ for all $i \in [t-1]$.

Claim~\ref{claim:upsilonthetadiff} shows that $|\Upsilon_{uv} - \mus{uv}| \leq 4\eps$. This claim is proved in two steps. First, it is easy to show that $\Upsilon_{uv} \leq \mus{uv}$ by noting that $\mus{ab} \mus{bc} \leq \mus{ac}$ for any vertices $a,b,c$ in a tree-structured model. Second, to show that $\mus{uv} \leq \Upsilon_{uv} + 4\eps$, we prove that $a_i$ is close to $a_i'$ and $b_i$ is close to $b_i'$ for all $i \in [t]$. This latter step requires more care -- in particular, we use that we may assume that $\thetas{e} > 4\eps$ for all edges $e \in \pi(u,v)$ without loss of generality.

So overall $|\muh{uv} - \mus{uv}| \leq (C'' + 4)\eps \leq \Clearn \eps$ for an absolute constant $\Clearn$. \qed

\subsection{Deferred details}\label{ssec:preprocessingauxiliary}

The claims in this section fill out the details of the proof of Proposition~\ref{prop:reducetolowerbounded}.

\subsubsection{Correctness within each set $V_i$}
First, we show that within each vertex set $V_i$ the correctness of $\LearnLowerboundedModel$ ensures that $\muh{}$ $O(\eps)$-approximates $\mu$.

\begin{claim}[Correlations are weak between sets]\label{claim:distinctfar}
If $u \in V_i, v \in V_j$ for some $i \neq j$, then $\mut{uv} \leq 1/10$.
\end{claim}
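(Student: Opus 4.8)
The plan is to combine the definition of the partition $V_1,\dots,V_m$ with the exchange property of maximum-weight spanning trees. Since $u \in V_i$ and $v \in V_j$ with $i \neq j$, and the sets $V_\ell$ are precisely the connected components of $\TCL \sm W$, the unique path $\picl(u,v)$ from $u$ to $v$ in $\TCL$ cannot stay within the component of $u$; hence it uses at least one edge $e^\ast \in W$, and by definition of $W$ this means $\mut{e^\ast} \leq 1/10$. So the whole task reduces to transferring this bound from $e^\ast$ to the pair $(u,v)$.

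For that, I would invoke the standard maximum-spanning-tree fact. The input graph is complete (we are given $\mut{ij}$ for every pair $i,j$) and $\TCL$ is a maximum-weight spanning tree for the weights $|\mut{\cdot}|$, so every edge $e$ on the tree path $\picl(u,v)$ satisfies $|\mut{e}| \geq |\mut{uv}|$: deleting $e$ from $\TCL$ disconnects $u$ from $v$, so adding the edge $(u,v)$ produces a spanning tree of total weight $w(\TCL) - |\mut{e}| + |\mut{uv}|$, which by maximality of $\TCL$ is at most $w(\TCL)$, giving $|\mut{uv}| \leq |\mut{e}|$. Applying this with $e = e^\ast$ yields $|\mut{uv}| \leq |\mut{e^\ast}|$.

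It remains to pass from $|\mut{e^\ast}|$ back to an unsigned $1/10$ bound. In the ferromagnetic case all true correlations are nonnegative, so $\mut{e^\ast} \geq \mus{e^\ast} - \eps \geq -\eps > -1/10$; together with $\mut{e^\ast} \leq 1/10$ this gives $|\mut{e^\ast}| \leq 1/10$, and therefore $\mut{uv} \leq |\mut{uv}| \leq |\mut{e^\ast}| \leq 1/10$, as claimed. There is no genuine obstacle in this argument; the only point requiring a modicum of care is the passage between $\mut{\cdot}$ and $|\mut{\cdot}|$, which is handled by the ferromagnetic assumption and $\eps \leq \epsuppbd \ll 1/10$.
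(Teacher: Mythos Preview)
Your proof is correct and follows essentially the same approach as the paper: find a weak edge $e^\ast \in W$ on the Chow-Liu path from $u$ to $v$, then use the maximum-spanning-tree exchange property to transfer the bound to $(u,v)$. Your treatment of the absolute values is in fact slightly more careful than the paper's, which writes the contradiction directly in terms of $\mut{}$ rather than $|\mut{}|$ and leaves the ferromagnetic justification implicit.
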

\begin{proof}
Since $u$ and $v$ are not in the same set of the partition, there is an edge $e$ (in the set of weak edges $W$) on the path from $u$ to $v$ in ${\TCL}$ with $\mut{e} \leq 1/10$. If $\mut{uv} > 1/10 \geq \mut{e}$, then removing $e$ and adding $(u,v)$ to $\TCL$ gives a spanning tree of strictly higher weight, contradicting maximality of $\TCL$.
\end{proof}

Recall $(\Ts, \thetas{})$ is a tree model with correlations $\mus{}$ within $\eps$ of the given approximate values $\mut{}$. 

\begin{claim}[Each set induces a subtree with lower-bounded edge strengths]\label{claim:preprocessingconnectedsubtrees}
$\Ts[V_i]$ is a subtree of $\Ts$ for each $i \in [m]$. Furthermore, $\thetas{e} \geq 1/10 - \eps$ for all $e\in E(\Ts[V_i])$.
\end{claim}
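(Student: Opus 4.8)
The plan is to prove both parts by comparing $\Ts$ to the \emph{strong} part of the Chow--Liu tree through the single-edge cuts that the two trees induce. I will lean on two facts already available: Claim~\ref{claim:distinctfar}, which says $\mut{uv}\le 1/10$ whenever $u,v$ lie in different parts of the partition; and Fact~\ref{fact:corrPaths}, which in the ferromagnetic regime writes each $\mus{ab}$ as a product of correlations in $[0,1]$ along the $\Ts$-path $\pis(a,b)$. The only numerical input needed is that $(1/10+\eps)^2 < 1/10-\eps$ whenever $\eps\le\epsuppbd$ (and, thematically, $1/10-\eps\ge\edgelowbd=1/20$, which is what will let us invoke $\LearnLowerboundedModel$ on $\Ts[V_i]$).

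The key lemma I would establish first is: \emph{if $(a,b)\in E(\TCL)$ with $\mut{ab}>1/10$, then $\pis(a,b)$ stays inside the part $V_i$ containing $a$ and $b$.} Such an edge is not in $W$, and since $V_1,\dots,V_m$ are the connected components of $\TCL\setminus W$, both endpoints do lie in a common $V_i$. I would argue by contradiction: if some $w\in\pis(a,b)$ has $w\notin V_i$, then Claim~\ref{claim:distinctfar} applied to the pairs $(a,w)$ and $(w,b)$ gives $\mut{aw},\mut{wb}\le 1/10$, hence $\mus{aw},\mus{wb}\le 1/10+\eps$; since $w$ splits the $\Ts$-path from $a$ to $b$, Fact~\ref{fact:corrPaths} gives $\mus{ab}=\mus{aw}\,\mus{wb}\le(1/10+\eps)^2$; but $\mus{ab}\ge\mut{ab}-\eps>1/10-\eps$, a contradiction. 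I expect this to be the main obstacle, precisely because the Chow--Liu tree may route the connection between two vertices of a part through weak edges, so there is no direct control on the length or vertex set of $\picl(a,b)$ — the point is to not track that path at all and instead exploit that each exit from $V_i$ costs a multiplicative factor $\le 1/10+\eps$, which ferromagneticity compounds into something below $1/10-\eps$.

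Given the lemma, part (i) should follow quickly. For $u,v\in V_i$: since $V_i$ is connected in $\TCL\setminus W$ and $\TCL$ is a tree, the $\TCL$-path $\picl(u,v)$ visits only vertices of $V_i$ and uses only edges with $\mut{e}>1/10$; write it as $u=c_0,c_1,\dots,c_\ell=v$. By the lemma each $\pis(c_{r-1},c_r)\subseteq V_i$, and since in the tree $\Ts$ the simple path $\pis(u,v)$ is contained in the concatenated walk $\pis(c_0,c_1)\circ\cdots\circ\pis(c_{\ell-1},c_\ell)$, we get $\pis(u,v)\subseteq V_i$. As $u,v\in V_i$ were arbitrary, $\Ts[V_i]$ is connected; being a connected induced subgraph of a tree, it is a subtree of $\Ts$.

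For part (ii), I would take $e=(x,y)\in E(\Ts[V_i])$, so $x,y\in V_i$. Deleting $e$ splits $\Ts$ into components $S_x\ni x$ and $S_y\ni y$, with $e$ the unique $\Ts$-edge across this cut. Since $\TCL[V_i]$ is connected and contains $x$ and $y$ on opposite sides of the cut, some edge $(p,q)\in E(\TCL[V_i])$ crosses it, with $p\in V_i\cap S_x$ and $q\in V_i\cap S_y$; because $p,q$ lie in the same part, $(p,q)\notin W$, so $\mut{pq}>1/10$. The $\Ts$-path $\pis(p,q)$ must cross the cut and hence pass through $e$, so by Fact~\ref{fact:corrPaths} (all factors in $[0,1]$) $\mus{xy}\ge\mus{pq}\ge\mut{pq}-\eps>1/10-\eps$, i.e.\ $\thetas{xy}=\mus{xy}\ge 1/10-\eps$. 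Note this step uses only that $\TCL[V_i]$ is connected, not part (i), so it can be carried out independently; and the whole argument invokes maximum-spanning-tree optimality only of $\TCL$ (through Claim~\ref{claim:distinctfar}), never of $\Ts$.
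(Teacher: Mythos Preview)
Your proof is correct and follows essentially the same approach as the paper. Both arguments hinge on the same core computation: a vertex outside $V_i$ on a $\Ts$-path between two vertices of $V_i$ would force the correlation to be at most $(1/10+\eps)^2<1/10-\eps$, contradicting the existence of a strong Chow--Liu edge across that path; the paper phrases this as a single global disconnection argument while you factor it through a per-edge lemma and then concatenate along the $\TCL$-path, but the content is identical. Your treatment of part (ii) via the cut induced by $e$ and a crossing strong edge of $\TCL[V_i]$ is likewise the direct contrapositive of the paper's contradiction argument.
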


\begin{proof}
Assume for the sake of contradiction that $\Ts[V_i]$ is not connected. Then there is a vertex $w \not\in V_i$ and partition $V_i = V_{i,1} \sqcup V_{i,2}$ into nonempty sets such that all paths in $\Ts$ from $V_{i,1}$ to $V_{i,2}$ go through $w$. 
So for all $u \in V_{i,1}$ and $v \in V_{i,2}$, 
$$\mus{uv} = \mus{uw} \mus{wv} \leq (\mut{uw} + \eps)(\mut{wv} + \eps) \leq (1/10 + \eps)(1/10 + \eps) \leq 1/20.
$$ %
We have used $\mut{uw}, \mut{wv} \leq 1/10$ by Claim~\ref{claim:distinctfar}, since $w \not\in V_i$, $u,v \in V_i$. 
Thus $\mut{uv} \leq \mus{uv} + \eps \leq 1/10$, for all $u \in V_{i,1}$ and $v \in V_{i,2}$. 
On the other hand, there must be $u \in V_{i,1}, v \in V_{i,2}$ such that $\mut{uv} > 1/10$ (otherwise the only possible edges between $V_{i,1}$ and $V_{i,2}$ in $\TCL$ would be in $W$ and
hence $V_i$ would not have ended up as one part), which is a contradiction.

Furthermore, if there is $e \in E(\Ts[V_i])$ such that $\thetas e < 1/10 - \eps$, then let $V_i = V'_{i,1} \sqcup V'_{i,2}$ be the partition of $V_i$ formed by removing $e$ from $\Ts[V_i]$. For any $u \in V'_{i,1}, v \in V'_{i,2}$, we have $\mut{uv} \leq \mus{uv} + \eps \leq \thetas e + \eps < 1/10,$ which by the argument in the previous paragraph contradicts the construction of $V_i$.
\end{proof}

\begin{claim}[Correlations are correct within each set]\label{claim:correctwithinVi}
There is an absolute constant $C'$ such that if $u,v \in V_i$ for some $i$, then $|\muh{uv} - \mus{uv}| \leq C' \eps$.
\end{claim}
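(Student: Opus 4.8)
The plan is to invoke the correctness guarantee of $\LearnLowerboundedModel$ (Proposition~\ref{prop:tmr}) on the sub-instance restricted to $V_i$, and then translate its $\loctvk2$ guarantee into a bound on pairwise correlations. Recall that in $\LearnFerromagneticModel$ the pair $(\Th^{(i)},\theta^{(i)})$ is exactly the output of the call $\LearnLowerboundedModel(\{\mut{uv} : u,v\in V_i\}, \eps)$, and that the only edges of $\Th$ joining two vertices of $V_i$ are those of $\Th^{(i)}$ (edges of $W$ connect distinct parts, so removing $W$ leaves the $\Th^{(i)}$ as the connected components). Hence for $u,v\in V_i$ the path $\pih(u,v)$ lies entirely in $\Th^{(i)}$, and by Fact~\ref{fact:corrPaths} the correlation $\muh{uv}$ equals the correlation of $X_u,X_v$ under the distribution $\Ph^{(i)}$ of the model $(\Th^{(i)},\theta^{(i)})$. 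Likewise, by Claim~\ref{claim:preprocessingconnectedsubtrees}, $\Ts[V_i]$ is a connected subtree of $\Ts$, so for $u,v\in V_i$ the path between them in $\Ts[V_i]$ coincides with $\pis(u,v)$, and Fact~\ref{fact:corrPaths} shows that $\mus{uv}$ is the pairwise correlation of the tree Ising model $(\Ts[V_i], \thetas{}|_{E(\Ts[V_i])})$, whose distribution we denote $P^{(i)}$.

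Next I would verify the hypotheses of Proposition~\ref{prop:tmr} for this sub-instance. By Claim~\ref{claim:preprocessingconnectedsubtrees}, every edge of $\Ts[V_i]$ has strength $\geq 1/10-\eps$, and since $\eps \leq \epsuppbd$ this is $\geq \edgelowbd$; thus $(\Ts[V_i],\thetas{}|_{E(\Ts[V_i])})$ is a tree Ising model on at most $n$ variables with edge correlations $\geq \edgelowbd$. The input correlations satisfy $|\mut{uv} - \mus{uv}| \leq \eps$ for all $u,v$ by assumption, so $\{\mut{uv} : u,v\in V_i\}$ are $\eps$-accurate correlations for $P^{(i)}$. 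Applying Proposition~\ref{prop:tmr} with approximation constant $C$ (a universal constant), the returned model $(\Th^{(i)},\theta^{(i)})$ has distribution $\Ph^{(i)}$ with $\loctvk2(\Ph^{(i)}, P^{(i)}) \leq C\eps$.

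Finally, both $\Ph^{(i)}$ and $P^{(i)}$ have unbiased singleton marginals (no external field), so Fact~\ref{fact:corrLocTV} gives $|\muh{uv} - \mus{uv}| \leq 2\,\loctvk2(\Ph^{(i)},P^{(i)}) \leq 2C\eps$ for all $u,v\in V_i$, and taking $C' = 2C$ completes the proof. I do not expect a genuine obstacle: the entire content is the bookkeeping of identifying $\mus{uv}$ and $\muh{uv}$ (for $u,v$ in the same part) with the pairwise correlations of the induced models $\Ts[V_i]$ and $\Th^{(i)}$, which relies on connectedness of both subgraphs, together with checking that Claim~\ref{claim:preprocessingconnectedsubtrees} supplies precisely the edge-strength lower bound required by Proposition~\ref{prop:tmr}.
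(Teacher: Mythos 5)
Your proof is correct and follows essentially the same route as the paper's: apply Claim~\ref{claim:preprocessingconnectedsubtrees} to get the edge-strength lower bound, invoke Proposition~\ref{prop:tmr} on the sub-instance, and observe that connectedness of both $\Ts[V_i]$ and $\Th[V_i]$ identifies the pairwise correlations of the induced models with $\mus{uv}$ and $\muh{uv}$. Your explicit use of Fact~\ref{fact:corrLocTV} to convert the $\loctvk2$ bound into a correlation bound (costing a factor of $2$) is a minor bookkeeping step the paper elides, but it changes nothing of substance.
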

\begin{proof}
Claim~\ref{claim:preprocessingconnectedsubtrees} shows that $\Ts[V_i]$ is a tree with $\thetas e\geq (1/10 - \eps) \geq \edgelowbd$ for each edge $e$, so Proposition~\ref{prop:tmr} applies, showing that $\LearnLowerboundedModel$ solves the $\eps$-reconstruction problem. In other words, there is an absolute constant $C'$ such that $\LearnLowerboundedModel$ returns a tree-structured model $(T'_i, \theta'_i)$ such that all correlations satisfy $|\mu'_{i,uv} - \mus{uv}| \leq C' \eps$ for $u,v\in V_i = V(T'_i)$, where $\mu'_{i,uv}$ is the correlation computed in the model $(T'_i, \theta'_i)$. The proof follows by noting that correlations $\mu'_{i,uv}$ computed within each model $(T'_i, \theta'_i)$ are exactly the same as the correlations $\hat{\mu}_{uv}$ in the final estimated model $(\Th, \thetah)$, since $\Th[V_i]$ is connected and thus correlations within $V_i$ entail only edges within $\Th[V_i] = T'_i$. 
\end{proof}

This latter claim implies the desired local reconstruction error guarantee when $u,v \in V_i$ for some $i \in [m]$.

\subsubsection{Topology roughly correct}
In order to translate the error guarantee within each set $V_i$ (proved by Claim~\ref{claim:correctwithinVi}) into an error guarantee between any pair of distinct sets $V_i$ and $V_j$, we first prove that the topologies of $\Ts$ and $\Th$ are equal (except possibly very weak edges) when the sets $V_i$ are contracted. In other words, except for very weak edges, the edges in $\Th$ between sets $V_1,\dots, V_m$ go between the correct sets. 
\begin{claim}\label{claim:strongTbetweensetsarealsoinTprime}
Let $(u,v) \in E(\Ts)$ such that $u \in V_i,v \in V_j$ for distinct $i,j$. If $\thetas{uv}=\mus{uv} \geq 4\eps$ then there is $(u',v') \in E(\Th)$ such that $u' \in V_i, v' \in V_j$.
\end{claim}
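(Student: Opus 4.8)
The plan is to prove the equivalent statement that $\TCL$ itself contains an edge of $W$ with one endpoint in $V_i$ and the other in $V_j$; this suffices because $\Th = W \cup \bigcup_{l} \Th^{(l)}$ (Algorithm~\ref{alg:chowliustep}) and each $\Th^{(l)}$ lies entirely within $V_l$, so the edges of $\Th$ running between two distinct parts are exactly those of $W$ running between them. If $(u,v) \in E(\TCL)$ we are already done: then $\mut{uv} \le 1/10$ by Claim~\ref{claim:distinctfar}, so $(u,v) \in W$. Hence assume from now on that $(u,v) \notin E(\TCL)$.

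I would then set up two structures. First, the path $\picl(u,v)$ in $\TCL$: since the $V_l$ are the connected components of $\TCL$ minus $W$, this path visits a sequence of \emph{distinct} parts $V_{k_0} = V_i, V_{k_1}, \dots, V_{k_s} = V_j$ (distinctness because contracting all the $V_l$ in $\TCL$ yields a tree) and uses exactly $s$ weak edges $w_0, \dots, w_{s-1}$, with $w_l = (p_l, q_l)$ joining $V_{k_l}$ and $V_{k_{l+1}}$; so the goal reduces to showing $s = 1$. Second, the fundamental cut of $(u,v)$ in $\Ts$: deleting $(u,v)$ from $\Ts$ splits the vertices into $A \ni u$ and $\bar A \ni v$. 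Because $\Ts$ is a maximum-weight spanning tree for the (ferromagnetic, hence nonnegative) weights $\mus{}$ (Claim~\ref{claim:spanningtreetheta}), the edge $(u,v)$ is the unique $\Ts$-edge across this cut and, moreover, the heaviest edge of any kind across it, i.e.\ $\mus{ab} \le \mus{uv}$ for all $a \in A$, $b \in \bar A$. Also, each $V_l$ is connected in $\Ts$ (Claim~\ref{claim:preprocessingconnectedsubtrees}) and, unless $l \in \{i,j\}$, avoids the edge $(u,v)$; hence each part lies wholly in $A$ or wholly in $\bar A$, with $V_i \subseteq A$ and $V_j \subseteq \bar A$.

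The crux is a chain of inequalities. Since $\picl(u,v)$ runs from $A$ to $\bar A$ and a within-part edge cannot cross the cut, some weak edge $w_{l^*} = (p,q)$ on the path crosses it, say $p \in A$ and $q \in \bar A$. Combining the cycle/maximality property of $\TCL$ (which gives $\mut{uv} \le \mut{pq}$, valid because $(u,v) \notin E(\TCL)$ and $w_{l^*} \in \picl(u,v)$), the two $\eps$-approximation bounds, and the cut property $\mus{pq} \le \mus{uv}$, one gets $\mus{pq} \ge \mus{uv} - 2\eps$. Now $(u,v)$ being the unique $\Ts$-edge across the cut forces the $\Ts$-path from $p$ to $q$ to pass through $(u,v)$, so by Fact~\ref{fact:corrPaths}, $\mus{pq} = \mus{pu}\,\mus{uv}\,\mus{vq}$, whence $\mus{pu}\,\mus{vq} \ge 1 - 2\eps/\mus{uv} \ge 1/2$ using the hypothesis $\mus{uv} \ge 4\eps$. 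Therefore $\mus{pu} \ge 1/2$ and $\mus{vq} \ge 1/2$. But if $p$ and $u$ lay in different parts, Claim~\ref{claim:distinctfar} would give $\mut{pu} \le 1/10$, hence $\mus{pu} \le 1/10 + \eps < 1/2$; so $p \in V_i$, i.e.\ $l^* = 0$. Symmetrically $q \in V_j$, forcing $l^* = s-1$. Hence $s = 1$, so $w_0$ is an edge of $W$ between $V_i$ and $V_j$, which is what we needed.

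The step I expect to be the real obstacle is exactly this middle one. A naive argument that swaps $(u,v)$ into $\TCL$ (or swaps a weak edge into $\Ts$) fails, because each such swap perturbs the spanning-tree weight by only $O(\eps)$ and yields no contradiction. The fix is to not swap at all, but instead to read off, from the near-tightness of the chained inequalities, the strong metric consequence $\mus{pu}, \mus{vq} \ge 1/2$, which then collides with the weak-correlation bound of Claim~\ref{claim:distinctfar}. Making the constants fit — so that $1 - 2\eps/\mus{uv}$ comfortably exceeds $1/10 + \eps$ (using $\eps \le \epsuppbd$) — is precisely what pins the threshold at $4\eps$; one should also carefully verify the bookkeeping that $\picl(u,v)$ visits the parts in a simple order and that every edge of it crossing the cut $(A,\bar A)$ is a weak edge.
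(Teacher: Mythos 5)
Your proof is correct, but it takes a genuinely different combinatorial route from the paper's. The paper argues on the path $\pih(u,v)$ in the output tree $\Th$: it takes the first edge $(a,b)$ of that path crossing the fundamental cut of $(u,v)$ in $\Ts$, and, assuming $a\notin V_i$, derives the strict inequality $\mut{ab}\le \mus{au}\mus{uv}+O(\eps)\le (1/10+\eps)\mut{uv}+O(\eps)<\mut{uv}$ (using $\mut{uv}\ge 3\eps$), so that $\TCL\cup(u,v)\sm(a,b)$ is a strictly heavier spanning tree, contradicting maximality of $\TCL$; the delicate part there is certifying that this exchange actually yields a spanning tree. You instead argue on $\picl(u,v)$ in $\TCL$ itself, invoke the cycle property $\mut{uv}\le\mut{pq}$ for the cut-crossing weak edge $(p,q)$, and convert its near-tightness into the metric consequence $\mus{pu},\mus{vq}\ge 1/2$, which collides with the cross-part bound of Claim~\ref{claim:distinctfar}. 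Both proofs hinge on the same quantitative tension (the $1/10$ bound between distinct parts versus $\mus{uv}\ge 4\eps$), but yours trades the spanning-tree-exchange verification for the easy preliminary reduction to finding a $W$-edge between $V_i$ and $V_j$ (immediate since $W\subseteq E(\Th)$ and each $\Th^{(l)}$ stays inside $V_l$). One small correction: your closing remark that a swap of $(u,v)$ into $\TCL$ ``fails'' because it only perturbs the weight by $O(\eps)$ is not accurate — the paper's proof is exactly such a swap, and it succeeds because under the hypothesis that the crossing edge leaves $V_i$ the weight gap is $\Omega(\mut{uv})$, not $O(\eps)$; this is the same $1/10$ leverage your own argument uses.
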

\begin{proof}
Consider the path $\pih(u,v)$ in $\Th$. Let $e = (a,b)$ be the first edge on this path such that $a$ and $b$ are in separate components of $\Ts \sm (u,v)$. We show that $a \in V_i$ and $b \in V_j$, proving the theorem. Otherwise, without loss of generality $a \in V_k$ for some $k \neq i$.
We show that $\tilde{\mu}_{ab} < \tilde{\mu}_{uv}$:
\begin{align*}\mut{ab} &\leq \mus{ab} + \eps \\
&\leq \mus{au}\mus{uv}\mus{vb} + \eps & \mbox{by Fact~\ref{fact:corrPaths}, since $(u,v) \in \pis(a,b)$} \\
&\leq \mus{au}\mus{uv} + \eps & \mbox{using $\mus{vb} \leq 1$}\\
&\leq (\mut{au} + \eps)(\mut{uv} + \eps) + \eps \\
&\leq (1/10 + \eps)(\mut{uv} + \eps) + \eps & \mbox{by Claim~\ref{claim:distinctfar}, since $u \in V_i, a \in V_k$ for $i \neq k$} \\
&\leq (1/10 + \eps) \mut{uv}  + (11/10 + \eps)\eps \\
&= \mut{uv} - (9/10 - \eps)\mut{uv} + (11/10 + \eps)\eps \\
&< \mut{uv}, & \mbox{since $\mut{uv} \geq \mus{uv} - \eps \geq 3\eps$}.
\end{align*}
We now show that $\TCLprime = \TCL \cup (u,v) \sm (a,b)$ is a tree. Since $\TCLprime$ has strictly greater $\mut{}$ weight than $\TCL$, this would contradict maximality of the Chow-Liu tree and prove the claim.  (i) First, $(a,b) \in W \subset E(\TCL)$, since $(a,b) \in E(\Th)$ and $\mut{ab} < \mut{uv} \leq 1/10$ by Claim~\ref{claim:distinctfar}. (ii) Second, $u$ and $v$ are in distinct components of $\TCL \sm (a,b)$, since $u$ is connected to $a$ and $v$ is connected to $b$ in $\TCL \sm (a,b)$. Indeed, for any edge $f = (c,d) \neq (a,b)$ in $\pih(u,v)$ we prove that $c$ and $d$ are connected in $\TCL \sm (a,b)$: either $c \in V_l, d \in V_{l'}$ for distinct $l,l'$, in which case $(c,d) \in W \subset E(\TCL)$; or, if $c,d \in V_l$ for some $l$, then $c$ and $d$ are connected in $\TCL \sm (a,b)$ because $V_l$ is a connected component of $\TCL \sm W$ and $(a,b) \in W$. Combined, (i) and (ii) prove that $\TCLprime$ is a tree.
\end{proof}

This immediately implies the following:
\begin{claim}[Reconstructed topology correct -- not-very-weak edges case]\label{claim:pathsagreeglobally}
Let $u,v \in \Vset$. If all edges $e$ of $\pis(u,v)$ have weight $\thetas e \geq 4\eps$, then we can decompose the paths $\pis(u,v) \subseteq \Ts$, $\pih(u,v) \subseteq \Th$ as $$ \pis(u,v) = \pis(a_1,b_1) \circ (b_1,a_2) \circ \pis(a_2,b_2) \circ \dots \circ (b_{t-1},a_t) \circ \pis(a_t,b_t), $$ $$\pih(u,v) = \pih(a'_1,b'_1) \circ (b'_1,a'_2) \circ \pih(a'_2,b'_2) \circ \dots \circ (b'_{t-1},a'_t) \circ \pih(a'_t,b'_t), $$  where $a_i,b_i,a'_i,b'_i \in V_{\sigma(i)} \mbox{ for all } i \in [t]$ for some permutation $\sigma: [m]\to [m]$. 
\end{claim}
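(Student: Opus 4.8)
The plan is to contract the vertex sets $V_1,\ldots,V_m$ in both $\Ts$ and $\Th$ and show that the two contracted trees agree along the relevant path; the point of the hypothesis $\thetas e\ge 4\eps$ on $\pis(u,v)$ is precisely to make every crossing edge strong enough to feed into Claim~\ref{claim:strongTbetweensetsarealsoinTprime}. First I would decompose $\pis(u,v)$: since $\Ts$ is a tree and each $\Ts[V_i]$ is connected (Claim~\ref{claim:preprocessingconnectedsubtrees}), a path in $\Ts$ meets each $V_i$ in a contiguous segment, so $\pis(u,v)$ visits a sequence of \emph{distinct} sets $V_{\sigma(1)},\ldots,V_{\sigma(t)}$ in order (with $u\in V_{\sigma(1)}$, $v\in V_{\sigma(t)}$), the visit to $V_{\sigma(i)}$ being a contiguous subpath $\pis(a_i,b_i)$, and consecutive visits being joined by a single edge $(b_i,a_{i+1})\in E(\Ts)$. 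Extend the injection $i\mapsto\sigma(i)$ arbitrarily to a permutation of $[m]$. Since each $(b_i,a_{i+1})$ lies on $\pis(u,v)$, it has $\thetas{b_i a_{i+1}}\ge 4\eps$.

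Next, for each $i\in[t-1]$ I would apply Claim~\ref{claim:strongTbetweensetsarealsoinTprime} to the crossing edge $(b_i,a_{i+1})$ to obtain an edge $(b_i',a_{i+1}')\in E(\Th)$ with $b_i'\in V_{\sigma(i)}$ and $a_{i+1}'\in V_{\sigma(i+1)}$. Consider the subgraph $\Th[U]$ of $\Th$ induced on $U:=V_{\sigma(1)}\cup\cdots\cup V_{\sigma(t)}$. By construction $\Th[V_i]=T_i'$ is connected, and the edges $(b_1',a_2'),\ldots,(b_{t-1}',a_t')$ chain these blocks together, so $\Th[U]$ is connected; being a connected subgraph of the tree $\Th$ it is a subtree, hence $\pih(u,v)$ stays inside $U$. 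Moreover, since $\Th$ is a tree there is at most one edge of $\Th$ between any two of the blocks, so each $(b_i',a_{i+1}')$ is a cut edge of $\Th[U]$ separating $V_{\sigma(1)}\cup\cdots\cup V_{\sigma(i)}$ from the rest. As $u$ lies in the first part and $v$ in the last, $\pih(u,v)$ must traverse all $t-1$ of these cut edges, and the nested cut structure forces their order along the path to be $(b_1',a_2'),\ldots,(b_{t-1}',a_t')$; between consecutive cut edges the path stays within a single block $V_{\sigma(i)}$ (again by connectedness of $\Th[V_{\sigma(i)}]$), giving a subpath $\pih(a_i',b_i')$ with $a_1'=u$ and $b_t'=v$. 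This is exactly the asserted decomposition of $\pih(u,v)$, with $a_i,b_i,a_i',b_i'\in V_{\sigma(i)}$ for all $i$. (Equivalently: contracting the $V_i$'s turns $\Ts,\Th$ into trees $\bar\Ts,\bar\Th$ on a common super-node set, the decomposition of $\pis(u,v)$ exhibits the super-path $\sigma(1),\ldots,\sigma(t)$ in $\bar\Ts$, the partner edges from Claim~\ref{claim:strongTbetweensetsarealsoinTprime} show it also lies in $\bar\Th$, and uniqueness of paths in the tree $\bar\Th$ forces the image of $\pih(u,v)$ — a simple path since each $\Th[V_i]$ is connected — to follow the same super-path.)

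The only genuine subtlety is the repeatedly used fact that a path in a tree intersects a connected induced subgraph in a contiguous segment (equivalently, that contracting a connected vertex set of a tree yields a tree and carries paths to simple paths); this is what both makes $\pis(u,v)$ decompose cleanly into the blocks $V_{\sigma(i)}$ and prevents $\pih(u,v)$ from wandering in and out of a block, and it is what promotes Claim~\ref{claim:strongTbetweensetsarealsoinTprime} from an edgewise statement to the global path statement claimed here. Everything else is bookkeeping: matching the index $t$ and the permutation $\sigma$ across the two trees, and noting that the crossing edges are strong enough to invoke Claim~\ref{claim:strongTbetweensetsarealsoinTprime}, which is immediate from the hypothesis that every edge of $\pis(u,v)$ has weight at least $4\eps$.
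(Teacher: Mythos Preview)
Your proof is correct and follows essentially the same approach as the paper: decompose $\pis(u,v)$ using connectedness of each $\Ts[V_i]$ (Claim~\ref{claim:preprocessingconnectedsubtrees}), invoke Claim~\ref{claim:strongTbetweensetsarealsoinTprime} on each crossing edge to obtain the primed edges in $\Th$, and then use that the union of the connected blocks $\Th[V_{\sigma(i)}]$ together with these primed edges is a connected subtree of $\Th$ to force $\pih(u,v)$ through the same block sequence. Your cut-edge argument for the ordering is a bit more explicit than the paper's, which simply observes that connectivity of this subgraph rules out any additional inter-block edges on $\pih(u,v)$, but the substance is identical.
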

\begin{proof}
Since each set $V_i$ induces a subtree $\Ts[V_i]$ in $\Ts$ (by Claim~\ref{claim:preprocessingconnectedsubtrees}), we can uniquely decompose any path into a union of paths over the subtrees and edges between the subtrees: $$ \pis(u,v) = \pis(a_1,b_1) \circ (b_1,a_2) \circ \pis(a_2,b_2) \circ \dots \circ (b_{t-1},a_t) \circ \pis(a_t,b_t), $$ such that there exists a permutation $\sigma :[m]\to[m]$ with $V(\pis(a_i,b_i)) \subseteq V_{\sigma(i)} \mbox{ for all } i \in [t]$.

Let $a_1' = a_1 = u$, $b_t' = b_t = v$. By Claim~\ref{claim:strongTbetweensetsarealsoinTprime}, for each $i \in [t-1]$ there is an edge $(b_i',a_{i+1}')$ such that $b_i' \in V_{\sigma(i)}, a_{i+1}' \in V_{\sigma(i+1)}$, i.e. there is a primed edge connecting each pair of groups $V_{\sigma(i)},V_{\sigma(i+1)}$ that is connected by a non-primed edge. Because for each $i\in[t]$, $\Th[V_i]$ is a subtree of $\Th$ and hence is connected, we see that $\Th[V_{\sigma(1)}\cup\dots\cup V_{\sigma(t)}]$ together with edges $(b_1',a_2')\cup\dots\cup(b'_{t-1},a'_t)$ forms a connected graph. It follows that $\pih(u,v)$ cannot have any additional edges between $V_i$ groups, i.e.,
the path $\pih(a_i',b_i')$ stays in $V_{\sigma(i)}$ for all $i \in [t]$. Thus $$\pih(a'_1,b'_1) \circ (b'_1,a'_2) \circ \pih(a'_2,b'_2) \circ \dots \circ (b'_{t-1},a'_t) \circ \pih(a'_t,b'_t)$$ is a path from $u$ to $v$ in $\Th$, and it satisfies the required properties.
\end{proof}
The next claim shows that if the path has a weak edge, then the reconstruction will still be reasonable:
\begin{claim}[Reconstructed topology correct -- very weak edge case]\label{claim:pathswithweakedges}
Let $u,v \in \Vset$ such that $\pis(u,v) \subseteq \Ts$ has an edge $e$ with $\thetas e \leq 4\eps$. Then $\pih(u,v) \subseteq \Th$ has an edge $e'$ with $\thetah{e'} \leq 5\eps$.
\end{claim}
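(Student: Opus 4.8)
The plan is to produce the promised weak edge $e'$ explicitly, by looking at the cut of $\Ts$ induced by the very weak edge $e$ and matching it up with the component structure of $\Th$ coming from Claim~\ref{claim:preprocessingconnectedsubtrees}. Write $e = (a,b)$, and let $S_a \sqcup S_b = \Vset$ be the bipartition of the vertex set given by the two connected components of $\Ts \setminus \{e\}$, with $a \in S_a$ and $b \in S_b$; note $e$ is the unique edge of $\Ts$ crossing this cut. Since $e$ lies on $\pis(u,v)$, one of $u,v$ lies in each part, say $u \in S_a$ and $v \in S_b$. Because $\pih(u,v)$ is a path in $\Th$ from $u$ to $v$ and $\{S_a,S_b\}$ partitions $\Vset$, there must be an edge $e' = (c,d)$ on $\pih(u,v)$ with $c \in S_a$ and $d \in S_b$; this is the edge I will show satisfies $\thetah{e'} \le 5\eps$.

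The first step is to argue that no part $V_k$ of the Chow-Liu partition straddles the cut $(S_a,S_b)$. If $V_k$ met both $S_a$ and $S_b$, then since $\Ts[V_k]$ is connected (Claim~\ref{claim:preprocessingconnectedsubtrees}) the path in $\Ts$ between a vertex of $V_k \cap S_a$ and a vertex of $V_k \cap S_b$ would lie inside $\Ts[V_k]$ while crossing the cut, forcing $e \in E(\Ts[V_k])$; but Claim~\ref{claim:preprocessingconnectedsubtrees} also gives $\thetas{e} \ge 1/10 - \eps > 4\eps$ (using $\eps \le \epsuppbd$), contradicting $\thetas{e} \le 4\eps$. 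Hence $c$ and $d$ lie in distinct parts $V_k \ne V_{k'}$, and since in $\Th = (\cup_i \Th^{(i)}) \cup W$ the only edges joining distinct parts are the weak Chow-Liu edges $W$, we get $e' \in W$, so $\thetah{e'} = \mut{cd}$ by the construction of $\LearnFerromagneticModel$.

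It then remains to bound $\mut{cd}$. The path $\pis(c,d)$ crosses the cut $(S_a,S_b)$, so it contains $e = (a,b)$, and hence by Fact~\ref{fact:corrPaths} together with nonnegativity of the correlations in the ferromagnetic case, $\mus{cd} = \prod_{g \in \pis(c,d)} \thetas{g} \le \thetas{e} \le 4\eps$. Since $\mut{}$ is an $\eps$-approximation of $\mus{}$, this yields $\thetah{e'} = \mut{cd} \le \mus{cd} + \eps \le 5\eps$, as required.

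I expect the only real subtlety to be the ``no part straddles the cut'' step, i.e. correctly invoking Claim~\ref{claim:preprocessingconnectedsubtrees} to rule out a $V_k$ containing both endpoints of $e$; the rest is bookkeeping with Fact~\ref{fact:corrPaths} and the definition of the edge weights of $\Th$. As a side remark, this argument also shows $u$ and $v$ cannot lie in a common part $V_k$, consistent with the fact that $\pih(u,v)$ must use at least one edge of $W$.
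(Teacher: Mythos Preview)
Your proof is correct and takes a genuinely different route from the paper's argument. The paper works in the Chow-Liu tree $\TCL$: it observes that if every edge $\tilde e$ on $\picl(u,v)$ had $\mus{\tilde e} > 4\eps$, this would contradict the maximum-spanning-tree property of $\Ts$ (Claim~\ref{claim:spanningtreetheta}), since the weak edge $e$ on $\pis(u,v)$ could then be swapped for a heavier edge on the cycle $\picl(u,v)\cup\pis(u,v)$; hence some edge on $\picl(u,v)$ has $\mus{}\le 4\eps$, hence $\mut{}\le 5\eps$, hence lies in $W$ and so appears on $\pih(u,v)$. You instead work directly with the cut $(S_a,S_b)$ induced by $e$ in $\Ts$ and invoke Claim~\ref{claim:preprocessingconnectedsubtrees} to show no part $V_k$ can straddle that cut, forcing the crossing edge on $\pih(u,v)$ to be in $W$ and therefore to carry weight $\mut{cd}\le \mus{cd}+\eps\le 5\eps$. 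Your argument is more modular in that it reuses the already-established structural claim about the partition rather than re-invoking spanning-tree optimality; the paper's argument is more self-contained and sidesteps the need to track the cut structure. Both are short and equally valid.
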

\begin{proof}
Consider the tree $\TCL$ constructed in $\LearnFerromagneticModel$ and let $\picl{(u,v)}$ be the path from $u$ to $v$ in $\TCL$. 
 Note that by the construction of $\Th$, the path $\pih{(u,v)}$ includes all edges $e\in \picl(u,v)$ with $\mut{e}\leq 1/10$ as these are in $W$. It follows that if there is some edge $\tilde{e}$ in $\picl(u,v)$ with $\mus{\tilde{e}} \leq 4\eps$, then $\mut{\tilde{e}} \leq \mus{\tilde e} + \eps \leq 5\eps \leq 1/10$, and hence $\tilde{e} \in W$, so $\pih(u,v)$ would contain this edge $\tilde{e}$ and we would be finished. Henceforth assume that $\mus{\tilde{e}} > 4\eps$ for all  $\tilde{e}$ in $\picl(u,v)$. In particular, $\thetas{\tilde{e}} > 4\eps$ for all $\tilde{e}$ in $\picl(u,v)$. 

Now, the hypothesis of the claim is that $\pis(u,v) \subseteq \Ts$ has an edge $e$ of weight $\thetas{e} \leq 4\eps$. This contradicts the fact that $\Ts$ is a maximum spanning tree on the weights $\mus{}$ (Claim~\ref{claim:spanningtreetheta}), because $\picl(u,v)\cup \pis(u,v)$ must form a cycle containing $e$, and removing $e$ from $\Ts$ and adding in an edge of weight greater than $4\eps$ from the cycle would strictly increase the weight of the tree.  
\end{proof}

To summarize, Claims~\ref{claim:pathsagreeglobally} and \ref{claim:pathswithweakedges} show that between the sets $V_1,\ldots,V_m$ of the partition, $\Th$ roughly recovers the topology of $\Ts$.

\subsubsection{Bounding $|\muh{uv} - \Upsilon_{uv}|$}

We will now prove the claimed upper bound on $|\muh{uv} - \Upsilon_{uv}|$, where we recall that $\muh{uv} = \prod_{i=1}^t \muh{a_i' b_i'} \prod_{i=1}^{t-1} \muh{b_i' a_{i+1}'}$ and $\Upsilon_{uv} = \prod_{i=1}^t \mus{a_i' b_i'} \prod_{i=1}^{t-1} \mus{b_i' a_{i+1}'}$ for $a_i',b_i'$ decomposing the path $\pih(u,v)$ as in the proof of Proposition~\ref{prop:preprocessing}. The main result is:
\begin{claim}\label{claim:upsilonthetaprimediff}
If all edges $e$ in the path $\pis(u,v)$ have weight $\thetas{e} \geq 4\eps$, then $|\muh{uv} - \Upsilon_{uv}| \leq (C' + 1)\eps$, where $C'$ is the constant from Claim~\ref{claim:correctwithinVi}.
\end{claim}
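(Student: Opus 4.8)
The plan is to write both $\muh{uv} = \prod_{i=1}^t \muh{a_i' b_i'}\prod_{i=1}^{t-1}\muh{b_i' a_{i+1}'}$ and $\Upsilon_{uv} = \prod_{i=1}^t \mus{a_i' b_i'}\prod_{i=1}^{t-1}\mus{b_i' a_{i+1}'}$ as products of the same $N=2t-1$ factors listed in the same order $a_1'b_1',\, b_1'a_2',\, a_2'b_2',\,\dots,\, b_{t-1}'a_t',\, a_t'b_t'$ (with $\muh{}$ for the first product and $\mus{}$ for the second), and then apply the elementary telescoping identity
$$\prod_{j\le N} x_j - \prod_{j\le N} y_j = \sum_{j=1}^N \Bigl(\prod_{l<j} x_l\Bigr)(x_j - y_j)\Bigl(\prod_{l>j} y_l\Bigr).$$
Bounding each of the $N$ terms requires two kinds of input. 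First, per-factor closeness: $|\muh{a_i' b_i'} - \mus{a_i' b_i'}| \le C'\eps$ because $a_i', b_i' \in V_{\sigma(i)}$, so Claim~\ref{claim:correctwithinVi} applies (its hypothesis holds by Claim~\ref{claim:preprocessingconnectedsubtrees}); and $|\muh{b_i' a_{i+1}'} - \mus{b_i' a_{i+1}'}| \le \eps$ because $(b_i', a_{i+1}')$ is an edge of $\Th$ joining distinct parts, hence lies in $W$, so by construction $\muh{b_i' a_{i+1}'} = \mut{b_i' a_{i+1}'}$, which is $\eps$-close to $\mus{b_i' a_{i+1}'}$. Second, smallness of the weak-edge factors: $\muh{b_i' a_{i+1}'} = \mut{b_i' a_{i+1}'} \le 1/10$ since $(b_i', a_{i+1}')\in W$, and therefore $\mus{b_i' a_{i+1}'} \le 1/10 + \eps \le 1/5$; all within-part factors have magnitude at most $1$.

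I would first dispose of $t=1$, where $\muh{uv} = \muh{a_1'b_1'}$ and $\Upsilon_{uv} = \mus{a_1'b_1'}$ lie in one part, so Claim~\ref{claim:correctwithinVi} gives $|\muh{uv} - \Upsilon_{uv}| \le C'\eps \le (C'+1)\eps$. For $t\ge 2$, consider the telescoping term at the weak factor $b_i'a_{i+1}'$ (index $j=2i$, $i\in[t-1]$): the prefix $\prod_{l<j}x_l$ contains exactly $i-1$ of the $\muh{}$-weak factors (each $\le 1/10$) together with within-part factors (each $\le 1$), while the suffix $\prod_{l>j}y_l$ contains exactly $t-1-i$ of the $\mus{}$-weak factors (each $\le 1/5$), so this term is at most $(1/10)^{i-1}\cdot\eps\cdot(1/5)^{t-1-i}$. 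Likewise, the term at the within-part factor $a_i'b_i'$ (index $j=2i-1$, $i\in[t]$) is at most $(1/10)^{i-1}\cdot C'\eps\cdot(1/5)^{t-i}$. Writing $1/10 = (1/5)(1/2)$ turns each of these two families into a geometric sum in $(1/2)^{i-1}$ with prefactor $(1/5)^{t-2}$ and $(1/5)^{t-1}$ respectively, and one checks $\sum_{i=1}^{t-1}(1/10)^{i-1}(1/5)^{t-1-i}\le 1$ and $\sum_{i=1}^{t}(1/10)^{i-1}(1/5)^{t-i}\le 1$ for every $t$; summing the telescoping bound then yields $|\muh{uv} - \Upsilon_{uv}| \le \eps + C'\eps = (C'+1)\eps$.

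The main point to get right is the bookkeeping of how many $\le 1/10$ and $\le 1/5$ factors appear in the prefix and suffix of each telescoping term, since it is exactly this geometric attenuation — rather than the crude union bound $O(t\eps)$, which is useless because $t$ (the number of parts crossed by $\pih(u,v)$) is uncontrolled — that delivers a bound independent of $t$. Two minor points: we may assume $C'\ge 1$ without loss of generality, which is convenient when combining the small-$t$ cases with the general estimate; and the hypothesis $\thetas{e}\ge 4\eps$ for all $e\in\pis(u,v)$ enters only through the invocation of Claim~\ref{claim:pathsagreeglobally}, which supplies the common decomposition of $\pis(u,v)$ and $\pih(u,v)$ and hence the vertices $a_i',b_i'$ used throughout.
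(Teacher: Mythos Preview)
Your proof is correct and follows essentially the same approach as the paper: telescope the difference of the two $(2t-1)$-fold products and exploit the geometric attenuation coming from the weak cross-edges $(b_i',a_{i+1}')\in W$, whose $\muh{}$-value is $\mut{}\le 1/10$ and whose $\mus{}$-value is therefore $\le 1/10+\eps$. The only organizational difference is that the paper packages the telescoping into two applications of Lemma~\ref{lem:technicaltelescopingpreprocessingbound} (first isolating the weak factors, then the within-part factors) and closes with the calculus bound $t(1/10+\eps)^{t-1}\le 1$, whereas you do a single telescoping pass with explicit prefix/suffix counts that sums directly as a geometric series; both routes land on the same constant $(C'+1)\eps$.
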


The proof of the claim will rely on the following technical bound:

\begin{lemma}[Telescoping bound]\label{lem:technicaltelescopingpreprocessingbound}
Let $0 \leq \rho \leq 1$. For any $\alpha,\beta \geq 0$, $\gamma_1,\ldots,\gamma_t \in [0,\rho]$, and $\gamma'_1,\ldots,\gamma'_t \in [0,\rho]$ such that $|\gamma_i - \gamma'_i| \leq \delta$ for all $i \in [t]$, we have \begin{align*}
\left|\alpha \cdot \prod_{i=1}^{t} \gamma'_i - \beta \cdot \prod_{i=1}^{t} \gamma_i\right| \leq |\alpha-\beta|\rho^t + \alpha t \delta \rho^{t-1}
\end{align*}
\end{lemma}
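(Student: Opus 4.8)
The plan is a direct telescoping argument. First I would split the quantity of interest using the identity
\[
\alpha \prod_{i=1}^t \gamma'_i - \beta \prod_{i=1}^t \gamma_i = \alpha\Bigl(\prod_{i=1}^t \gamma'_i - \prod_{i=1}^t \gamma_i\Bigr) + (\alpha-\beta)\prod_{i=1}^t \gamma_i,
\]
and then apply the triangle inequality. The second term is immediately bounded by $|\alpha-\beta|\rho^t$, since $0\le \gamma_i\le \rho$ for each $i$ gives $0\le \prod_i \gamma_i \le \rho^t$. So the whole task reduces to showing $\alpha\bigl|\prod_i\gamma'_i - \prod_i\gamma_i\bigr| \le \alpha t\delta\rho^{t-1}$, i.e. bounding the difference of the two products.

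For the difference of products I would telescope term by term: define the hybrid products $P_j := \bigl(\prod_{i\le j}\gamma'_i\bigr)\bigl(\prod_{i> j}\gamma_i\bigr)$ for $j=0,1,\dots,t$, so that $P_0 = \prod_i\gamma_i$ and $P_t=\prod_i\gamma'_i$. Then
\[
\prod_{i=1}^t \gamma'_i - \prod_{i=1}^t \gamma_i = \sum_{j=1}^t (P_j - P_{j-1}) = \sum_{j=1}^t \Bigl(\prod_{i<j}\gamma'_i\Bigr)\Bigl(\prod_{i>j}\gamma_i\Bigr)(\gamma'_j-\gamma_j).
\]
Each summand has absolute value at most $\rho^{j-1}\cdot\rho^{t-j}\cdot\delta = \delta\rho^{t-1}$, using $\gamma_i,\gamma'_i\in[0,\rho]$ and $|\gamma'_j-\gamma_j|\le\delta$; summing over the $t$ values of $j$ gives $\bigl|\prod_i\gamma'_i-\prod_i\gamma_i\bigr|\le t\delta\rho^{t-1}$. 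Multiplying by $\alpha\ge 0$ and combining with the earlier bound yields exactly
\[
\Bigl|\alpha\prod_{i=1}^t\gamma'_i - \beta\prod_{i=1}^t\gamma_i\Bigr| \le |\alpha-\beta|\rho^t + \alpha t\delta\rho^{t-1},
\]
as claimed.

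There is no real obstacle here — the only points requiring a moment of care are (i) getting the hybrid-product telescoping indices right so that the left factor uses the $\gamma'$'s and the right factor uses the $\gamma$'s, and (ii) the trivial bookkeeping that empty products equal $1$ and that the statement is being invoked for $t\ge 1$ (the quantity $\rho^{t-1}$ being the reason), which is the only regime in which it is applied in the proof of Claim~\ref{claim:upsilonthetaprimediff}.
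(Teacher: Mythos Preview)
Your proof is correct and follows essentially the same approach as the paper: split off the $(\alpha-\beta)\prod_i\gamma_i$ term, then telescope the difference of products via the hybrid quantities $P_j$ (the paper's $S_j$) to get the $t\delta\rho^{t-1}$ bound. Your write-up is in fact more explicit than the paper's, which just sketches the telescoping step.
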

\begin{proof} 
By the triangle inequality and the assumption that $\gamma_i,\gamma'_i \le \rho$,
\[ \left|\alpha \cdot \prod_{i=1}^{t} \gamma'_i - \beta \cdot \prod_{i=1}^{t} \gamma_i\right| \le |\alpha - \beta| \prod_{i = 1}^t \gamma_i + \alpha \left|\prod_{i=1}^{t} \gamma'_i - \prod_{i=1}^{t} \gamma_i\right| \le |\alpha - \beta| \rho^t + \alpha\left|\prod_{i=1}^{t} \gamma'_i - \prod_{i=1}^{t} \gamma_i\right|. \]
Finally, the last term in the absolute value can be bounded by the triangle inequality by considering intermediate terms $S_j := \left(\prod_{i = 1}^j \gamma'_i\right) \left(\prod_{j + 1}^m \gamma_i\right)$ and using that $|S_j - S_{j + 1}| \le \delta \rho^{m - 1}$.
\end{proof}

We are now ready to prove Claim~\ref{claim:upsilonthetaprimediff}.

\begin{proof}[Proof of Claim~\ref{claim:upsilonthetaprimediff}]
Applying Lemma~\ref{lem:technicaltelescopingpreprocessingbound} with $\alpha = \prod_{i=1}^t \mus{a_i'b_i'}$, $\beta = \prod_{i=1}^t \muh{a_i'b_i'}$, $\gamma_i = \mus{b_i' a_{i+1}'}$, $\gamma_i' = \muh{b_i' a_{i+1}'}$, $\rho = (1/10 + \eps)$,
\begin{equation}\label{ineq:firstfirsttriangle}
    |\muh{uv} - \Upsilon_{uv}| \leq \Big|\prod_{i=1}^t\mus{a_i'b_i'} - \prod_{i=1}^t \muh{a_i'b_i'}\Big| \cdot (1/10 + \eps)^{t-1} + \eps \cdot (t-1) (1/10 + \eps)^{t-2}\,,
\end{equation}
where we have used $\alpha,\beta \leq 1$, $\gamma_i' = \muh{b_i' a_{i+1}'}  = \mut{b_i' a_{i+1}'}\leq 1/10$ (the equality is due to $(b_i',a_{i+1}')$ being a edge of $\Th$ and the inequality by Claim~\ref{claim:distinctfar} since $b_i'$ and $a_{i+1}'$ are in distinct sets $V_{\sigma(i)},V_{\sigma(i+1)}$ of the partition), and $|\gamma_i - \gamma_i'| \leq \eps$ since $\mut{}$ $\eps$-approximates $\mus{}$.

Again applying Lemma~\ref{lem:technicaltelescopingpreprocessingbound}, this time with $\alpha = \beta = 1$, $\gamma_i = \mus{a_i'b_i'}$, $\gamma_i' = \muh{a_i'b_i'}$, $\rho = 1$,
\begin{equation}\label{ineq:firstsecondtriangle}
    \Big|\prod_{i=1}^t \mus{a_i'b_i'} - \prod_{i=1}^t \muh{a_i'b_i'}\Big| \leq t C' \eps\,,
\end{equation}
where we have used $|\gamma_i - \gamma_i'| \leq C' \eps$ by Claim~\ref{claim:correctwithinVi} (approximation guarantee for paths within the set $V_{\sigma(i)}$).
Combining \eqref{ineq:firstfirsttriangle} and \eqref{ineq:firstsecondtriangle} yields $$|\muh{uv} - \Upsilon_{uv}| \leq \eps \cdot (C' \cdot t(1/10 + \eps)^{t-1} + (t-1)(1/10 + \eps)^{t-2}) \leq (C' + 1)\eps\,.$$
The last inequality can be obtained by a simple calculus argument applied to $t x^{t-1}$ showing that for $x\leq 1/e$ the maximum over integers $t$ is obtained at $t=1$, and then separately bounding each of the two terms.
\end{proof}

\subsubsection{Bounding $|\Upsilon_{uv} - \mus{uv}|$}

In this section, we will prove the claimed upper bound on $|\Upsilon_{uv} - \mus{uv}|$, where recall that $\Upsilon_{uv} = \prod_{i=1}^{t} \mus{a_i'b_i'} \prod_{i=1}^{t-1} \mus{b_i'a_{i+1}'}$ and $\mus{uv} = \prod_{i=1}^t \mus{a_i b_i} \prod_{i=1}^{t-1} \mus{b_i a_{i+1}}$ for $a_i',b_i'$ decomposing $\pih(u,v)$ and $a_i,b_i$ decomposing $\pis(u,v)$ as in the proof of Proposition~\ref{prop:preprocessing}. The case that there is a very weak edge $e'$ with $\thetas{e'}\leq 4\eps$ in $\pis(u,v)$ is addressed at the start of the proof of Proposition~\ref{prop:reducetolowerbounded} in Section~\ref{ssec:ProofForLearnFerro}, so as noted there, we assume that all edges in $\pis(u,v)$ have weight at least $4\eps$.

\begin{claim}\label{claim:upsilonthetadiff}
If all edges $e$ in the path $\pis(u,v)$ have weight $\thetas e \geq 4\eps$, then $|\Upsilon_{uv} - \mus{uv}| \leq 4\eps$.
\end{claim}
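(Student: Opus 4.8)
\emph{Plan.} The claim splits into two halves, $\Upsilon_{uv}\le\mus{uv}$ and $\mus{uv}\le\Upsilon_{uv}+4\eps$, and throughout I will work with the aligned decomposition furnished by Claim~\ref{claim:pathsagreeglobally} (valid here since by hypothesis every edge of $\pis(u,v)$ has weight $>4\eps$): $\pis(u,v)$ and $\pih(u,v)$ visit the same sequence of parts $V_{\sigma(1)},\dots,V_{\sigma(t)}$, entering and leaving $V_{\sigma(j)}$ at $a_j,b_j$ (true path) and at $a_j',b_j'$ (estimated path), with $a_1=a_1'=u$ and $b_t=b_t'=v$. Before anything else I would record one structural observation used repeatedly: each $\Ts[V_i]$ is connected (Claim~\ref{claim:preprocessingconnectedsubtrees}) and each $\Th[V_i]=\T_i'$ is connected with the $W$-edges spanning the contracted parts, so contracting $V_1,\dots,V_m$ turns both $\Ts$ and $\Th$ into trees; hence the $\Ts$-edge joining $V_{\sigma(j)}$ to $V_{\sigma(j+1)}$ is unique and equals $(b_j,a_{j+1})$, and the unique such $\Th$-edge is the $W$-edge $(b_j',a_{j+1}')$. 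Consequently any $\Ts$-path between a vertex of $V_{\sigma(j-1)}$ and a vertex of $V_{\sigma(j)}$ factors through $(b_{j-1},a_j)$, which lets me split correlations, e.g.\ $\mus{b_{j-1}'a_j'}=\mus{b_{j-1}'b_{j-1}}\,\mus{b_{j-1}a_j}\,\mus{a_ja_j'}$ and $\mus{b_{j-1}'a_j}=\mus{b_{j-1}'b_{j-1}}\,\mus{b_{j-1}a_j}$ and $\mus{b_{j-1}a_j'}=\mus{b_{j-1}a_j}\,\mus{a_ja_j'}$.

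\emph{The easy direction.} For $\Upsilon_{uv}\le\mus{uv}$: the sequence $u=a_1'\to b_1'\to a_2'\to\cdots\to a_t'\to b_t'=v$ is a walk in $\Ts$, so by Fact~\ref{fact:corrPaths} $\Upsilon_{uv}=\prod_e\thetas{e}^{n_e}$ where $n_e$ is the number of times $e$ is traversed by the concatenation of the $\Ts$-paths composing this walk; every walk from $u$ to $v$ traverses each edge of $\pis(u,v)$ an odd (hence $\ge1$) number of times, and since the model is ferromagnetic all $\thetas{e}\in[0,1]$, so $\Upsilon_{uv}\le\prod_{e\in\pis(u,v)}\thetas{e}=\mus{uv}$.

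\emph{The hard direction, modulo a ``gateways are close'' estimate.} The crux will be to show that there is an absolute constant $c$ with $\mus{a_ja_j'}\ge 1-c\eps$ and $\mus{b_jb_j'}\ge 1-c\eps$ for every $j$ (the cases $j=1$, $j=t$ being trivial). Granting this, I would apply the triangle inequality in the form $\mus{xz}\ge\mus{xy}\mus{yz}$ (Fact~\ref{fact:triangleineq}) to each factor of $\Upsilon_{uv}$: $\mus{a_j'b_j'}\ge\mus{a_j'a_j}\mus{a_jb_j}\mus{b_jb_j'}$ and $\mus{b_j'a_{j+1}'}\ge\mus{b_j'b_j}\mus{b_ja_{j+1}}\mus{a_{j+1}a_{j+1}'}$, multiply over all $j$, and regroup to obtain $\Upsilon_{uv}\ge\mus{uv}\cdot\prod_{j=2}^{t}\mus{a_ja_j'}^2\prod_{j=1}^{t-1}\mus{b_jb_j'}^2\ge\mus{uv}(1-c\eps)^{4(t-1)}\ge\mus{uv}-4(t-1)c\eps\,\mus{uv}$. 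Finally, each of the $t-1$ between-part factors of $\mus{uv}$ is $\le 1/10+\eps\le1/5$ by Claim~\ref{claim:distinctfar}, so $\mus{uv}\le(1/5)^{t-1}$ and $4(t-1)\mus{uv}$ is maximized over integers $t\ge1$ at a small value of $t$, giving $\mus{uv}-\Upsilon_{uv}\le O(c\eps)$, which is $\le4\eps$ once constants are fixed (if the bookkeeping is slightly too lossy one instead telescopes $\mus{uv}-\Upsilon_{uv}$ directly, along the lines of Lemma~\ref{lem:technicaltelescopingpreprocessingbound}, so that the geometric attenuation is applied term by term).

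\emph{Proving the gateway estimate — the main obstacle.} This is where I expect the real difficulty, and where the assumption $\thetas{e}>4\eps$ is used. The plan is to exploit that \emph{both} $\Ts$ (Claim~\ref{claim:spanningtreetheta}) and $\TCL$ are maximum-weight spanning trees, for the $\eps$-close weights $\mus{}$ and $\mut{}$. Fix $j\ge2$. If the pair $(b_{j-1}',a_j)$ is not a $\TCL$-edge, then the $\TCL$-path from $b_{j-1}'$ to $a_j$ must cross the unique $W$-edge $(b_{j-1}',a_j')$ between the two parts, so maximality forces $\mut{b_{j-1}'a_j'}\ge\mut{b_{j-1}'a_j}$; combining with $\mut{b_{j-1}'a_j}\ge\mus{b_{j-1}'a_j}-\eps$, $\mut{b_{j-1}'a_j'}\le\mus{b_{j-1}'a_j'}+\eps$, and the factorizations above yields the bilinear inequality $\mus{b_{j-1}'b_{j-1}}\,\mus{b_{j-1}a_j}\,(1-\mus{a_ja_j'})\le2\eps$; the symmetric argument applied to $(b_{j-1},a_j')$ gives $\mus{b_{j-1}a_j}\,\mus{a_ja_j'}\,(1-\mus{b_{j-1}'b_{j-1}})\le2\eps$ (and if $(b_{j-1}',a_j)$ or $(b_{j-1},a_j')$ \emph{is} a $\TCL$-edge, the corresponding gateway coincides exactly, so there is nothing to prove). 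One then runs an induction on $j$, base case $a_1=a_1'=u$, using the within-part accuracy guarantee of Claim~\ref{claim:correctwithinVi} to keep $\mus{b_{j-1}'b_{j-1}}$ bounded below inside $V_{\sigma(j-1)}$ and the geometric decay of $\mus{uv}$ to absorb the $O(\eps/\mus{b_{j-1}a_j})$-type losses that appear when $\mus{b_{j-1}a_j}$ is itself only $\Theta(\eps)$ — precisely when it is multiplied back into $\mus{uv}$, which already carries the factor $\mus{b_{j-1}a_j}$. The delicate point, and where I expect most of the work, is exactly this interplay: gateways may be only $\Theta(\eps)$-correlated with $u$ and $v$, so a naive per-step error accounting fails and the attenuation must be threaded through the whole induction.
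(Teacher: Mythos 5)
Your setup (the aligned decomposition from Claim~\ref{claim:pathsagreeglobally}, the easy direction $\Upsilon_{uv}\le\mus{uv}$, the exact factorization $\mus{b_{j-1}'a_j'}=\mus{b_{j-1}'b_{j-1}}\mus{b_{j-1}a_j}\mus{a_ja_j'}$ across the unique between-part edge, and the use of maximality of both spanning trees to compare the two gateway pairs) all matches the paper. But the pivot of your hard direction --- the ``gateway estimate'' $\mus{a_ja_j'}\ge 1-c\eps$, $\mus{b_jb_j'}\ge 1-c\eps$ for an absolute constant $c$ --- is false, and your own bilinear inequality shows why: from $\mus{b_{j-1}'b_{j-1}}\,\mus{b_{j-1}a_j}\,(1-\mus{a_ja_j'})\le 2\eps$ you can only conclude $1-\mus{a_ja_j'}\le 2\eps/(\mus{b_{j-1}'b_{j-1}}\mus{b_{j-1}a_j})$, and the hypothesis permits $\thetas{b_{j-1}a_j}$ to be as small as $4\eps$ (and $\mus{b_{j-1}'b_{j-1}}$ to be small as well, since within-part paths can be long with edge weights only $\ge 1/10-\eps$). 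In that regime the bound degrades to $\mus{a_ja_j'}\ge 1/2$, and an adversarial choice of $\mut{}$ really can shift the maximizing pair so that $\mus{a_ja_j'}$ is a constant away from $1$. So the ``granting this'' chain $\Upsilon_{uv}\ge\mus{uv}(1-c\eps)^{4(t-1)}$ does not get off the ground, and the fallback you sketch (an induction ``threading the attenuation through'') is exactly the part that is not supplied.

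The paper closes this gap not by bounding the displacement factors individually but by regrouping: since $\mus{b_i'b_i}\mus{a_{i+1}a_{i+1}'}=\mus{b_i'a_{i+1}'}/\mus{b_ia_{i+1}}$ exactly, the two displacement factors combine with the factor $\mus{b_i'a_{i+1}'}$ already present in $\Upsilon_{uv}$ to give $(\mus{b_i'a_{i+1}'})^2/\mus{b_ia_{i+1}}$, and Claim~\ref{claim:auxintersetthetadiff} shows this is $\ge \mus{b_ia_{i+1}}-4\eps$ (because $(b_i',a_{i+1}')$ and $(b_i,a_{i+1})$ maximize the $\eps$-close weight functions $\mut{}$ and $\mus{}$ over $V_{\sigma(i)}\times V_{\sigma(i+1)}$, so $|\mus{b_i'a_{i+1}'}-\mus{b_ia_{i+1}}|\le 2\eps$, and $(x-2\eps)^2/x\ge x-4\eps$). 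The multiplicative $O(\eps/\mus{b_ia_{i+1}})$ loss you correctly identified is thus converted into a clean additive $4\eps$ perturbation of each between-part factor, and since those factors are at most $1/10+\eps$ (Claim~\ref{claim:distinctfar}), the telescoping bound of Lemma~\ref{lem:technicaltelescopingpreprocessingbound} with $\rho=1/10+\eps$ gives $(4\eps)(t-1)(1/10+\eps)^{t-2}\le 4\eps$ uniformly in $t$. You had the right intuition about where the attenuation must come from, but the quantitative device that makes it work is missing from your argument.
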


\begin{proof}
It is clear that $\Upsilon_{uv} \leq \mus{uv}$ via a sequence of applications of the triangle inequality (Fact~\ref{fact:triangleineq}), since $a_1' = u, b_t' = v$. It remains to show that $\Upsilon_{uv} \geq \mus{uv} - 4\eps$. By Claim~\ref{claim:pathsagreeglobally}, $a_i,b_i,a'_i,b'_i \in V_{\sigma(i)}$ for all $i \in [t]$, for some permutation $\sigma : [m]\to [m]$. Applying the triangle inequality (Fact~\ref{fact:triangleineq}) for each $i \in [t]$ we have \begin{equation}\label{ineq:triangspecial}\mus{a'_i b'_i} \geq\mus{a'_i a_i} \mus{a_i b_i} \mus{b_i b'_i}\,.\end{equation} 
Using Fact~\ref{fact:corrPaths}, we can also rewrite $\mus{b'_i a_{i+1}'}$ for any $i \in [t-1]$ as
\begin{equation}\label{eq:triangeq}\mus{b'_i a_{i+1}'} = \mus{b'_i b_i} \mus{b_i a_{i+1}} \mus{a_{i+1} a'_{i+1}}\,,\end{equation} 
since $(b_i,a_{i+1})$ is an edge in $\pis(b'_i, a_{i+1}')$. To see this, note that removing $(b_i,a_{i+1})$ from $\Ts$ splits it into two components, $U \supseteq V_{\sigma(i)} \ni b'_i$ and $U' \supseteq V_{\sigma(i+1)} \ni a_{i+1}'$. Combining these last two equations gives
\begin{align*}\Upsilon_{uv} &\geq \prod_{i=1}^t \mus{a_i b_i} \prod_{i=1}^{t-1} \mus{b'_i a'_{i+1}} \mus{b_i' b_i} \mus{a_{i+1}' a_{i+1}} & \mbox{by \eqref{ineq:triangspecial}, since } a_1' = a_1, b'_t = b_t \\ &= \prod_{i=1}^t \mus{a_i b_i} \prod_{i=1}^{t-1} (\mus{b_i' a_{i+1}'})^2 / \mus{b_i a_{i+1}} & \mbox{by \eqref{eq:triangeq} and }\mus{b_ia_{i+1}} \geq 4\eps > 0  \\ &\geq \prod_{i=1}^t \mus{a_i b_i} \prod_{i=1}^{t-1} (\mus{b_ia_{i+1}} - 4\eps) & \mbox{by deferred Claim~\ref{claim:auxintersetthetadiff} below}. \end{align*}
Because $\Upsilon_{uv} \leq \mus{uv}$, it follows that 
\begin{align*}
|\Upsilon_{uv} - \mus{uv}| &\leq \Big|\prod_{i=1}^t \mus{a_i b_i} \prod_{i=1}^{t-1} (\mus{b_ia_{i+1}} - 4\eps) - \mus{uv}\Big| \\ &= \Big|\prod_{i=1}^t \mus{a_i b_i} \prod_{i=1}^{t-1} (\mus{b_ia_{i+1}} - 4\eps) - \prod_{i=1}^t \mus{a_i b_i} \prod_{i=1}^{t-1} \mus{b_ia_{i+1}}\Big|\,.
\end{align*}
We now apply the bound from Lemma~\ref{lem:technicaltelescopingpreprocessingbound} with $\alpha = \beta = \prod_{i=1}^t \mus{a_ib_i}$, $\gamma_i = \mus{b_i,a_{i+1}}$, $\gamma'_i = \mus{b_i,a_{i+1}} - 4\eps$, and $\rho = 1/10 + \eps$, to obtain
$$|\Upsilon_{uv} - \mus{uv}| \leq (4\eps)(t-1)(1/10 + \eps)^{t-2} \leq 4\eps,$$ where we have used $\alpha \leq 1$ and $\gamma_i,\gamma'_i \in [0,1/10+\eps]$ by Claim~\ref{claim:distinctfar}.
\end{proof}

\begin{claim}\label{claim:auxintersetthetadiff}
In the setup of Claim~\ref{claim:upsilonthetadiff},
$(\mus{b'_ia'_{i+1}})^2 / \mus{b_i a_{i+1}} \geq \mus{b_i a_{i+1}} - 4\eps$ for all $i \in [t-1]$.
\end{claim}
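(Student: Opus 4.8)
The plan is to reduce everything to the single inequality $\mus{b_i a_{i+1}} - \mus{b'_i a'_{i+1}} \le 2\eps$ and to prove the latter from maximality of the Chow-Liu tree. First I would set $x := \mus{b'_i b_i}$, $y := \mus{a_{i+1} a'_{i+1}}$ and $z := \mus{b_i a_{i+1}}$; since the model is ferromagnetic these all lie in $[0,1]$, and $z = \thetas{b_i a_{i+1}} \ge 4\eps > 0$ because $(b_i, a_{i+1})$ is an edge of $\pis(u,v)$. Equation~\eqref{eq:triangeq} says exactly $\mus{b'_i a'_{i+1}} = xyz$, so using $1 - (xy)^2 = (1-xy)(1+xy) \le 2(1-xy)$,
\[
\frac{(\mus{b'_i a'_{i+1}})^2}{\mus{b_i a_{i+1}}} = x^2 y^2 z = z - z\bigl(1 - (xy)^2\bigr) \ge z - 2z(1-xy) = z - 2\bigl(z - xyz\bigr) = z - 2\bigl(\mus{b_i a_{i+1}} - \mus{b'_i a'_{i+1}}\bigr).
\]
Hence it suffices to show $\mus{b_i a_{i+1}} - \mus{b'_i a'_{i+1}} \le 2\eps$, and since $|\mut{} - \mus{}|\le \eps$ everywhere, it even suffices to show $\mut{b_i a_{i+1}} \le \mut{b'_i a'_{i+1}}$.

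For this last inequality I would argue as follows. Recall from Claim~\ref{claim:pathsagreeglobally} that $(b'_i, a'_{i+1})$ is an edge of $\Th$ with $b'_i \in V_{\sigma(i)}$, $a'_{i+1} \in V_{\sigma(i+1)}$, so by construction of $\Th$ it lies in $W \subseteq E(\TCL)$; and $(b_i, a_{i+1}) \in E(\Ts)$ also joins $V_{\sigma(i)}$ to $V_{\sigma(i+1)}$. Since $\TCL$ is a tree, no edge of $W$ can have both endpoints in the same component $V_j$ of $\TCL \setminus W$ (it would close a cycle with the path internal to $V_j$), so contracting $V_1,\dots,V_m$ in $\TCL$ yields a tree on $m$ supernodes whose edge set is exactly $W$; in particular $(b'_i, a'_{i+1})$ is the \emph{unique} $W$-edge between $V_{\sigma(i)}$ and $V_{\sigma(i+1)}$. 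If $(b_i, a_{i+1}) \in E(\TCL)$ then, joining distinct components, it too must lie in $W$, forcing $(b_i, a_{i+1}) = (b'_i, a'_{i+1})$; in that case $x = y = 1$ and the claim is immediate. Otherwise $(b_i, a_{i+1}) \notin E(\TCL)$, and the path $\picl(b_i, a_{i+1})$ projects under the contraction to the unique supernode-path from $V_{\sigma(i)}$ to $V_{\sigma(i+1)}$, which is the single edge coming from $(b'_i, a'_{i+1})$; thus $(b'_i, a'_{i+1}) \in \picl(b_i, a_{i+1})$. Since $\TCL$ is a maximum-weight spanning tree for the weights $|\mut{}|$, swapping $(b_i, a_{i+1})$ for $(b'_i, a'_{i+1})$ on the cycle they form cannot increase the weight, so $|\mut{b_i a_{i+1}}| \le |\mut{b'_i a'_{i+1}}|$; and because $\mut{b_i a_{i+1}} \ge \mus{b_i a_{i+1}} - \eps \ge 3\eps > 0$ while $\mut{b'_i a'_{i+1}} \ge \mus{b'_i a'_{i+1}} - \eps \ge -\eps$, both absolute values equal the underlying nonnegative quantities, giving $\mut{b_i a_{i+1}} \le \mut{b'_i a'_{i+1}}$.

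Putting the pieces together, $\mus{b_i a_{i+1}} - \mus{b'_i a'_{i+1}} \le (\mut{b_i a_{i+1}} + \eps) - (\mut{b'_i a'_{i+1}} - \eps) \le 2\eps$, and substituting into the displayed chain gives $(\mus{b'_i a'_{i+1}})^2/\mus{b_i a_{i+1}} \ge z - 4\eps = \mus{b_i a_{i+1}} - 4\eps$, as required. I expect the only genuinely delicate point to be the combinatorial one in the middle paragraph — identifying $(b'_i, a'_{i+1})$ as the Chow-Liu edge that must be crossed on the way from $b_i$ to $a_{i+1}$, equivalently that the weak edges $W$ form precisely a spanning tree of the graph obtained by contracting the components $V_1,\dots,V_m$. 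Everything else is a short manipulation using \eqref{eq:triangeq}, the elementary bound $1-s^2 \le 2(1-s)$ on $[0,1]$, and the $\eps$-accuracy of the correlations; I would also take care that the sign bookkeeping around $|\mut{}|$ versus $\mut{}$ is clean, as indicated above.
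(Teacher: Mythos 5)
Your proof is correct, and while the final algebra and the overall strategy (reduce everything to a max-spanning-tree swap giving $\mus{b_i a_{i+1}} - \mus{b'_i a'_{i+1}} \le 2\eps$) match the spirit of the paper, the route you take to that intermediate inequality is genuinely different. The paper observes that \emph{both} $\mut{b'_i a'_{i+1}}$ and $\mus{b_i a_{i+1}}$ are cut-maxima over $V_{\sigma(i)} \times V_{\sigma(i+1)}$ --- the former because $\TCL$ is a max spanning tree for $\mut{}$, the latter because $\Ts$ is a max spanning tree for $\mus{}$ (Claim~\ref{claim:spanningtreetheta}) --- and then invokes the one-line fact that maxima of pointwise-$\eps$-close functions are $\eps$-close. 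You instead argue entirely inside $\TCL$: after checking that $(b'_i,a'_{i+1})$ is the unique $W$-edge on the Chow--Liu path $\picl(b_i, a_{i+1})$ (or that the two edges coincide), you do a single cycle-swap to get $|\mut{b_i a_{i+1}}| \le |\mut{b'_i a'_{i+1}}|$ and then pay $2\eps$ passing from $\mut{}$ to $\mus{}$. Your version needs only the maximality of $\TCL$, not that of $\Ts$, at the price of a slightly more involved combinatorial step (the contraction-to-supernode-tree argument). The closing algebra also differs: the paper squares the bound $\mus{b'_i a'_{i+1}} \ge \mus{b_i a_{i+1}} - 2\eps$ and expands $(z - 2\eps)^2/z$, whereas you route through \eqref{eq:triangeq} and the elementary inequality $1 - s^2 \le 2(1-s)$; both are valid and yield the same constant. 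One small stylistic note: in the ferromagnetic subroutine the inputs $\mut{}$ are nonnegative by construction (they are $|\muh{}|$ from Algorithm~\ref{alg:toferromagnetic}), so the sign bookkeeping around $|\mut{}|$ versus $\mut{}$ that you carefully carry out at the end is actually unnecessary --- though it is correct.
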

\begin{proof}
First, by the maximum spanning tree construction in $\LearnFerromagneticModel$, 
$$
\mut{b_i'a'_{i+1}} = \max_{w \in V_{\sigma(i)}, w' \in V_{\sigma(i+1)}} \mut{ww'}\,,
$$ 
and, by the fact that $\Ts$ is a maximum spanning tree on $\mus{}$ (Claim~\ref{claim:spanningtreetheta}), 
$$
\mus{b_i,a_{i+1}} = \max_{w \in V_{\sigma(i)}, w' \in V_{\sigma(i+1)}} \mus{ww'}\,.
$$ 
Because $\mut{}$ $\eps$-approximates $\mus{}$ and the maximums of two functions that are point-wise within $\eps$ are also within $\eps$, it follows that $$|\mut{b_i' a'_{i+1}} - \mus{b_i a_{i+1}}| \leq \eps\,.$$ 
As a consequence, 
$$|\mus{b'_i,a'_{i+1}} - \mus{b_i,a_{i+1}}| \leq |\mut{b_i' a'_{i+1}} - \mus{b_i a_{i+1}}| + |\mut{b'_ia'_{i+1}} - \mus{b'_ia_{i+1}'}| \leq 2\eps\,.
$$
(Here for the second term we directly used that $\mut{}$ and $\mus{}$ are within $\eps$.)
Using the last equation together with the assumption $\mus{b_ia_{i+1}} = \thetas{b_ia_{i+1}} \geq 4\eps>2\epsilon$ shows that $(\mus{b_i'a'_{i+1}})^2\geq (\mus{b_ia_{i+1}} - 2\eps)^2$, so 
$$
(\mus{b_i'a'_{i+1}})^2/\mus{b_ia_{i+1}}  \geq (\mus{b_ia_{i+1}} - 2\eps)^2/\mus{b_ia_{i+1}} = \mus{b_ia_{i+1}} - 4\eps + 4\eps^2 / \mus{b_ia_{i+1}} \geq \mus{b_ia_{i+1}} - 4\eps\,,
$$
which proves the claim.
\end{proof}
\section{Additive metric reconstruction: handling strong edges, step 1}
\label{sec:global-reconstruction}
In this section, we present our $\AdditiveMetricReconstruction$ algorithm, which is the first step in the tree metric reconstruction procedure underlying $\LearnLowerboundedModel$ which handles the strong edges of the model (see Section~\ref{ssec:tmroverview}). We prove the following guarantee, recalling from Definition~\ref{def:additiveMetric} that an additive metric is a tree metric with Steiner nodes:
\begin{theorem}[Restatement of Theorem~\ref{thm:additivemetricreconstruction}]
There exists a constant $C > 0$ such that the following result is true for arbitrary $\epsilon > 0$ and any $L \geq 100 \epsilon$.
Let $(\Ts,\d)$ be an unknown tree metric on vertex set $\Vset$. Suppose that neighboring vertices $u,v \in \Vset$ satisfy $\d(u,v) \le L$, and suppose we have access to a distance estimate $\dtpre : \cV \times \cV \to \RR \cup \{\infty\}$ such that
\begin{align}
\d(u,v) &\leq \dtpre(u,v), \qquad & \mbox{ for all } u,v \in \Vset \label{ineq:dprebigger-detailedproof} \\
\dtpre(u,v) &\leq \d(u,v) + \eps &\mbox{ for all } u,v \in \Vset \mbox{ such that }\d(u,v) \leq 3\Lupp \label{ineq:dpresmaller-detailedproof} \end{align}
Given this information, there exists a $O(|\Vset|^2)$-time algorithm $\AdditiveMetricReconstruction$ which constructs an additive metric $(\Th, \dh)$ with $|V(\Th)| \leq O(|\Vset|)$ Steiner nodes such that
\begin{equation}\label{eqn:global-approx-goal-detailedproof}
    |{\dh}(u,v) -\d(u,v)| \le C (\d(u,v)/L + 1) \epsilon
\end{equation}
for all $u,v \in \Vset$.
\end{theorem}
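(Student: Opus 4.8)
My plan is to build the additive metric in two stages, mirroring the structure already visible in Algorithm~\ref{alg:treemetricreconstruction}: first extract from $\dtpre$ a global (non-tree) metric that already satisfies the target bound \eqref{eqn:global-approx-goal-detailedproof}, and then ``treeify'' it robustly by exploiting the connection between tree metrics and ultrametrics.

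\emph{Stage 1: a good global metric.} Form the weighted graph $G$ on $\Vset$ that puts an edge of length $\dtpre(u,v)$ between every pair with $\d(u,v) \le 2L$ (using \eqref{ineq:dprebigger-detailedproof}, \eqref{ineq:dpresmaller-detailedproof} these are exactly the pairs with $\dtpre(u,v) \le 2L+\eps$), and let $\dt^{\mathrm{sp}}$ be its shortest-path metric. I claim $\d(u,v) \le \dt^{\mathrm{sp}}(u,v) \le \d(u,v) + O(\d(u,v)/L + 1)\eps$ for all $u,v$, which is essentially Lemma~\ref{lem:sp-good}. The lower bound is immediate since $\d \le \dtpre$ and $\d$ satisfies the triangle inequality. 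For the upper bound, walk along the unique $u$--$v$ path in $\Ts$ taking ``strides'' that are as long as possible while staying at true distance $\le 2L$ from the current endpoint; since every edge of $\Ts$ has length $\le L$, each stride covers true distance more than $L$ (or ends at $v$), so the path splits into at most $\d(u,v)/L + 1$ strides, each a pair of $\d$-distance $\le 2L \le 3L$ and hence with $\dtpre \le \d + \eps$ on it by \eqref{ineq:dpresmaller-detailedproof}. Summing over strides telescopes to the bound. This is the best \emph{any} metric can do, but $\dt^{\mathrm{sp}}$ is not a tree metric, which is exactly the gap the next stage must close.

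\emph{Stage 2: treeification via ultrametrics.} Following Agarwala et al., fitting a tree metric reduces to fitting an ultrametric: after rooting a tree at $r$, the map $(x,y)\mapsto \langle x,y\rangle_r := \tfrac12(\d(x,r)+\d(y,r)-\d(x,y))$ (the depth of the meet of $x,y$) is an ultrametric, and the rooted tree, with one Steiner node per cluster merge, is recovered from this ultrametric together with the root-distances $\d(\cdot,r)$. I would run single-linkage clustering to robustly fit an ultrametric $\widehat h$ to the Gromov products of $\dt^{\mathrm{sp}}$, then invert, setting reconstructed edge lengths so that $\dh(x,y) = \dt^{\mathrm{sp}}(x,r) + \dt^{\mathrm{sp}}(y,r) - 2\widehat h(x,y)$. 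The point is that the additive errors in the root-distances cancel here, so $\dh(x,y)$ differs from $\dt^{\mathrm{sp}}(x,y)$ only by $O(|\widehat h(x,y) - \langle x,y\rangle^{\mathrm{sp}}_r|)$, i.e.\ by how badly the Gromov products of $\dt^{\mathrm{sp}}$ fail to be an exact ultrametric, which Stage 1 bounds in terms of the distances actually involved. Then $|\dh(x,y) - \d(x,y)| \le |\dh(x,y) - \dt^{\mathrm{sp}}(x,y)| + |\dt^{\mathrm{sp}}(x,y) - \d(x,y)|$ gives \eqref{eqn:global-approx-goal-detailedproof}, with $|V(\Th)| = O(|\Vset|)$ Steiner nodes and $O(|\Vset|^2)$ runtime once the metric extraction and single-linkage are organized carefully.

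\emph{Main obstacle.} The delicate point is that a single global root is unsafe: the error in $\dt^{\mathrm{sp}}(x,r)$ grows with $\d(x,r)$, so the Gromov product (and hence the single-linkage merge height) of a \emph{nearby} pair $x,y$ that happens to lie far from $r$ can be corrupted by $\Theta(\d(x,r)\eps/L)$, whereas \eqref{eqn:global-approx-goal-detailedproof} demands error $O(\eps)$ depending only on $\d(x,y)$. I expect the resolution --- and the heart of Section~\ref{sec:global-reconstruction} --- is to carry out the ultrametric reduction \emph{hierarchically}, always re-rooting inside the cluster currently being resolved so that the relevant root-distances are $O(L)$, and then to argue that the $O(\eps)$ error committed at each level telescopes over the at most $O(\d(u,v)/L + 1)$ levels separating $u$ from $v$, exactly as in Stage 1. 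Making this telescoping rigorous while keeping the procedure $O(|\Vset|^2)$-time and the Steiner-node count linear is where the real work lies.
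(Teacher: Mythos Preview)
Your Stage~1 and the overall ultrametric framework of Stage~2 match the paper, and you correctly pinpoint the central difficulty: with a single root $\rho$, the errors $|\dt(\rho,x)-\d(\rho,x)|$ grow like $\d(\rho,x)\epsilon/L$, seemingly threatening the $O(\epsilon)$ accuracy needed for nearby pairs $x,y$ that happen to lie far from $\rho$.

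However, your proposed resolution---hierarchical re-rooting---is not what the paper does, and you leave it entirely unworked. The paper keeps a \emph{single} global root $\rho$ throughout and resolves the obstacle via a \emph{witness tree lemma} (Lemma~\ref{lem:witness}): it constructs, non-algorithmically and using knowledge of the unknown $\Ts$, an additive metric $\d'$ satisfying simultaneously $\d'(\rho,u)=\dt(\rho,u)$ for all $u$ and $|\d'(u,v)-\d(u,v)|\le O(\d(u,v)/L+1)\epsilon$ globally. The first condition makes $\d'+\csf$ a genuine ultrametric (where $\csf$ is the centroid metric built from the $\dt(\rho,\cdot)$ values), and one checks it is dominated by $\a(u,v):=\dtpre(u,v)+\csf(u,v)+O(\epsilon)$; hence the subdominant ultrametric $\e$ of $\a$ satisfies $\e\ge\d'+\csf$, giving the lower bound $\dh=\e-\csf\ge\d'$. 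The upper bound on $\dh$ follows by re-running your Stage~1 argument on $\dh$ itself, since $\dh$ is a metric with $\dh\le\dtpre+O(\epsilon)$ on close pairs. The conceptual point your diagnosis misses is that the large errors in $\dt(\rho,\cdot)$ are \emph{absorbed into the centroid metric} and cancel when forming $\dh=\e-\csf$; what matters is not that root-distances be accurate, but that some tree close to $\d$ matches them \emph{exactly}.

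The witness construction is where the real work lies: one subdivides $\Ts$ into subtrees of diameter $\Theta(L)$ rooted at checkpoint nodes $S$, adjusts edge lengths so that $\d''(\rho,s)=\dt(\rho,s)$ at every $s\in S$ (this relies on a local consistency lemma, Lemma~\ref{lem:sp-consistent}, showing $\dt(\rho,v)-\dt(\rho,u)=\d(u,v)\pm O(\epsilon)$ whenever $u$ is an ancestor of $v$ at distance $\le 2L$), and then nudges each remaining node by $O(\epsilon)$, possibly introducing Steiner copies, to match exactly. Your hierarchical scheme might be made to work, but it would face the same consistency-across-scales problem and the nontrivial task of stitching locally-rooted ultrametric fits into a single additive metric in $O(|\Vset|^2)$ time; the paper's single-root-plus-witness approach sidesteps all of that.
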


Before proving Theorem~\ref{thm:additivemetricreconstruction}, let us review related results in the literature and state how our theorem differs. Reconstruction of tree metrics is a well-studied problem related to phylogenetic reconstruction (see e.g. \cite{daskalakis2011evolutionary} and references within). In the setting where all pairwise distances are known to be estimated accurately, the additive metric (Steiner node) version of the problem has a known solution by Agarwala et al. \cite{agarwala1998approximability}. This follows from an important connection between the tree metric reconstruction problem and reconstruction of ultrametrics (defined later). This is useful because optimal ultrametric tree reconstruction in $\ell_{\infty}$ can (remarkably) be solved in polynomial time \cite{farach1995robust}; this follows from the existence of \emph{subdominant ultrametrics} which we will discuss later. This reduction leads to the following $3$-approximation algorithm for reconstructing additive metrics (i.e. tree metrics with Steiner nodes):
\begin{theorem}[Agarwala et al. \cite{agarwala1998approximability}]\label{thm:agarwala}
Suppose that $\d$ is an additive metric on $\Vset$ and $\dt$ satisfies $|\dt(x,y) - \d(x,y)| \le \epsilon$ for all $x,y \in \Vset$. Then an additive metric $(\Th,\dh)$ such that
\[ \max_{x,y} |\dh(x,y) - \d(x,y)| \le 3\epsilon\,, \] can be computed from $\dt$ in time $O(|\Vset|^2)$. 
\end{theorem}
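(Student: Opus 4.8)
The plan is to reproduce the classical reduction of additive-metric fitting to ultrametric fitting, which is the route of Agarwala et al. Fix a pivot vertex $r \in \Vset$ and apply the Farris (Gromov-product) transform: for $x,y \in \Vset$ set $\tilde u(x,y) = \frac{1}{2}(\dt(x,r) + \dt(y,r) - \dt(x,y))$ and, for the unknown true metric, $u(x,y) = \frac{1}{2}(\d(x,r) + \d(y,r) - \d(x,y))$. The structural input is Buneman's four-point characterization, rephrased: $\d$ is an additive metric on $\Vset$ if and only if $u$ is an ultrametric on $\Vset$, and the transform is invertible via $\d(x,y) = u(x,x) + u(y,y) - 2u(x,y)$ (note $u(x,x) = \d(x,r)$); moreover the Steiner tree realizing an ultrametric is a rooted tree on $|\Vset|$ leaves, hence has $O(|\Vset|)$ nodes, which controls $|V(\Th)|$. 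So it suffices to (i) fit an ultrametric $\hat u$ to the observable dissimilarity $\tilde u$, and (ii) push $\hat u$ back through the inverse transform, re-rooted at $r$, to obtain the output $(\Th,\dh)$.

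For step (i) I would invoke the fact (Farach-Kannan-Warnow, \cite{farach1995robust}) that the $\ell_\infty$-closest ultrametric to any dissimilarity can be computed exactly in $O(|\Vset|^2)$ time: compute the \emph{subdominant ultrametric} $u_{\mathrm{sub}}$ --- the pointwise-largest ultrametric below $\tilde u$, obtained by single-linkage clustering, equivalently by minimax paths in a minimum spanning tree of $\tilde u$ --- and then lift it uniformly on off-diagonal entries by half of the residual $\|\tilde u - u_{\mathrm{sub}}\|_\infty$ to balance the (one-sided) error. Call the result $\hat u$. Since the true $u$ is itself a feasible ultrametric and, by the triangle inequality applied to the three perturbed entries of a Gromov product, $\|\tilde u - u\|_\infty \le \frac{3}{2}\eps$, optimality of $\hat u$ gives $\|\tilde u - \hat u\|_\infty \le \frac{3}{2}\eps$, hence $\|\hat u - u\|_\infty \le 3\eps$.

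For step (ii) I would push $\hat u$ through the affine inverse transform and compare entrywise with $\d$. A crude accounting (the error in $\d(x,y)$ picks up the errors of $\hat u(x,x)$, $\hat u(y,y)$, and $2\hat u(x,y)$) only gives an $O(\eps)$ bound with a large constant; squeezing it down to exactly $3\eps$ is the delicate part, and is what forces either a careful choice of the pivot $r$ or a sweep over all $|\Vset|$ pivots, organized so that the total running time stays $O(|\Vset|^2)$. The argument there compares $\dh$ and $\d$ along the structure of the tree realizing $\hat u$, exploiting that the subdominant ultrametric errs only from below to avoid the naive tripling of errors. I would take this optimized bookkeeping as a black box from \cite{agarwala1998approximability}, yielding $\max_{x,y}|\dh(x,y) - \d(x,y)| \le 3\eps$.

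The main obstacle is therefore not the reduction itself --- which is clean --- but the tight constant: establishing exact (not merely $2$-)optimality of the lifted subdominant ultrametric and then carrying the resulting $3\eps$ ultrametric error through the inverse transform and pivot optimization without inflation. This is precisely the technical heart of \cite{agarwala1998approximability}, which we invoke rather than reprove; in later sections we will instead need to modify this pipeline (replacing the additive guarantee by a multiplicative one for far-away vertices) because the uniform $\eps$-accuracy hypothesis of Theorem~\ref{thm:agarwala} is too strong for the evolutionary metric, as reflected in the weaker hypotheses of Theorem~\ref{thm:additivemetricreconstruction}.
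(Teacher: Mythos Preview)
The paper does not give its own proof of Theorem~\ref{thm:agarwala}; it is stated as a cited result from \cite{agarwala1998approximability} and used only as background to motivate the new algorithm. The surrounding Section~\ref{sec:prelim-metric} records the ingredients (subdominant ultrametric, the centroid-metric transform of Lemmas~\ref{lem:additive-to-ultra} and~\ref{lem:ultra-to-additive}) but never assembles them into a proof of the $3\epsilon$ bound. Your sketch is therefore not comparable to anything in the paper; it is essentially a summary of the original Agarwala et al.\ argument, and you yourself defer the tight-constant bookkeeping back to \cite{agarwala1998approximability}.

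One small technical slip worth flagging: with your sign convention the Gromov product $u(x,y)=\tfrac12(\d(x,r)+\d(y,r)-\d(x,y))$ equals $\d(r,\LCA_r(x,y))$ and hence satisfies the \emph{min}-inequality $u(x,y)\ge\min\{u(x,z),u(y,z)\}$, not the ultrametric (max) inequality. The subdominant-ultrametric machinery of \cite{farach1995robust} applies to max-ultrametrics, so one must first reflect, e.g.\ work with $2D_{\max}-2u(x,y)=\d(x,y)+\csf(x,y)$ as in Lemma~\ref{lem:additive-to-ultra}. This is exactly the centroid-metric formulation the paper records, and it is the one actually used in \cite{agarwala1998approximability}; your inverse formula and error propagation are unaffected after this correction.
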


Unfortunately, the additive metric reconstruction algorithm of Agarwala et al. \cite{agarwala1998approximability} cannot be used in our setting. In contrast to Theorem~\ref{thm:agarwala}, our Theorem~\ref{thm:additivemetricreconstruction} is not guaranteed that $\dt(u,v)$ is $\eps$-close to $\d(u,v)$ for every pair of vertices $u,v$ -- the approximation only holds for close vertices. Thus, in our setting for two vertices that are far away the distance cannot be estimated up to $O(\eps)$ additive error, and our algorithm settles for reconstructing the distance up to a $(1 + O(\eps))$ multiplicative factor. (In the statistical setting we consider, reconstructing the distances between far-away vertices up to $O(\eps)$ error would require $\Omega(n)$ samples, because the relevant correlations can be as small as $O(1/n)$, whereas we aim to use $O(\log n)$ samples).

We discuss the main idea behind Theorem~\ref{thm:additivemetricreconstruction}. While the additive metric reconstruction algorithm of Agarwala et al.~\cite{agarwala1998approximability} cannot be applied in our setting, the underlying mathematical tools from metric geometry remain useful and we discuss these in the first subsection. At a high level, the key idea we take from \cite{agarwala1998approximability} is that we can reconstruct an additive metric close to $\d$ provided we: (1) compute a close-to-tight upper bound $\dt$ on the true metric $\d$ and (2) show the existence of a ``witness tree'' $\d'$ which is close to $\d$ and matches root-to-leaf distances in $\dt$ from some arbitrary choice of root $\rho$. Here ``close'' needs to be in the same sense as \eqref{eqn:global-approx-goal-detailedproof}, i.e. close nodes are additively close and far-away nodes are multiplicatively close.  
For step (1), it is unacceptable in our setting to use the approximate distances $\dtpre$ directly, as those distances may be very inaccurate between far-away nodes. Instead, we show for (1) that taking $\dt$ as the \emph{shortest-path metric} over \emph{upper (confidence) bounds} on the estimated distances suffices to construct a suitable $\dt$ --- see Lemma~\ref{lem:sp-good}. For (2), we construct a witness tree based on $\d,\dt$ using a new two-step procedure, which carefully balances the opposing goals of matching root-to-leaf distances in $\dt$ while preserving the overall metric approximation of $\d$ --- see Lemma~\ref{lem:witness}. Combining key inputs (1) and (2) lets us use a similar combinatorial algorithm to \cite{agarwala1998approximability}, based on reduction to computation of a \emph{subdominant ultrametric}, to compute an additive metric approximation.

\subsection{Preliminaries from (ultra)metric geometry}\label{sec:prelim-metric}
\paragraph{Pseudometrics.} A tuple $(\Vset,\d)$, where $\d : \Vset \times \Vset \to \mathbb{R}_{\ge 0}$, is a \emph{pseudometric space} if $\d$ satisfies the following axioms:
\begin{enumerate}
    \item $\d(u,u) = 0$ for all $u \in \Vset$,
    \item $\d(u,v) = \d(v,u)$ for all $u,v \in \Vset$, and
    \item $\d(u,v) \le \d(u,w) + \d(w,v)$ for all $u,v,w \in \Vset$.
\end{enumerate}
Normally a pseudometric is called a metric when it satisfies the additional axiom that $\d(u,v) = 0$ iff $u = v$. In what follows this distinction is counterproductive; both terms will refer to a pseudometric unless explicitly noted otherwise.

\paragraph{Ultrametrics.} 
Ultrametrics are a special kind of metric where a very strong form of the triangle inequality holds \eqref{eqn:ultrametric}; they play a fundamental role in metric geometry and its applications to probability theory, physics, computer science, etc. --- see the survey \cite{naor2012introduction}.
\begin{definition}[Ultrametric]
Suppose $(\Vset, \d)$ is a metric space. We say that $\d$ is an \emph{ultrametric} if
\begin{equation}\label{eqn:ultrametric}
\d(u,v) \le \max \{\d(u,w),\d(w,v)\} 
\end{equation}
for all $u,v,w \in \Vset$.
\end{definition}
\begin{definition}[Subdominant ultrametric]
Let $\a : \Vset \times \Vset \to \mathbb{R}_{\ge 0}$ be an arbitrary function. We say that an ultrametric $\d: \Vset \times \Vset \to \mathbb{R}_{\ge 0}$ is the \emph{subdominant ultrametric} of $\a$ if it satisfies the following properties:
\begin{enumerate}
    \item For all $u,v \in \Vset$, $\d(u,v) \le \a(u,v)$.
    \item For any other ultrametric $\e$ such that $\e(u,v) \le \a(u,v)$ for all $u,v \in \Vset$, it holds true that $\e(u,v) \le\d(u,v)$ for all $u,v \in \Vset$.
\end{enumerate}
\end{definition}
\noindent
Remarkably, for any $\a : \Vset \times \Vset \to \mathbb{R}_{\ge 0}$ on a finite\footnote{When $\Vset$ is infinite, subdominant ultrametrics may not exist. See \cite{bayod1990subdominant} for a more general existence result.} set $\Vset$, a subdominant ultrametric always exists and can be efficiently computed. 
This result has a long history and shows up in the statistical physics literature \cite{rammal1985degree,rammal1986ultrametricity}, the literature on numerical taxonomy \cite{jardine1971mathematical}, and studies on hierarchical clustering \cite{murtagh1983survey} among other places. A simple algorithm for computing the subdominant ultrametric based off of minimum spanning trees, which gives the result stated below, appears at least in \cite{farach1995robust} and \cite{rammal1985degree}.
\begin{theorem}[\cite{farach1995robust,rammal1985degree}]\label{thm:subdominant}
Let $\a : \Vset \times \Vset \to \mathbb{R}_{\ge 0}$ be an arbitrary function and suppose $\Vset$ is finite. Then the subdominant ultrametric of $\a$ exists and it can be computed in time $O(|\Vset|^2)$.
\end{theorem}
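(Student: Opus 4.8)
The plan is to give the classical construction of the subdominant ultrametric as a \emph{bottleneck} (minimax) path distance, and then to observe that these distances can be read off a spanning tree, which yields the $O(|\Vset|^2)$ bound.

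\textbf{Reduction to the symmetric case.} First I would replace $\a$ by $\bar\a$ defined by $\bar\a(u,v) = \min\{\a(u,v),\a(v,u)\}$ for $u \neq v$ and $\bar\a(u,u) = 0$. Since every ultrametric (pseudometric) $\e$ is symmetric and vanishes on the diagonal, one checks easily that $\e(u,v) \le \a(u,v)$ for all $u,v$ if and only if $\e(u,v) \le \bar\a(u,v)$ for all $u,v$. Hence the subdominant ultrametric of $\a$, if it exists, coincides with that of $\bar\a$, so it suffices to work with the symmetric nonnegative function $\bar\a$ (which vanishes on the diagonal).

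\textbf{Construction and correctness.} Define $\d(u,v)$ to be the minimum, over all paths $u = w_0,w_1,\dots,w_k = v$ in the complete graph on $\Vset$, of $\max_{0 \le i < k}\bar\a(w_i,w_{i+1})$, with the trivial path giving $\d(u,u) = 0$. I would then verify that $\d$ is an ultrametric: nonnegativity and $\d(u,u)=0$ are immediate, symmetry holds because paths may be reversed and $\bar\a$ is symmetric, and the strong triangle inequality $\d(u,v) \le \max\{\d(u,w),\d(w,v)\}$ follows by concatenating an optimal $u$–$w$ path with an optimal $w$–$v$ path. Property (1), $\d(u,v) \le \bar\a(u,v) \le \a(u,v)$, holds because the single-edge path $u \to v$ is admissible. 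For property (2), given any ultrametric $\e$ with $\e \le \a$ (equivalently $\e \le \bar\a$), iterating the strong triangle inequality along an arbitrary path $u = w_0,\dots,w_k = v$ gives $\e(u,v) \le \max_i \e(w_i,w_{i+1}) \le \max_i \bar\a(w_i,w_{i+1})$; minimizing over paths yields $\e(u,v) \le \d(u,v)$. This proves existence and uniqueness of the subdominant ultrametric.

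\textbf{Running time.} For the complexity I would invoke the standard equivalence between bottleneck path distances and minimum spanning trees: if $T$ is an MST of the complete graph on $\Vset$ with edge weights $\bar\a$, then $\d(u,v)$ equals the maximum weight of an edge on the unique $u$–$v$ path in $T$ (a consequence of the cycle property of MSTs — the heaviest edge of any cycle is not needed). Such an MST of a dense graph on $N = |\Vset|$ vertices is computed in $O(N^2)$ time by Prim's algorithm with an array of key values. To output all $\binom{N}{2}$ values $\d(u,v)$ I would then process the edges of $T$ in increasing order of weight while maintaining a union–find structure: when an edge of weight $w$ merges components $A$ and $B$, set $\d(a,b) = w$ for every $a \in A$, $b \in B$. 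Each pair is assigned exactly once, so the total work is $O(N^2)$, the near-linear union–find overhead being dominated. I expect no genuine obstacle here — this is a classical fact — so the only care needed is bookkeeping: handling that the raw input $\a$ need not be symmetric or vanish on the diagonal (handled by the reduction to $\bar\a$), and checking that \emph{all} pairwise distances, not just the tree, are produced within the $O(N^2)$ budget.
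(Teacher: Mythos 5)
Your proof is correct. The paper does not prove Theorem~\ref{thm:subdominant} itself --- it cites \cite{farach1995robust,rammal1985degree} and notes only that the algorithm is "based off of minimum spanning trees" --- and your argument (subdominant ultrametric $=$ bottleneck path distance, realized on an MST computed by Prim's algorithm in $O(|\Vset|^2)$, with all pairwise values emitted via union--find in increasing edge order) is precisely that classical construction, with the symmetrization and diagonal issues handled correctly.
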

\paragraph{Ultrametric $=$ Additive metric $+$ Centroid metric.} 
An ultrametric on a finite set $\Vset$ can be visualized as a tree where the leaves are the elements of $\Vset$, the leaves are all equidistant from the root of the tree, and the least common ancestor of $u,v \in \Vset$ is at height $\d(u,v)/2$ above them. This description makes it clear that ultrametrics are a special case of additive metrics. It turns out there is a simple way to convert an additive metric to an ultrametric by adding an appropriate \emph{centroid metric} \cite{barthelemy1991trees,agarwala1998approximability}.
\begin{definition}[Centroid Metric]
Suppose that for all $u \in \Vset$ we are given $\ell_u \ge 0$. Then we say that $\csf : \Vset \times \Vset \to \mathbb{R}_{\ge 0}$ is the corresponding \emph{centroid metric} if $\csf(u,v) = \ell_u + \ell_v$ for $u \ne v$.
\end{definition}
\noindent
Note that the centroid metric is also an additive metric; it can be visualized as the graph metric on a star graph where the leaves are $u \in \Vset$ and the edges have length $\ell_u$.

The following lemma (established in the proof of Claim 3.5B of \cite{agarwala1998approximability}) shows that we can convert any additive metric into an ultrametric by adding an appropriate centroid metric. Equivalently, this ultrametric can be defined by the equation $\e(u,v) = 2 D_{\mathrm{max}} - 2\d(\rho,\LCA_{\rho}(u,v))$ below.
\begin{lemma}[\cite{agarwala1998approximability}]\label{lem:additive-to-ultra}
Suppose $\d$ is an additive metric on $\Vset$, $\rho \in \Vset$, and $\ell_u = D_{\mathrm{max}} - \d(\rho,u)$ for all $u$ where $D_{\mathrm{max}} \ge \max_u \d(\rho,u)$. Let $\csf$ be the centroid metric formed from $\{\ell_u\}_{u \in \Vset}$. Then $\e = \d + \csf$ is an ultrametric such that 
\[ 2D_{\mathrm{max}} \ge \e(u,v) = 2 D_{\mathrm{max}} - 2\d(\rho,\LCA_{\rho}(u,v)) \ge 2 \max\{\ell_u,\ell_v\}\,. \]
\end{lemma}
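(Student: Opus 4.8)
The plan is to reduce the whole statement to a single identity relating root-distances in a tree realization of $\d$ to the depth of least common ancestors, and then read off both the displayed formula and the ultrametric inequality from it.

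First I would fix a tree $\sT$ with vertex set $\cX \supseteq \Vset$ realizing the additive metric $\d$, and view $\sT$ as rooted at $\rho$. For $u,v \in \Vset$, $\LCA_\rho(u,v)$ is the point of $\sT$ lying on both the $\rho$–$u$ path and the $\rho$–$v$ path that is farthest from $\rho$, equivalently the median of $\{\rho,u,v\}$ in $\sT$. Since the unique $u$–$v$ path in $\sT$ passes through $m := \LCA_\rho(u,v)$, decomposing distances along $\sT$ gives $\d(\rho,u) = \d(\rho,m) + \d(m,u)$, $\d(\rho,v) = \d(\rho,m) + \d(m,v)$, and $\d(u,v) = \d(m,u) + \d(m,v)$; adding the first two and subtracting the third yields the key identity
\[ \d(\rho,u) + \d(\rho,v) - \d(u,v) = 2\,\d(\rho,\LCA_\rho(u,v)). \]
I would note that the left-hand side does not depend on the choice of tree realization (it is the Gromov product $(u\cdot v)_\rho$), so $\d(\rho,\LCA_\rho(u,v))$ is well defined. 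Also, $D_{\mathrm{max}} \ge \max_u \d(\rho,u)$ ensures $\ell_u \ge 0$, so $\csf$ is a legitimate centroid metric and $\e = \d + \csf$ is a pseudometric, being a sum of pseudometrics.

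Second, plugging the definition of $\csf$ into $\e$ gives, for $u \ne v$,
\[ \e(u,v) = \d(u,v) + \ell_u + \ell_v = 2D_{\mathrm{max}} + \d(u,v) - \d(\rho,u) - \d(\rho,v) = 2D_{\mathrm{max}} - 2\,\d(\rho,\LCA_\rho(u,v)), \]
which is exactly the middle equality of the lemma. The two flanking inequalities follow immediately: $\d(\rho,\LCA_\rho(u,v)) \ge 0$ yields $\e(u,v) \le 2D_{\mathrm{max}}$, and since $\LCA_\rho(u,v)$ lies on both the $\rho$–$u$ and $\rho$–$v$ paths we have $\d(\rho,\LCA_\rho(u,v)) \le \min\{\d(\rho,u),\d(\rho,v)\}$, hence $\e(u,v) \ge 2D_{\mathrm{max}} - 2\min\{\d(\rho,u),\d(\rho,v)\} = 2\max\{\ell_u,\ell_v\}$.

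Third — the only part requiring an argument — I would verify the ultrametric inequality $\e(u,v) \le \max\{\e(u,w),\e(w,v)\}$. Using the formula above and that $x \mapsto 2D_{\mathrm{max}} - 2x$ is decreasing, this is equivalent to $\d(\rho,\LCA_\rho(u,v)) \ge \min\{\d(\rho,\LCA_\rho(u,w)),\,\d(\rho,\LCA_\rho(w,v))\}$. This is the standard structural fact about rooted trees: among the three pairwise least common ancestors of $u,v,w$, two coincide and are an ancestor of the third, so a one-line case check over which pair coincides establishes the inequality (if $\LCA_\rho(u,w)=\LCA_\rho(w,v)$ is an ancestor of $\LCA_\rho(u,v)$ then the left side dominates both, and in the remaining two cases $\LCA_\rho(u,v)$ itself equals one of the other two LCAs). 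Equivalently one simply invokes that the Gromov product of any tree metric satisfies $(u\cdot v)_\rho \ge \min\{(u\cdot w)_\rho,(w\cdot v)_\rho\}$. I do not expect a real obstacle: the content is entirely the LCA-depth identity together with this three-LCA fact; the only points to handle carefully are checking that the LCA-depth quantity is independent of the Steiner-tree representation (so the claim is unambiguous) and phrasing the three-LCA fact so the case analysis stays short.
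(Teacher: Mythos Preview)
Your proof is correct. The paper does not supply its own proof of this lemma --- it is quoted from Agarwala et al.\ (as established in the proof of their Claim~3.5B) --- so there is no in-paper argument to compare against; your route is the standard one, and the key identity $\d(\rho,u)+\d(\rho,v)-\d(u,v)=2\,\d(\rho,\LCA_\rho(u,v))$ you rely on is in fact stated later in the paper as Claim~\ref{claim:crossing}.
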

In the reverse direction, if we have an ultrametric and a centroid metric which satisfy the appropriate conditions, then subtracting the centroid metric from the ultrametric gives an additive metric. This is established in the proof of Claim 3.5A in \cite{agarwala1998approximability}.
\begin{lemma}[\cite{agarwala1998approximability}]\label{lem:ultra-to-additive}
Suppose that $\e$ is an ultrametric on $\Vset$, $\csf(u,v) = \ell_u + \ell_v$ is a centroid metric on $\Vset$, and $\rho \in \Vset$. Let $D_{\mathrm{max}} = \ell_{\rho}$. Suppose that
\[ \e(\rho, u) = 2 D_{\mathrm{max}} \]
for all $u \in \Vset$,
and for all $u,v \in \Vset$
\[ 2D_{\mathrm{max}} \ge \e(u,v) \ge 2 \max\{\ell_u,\ell_v\}. \]
Then $\d = \e - \csf$ is an additive metric on $\Vset$ and $\ell_u = D_{\mathrm{max}} - \d(\rho,u)$ for all $u$.
\end{lemma}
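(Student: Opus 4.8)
The plan is to check directly that $\d := \e - \csf$, with the convention $\d(u,u)=0$ to match $\csf(u,u)=0$, is a genuine pseudometric, and then to upgrade ``pseudometric'' to ``additive metric'' via the four-point condition; the identity $\ell_u = D_{\mathrm{max}} - \d(\rho,u)$ will come for free. First I would dispose of that identity: for every $u \in \Vset$ we have $\d(\rho,u) = \e(\rho,u) - \ell_\rho - \ell_u = 2D_{\mathrm{max}} - D_{\mathrm{max}} - \ell_u = D_{\mathrm{max}} - \ell_u$, using $\e(\rho,u) = 2D_{\mathrm{max}}$ and $D_{\mathrm{max}} = \ell_\rho$ (the case $u=\rho$ being just the definition of $D_{\mathrm{max}}$). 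Rearranging gives the claim, and as a byproduct $\ell_u \le D_{\mathrm{max}}$, so $\d(\rho,u) \ge 0$.

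Next I would verify the pseudometric axioms for $\d$. Symmetry is inherited from $\e$ and $\csf$. For nonnegativity, when $u \ne v$ the hypothesis $\e(u,v) \ge 2\max\{\ell_u,\ell_v\}$ gives $\d(u,v) = \e(u,v) - \ell_u - \ell_v \ge 2\max\{\ell_u,\ell_v\} - \ell_u - \ell_v = |\ell_u - \ell_v| \ge 0$. For the triangle inequality on distinct $u,v,w$ it suffices to show $\e(u,v) + 2\ell_w \le \e(u,w) + \e(w,v)$, which follows from the ultrametric inequality $\e(u,v) \le \max\{\e(u,w),\e(w,v)\}$ together with $\min\{\e(u,w),\e(w,v)\} \ge 2\ell_w$ (since $\e(u,w) \ge 2\max\{\ell_u,\ell_w\} \ge 2\ell_w$ and likewise for $\e(w,v)$). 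The cases where two of $u,v,w$ coincide reduce to $0 \le 2\d(\cdot,\cdot)$ or $\d(u,v) \le \d(u,v)$, so they are immediate.

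Finally I would conclude that $\d$ is an additive metric by the classical four-point characterization: a pseudometric on a finite set is the restriction of the path metric of a nonnegatively edge-weighted tree (Steiner nodes allowed) exactly when, for all $x,y,z,w$, the largest two of $\d(x,y)+\d(z,w)$, $\d(x,z)+\d(y,w)$, $\d(x,w)+\d(y,z)$ are equal. The point that makes this immediate is a cancellation: for four \emph{distinct} points each of these three sums equals the corresponding sum for $\e$ minus the same constant $\ell_x+\ell_y+\ell_z+\ell_w$, so the four-point condition for $\d$ is literally the four-point condition for $\e$, which holds since an ultrametric is in particular a tree metric. When some of $x,y,z,w$ coincide the condition collapses to the triangle inequality for $\d$, already proved. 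Equivalently and more constructively, one may realize $\e$ by its ultrametric tree, observe that the pendant edge at each leaf $u$ has length $\min_{v \ne u}\e(u,v)/2 \ge \ell_u$, shorten that edge by $\ell_u$ for every $u$, and check that the resulting tree (with the internal ultrametric-tree nodes as Steiner nodes) realizes $\d$.

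I do not expect a genuine obstacle: the whole content is carried by the hypothesis $\e(u,v) \ge 2\max\{\ell_u,\ell_v\}$ (which supplies both nonnegativity of $\d$ and the slack needed to shorten pendant edges) together with the ultrametric inequality for $\e$. The only points demanding care are the convention $\csf(u,u)=0 \ne 2\ell_u$ --- so that $\d$ genuinely is a pseudometric rather than something with a negative diagonal --- and the degenerate coinciding-point cases in the four-point and triangle checks.
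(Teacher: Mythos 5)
Your proof is correct. Note that the paper itself does not prove this lemma; it only cites Claim 3.5A of Agarwala et al., whose argument is the constructive ``tree surgery'' you sketch at the end: realize $\e$ by its equidistant (ultrametric) tree, observe that the hypothesis $\e(u,v)\ge 2\max\{\ell_u,\ell_v\}$ forces the pendant edge at each leaf $u$ to have length at least $\ell_u$, and shorten it by $\ell_u$. Your primary route is different and arguably cleaner: you verify nonnegativity and the triangle inequality directly (the triangle step $\e(u,v)+2\ell_w\le\max\{\e(u,w),\e(w,v)\}+\min\{\e(u,w),\e(w,v)\}$ is a nice use of the ultrametric inequality), and then get the four-point condition for $\d$ for free from that of $\e$, since subtracting the centroid metric shifts all three pairing sums by the same constant $\ell_x+\ell_y+\ell_z+\ell_w$. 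The cancellation argument buys brevity and makes transparent exactly where each hypothesis is used ($\e(u,v)\ge 2\max\{\ell_u,\ell_v\}$ only for nonnegativity and the degenerate four-point cases); the constructive argument buys an explicit Steiner-tree representation, which is what the downstream algorithm actually needs to output. You also correctly handle the diagonal convention $\csf(u,u)=0$ and the coinciding-point cases, which is where a careless version of this argument would break.
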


\subsection{Reconstructing an accurate additive metric}
The pseudocode for $\AdditiveMetricReconstruction$ is given as Algorithm~\ref{alg:additivemetricreconstruction}.

\begin{algorithm}\SetAlgoLined\DontPrintSemicolon
    \KwIn{$\dtpre$, $L$, $\eps$ as in Theorem~\ref{thm:additivemetricreconstruction}}
    \KwOut{Additive metric $(\Th,\dh)$ satisfying the approximation guarantee \eqref{eqn:global-approx-goal-detailedproof}}
    
    Fix an arbitrary root $\rho \in \Vset$.
    
    Using Dijkstra's algorithm, compute shortest path distances $\dt(\rho,u)$ for all $u$ from the weighted graph corresponding to $\dtpre$.
    
    Define $D_{\mathrm{max}} := \max_u \dt(\rho,u)$, $\ell_u := D_{\mathrm{max}} - \dt(\rho,u)$ for all $u \in \Vset$, and let $\csf(u,v) = \ell_u + \ell_v$ be the corresponding centroid metric.
    
    Define $\a(u,\rho) := a(\rho,u) = 2D_{\mathrm{max}}$. For all $u,v \in \Vset \setminus \{\rho\}$ such that $\dtpre(u,v) < 3L$, define 
    \[ \a(u,v) := \min \{2D_{\mathrm{max}}, \dtpre(u,v) + \csf(u,v) + 44\epsilon\}. \]
    
    Let $\e$ be the subdominant ultrametric of $\a$, computed using the algorithm from Theorem~\ref{thm:subdominant}.
    
    Return $\dh := \e - \csf$, and the corresponding tree with Steiner nodes $\Th$.
   
    \caption{$\AdditiveMetricReconstruction(\dtpre,L,\epsilon)$}
\label{alg:additivemetricreconstruction}
\end{algorithm}

We prove correctness, Theorem~\ref{thm:additivemetricreconstruction}. In the analysis, we let $\dt$ be the shortest path metric with respect to $\dt$: 
\begin{equation}\label{eq:shortestPath}
\dt(u,v) = \min_{\pi\in\mathrm{paths}(u,v)}\sum_{e=(a,b)\in \pi}\dtpre(a,b)\,.
\end{equation}
This is consistent with the algorithm, although in the algorithm only shortest paths from $\rho$ are actually used or computed; the shortest paths between other pairs of nodes are useful in the analysis. We start with a lemma relating $\d$ and $\dt$.

\begin{lemma}\label{lem:sp-good}
Under the assumptions of Theorem~\ref{thm:additivemetricreconstruction}, for all $u,v \in \Vset$
\[\d(u,v) \le \dt(u,v) \le\d(u,v) + (\d(u,v)/L + 1)\epsilon\,. \]
\end{lemma}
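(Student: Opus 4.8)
The plan is to prove the two inequalities separately; the lower bound is a one-liner and all of the content sits in the upper bound.

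For the lower bound $\d(u,v) \le \dt(u,v)$: I would take any path $\pi$ appearing in the definition \eqref{eq:shortestPath} of $\dt(u,v)$, replace each $\dtpre(a,b)$ by the smaller quantity $\d(a,b)$ using \eqref{ineq:dprebigger-detailedproof}, and then collapse the resulting sum with the triangle inequality for the metric $\d$; minimizing over $\pi$ gives the claim.

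For the upper bound it suffices to exhibit a single $u$-$v$ path whose $\dtpre$-length is at most $\d(u,v) + (\d(u,v)/L + 1)\epsilon$. The naive choice --- the geodesic $u = w_0, w_1, \dots, w_k = v$ inside $\Ts$ --- gives $\dt(u,v) \le \d(u,v) + k\epsilon$ after summing \eqref{ineq:dpresmaller-detailedproof} edge by edge (each edge of $\Ts$ has length $\le L$, and $\sum_j \d(w_{j-1},w_j) = \d(u,v)$ by Definition~\ref{def:treeMetric}), but this bound is useless when the geodesic contains many short edges. The fix I would use is to \emph{coarsen} the geodesic: greedily select indices $0 = i_0 < i_1 < \dots < i_r = k$ so that each consecutive pair $w_{i_{s-1}}, w_{i_s}$ is at $\d$-distance in $(L, 2L]$, except possibly the last pair, which is at distance $\le 2L$ (concretely: given $i_{s-1}$, stop and set $i_s = k$ if $\d(w_{i_{s-1}},v) \le 2L$, else let $i_s$ be the smallest $\ell > i_{s-1}$ with $\d(w_{i_{s-1}},w_\ell) > L$). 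Such a subsequence exists because the edges of $\Ts$ have length at most $L$, so the cumulative distance $\d(w_{i_{s-1}}, w_\ell)$ grows in increments of size $\le L$ as $\ell$ increases and therefore lands in the window $(L, 2L]$ before it can overshoot $2L$. Two things then fall out: (i) every jump has $\d$-length at most $2L$, so \eqref{ineq:dpresmaller-detailedproof} is applicable to it; and (ii) since the first $r-1$ jumps each have $\d$-length strictly greater than $L$, while $\sum_{s=1}^r \d(w_{i_{s-1}}, w_{i_s}) = \d(u,v)$ by the tree-metric additivity of Definition~\ref{def:treeMetric} (these jumps are sub-paths of the geodesic), we get $(r-1)L < \d(u,v)$ and hence $r \le \d(u,v)/L + 1$. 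Plugging the path $w_{i_0}, \dots, w_{i_r}$ into \eqref{eq:shortestPath} and summing \eqref{ineq:dpresmaller-detailedproof} over the $r$ jumps yields $\dt(u,v) \le \sum_{s=1}^r(\d(w_{i_{s-1}},w_{i_s}) + \epsilon) = \d(u,v) + r\epsilon \le \d(u,v) + (\d(u,v)/L+1)\epsilon$.

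The only real obstacle is recognizing that the per-edge estimate is lossy and must be replaced by a coarsened path; once that is seen, the design of the coarsening is essentially forced, since each jump must stay in the distance range on which $\dtpre$ is assumed accurate while the number of jumps must be kept proportional to $\d(u,v)/L + 1$ --- and both requirements are delivered by the single hypothesis that adjacent vertices of $\Ts$ lie at distance at most $L$. Beyond that, no further estimates are needed.
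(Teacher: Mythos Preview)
Your proof is correct and follows essentially the same approach as the paper: prove the lower bound by the triangle inequality and \eqref{ineq:dprebigger-detailedproof}, and prove the upper bound by coarsening the tree geodesic into $O(\d(u,v)/L + 1)$ hops, each of $\d$-length at most $2L$ so that \eqref{ineq:dpresmaller-detailedproof} applies, then summing. The only cosmetic difference is that the paper takes intermediate hops in $[L,2L]$ with a final hop of length at most $L$, whereas you take intermediate hops in $(L,2L]$ with a final hop of length at most $2L$; both yield the same bound $r \le \d(u,v)/L + 1$ on the number of hops.
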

\begin{proof}
By \eqref{ineq:dprebigger-detailedproof} and the triangle inequality, we know $\d(u,v) \le \dt(u,v)$ which proves the lower bound.

The upper bound follows by considering the path from $u$ to $v$ 
in $\T$: since the maximum edge length in the ground truth tree $\T$ is $L$, we can choose vertices $u = w_1,\ldots,w_k$ along the path from $u$ to $v$ such that $\d(w_i,w_{i + 1}) \in [L,2L]$ for all $i < k$ and $\d(w_k, v) \le L$. (Note that $k-1\leq \d(u,v)/L$.)
Now because $\dt$ is the shortest path metric, 
$$
\dt(u,v)\leq \sum_{i < k} \dtpre(w_i,w_{i + 1}) + \dtpre(w_k,v) \stackrel{\eqref{ineq:dpresmaller-detailedproof}}\leq \sum_{i < k} \d(w_i,w_{i + 1}) + \d(w_k,v) + k\eps\,.
$$
Using $\d(u,v) = \sum_{i < k} \d(w_i,w_{i + 1}) + \d(w_k,v)$ due to $\d$ being a tree metric as well as the fact that $k-1\leq \d(u,v)/L$ shows the upper bound. 
\end{proof}

The following lemma shows that estimated root-to-leaf distances are consistent in the following sense: for any node $u$ with nearby descendant $v$, $\dt(\rho,v) - \dt(\rho,u)$ is a $O(\epsilon)$ additive approximation to $\d(u,v)$. 
Note that if $\dt = \d$ then this would be an exact equality. This is used in Lemma~\ref{lem:witness} below to argue that a slightly modified version of $\d$ can match root-to-leaf distances with $\dt$ exactly.

\begin{lemma}\label{lem:sp-consistent}
Under the assumptions of Theorem~\ref{thm:additivemetricreconstruction} the following is true. Choose an arbitrary root $\rho$ for $\T$. 
For any $u,v$ such that $\d(u,v) \le 2L$ and $u$ is an ancestor of $v$ in $\T$,
\[ \dt(\rho,u) +\d(u,v) - 4\epsilon \le \dt(\rho,v) \le \dt(\rho,u) +\d(u,v) + \epsilon\,. \]
\end{lemma}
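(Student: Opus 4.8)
The plan is to prove the two inequalities separately. The upper bound is routine: since $u$ is an ancestor of $v$, the tree metric satisfies $\d(\rho,v) = \d(\rho,u) + \d(u,v)$, and relaxing the shortest‑path estimate along the single edge $(u,v)$ of the $\dtpre$‑weighted graph gives $\dt(\rho,v) \le \dt(\rho,u) + \dtpre(u,v)$. Because $\d(u,v) \le 2L \le 3\Lupp$, hypothesis \eqref{ineq:dpresmaller-detailedproof} of Theorem~\ref{thm:additivemetricreconstruction} yields $\dtpre(u,v) \le \d(u,v) + \epsilon$, and the stated upper bound follows.

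For the lower bound, first dispatch the case $\rho = u$: then $\dt(\rho,u) = 0$, and the claim reduces to $\d(u,v) - 4\epsilon \le \dt(\rho,v)$, which is immediate from $\dt(\rho,v) \ge \d(\rho,v) = \d(u,v)$ by Lemma~\ref{lem:sp-good}. So assume $\rho \ne u$, and fix a shortest $\dtpre$‑path $P\colon \rho = x_0, x_1, \dots, x_k = v$. I will use the standard facts that every prefix and suffix of $P$ is again a shortest path (so $\dt(\rho,v) = \dt(\rho,x_i) + \dt(x_i,v)$ for every $i$) and that each individual edge of $P$ realizes the shortest‑path distance between its endpoints. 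Rooting $\Ts$ at $\rho$, let $j \ge 1$ be the least index with $x_j$ a descendant of $u$ (allowing $x_j = u$); this is well defined because $v$ is a descendant of $u$ while $\rho$ is not. The edge $(x_{j-1},x_j)$ leaves the subtree rooted at $u$, so its $\Ts$‑path passes through $u$ and $\d(x_{j-1},x_j) = \d(x_{j-1},u) + \d(u,x_j)$.

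Now split $\dt(\rho,v) = \dt(\rho,x_{j-1}) + \dtpre(x_{j-1},x_j) + \dt(x_j,v)$. Bounding $\dtpre(x_{j-1},x_j) \ge \d(x_{j-1},x_j)$ by \eqref{ineq:dprebigger-detailedproof}, then $\dt(x_j,v) \ge \d(x_j,v)$, and finally the tree triangle inequality $\d(u,x_j) + \d(x_j,v) \ge \d(u,v)$, gives
\[ \dt(\rho,v) \ge \dt(\rho,x_{j-1}) + \d(x_{j-1},u) + \d(u,v). \]
On the other hand, following $P$ from $\rho$ to $x_{j-1}$ and then taking the single hop to $u$ certifies $\dt(\rho,u) \le \dt(\rho,x_{j-1}) + \dtpre(x_{j-1},u)$, and provided $\d(x_{j-1},u) \le 3\Lupp$ hypothesis \eqref{ineq:dpresmaller-detailedproof} bounds $\dtpre(x_{j-1},u) \le \d(x_{j-1},u) + \epsilon$. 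Subtracting the two estimates then yields $\dt(\rho,v) \ge \dt(\rho,u) + \d(u,v) - \epsilon$, which is already stronger than required.

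The remaining — and, I expect, the genuinely delicate — point is exactly the proviso $\d(x_{j-1},u) \le 3\Lupp$: a priori the geodesic to $v$ might use a long ``shortcut'' edge leaping over $u$, so that $x_{j-1}$ is far from $u$ and the single hop $x_{j-1}\to u$ is not accurately estimated. The intended remedy is to reroute $\rho \to u$ along the $\Ts$‑path from $x_{j-1}$ to $u$, broken into consecutive pieces each of $\Ts$‑length at most $3\Lupp$ (hence each estimated to within $\epsilon$), giving $\dt(\rho,u) \le \dt(\rho,x_{j-1}) + \d(x_{j-1},u) + (\#\text{pieces})\cdot\epsilon$; one must then bound the number of pieces by a small constant, exploiting the localization $\d(u,v) \le 2L$ together with the lower bound $\dtpre \ge \d$ (which forces any such shortcut to be almost as expensive as the corresponding $\Ts$‑path, so that a far‑away crossing vertex would make $\dt(\rho,v)$ correspondingly large) and, if needed, a more careful choice of the decomposition vertex than ``first descendant of $u$''. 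Controlling how far and how cheaply the shortest path to $v$ can bypass $u$ is the crux of the argument, and is what produces the concrete constant $4$ in the statement.
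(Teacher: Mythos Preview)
Your proposal follows the paper's proof almost exactly. The upper bound is handled identically. For the lower bound the paper, like you, fixes a shortest $\dtpre$-path $\rho = a_1,\dots,a_k = v$ and picks a crossing vertex; it chooses $a_i$ to be the \emph{last} node on the path that is not a descendant of $u$ (so $a_{i+1},\dots,a_k$ are all descendants), rather than your predecessor-of-the-first-descendant $x_{j-1}$. Either choice yields exactly the two inequalities you wrote,
\[
\dt(\rho,v) \ge \dt(\rho,a_i) + \d(a_i,u) + \d(u,v),\qquad
\dt(\rho,u) \le \dt(\rho,a_i) + \dt(a_i,u),
\]
leaving the control of $\dt(a_i,u) - \d(a_i,u)$ as the only remaining issue.

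You correctly flag this as the crux but leave it open, sketching a multi-hop rerouting whose piece count you do not bound. The paper does not reroute. It instead argues directly that $\d(a_i,a_{i+1}) \le 3L$; since $u$ lies on the $\T$-path from $a_i$ to $a_{i+1}$ this gives $\d(a_i,u) \le 3L$, and then a \emph{single} invocation of Lemma~\ref{lem:sp-good} yields $\dt(a_i,u) \le \d(a_i,u) + (3L/L+1)\epsilon = \d(a_i,u) + 4\epsilon$, producing the constant $4$ immediately. The paper's justification of $\d(a_i,a_{i+1}) \le 3L$ is terse---it records $\dtpre(u,v)\le 2L+\epsilon\le 3L$ and invokes the hypothesis $\d\le\dtpre$---so to finish your write-up you need to supply the reason the crossing edge on a shortest path cannot itself be long; that is the missing idea, not a multi-piece decomposition.
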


\begin{proof}
To prove the upper bound, we observe that $\dt(\rho,v) \le \dt(\rho,u) + \dt(u,v)$ by the triangle inequality for $\dt$ and then by \eqref{ineq:dpresmaller-detailedproof} $\dt(u,v) \le \dtpre(u,v) \le\d(u,v) + \epsilon$.

Now we prove the lower bound on $\dt(\rho,v)$. Consider the shortest path $\rho = a_1, a_2, \ldots, a_k = v$ from $\rho$ to $v$ in the weighted graph $\dtpre$. Let $a_i$ be the last node on this path that is not a descendant of $u$ in $\T$. Note that since 
$\d(u,v)\leq 2L$ by assumption, \eqref{ineq:dpresmaller-detailedproof} implies that $\dtpre(u,v)\leq 2L + \eps\leq 3L$. 
By \eqref{ineq:dprebigger-detailedproof} this gives $\d(a_i,a_{i+1}) \le 3L$. Since $a_i$ is not a descendant of $u$, but $a_{i+1}$ is a descendant of $u$, the path in $\T$ between $a_i$ and $a_{i+1}$ must contain $u$, so $\d(u,a_i) \leq 3L$.

By definition of $\dt$ and $a_i$, we have
$\dt(\rho,v) = \dt(\rho,a_i) + \dt(a_i,v)$, hence
\begin{align*} 
\dt(\rho,v) 
\ge \dt(\rho,a_i) + \d(a_i,v)
&= \dt(\rho,a_i) + \d(a_i,u) +\d(u,v) \\
&\geq \dt(\rho,a_i) + \tilde \d(a_i,u) +\d(u,v) - (3L/L + 1)\epsilon
\ge \dt(\rho,u) +\d(u,v) - 4\epsilon
\end{align*}
where in the first step we used Lemma~\ref{lem:sp-good}, in the second step we know $\d(a_i,v) = \d(a_i,u) +\d(u,v)$ because $v$ is an ancestor of $u$ and $a_i$ is not, in the third step we used Lemma~\ref{lem:sp-good} and $\d(u,a_i) \leq 3L$, and in the last step we used the triangle inequality. 
\end{proof}
The following Lemma~\ref{lem:witness} is crucial. It proves that there exists an additive metric $\d'$ which matches the root-to-leaf distances in $\dt$ and is also a good approximation to the ground truth $\d$. This is crucial for applying the tools for ultrametric reconstruction from Section~\ref{sec:prelim-metric}: more specifically, the existence of $\d'$ ensures that when we convert from $\dt$ to $\a$ and extract the subdominant ultrametric $\e$, that the distances in $\e$ are not shrunk by too much (see the proof of Theorem~\ref{thm:additivemetricreconstruction}). %

We prove the Lemma with an explicit construction of $\d'$. This construction assumes knowledge of the ground truth tree $\T$, and therefore cannot be run by the algorithm $\AdditiveMetricReconstruction$, which only has knowledge of $\dtpre$. Nevertheless, the constructive proof suffices to ensure the existence of $\d'$, which we will later show is sufficient for $\AdditiveMetricReconstruction$ to succeed.
\begin{lemma}\label{lem:witness}
Under the assumptions of Theorem~\ref{thm:additivemetricreconstruction} the following is true. 
There exists an additive metric $\d'$ such that
\begin{equation} |\d(u,v) - \d'(u,v)| \le (36 +2\d(u,v)/L)\epsilon \label{eq:globalapproxguaranteeexistence}\end{equation}
for all $u,v \in \Vset$ and
\begin{equation} \d'(\rho,u) = \dt(\rho,u) \label{eq:rhodistancematchingexistence}\end{equation}
for all $u \in \Vset$.
\end{lemma}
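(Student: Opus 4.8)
The plan is to construct $\d'$ explicitly from the ground-truth tree $\T$ by a two-stage modification: first define a metric $\d''$ on $\T$ that is close to $\d$ and whose root-to-leaf distances are \emph{non-decreasing} in exactly the way $\dt$ forces, then correct it to match $\dt(\rho,u)$ exactly. The key structural input is Lemma~\ref{lem:sp-consistent}, which says that moving from a node $u$ to a nearby descendant $v$ (with $\d(u,v)\le 2L$), the increment $\dt(\rho,v)-\dt(\rho,u)$ lies within $[\d(u,v)-4\epsilon,\ \d(u,v)+\epsilon]$. So root-to-leaf distances in $\dt$ behave, locally, like a small additive perturbation of those in $\d$; the work is to promote this \emph{local} consistency to a \emph{global} additive metric $\d'$ realizing the exact values $\dt(\rho,u)$.

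First I would set up notation: root $\T$ at $\rho$, and for each edge $e=(\pa(w),w)$ of $\T$ let $\d_e=\d(\pa(w),w)\le L$ be its length; then $\d(\rho,u)=\sum_{e\in\pi(\rho,u)}\d_e$. I want to define a new edge length $\d'_e$ for each edge, and set $\d'(u,v)=\sum_{e\in\pi(u,v)}\d'_e$; this is automatically an additive metric (indeed a tree metric on $\T$, hence additive). To force \eqref{eq:rhodistancematchingexistence}, I need $\sum_{e\in\pi(\rho,u)}\d'_e=\dt(\rho,u)$ for every $u$; equivalently, for every non-root $w$ with parent $p=\pa(w)$, I need $\d'_{(p,w)}=\dt(\rho,w)-\dt(\rho,p)$. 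So the edge lengths are \emph{forced}: $\d'_{(p,w)}:=\dt(\rho,w)-\dt(\rho,p)$, and the only thing to check is (a) these are nonnegative — which follows since $\dt$ is a metric and... actually one must be careful, a child can have smaller $\dt(\rho,\cdot)$ than its parent if $\dt$ is far from a tree metric; but here Lemma~\ref{lem:sp-consistent} applied with $v=w,u=p$ (so $\d(p,w)=\d_{(p,w)}\le L\le 2L$) gives $\dt(\rho,w)-\dt(\rho,p)\ge \d_{(p,w)}-4\epsilon$, so the only way an edge length goes negative is if $\d_{(p,w)}<4\epsilon$; I would handle that by instead defining $\d'_{(p,w)}:=\max\{0,\dt(\rho,w)-\dt(\rho,p)\}$ and tracking the (bounded) error this introduces, or by absorbing it below — and (b) the resulting $\d'(u,v)$ is within the claimed bound of $\d(u,v)$.

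For (b), the crux: take arbitrary $u,v\in\Vset$ with meet (least common ancestor w.r.t.\ $\rho$) $w=\LCA_\rho(u,v)$. Then $\d(u,v)=\d(w,u)+\d(w,v)$ and likewise $\d'(u,v)=\d'(w,u)+\d'(w,v)=(\dt(\rho,u)-\dt(\rho,w))+(\dt(\rho,v)-\dt(\rho,w))$ (up to the $\max\{0,\cdot\}$ corrections). So it suffices to bound $|\d(w,u)-(\dt(\rho,u)-\dt(\rho,w))|$ for an ancestor $w$ of $u$ at arbitrary distance. I would do this by telescoping along the path $\rho\rightsquigarrow w\rightsquigarrow u$: chop $\pi(w,u)$ into $O(\d(w,u)/L)$ consecutive segments each of $\d$-length in $[L,2L]$ (plus a short final piece $\le L$), apply Lemma~\ref{lem:sp-consistent} to each segment (each endpoint is an ancestor of the next, each segment has length $\le 2L$), sum the per-segment errors of at most $4\epsilon$, and get $|\d(w,u)-(\dt(\rho,u)-\dt(\rho,w))|\le 4\epsilon\cdot(\text{number of segments})\le 4\epsilon(\d(w,u)/L+1)$. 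Doubling for the two sides $w\to u$ and $w\to v$, and using $\d(w,u),\d(w,v)\le\d(u,v)$, yields $|\d(u,v)-\d'(u,v)|\le 8\epsilon(\d(u,v)/L+2)=(16+8\d(u,v)/L)\epsilon$, comfortably inside \eqref{eq:globalapproxguaranteeexistence}; the constants $36$ and $2$ leave slack to absorb the $\max\{0,\cdot\}$ truncation corrections (each such correction costs at most $4\epsilon$ and only helps the root-to-leaf matching) and any off-by-one in the segment count.

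\textbf{Main obstacle.} The delicate point is that the ``forced'' edge lengths $\d'_{(p,w)}=\dt(\rho,w)-\dt(\rho,p)$ need not be nonnegative, so $\d'$ as literally defined by summing them might fail to be a metric; reconciling the exact matching constraint \eqref{eq:rhodistancematchingexistence} with the triangle-inequality/metric requirement is the real content. The clean fix is the truncation $\d'_{(p,w)}:=\max\{0,\dt(\rho,w)-\dt(\rho,p)\}$: this is manifestly nonnegative so $\d'(u,v)=\sum_{e\in\pi(u,v)}\d'_e$ is a bona fide tree metric, hence additive; it can only \emph{increase} root-to-leaf sums, and Lemma~\ref{lem:sp-consistent} bounds each truncation by $4\epsilon$ and shows truncation only happens on edges with $\d_e<4\epsilon$, so the total distortion to \eqref{eq:rhodistancematchingexistence} and to the approximation \eqref{eq:globalapproxguaranteeexistence} is $O(\epsilon)$ per relevant edge and — since consecutive sub-$4\epsilon$ edges are themselves short — does not accumulate beyond the $(\cdot/L+1)$ budget. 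Writing this bookkeeping carefully (and, if one wants \eqref{eq:rhodistancematchingexistence} to hold \emph{exactly}, adding a final pass that pads each root-to-leaf path back up to $\dt(\rho,u)$ by lengthening the top edge, which is consistent across the subtree below it) is where the care is needed, but no new idea is required beyond Lemmas~\ref{lem:sp-good} and \ref{lem:sp-consistent}.
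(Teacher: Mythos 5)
There is a genuine gap, and it sits exactly at the point you flag as the main obstacle. Your construction restricts the witness to a tree metric on $\T$ itself, with forced edge lengths $\d'_{(p,w)}=\dt(\rho,w)-\dt(\rho,p)$. As you note, Lemma~\ref{lem:sp-consistent} only gives $\dt(\rho,w)-\dt(\rho,p)\ge \d(p,w)-4\epsilon$, so these can be negative on short edges, and after truncating to $\max\{0,\cdot\}$ every root-to-leaf sum becomes $\ge \dt(\rho,u)$, i.e.\ \emph{too large}. Your proposed repair --- ``padding each root-to-leaf path back up by lengthening the top edge'' --- goes in the wrong direction (you would need to shorten), and more importantly no single adjustment to one edge can work: distinct leaves below that edge accumulate \emph{different} truncation excesses, so they cannot all be corrected simultaneously by a common change higher up. Since the values $\dt(\rho,\cdot)$ need not be monotone along root-to-leaf paths of $\T$, there is in general \emph{no} nonnegative edge-length assignment on $\T$ realizing \eqref{eq:rhodistancematchingexistence} exactly. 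And the exactness cannot be relaxed to $O(\epsilon)$ slack: downstream, the witness is fed into Lemma~\ref{lem:additive-to-ultra} with $\ell_u=D_{\mathrm{max}}-\dt(\rho,u)$, and one needs $\d'+\csf$ to be an ultrametric dominated by $\a$ so that the subdominant ultrametric $\e$ lies above it; this requires $\d'(\rho,u)=\dt(\rho,u)$ exactly.

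The missing idea is to use the freedom the lemma actually grants: $\d'$ need only be an \emph{additive} metric, so the witness tree may contain Steiner nodes. The paper first builds an intermediate $\T$-structured metric $\d''$ (via a decomposition of $\T$ into depth-$\Theta(L)$ blocks rooted at a set $S$, with a shorten-then-lengthen adjustment so that $\d''(\rho,s)=\dt(\rho,s)$ holds for $s\in S$ and $|\d''(u,v)-\d(u,v)|\le(2\d(u,v)/L+12)\epsilon$ --- this part is close in spirit to your telescoping of Lemma~\ref{lem:sp-consistent}). It then declares every node of this copy of $\T$ a Steiner node $u^{\mathrm{stein}}$ and reattaches each labeled node $u$ at distance exactly $\dt(\rho,u)$ from $\rho$, either along the path to $u^{\mathrm{stein}}$ or on a short pendant edge, moving each $u$ by only $O(\epsilon)$. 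This sidesteps the monotonicity obstruction entirely while achieving \eqref{eq:rhodistancematchingexistence} exactly. Your approximation bookkeeping in part (b) is essentially fine; it is the realization of the exact root-distance constraint that your construction cannot deliver.
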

\begin{proof}
We construct $\d'$ in two phases, which we now overview. In the first phase, we subdivide $\T$ into a union of subtrees of diameter $\Theta(L)$. The roots of these subtrees are a subset of the nodes $S \subset \Vset$. By shortening or lengthening edges of $\T$ appropriately, we construct a $\T$-structured metric $\d''$ such that the condition $\d''(\rho,v) = \dt(\rho,v)$ holds for all $v \in S$ and such that $\d''$ approximates $\d$ in the sense of \eqref{eq:globalapproxguaranteeexistence}. In the second phase, we create the additive metric $\d'$ by moving each node in $\d''$ by $O(\eps)$ distance.

\textit{First phase}. Consider the ground truth tree $\T$ with edge lengths corresponding to the tree metric $\d$. Run a breadth-first search (BFS) from the root node $\rho$, inductively constructing a set of nodes $S \subset \Vset$, as follows. First let $\rho \in S$. Then, when the BFS reaches node $v$, add $v$ to $S$ if and only if the closest ancestor to $v$ that has been added to $S$ is at distance at least $L$ from $v$.

For any node $w \in \Vset \sm \{\rho\}$, let $\pa(w)$ denote the parent of $w$. Also, let $\pa_S(s)$ denote the closest (strict) ancestor of $w$ that lies in $S$. Note that $\pa_S(w)$ is guaranteed to exist, since the root $\rho$ is in the set $S$. By construction, $\d(\pa_S(s),s) \in [L,2L)$ for any node $s \in S \sm \{\rho\}$, since the maximum edge length is at most $L$ in the tree. Similarly, we have $\d(\pa_S(u),u) \leq 2L$ for any $u \in \Vset \sm \{\rho\}$.

We define $\d''$ by modifying the edge lengths in $\d$. First, shorten certain edges of $\d$: for any edge $(\pa(w),w)$ in $\T$, let 
\begin{equation*}\label{eq:edgeshorteningdprimeprime}
    \d''(\pa(w),w) \gets \begin{cases} \d(\pa(w),w) , & \d(\pa_S(w),\pa(w)) \geq 2\eps \\
    \max(0,\d(\pa(w),w) - 2\eps + \d(\pa_S(w),\pa(w))), &  \d(\pa_S(w),\pa(w)) < 2\eps
    \end{cases}
\end{equation*}
The reason for this shortening step is that for any $s \in S \sm \{\rho\}$, we have $\d''(\pa_S(s),s) = \max(0,\d(\pa_S(s),s) - 2\eps)$. After applying the edge shortening step, also update the length of the parent edge of each $s \in S \sm \{\rho\}$: \begin{equation*}\label{eq:edgelengtheningdprimeprime}\d''(\pa(s),s) \gets \d''(\pa(s),s) + (\dt(\rho,s) - \dt(\rho,\pa_S(s))) - (\d(\pa_S(s),s) - 2\eps)\,.\end{equation*}
This latter update is well-defined since the edge is lengthened (indeed, Lemma~\ref{lem:sp-consistent} implies $\dt(\rho,s) - (\dt(\rho,\pa_S(s)) + \d(\pa_S(s),s) - 2\eps) \geq 0$) and thus the length remains nonnegative. Together, these edge shortenings and lengthenings ensure that for any $v \in \Vset \sm S$, \begin{align}
\d''(\pa_S(v),v) &= \sum_{(\pa(w),w) \in \pi(\pa_S(v),v)} \d''(\pa(w),w) \nonumber \\
&= \max(0,\d(\pa_S(v),v) - 2\eps), \label{eq:nonSguarantee}
\end{align}
and similarly for any $s \in S \sm \{\rho\}$,
\begin{align}
\d''(\pa_S(s),s) &= \max(0,\d(\pa_S(s),s) - 2\eps) + (\dt(\rho,s) - \dt(\rho,\pa_S(s))) - (\d(\pa_S(s),s) - 2\eps) \nonumber \\
&= \dt(\rho,s) - \dt(\rho,\pa_S(s)), \label{eq:dprimeprimepascommas}
\end{align}
where to obtain the last line we use $\d(\pa_S(s),s) \geq L \geq 2\eps$. Since $\d''(\rho,s) = \d''(\rho,\pa_S(s)) + \d''(\pa_S(s),s)$, applying the above equation inductively on the depth of $s$, we obtain \begin{equation} \label{eq:Sguarantee}\d''(\rho,s) = \dt(\rho,s) \mbox{ for all } s \in S.\end{equation}

Furthermore, for any $u \in \Vset \sm \{\rho\}$, we have \begin{equation}|\d''(\pa_S(u),u) - \d(\pa_S(u),u)| \leq 2\eps.\label{eq:dprimeprimelocalguarantee}\end{equation} If $u \in \Vset \sm S$, this follows from \eqref{eq:nonSguarantee}. If $u \in S \sm \{\rho\}$, this follows from \eqref{eq:dprimeprimepascommas}, Lemma~\ref{lem:sp-consistent}, and $\d(\pa_S(u),u) \leq 2L$.

We now prove that for any $u,v \in \Vset$, \begin{equation}|\d''(u,v) - \d(u,v)| \leq (2\d(u,v)/L + 12)\eps.\label{eq:dprimeprimeglobalguarantee}
\end{equation}
Since $\d$ and $\d''$ are both tree metrics on $d$, we must have $\d(u,v) = \d(u,w) + \d(v,w)$ and $\d''(u,v) = \d''(u,w) + \d''(v,w)$ for $w = \LCAi\T\rho{u,v}$. Hence, it suffices to prove that, for any $u,w \in \Vset$ where $w$ is an ancestor of $u$,
\begin{equation}|\d''(u,w) - \d(u,w)| \leq (2\d(u,w)/L + 6)\eps.\label{eq:dprimeprimeglobalguaranteeancestor}
\end{equation} Let $s_1,\ldots,s_k$ be the vertices in $S$ along the path from $w$ to $u$ (i.e., $S \cap \pi(w,u)$). Note that \begin{align*}|\d''(s_1,s_k) - \d(s_1,s_k)| &= |\sum_{i=2}^k \d''(\pa_S(s_i),s_i) - \d(\pa_S(s_i),s_i)| \leq \sum_{i=2}^k |\d''(\pa_S(s_i),s_i) - \d(\pa_S(s_i),s_i)| \\
&\leq 2(k-1)\epsilon \leq 2\d(s_1,s_k)\epsilon/L\end{align*} by \eqref{eq:dprimeprimelocalguarantee} and the fact that $\d(s_1,s_k) \geq (k-1)L$ since each pair $s_i,s_{i-1}$ is at least at distance $L$. Finally, if $w = s_1$, then $\d''(w,s_1) = \d(w,s_1) = 0$ and if $w \neq s_1$, then $\pa_S(w) = \pa_S(s_1)$, so $|\d''(w,s_1) - \d(w,s_1)| = |\d''(\pa_S(s_1),s_1) - \d''(\pa_S(w),w) - \d(\pa_S(s_1),s_1) + \d(\pa_S(w),w)| \leq 4\epsilon$ by \eqref{eq:dprimeprimelocalguarantee}. Similarly, by \eqref{eq:dprimeprimelocalguarantee} we have $|\d''(s_k,u) - \d(s_k,u)| \leq 2\epsilon$. So overall \begin{align*}|\d''(w,u) - \d(w,u)| &\leq |\d''(w,s_1) - \d(w,s_1)| + |\d''(s_1,s_k) - \d(s_1,s_k)| + |\d''(s_k,u) - \d(s_k,u)| \\
&\leq (2\d(s_1,s_k) / L + 6)\epsilon \leq (2\d(w,u) / L + 6)\epsilon,\end{align*} proving \eqref{eq:dprimeprimeglobalguaranteeancestor}, and hence proving \eqref{eq:dprimeprimeglobalguarantee}.

\textit{Second phase}. Construct $\T'$ as follows: first, begin with a copy of $\T$ with edge lengths induced by $\d''$. However, let all the nodes in this copy of $\T$ now be Steiner nodes: i.e., node $u \in \Vset$ in $\T$ corresponds to a Steiner node $u^{\mathrm{stein}}$ in the copy $\T'$. Second, for each $u \in \Vset$, place node $u \in \Vset$ at distance $\dt(\rho,u)$ from $\rho$ as follows: if $\dt(\rho,u) \leq \d''(\rho,u)$, then place $u$ along the path from $\rho$ to $u^{\mathrm{stein}}$. Otherwise, if $\dt(\rho,u) \geq \d''(\rho,u)$, attach $u$ to $u^{\mathrm{stein}}$ with an edge of length $\dt(\rho,u) - \d''(\rho,u)$. This induces a $\T'$-structured tree metric $\d'$, which is an additive metric when restricted to the set $\Vset$. Note that $\d'(u,u^{\mathrm{stein}}) \leq 12\epsilon$ for all $u \in \Vset$, because if $u \in S$ then $\d'(u,u^{\mathrm{stein}}) = 0$ by \eqref{eq:Sguarantee}. And if $u \in \Vset \sm S$, then \begin{align*}\d'(u,u^{\mathrm{stein}}) &= |\d''(\rho,u) - \dt(\rho,u)| = |\d''(\rho,\pa_S(u)) + \d''(\pa_S(u),u) - \dt(\rho,u)|  \\
&= |\dt(\rho,\pa_S(u)) + \d''(\pa_S(u),u) - \dt(\rho,u)|
\\&\leq |\dt(\rho,\pa_S(u)) + \d(\pa_S(u),u) - \dt(\rho,u)| + 10\epsilon \leq 12\epsilon\,,\end{align*}
where in the second line we have used \eqref{eq:Sguarantee} for the first equality, \eqref{eq:dprimeprimeglobalguaranteeancestor} and $\d(\pa_S(u),u) \leq 2L$ for the first inequality, and Lemma~\ref{lem:sp-consistent} for the last inequality.
Hence, between any pair of vertices $u,v \in \Vset$, we have $|\d'(u,v) - \d''(u,v)| \leq \d'(u,u^{\mathrm{stein}}) + \d'(v,v^{\mathrm{stein}}) + |\d'(u^{\mathrm{stein}},v^{\mathrm{stein}}) - \d''(u,v)| = \d'(u,u^{\mathrm{stein}}) + \d'(v,v^{\mathrm{stein}}) \leq 24\epsilon$, proving the lemma when combined with \eqref{eq:dprimeprimeglobalguarantee}.
\end{proof}
We now apply this existence lemma to prove the correctness of $\AdditiveMetricReconstruction$.
\begin{proof}[Proof of Theorem~\ref{thm:additivemetricreconstruction}]
Define $\epsilon' = 44\epsilon$. We claim that $\d'(u,v) \leq \dtpre(u,v) + \epsilon'$ for any $u,v \in \Vset$. Indeed, if $\dtpre(u,v) > 3L$, then $\dtpre(u,v) = \infty$ so the claim holds. Otherwise, if $\dtpre(u,v) \leq 3L$, then $\d(u,v) \leq 3L + \epsilon$, so $\d'(u,v) \leq \d(u,v) + (36 + 2(3L + \epsilon)/L)\epsilon \leq \dtpre(u,v) + 44\epsilon$, by Lemma~\ref{lem:witness} and $\epsilon \leq L$.

This guarantees $\d'(u,v) + \csf(u,v) \le \a(u,v)$ for all $u,v$. Moreover, $\d' + \csf$ is an ultrametric by Lemma~\ref{lem:additive-to-ultra}, since $\ell_u = \dt(\rho,u) = \d'(\rho,u)$ for all $u$. So the subdominant ultrametric $\e$ satisfies $\d'(u,v) + \csf(u,v) \le \e(u,v)$, i.e. $\d'(u,v) \le \dh(u,v)$, which by the guarantee of Lemma~\ref{lem:witness} ensures that 
\[ \dh(u,v) \ge\d(u,v) - (2\d(u,v)/L + 36)\epsilon. \]
Also, $2\max \{\ell_u,\ell_v\} \le \d'(u,v) + \csf(u,v) \le \e(u,v)$ guarantees that we can apply Lemma~\ref{lem:ultra-to-additive} to show $\dh$ is an additive metric. 

On the other hand, by construction $\dh(u,v)$ is a metric such that $\dh(u,v) \le \dtpre(u,v) + \epsilon'$. Therefore applying the upper bound argument from Lemma~\ref{lem:sp-good} shows that 
\[ \dh(u,v) \le \dt(u,v) + (\dh(u,v)/L + 1)\epsilon', \]
so  by Lemma~\ref{lem:sp-good}
\[\dh(u,v) \leq (\dt(u,v) + \epsilon')/(1 - \epsilon'/L) \leq ((\d(u,v)/L + 1)\epsilon + \epsilon')/(1 - \epsilon'/L) \leq (2\d(u,v)/L + 90)\epsilon,\]
where we used used $\epsilon'/L \leq 1/2$ and $\epsilon' = 44\epsilon$. This completes the proof. 
\end{proof}

\section{Desteinerization: handling strong edges, step 2} \label{sec:desteinerize}

In this section we present our $\Desteinerize$ algorithm, which is the second ingredient in the tree metric reconstruction procedure underlying $\LearnLowerboundedModel$ which handles the strong edges of the model (see Section~\ref{ssec:tmroverview}). The essence of the $\Desteinerize$ result is that if we reconstruct an additive metric $\dh$ (with Steiner nodes) which is close to a ground truth tree metric $\d$ \emph{without} Steiner nodes, then it is possible to remove the Steiner nodes from $\dh$ without significantly changing the distances between nodes. 
In the process of proving this, we establish several useful basic lemmas (especially Lemma~\ref{lem:steinernodeclose}) which show that an accurate additive metric must ``essentially'' match the topology of the ground truth tree. In the following theorem, $\Desteinerize$ refers to Algorithm~\ref{alg:desteinerize}.

\begin{theorem}[Restatement of Theorem~\ref{thm:desteinerize}]
There exists a constant $C > 0$ such that the following is true. 
Let $L \ge 2\epsilon \ge 0$ and suppose $(\Ts,\d)$ is an unknown tree metric on vertex set $\Vset$ with maximum edge length $L$.
Suppose $\dh$ is an additive metric on $\Vset$ with Steiner tree representation $\Th$ such that
\begin{equation} |\d(x,y) - \dh(x,y)| \le (\d(x,y)/L + 1)\epsilon \label{ineq:desteinerizeinputguarantee}\end{equation}
for all $x,y \in \Vset$. Then \textsc{Desteinerize} with $(\Th,\dh)$ as input runs in $O(|\Vset|^2 + |V(\Th)|)$ time and outputs a tree metric $(\T',\d')$ on $\Vset$ such that
\[ |\d(x,y) - \d'(x,y)| \le (\d(x,y)/L + 1)\epsilon + C\epsilon \]
for all $x,y \in \Vset$.
\end{theorem}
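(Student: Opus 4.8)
\textbf{Reducing the goal.} Since \eqref{ineq:desteinerizeinputguarantee} already controls $\dh$ in terms of $\d$, it suffices to construct a tree metric $\d'$ on $\Vset$ with $|\d'(x,y)-\dh(x,y)|\le C\epsilon$ \emph{uniformly} in $x,y\in\Vset$; then $|\d(x,y)-\d'(x,y)|\le(\d(x,y)/L+1)\epsilon+C\epsilon$ by the triangle inequality, as required. As a preprocessing step I would put $\Th$ in reduced form: repeatedly delete Steiner nodes of degree $1$ and suppress Steiner nodes of degree $2$, replacing the two incident edges by one edge of length equal to their sum. Neither operation changes $\dh$ on $\Vset$, both together take $O(|V(\Th)|)$ time, and afterwards every Steiner node of $\Th$ has degree $\ge 3$ and every leaf of $\Th$ lies in $\Vset$.

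\textbf{The structural lemma: Steiner nodes are $O(\epsilon)$-close to $\Vset$.} The crux is to establish a constant $C_0$ such that every Steiner node $w$ of the reduced tree has $\dh(w,v)\le C_0\epsilon$ for some $v\in\Vset$ (Lemma~\ref{lem:steinernodeclose}). I would prove this by picking three of the ($\ge 3$) subtrees hanging off $w$ and letting $x_1,x_2,x_3$ be the $\Vset$-vertex nearest $w$ (in $\dh$) inside each, ordered so $\dh(w,x_1)\le\dh(w,x_2)\le\dh(w,x_3)$ and with $x_1$ a global minimizer of $\dh(w,\cdot)$ over $\Vset$. Because $w$ lies on all three pairwise $\Th$-geodesics among $x_1,x_2,x_3$, we have $\dh(w,x_1)=\tfrac12\bigl(\dh(x_1,x_2)+\dh(x_1,x_3)-\dh(x_2,x_3)\bigr)$, i.e.\ the Gromov product of $x_2,x_3$ at $x_1$; the analogous combination of $\d$-distances equals $\d(x_1,m)$, where $m\in\Vset$ is the median of $x_1,x_2,x_3$ in the Steiner-free tree $\Ts$. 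The argument then splits on the position of $m$. If $m\in\{x_1,x_2,x_3\}$, one of the three $\d$-distances is the exact sum of the other two, and feeding this into \eqref{ineq:desteinerizeinputguarantee} together with the identities $\dh(x_i,x_j)=\dh(w,x_i)+\dh(w,x_j)$ forces $\dh(w,x_1)=O(\epsilon)$. If $m$ is an interior node, one must combine several instances of \eqref{ineq:desteinerizeinputguarantee} with the minimality of the $\dh(w,x_i)$ within their subtrees --- which pins each $\d(x_i,x_j)$ to within $O(\epsilon)$ of $\dh(w,x_i)+\dh(w,x_j)$, so that the $(\d(x_i,x_j)/L+1)\epsilon$ error budget in \eqref{ineq:desteinerizeinputguarantee} is itself $O(\epsilon)$ unless $w$ is already far from all of $\Vset$ (which cannot happen, since a far-away Steiner node is inconsistent with $\d$ being a Steiner-free tree metric of maximum edge length $L$) --- and again concludes $\dh(w,x_1)=O(\epsilon)$. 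This case analysis is the step I expect to take the most work.

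\textbf{Desteinerization and conclusion.} Now I would run \Desteinerize: assign to each Steiner node $w$ the vertex $r(w)\in\Vset$ minimizing $\dh(w,\cdot)$ (ties broken by index), and set $C(v)=\{v\}\cup\{w\text{ Steiner}:r(w)=v\}$ for $v\in\Vset$. The sets $C(v)$ partition $V(\Th)$, each contains exactly one $\Vset$-vertex, and each is a subtree of $\Th$: if $r(w)=v$ and $w'$ lies on the $\Th$-path from $w$ to $v$, then $\dh(w',u)\ge\dh(w,u)-\dh(w,w')\ge\dh(w,v)-\dh(w,w')=\dh(w',v)$ for all $u\in\Vset$, so $r(w')=v$. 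Contracting the vertex-disjoint subtrees $C(v)$ yields a tree $\T'$ on $\Vset$; let $\d'$ be its induced tree metric. Contraction only shrinks distances, so $\d'(x,y)\le\dh(x,y)$. For the matching lower bound, the structural lemma gives $C(v)\subseteq\{z:\dh(v,z)\le C_0\epsilon\}$, hence each $C(v)$ has $\dh$-diameter $\le 2C_0\epsilon$; upon contraction the $\Th$-geodesic between $x$ and $y$ loses only the disjoint pieces lying inside clusters, and a telescoping estimate using that each lost piece is a sub-path of a single radius-$C_0\epsilon$ subtree (and that consecutive clusters along the geodesic are separated by edges that survive contraction) bounds the total loss by $C_1\epsilon$, so $\dh(x,y)-\d'(x,y)\le C_1\epsilon$. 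Thus $|\d'(x,y)-\dh(x,y)|\le C_1\epsilon$ uniformly, which combined with \eqref{ineq:desteinerizeinputguarantee} proves the theorem with $C=C_1$. Finally, reducing $\Th$ and computing all $r(w)$ (two passes over $\Th$ maintaining the nearest $\Vset$-vertex inside and outside each rooted subtree) take $O(|V(\Th)|)$ time, and building $\T'$ and tabulating $\d'$ on $\Vset\times\Vset$ take $O(|\Vset|^2)$.
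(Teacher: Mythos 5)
Your overall architecture matches the paper's: reduce to a Steiner tree whose Steiner nodes all have degree at least $3$, prove every Steiner node is within $O(\eps)$ of $\Vset$ (the paper's Lemma~\ref{lem:steinernodeclose}), cluster each Steiner node to its nearest $\Vset$-vertex (the paper's map $f$), and contract. Your sketch of the structural lemma, while incomplete at exactly the point you flag, is completable along the paper's lines: the missing ingredient is that in each branch hanging off a Steiner node one can find a witness in $\Vset$ at $\dh$-distance at most $L+2\eps$, by walking along the corresponding $\Ts$-path (whose edges have length at most $L$) and taking the last vertex still in that branch; this makes the error budget $(\d(x_i,x_j)/L+1)\eps$ an honest $O(\eps)$, after which the Gromov-product comparison goes through as you describe.

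The genuine gap is in the final step. After contracting the clusters you take ``the induced tree metric,'' i.e.\ you keep the lengths of the surviving edges, and you claim the total loss along a geodesic is $C_1\eps$ because each lost piece has length at most the cluster diameter $2C_0\eps$. But the loss is the \emph{sum over all clusters traversed} of the entry-to-exit distance inside that cluster, and a geodesic can traverse $\Theta(|\Vset|)$ clusters each contributing $\Theta(\eps)$; nothing telescopes. Concretely, take $\d$ to be a path $v_1-\dots-v_n$ with edge lengths $2\eps$ plus a pendant $q_i$ at each $v_i$ of length $9\eps/4$ (so $L$ can be arbitrarily large), and let $\Th$ be a caterpillar with spine $a_1,b_1,a_2,b_2,\dots$ where $\dh(a_i,b_i)=\dh(b_i,a_{i+1})=\eps$, $v_i$ hangs off $a_i$ at distance $\eps/4$, and $q_i$ hangs off $b_i$ at distance $2\eps$. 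One checks $|\d-\dh|\le\eps\le(\d/L+1)\eps$ everywhere, every Steiner node has degree $3$, and $f^{-1}(v_i)=\{v_i,a_i,b_i\}$; the $\Th$-geodesic from $v_1$ to $v_n$ enters cluster $i$ at $a_i$ and exits at $b_i$, losing $\eps$ per cluster, so your $\d'(v_1,v_n)\approx\dh(v_1,v_n)-n\eps\approx\d(v_1,v_n)/2$, violating both your claimed uniform bound $|\d'-\dh|\le C_1\eps$ and the theorem's conclusion (for $L\gg n\eps$ the allowed error is $O(\eps)$). This is precisely why the paper does \emph{not} reuse the contracted edge lengths: Step~\ref{step:assignLengths} of \Desteinerize{} reassigns the length of each edge $(u,v)$ of $\T'$ so that $\d'(\rho,v)$ matches $\dh(\rho,v)$ up to $2\crad$ for a fixed root $\rho$ (Claim~\ref{claim:errortorootdesteinerize}), and then bounds arbitrary pairwise errors by $12\crad$ via the LCA identity of Claim~\ref{claim:crossing}. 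You would need to replace your length assignment with something of this root-anchored form (or otherwise prevent the per-cluster losses from accumulating) for the proof to close.
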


\begin{remark}
We contrast this theorem with the famous result of \cite{gupta2001steiner}, which showed that any additive metric can be approximated by a tree metric (i.e. an additive metric without steiner nodes) while distorting distances by only a multiplicative factor of $8$. Thus, \cite{gupta2001steiner} showed that if we are satisfied with a constant factor in the distance approximation, then we never need to use Steiner nodes in the tree. In contrast, Theorem~\ref{thm:desteinerize} makes a stronger assumption, that the input additive metric is close to some tree metric, and makes an essentially stronger conclusion: the output $O(\eps/L)$-additively approximates short distances and $(1+O(\eps/L))$ -multiplicatively approximates long distances. This stronger approximation guarantee is needed downstream to reconstruct the tree model in local total variation distance.
\end{remark}

\begin{algorithm}\SetAlgoLined\DontPrintSemicolon
    \KwIn{Additive metric $\dh$ on $\Vset$ with Steiner tree representation $\Th$}
    \KwOut{Tree metric $\d'$ on $\Vset$ with tree representation $\T'$}

    \BlankLine
    
    While there exists a Steiner node of degree $1$ or $2$ in $\Th$, delete this node; this can be done without changing $\dh$ on elements of $\Vset$ by either deleting or merging edges. 
    \label{step:destein0}
    
    \BlankLine
     \tcp{Define clustering of nodes}
      For every node $v$ in $\Th$ define $f(v)$ to be the closest non-Steiner node according to $\dh$, breaking ties in favor of the node with lowest graph distance and then among those ties in favor of the lowest index node. (In Claim~\ref{claim:f-subtree} below we show that for every $w \in \Vset$, $f^{-1}(w)$ induces a subtree in $\Th$.)
    
      \label{step:destein1} 
      
          \BlankLine
        \tcp{Construct new tree on non-Steiner nodes}
      Let $\T'$ have vertex set $\Vset$. For each $u,v\in \Vset$, include edge $(u,v)$ in $E(\T')$ if there is a $u'\in f\inv (u)$ and $v'\in f\inv (v)$ such that $(u',v')\in E(\Th)$.  \label{step:newTree}
      
      Fix an arbitrary root $\rho \in \Vset$ and for all neighbors $u$ of $\rho$ let $\d'(\rho,u) = \dh(\rho, u)$. 
      
      Assign lengths to the edges of $\T'$ in breadth-first order as follows. When assigning length to a given edge $(u,v)$, note that all edges on the path between $\rho$ and $u$ have already been assigned a length, so $\d'(\rho, u)$ is well-defined. Let $\d'(u,v) = \max(0, \dh(\rho,v) - \d'(\rho,u))$. (Observe that $\d'(\rho,v)\geq \dh(\rho,v)$.)
      \label{step:assignLengths}
      
      Return $(\T', \d')$
   
    \caption{$\Desteinerize(\Th,\dh)$}
\label{alg:desteinerize}
\end{algorithm}

We now prove correctness of $\Desteinerize$. In what follows, let $\cV$ denote the non-Steiner nodes of $\Th$, and let $\cX = V(\Th)$ denote the Steiner nodes. Let $\crad$ denote the maximum distance in $\dh$ between a Steiner node to the closest non-Steiner node:
$$\crad = \max_{x \in \cX \sm \Vset} \min_{v \in \Vset} \dh(x,v).$$
Equivalently, $\crad$ is the smallest radius such that for every Steiner node $x\in \cX \sm \Vset$, there exists a non-Steiner node $v \in \Vset$ with $\dh(x,v) \leq \crad$. Theorem~\ref{thm:desteinerize} immediately follows from the next two lemmas. Essentially, Lemma~\ref{lem:desteinerize-guarantee} implies that the desteinerization procedure distorts distances by at most $O(\crad)$. Lemma~\ref{lem:steinernodeclose} guarantees that, since $\dh$ is a good approximation to a tree metric $\d$ without Steiner nodes, every Steiner node has a non-Steiner node at distance $O(\eps)$:
\begin{lemma}\label{lem:desteinerize-guarantee}
\textsc{Desteinerize} with input $\dh$ outputs a tree metric $\d'$ such that $|\d'(x,y) - \d(x,y)| \leq 12\crad$ for all $x,y \in \Vset$. 
\end{lemma}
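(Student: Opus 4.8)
The plan is to track, for each pair $x,y \in \Vset$, how the quantities $\d'(\rho,x)$ and $\d'(x,y)$ produced by \textsc{Desteinerize} compare with $\dh(\rho,x)$ and $\d(x,y)$, using that $\dh$ is itself a tree metric (on $\Th$) so that paths and least common ancestors behave predictably. First I would establish the two structural claims that the algorithm implicitly relies on: (a) that for every $w \in \Vset$ the fiber $f^{-1}(w)$ induces a subtree of $\Th$ (this is Claim~\ref{claim:f-subtree}, referenced in the algorithm), and (b) that consequently the graph $\T'$ built in Step~\ref{step:newTree} is a tree on $\Vset$ — one gets $\T'$ by contracting each fiber $f^{-1}(w)$ to a point in $\Th$ and deleting Steiner nodes whose fiber is empty, and since $\Th$ has no Steiner nodes of degree $\le 2$ after Step~\ref{step:destein0}, every Steiner node lies on the path between two non-Steiner nodes, so its fiber is one of the fibers already present; contracting a partition of a tree into connected pieces yields a tree. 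The key quantitative fact driving everything is that for any node $v$ of $\Th$, $\dh(v, f(v)) \le \crad$: for a Steiner node this is the definition of $\crad$, and for a non-Steiner node $f(v)=v$ so the distance is $0$.

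Next I would prove the central distortion estimate: for every $x \in \Vset$,
\[
\dh(\rho,x) \;\le\; \d'(\rho,x) \;\le\; \dh(\rho,x) + 2\crad .
\]
The lower bound is the parenthetical observation in Step~\ref{step:assignLengths}, which follows by induction on BFS order: $\d'(\rho,v) = \max(\d'(\rho,u),\dh(\rho,v)) \ge \dh(\rho,v)$ where $u$ is the parent of $v$ in $\T'$. For the upper bound, induct on BFS depth in $\T'$. If $(u,v) \in E(\T')$ then by definition there are $u' \in f^{-1}(u)$, $v' \in f^{-1}(v)$ with $(u',v') \in E(\Th)$. Using the triangle inequality in $\dh$ and $\dh(u',u), \dh(v',v) \le \crad$, together with the fact that $u' $ is (essentially) on the path from $\rho$ to $v'$ in $\Th$ — which needs a short argument: because $f^{-1}(u)$ and $f^{-1}(v)$ are disjoint subtrees joined by the edge $(u',v')$, and $\rho$'s fiber is distinct from $v$'s unless $u$ lies between, so $\dh(\rho,v') = \dh(\rho,u') + \dh(u',v')$ up to which side $\rho$ falls on — one gets $\dh(\rho,v) \le \dh(\rho,u) + (\text{edge-ish quantity}) + 2\crad$. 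Combining with the definition $\d'(u,v) = \max(0,\dh(\rho,v) - \d'(\rho,u))$ and the inductive hypothesis $\d'(\rho,u) \le \dh(\rho,u) + 2\crad$ controls $\d'(\rho,v)$; the point is the $+2\crad$ slack does not compound because $\d'(\rho,v)$ is pinned directly to $\dh(\rho,v)$ rather than built additively from edge lengths.

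Finally I would convert the root-distance estimate into a pairwise estimate. For $x,y \in \Vset$ with $w = \LCA_\rho(x,y)$ in $\T'$, write $\d'(x,y) = \d'(\rho,x) + \d'(\rho,y) - 2\d'(\rho,w)$ and similarly, since $\dh$ restricted to $\Vset$ is an additive metric realized in $\Th$, $\d(x,y)$ relates to $\dh$-root distances through the $\Th$-LCA of $x$ and $y$; one checks that the $\T'$-LCA fiber $f^{-1}(w)$ contains (or is adjacent to) the $\Th$-LCA of $x,y$ up to $O(\crad)$, so $|\dh(\rho,w') - (\text{$\dh$-root distance of the true LCA})| = O(\crad)$. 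Plugging the three $\pm 2\crad$ bounds into the alternating sum gives $|\d'(x,y) - \dh(x,y)| \le O(\crad)$, and a careful bookkeeping of the constants yields exactly $|\d'(x,y) - \d(x,y)| \le 12\crad$ after absorbing the (already small) gap between $\dh$ and $\d$ — or, more cleanly, the lemma can be stated and proved purely in terms of $\dh$ versus $\d'$ and then $\d$ is never needed here, with the $\dh$-vs-$\d$ comparison deferred to where Lemma~\ref{lem:steinernodeclose} is applied; I would check which phrasing the rest of the section wants. The main obstacle is the LCA-tracking step: making precise that contracting fibers in $\Th$ sends the $\Th$-LCA of $x,y$ to within $O(\crad)$ of the $\T'$-LCA, and that the BFS length-assignment in Step~\ref{step:assignLengths} — which can zero out edges whose "natural" length would be negative — does not introduce distortion beyond the $O(\crad)$ budget. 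Everything else is triangle inequalities and a two-line induction.
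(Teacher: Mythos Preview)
Your overall architecture matches the paper's proof: show fibers $f^{-1}(w)$ are subtrees (Claim~\ref{claim:f-subtree}), conclude $\T'$ is a tree by contraction, prove a two-sided root-distance estimate $\dh(\rho,x)\le \d'(\rho,x)\le \dh(\rho,x)+2\crad$ (Claim~\ref{claim:errortorootdesteinerize}), and then combine via the crossing identity $\d(\LCA,\rho)=\tfrac12(\d(u,\rho)+\d(v,\rho)-\d(u,v))$ with the observation that the $\Th$-LCA and the $\T'$-LCA live in the same fiber. Your remark that the lemma is really about $\dh$ versus $\d'$ is exactly right: the paper's proof establishes $|\dh(u,v)-\d'(u,v)|\le 12\crad$ and never touches $\d$ in this lemma.

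There is, however, a genuine gap in your root-distance upper bound. Your proposed induction derives $\dh(\rho,v)\le \dh(\rho,u)+(\text{edge term})+2\crad$, but this is the wrong direction: to show $\d'(\rho,v)\le \dh(\rho,v)+2\crad$ in the case $\d'(\rho,v)=\d'(\rho,u)$, you need a \emph{lower} bound on $\dh(\rho,v)$ in terms of $\dh(\rho,u)$, not an upper bound. And even with the correct single-edge inequality $\dh(\rho,u)\le \dh(\rho,v)+2\crad$, combining it with the inductive hypothesis $\d'(\rho,u)\le \dh(\rho,u)+2\crad$ yields $\d'(\rho,v)\le \dh(\rho,v)+4\crad$, i.e.\ the slack \emph{does} compound edge-by-edge. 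Your ``pinning'' intuition is correct but is not what the induction you wrote actually exploits.

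The paper's fix is to globalize rather than induct: observe $\d'(\rho,v)=\max(\d'(\rho,u),\dh(\rho,v))$, so walk back along the $\T'$-path $\rho=a_0,\dots,a_t=v$ to the \emph{last} ancestor $a_{i-1}$ with $\d'(\rho,a_{i-1})=\dh(\rho,a_{i-1})$ exactly; then all subsequent edge lengths are zero and $\d'(\rho,v)=\dh(\rho,a_{i-1})$. Now apply the $\Th$-path argument \emph{once} over the whole chain: the corresponding $\Th$-nodes $\bout_{i-1},\bin_i,\dots,\bin_t$ lie on a single $\Th$-path from $\rho$, so $\dh(\rho,\bout_{i-1})\le \dh(\rho,\bin_t)$, and the $\crad$ loss is paid only at the two endpoints $a_{i-1}\leftrightarrow \bout_{i-1}$ and $a_t\leftrightarrow \bin_t$, giving $\dh(\rho,a_{i-1})\le \dh(\rho,a_t)+2\crad$. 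This is what makes the bound non-compounding.
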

\begin{lemma}\label{lem:steinernodeclose}
Let $\dh$ be an additive metric that approximates a tree metric $\d$ (without Steiner nodes) as in the setting of Theorem~\ref{thm:desteinerize}. Then, after Step \ref{step:destein0} of $\Desteinerize$, $\crad \leq 30\eps$.
\end{lemma}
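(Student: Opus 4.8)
The plan is to prove that \emph{every} Steiner node of $\Th$ lies within $30\eps$ of some non-Steiner node; taking the maximum over Steiner nodes then bounds $\crad$. After Step~\ref{step:destein0}, every Steiner node of $\Th$ has degree at least $3$, and consequently every leaf of $\Th$ is a non-Steiner node (a Steiner leaf would have degree $1$). Hence, for a fixed Steiner node $x$, each of the $\geq 3$ connected components of $\Th \sm \{x\}$ contains at least one non-Steiner node. Pick non-Steiner nodes $a_0,b_0,c_0$ in three distinct components; then $x$ is the median (Steiner point) of $\{a_0,b_0,c_0\}$ in $\Th$, i.e.\ $x$ lies on all three pairwise paths $\pih(a_0,b_0),\pih(a_0,c_0),\pih(b_0,c_0)$. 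Let $m$ be the median of $\{a_0,b_0,c_0\}$ in the ground-truth tree $\Ts$. Since $\Ts$ has vertex set $\Vset$ and pairwise paths in a tree branch only at vertices, $m$ is a vertex of $\Ts$, hence $m \in \Vset$ is a \emph{non-Steiner} node. It therefore suffices to show $\dh(x,m) \leq 30\eps$.

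The first step is to replace $a_0,b_0,c_0$ by representatives within distance $O(L)$ of $m$, so that the approximation hypothesis \eqref{ineq:desteinerizeinputguarantee} gives $O(\eps)$ control (rather than $O(\eps\cdot\mathrm{diam}/L)$) on the relevant distances. Concretely, for each of the three branches of $\Ts$ at $m$ (those containing $a_0$, $b_0$, $c_0$), let $a$ (resp.\ $b$, $c$) be a non-Steiner node in that branch with $\d(m,\cdot)\leq L$ --- e.g.\ the $\Ts$-neighbor of $m$ along the path toward $a_0$, or $m$ itself if $m=a_0$ for that branch. These choices keep $m$ as the $\Ts$-median of $\{a,b,c\}$, they are pairwise at $\d$-distance at most $2L$, and --- this is the technical point, argued from \eqref{ineq:desteinerizeinputguarantee} --- they remain in three distinct components of $\Th\sm\{x\}$, so that $x$ is also the $\Th$-median of $\{a,b,c\}$. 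Since every pair among $\{a,b,c,m\}$ is at $\d$-distance $\leq 2L$, hypothesis \eqref{ineq:desteinerizeinputguarantee} gives $|\d(u,v)-\dh(u,v)|\leq 3\eps$ for all such pairs.

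Now, $m$ lies on the $\Ts$-paths $\pis(a,b),\pis(a,c),\pis(b,c)$, so $\d(a,m)+\d(m,b)=\d(a,b)$ and similarly for the other two pairs; combining with the $3\eps$ bounds yields $|\dh(a,m)+\dh(m,b)-\dh(a,b)|\leq 9\eps$, and likewise for $(a,c)$ and $(b,c)$. In the tree $\Th$, the quantity $\tfrac12(\dh(a,m)+\dh(m,b)-\dh(a,b))$ equals the distance from $m$ to the path $\pih(a,b)$, so $m$ is within $\tfrac92\eps$ of each of $\pih(a,b),\pih(a,c),\pih(b,c)$. These three paths form a tripod whose center is exactly $x=\mathrm{med}_{\Th}(a,b,c)$, and a point $\delta$-close to all three legs of a tripod in a tree is within $3\delta$ of its center: taking nearest points $p,q$ on two legs that lie on different arms through $x$, the path $\pih(p,q)$ passes through $x$, so $\dh(x,m)\leq \dh(m,p)+\dh(p,x)\leq \dh(m,p)+\dh(p,q)\leq 2\dh(m,p)+\dh(m,q)$, and if $p,q$ happen to lie on the same arm one uses the third leg instead. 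This gives $\dh(x,m)\leq \tfrac{27}{2}\eps$; tracking the constants (using $L\geq 2\eps$ where needed) one obtains $\dh(x,m)\leq 30\eps$. Taking the maximum over all Steiner nodes $x$ completes the proof.

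The main obstacle is the claim in the second paragraph that the refined representatives $a,b,c$ still lie in three distinct components of $\Th\sm\{x\}$, equivalently that $x$ remains their $\Th$-median: this is precisely what converts ``$m$ is close to all three legs of the tripod'' into ``$m$ is close to $x$,'' and it requires ruling out that the approximation \eqref{ineq:desteinerizeinputguarantee} moves a node lying in one branch of $\Ts$ at $m$ into the wrong component of $\Th$ at $x$. A clean way to handle it is to show first, directly from \eqref{ineq:desteinerizeinputguarantee}, that any non-Steiner node $w$ with $\d(m,w)\leq L$ satisfies $\dh(a_0,w)+\dh(w,b_0)=\dh(a_0,b_0)\pm O(\eps)$ and $\dh(a_0,w)+\dh(w,c_0)=\dh(a_0,c_0)\pm O(\eps)$ (using that $w$ lies on the corresponding $\Ts$-paths up to the appropriate branch), which forces $w$ onto the $\Th$-path between the same pair of components, and then feed this into the tripod argument. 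The remaining ingredients --- $m$ being a genuine vertex of $\Ts$, the tripod geometry, and the constant bookkeeping --- are routine.
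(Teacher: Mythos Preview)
Your proposal has a real gap at the step you flag as the ``main obstacle.'' The claim that for $w$ with $\d(m,w)\le L$ one has $\dh(a_0,w)+\dh(w,b_0)=\dh(a_0,b_0)\pm O(\eps)$ is not justified: hypothesis \eqref{ineq:desteinerizeinputguarantee} only controls $|\dh(u,v)-\d(u,v)|$ up to $(\d(u,v)/L+1)\eps$, so summing the three errors gives a bound of order $\eps\cdot\d(a_0,b_0)/L$, not $O(\eps)$. Since $a_0,b_0,c_0$ were chosen as \emph{arbitrary} non-Steiner nodes in the three components of $\Th\setminus\{x\}$, their pairwise $\d$-distances can be as large as the diameter, and this multiplicative blow-up swamps the $O(L)$ scale at which you are trying to localize. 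Consequently you cannot conclude that the refined $a,b,c$ remain in three distinct components of $\Th\setminus\{x\}$, and without that the tripod argument does not pin down $x$. (A related issue: the $\Ts$-median $m$ of $a_0,b_0,c_0$ need not be close to $x$ a priori --- by the crossing identity the discrepancy is again $O(\eps D/L)$ --- so there is a circularity in trying to use closeness of $m$ to $x$ to place $a,b,c$ correctly.)

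The paper avoids this by never picking far-away representatives in the first place. Fixing a root $\rho\in\Vset$, for a Steiner node $v$ with children $w_1,w_2$ it walks along the $\T$-path from an arbitrary non-Steiner $\Th$-descendant of $w_i$ toward $\rho$; consecutive vertices on this path are at $\d$-distance $\le L$, hence $\dh$-distance $\le L+2\eps$, and the last one still in the $\Th$-subtree below $w_i$ is therefore within $L+2\eps$ of $v$ in $\dh$. This yields $x_1,x_2$ and, by the same trick, an ancestor $w\in\Vset$, each with $\dh(\cdot,v)\le L+2\eps$. Now all pairwise distances among $\{x_1,x_2,w\}$ are $O(L)$, so \eqref{ineq:desteinerizeinputguarantee} gives genuine $O(\eps)$ control; comparing the crossing identities (Claim~\ref{claim:crossing}) for $v=\LCAi{\Th}{w}{x_1,x_2}$ and $v':=\LCAi{\T}{w}{x_1,x_2}\in\Vset$ and then applying Claim~\ref{claim:unique-path} yields $\dh(v,v')\le 30\eps$. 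Your tripod endgame is fine; the missing idea is this direct construction of non-Steiner representatives that are already $\dh$-close to the Steiner node, which is exactly what makes the errors $O(\eps)$ rather than $O(\eps D/L)$.
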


\subsection{Proof of Lemma~\ref{lem:desteinerize-guarantee}: distances changed by $O(\crad)$}
Lemma~\ref{lem:desteinerize-guarantee} states that $\Desteinerize$ outputs a tree metric that distorts distances by at most $O(\crad)$. To prove it, we first show that the graph $\T'$ constructed by $\Desteinerize$ is indeed a tree.

\begin{claim}\label{claim:f-subtree}
In Step \ref{step:destein1} of \textsc{Desteinerize}, for any input additive metric $\dh$ and any $w \in \Vset$, the subgraph of $\Th$ induced by $f^{-1}(w)$, $\Th[f^{-1}(w)]$, is a subtree.
\end{claim}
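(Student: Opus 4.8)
The plan is to reduce the claim to a connectivity statement and then prove that statement by a ``one-step'' propagation argument. Since $\Th$ is a tree, any nonempty connected induced subgraph of $\Th$ is itself a subtree, so it suffices to show that $f^{-1}(w)$ is nonempty and connected in $\Th$. Nonemptiness is immediate: $\dh(w,w)=0$ and the graph distance from $w$ to itself is $0$, so in the stated tie-breaking order $w$ beats every other non-Steiner node as a candidate for $f(w)$, whence $f(w)=w$ and $w\in f^{-1}(w)$. For connectivity I would establish the following one-step statement: if $f(v)=w$ with $v\neq w$ and $u'$ is the neighbor of $v$ on the path from $v$ to $w$ in $\Th$, then $f(u')=w$. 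Iterating this along the path from $v$ to $w$ shows $f(z)=w$ for every vertex $z$ on that path; hence for any $v_1,v_2\in f^{-1}(w)$ the entire paths from $v_1$ to $w$ and from $v_2$ to $w$ lie inside $f^{-1}(w)$, so $v_1$ and $v_2$ lie in the same connected component of $\Th[f^{-1}(w)]$, which gives connectivity.

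To prove the one-step statement, for a vertex $x$ of $\Th$ and a non-Steiner node $y$ let $g_x(y)$ denote the triple $(\dh(x,y),\, d_{\Th}(x,y),\, \text{index of }y)$, where $d_{\Th}$ is the graph distance (number of edges) in $\Th$, compared in lexicographic order; then $f(x)$ is the non-Steiner node minimizing $g_x(\cdot)$, and both $\dh$ and $d_{\Th}$ are additive along paths of the tree $\Th$, with $d_{\Th}(a,b)\ge 1$ whenever $a\neq b$. Fix an arbitrary non-Steiner node $w'\neq w$; it suffices to show $g_{u'}(w)\le g_{u'}(w')$ (lexicographically), since then $w$ beats every competitor at $u'$, so $f(u')=w$. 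I would split into two cases according to whether $u'$ lies on the path from $v$ to $w'$ in $\Th$.

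In the first case $u'$ lies on the path from $v$ to $w'$; then both this path and the path from $v$ to $w$ start with the edge $(v,u')$, so $\dh(v,w)=\dh(v,u')+\dh(u',w)$, $\dh(v,w')=\dh(v,u')+\dh(u',w')$, and likewise for $d_{\Th}$. Thus $g_v(\cdot)$ is obtained from $g_{u'}(\cdot)$ by adding the fixed amount $(\dh(v,u'),\,d_{\Th}(v,u'),\,0)$ to the first two coordinates, which preserves the lexicographic comparison; hence $g_v(w)\le g_v(w')$, which holds since $f(v)=w$, is equivalent to $g_{u'}(w)\le g_{u'}(w')$. In the second case $u'$ does not lie on the path from $v$ to $w'$, so the path from $u'$ to $w'$ passes through $v$ and decomposes as the edge $(u',v)$ followed by the path from $v$ to $w'$; combining $\dh(u',w')=\dh(u',v)+\dh(v,w')$ with $\dh(u',w)=\dh(v,w)-\dh(v,u')$ yields
\[ \dh(u',w') - \dh(u',w) = 2\,\dh(u',v) + \bigl(\dh(v,w')-\dh(v,w)\bigr) \ge 0, \]
the inequality using $\dh(v,w)\le\dh(v,w')$ from $g_v(w)\le g_v(w')$. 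If this difference is positive we are done; if it is zero then $\dh(v,w')=\dh(v,w)$, so $f(v)=w$ forces $d_{\Th}(v,w)\le d_{\Th}(v,w')$, and the analogous identity for $d_{\Th}$ gives $d_{\Th}(u',w')-d_{\Th}(u',w) = 2\,d_{\Th}(u',v) + \bigl(d_{\Th}(v,w')-d_{\Th}(v,w)\bigr) \ge 2 > 0$, so again $g_{u'}(w)< g_{u'}(w')$. This establishes the one-step statement, and hence the claim.

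The only genuinely delicate point, and the reason the argument must carry the graph-distance coordinate through the whole comparison, is that $\dh$ is merely an additive \emph{pseudo}metric: edges of $\Th$ may have length $0$ and distinct nodes may be at $\dh$-distance $0$, so the $\dh$-coordinate alone need not break ties. In the second case this is exactly where the strict bound $d_{\Th}(v,u')\ge 1$ is used to force $w$ to win the tie at $u'$. Everything else is routine bookkeeping with additivity of $\dh$ and $d_{\Th}$ along paths in $\Th$, so I expect no further obstacle.
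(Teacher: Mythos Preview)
Your proof is correct and follows essentially the same approach as the paper: both reduce to showing that every vertex on the $\Th$-path from $v\in f^{-1}(w)$ to $w$ lies in $f^{-1}(w)$, and both exploit additivity of $\dh$ and the graph distance along tree paths together with the tie-breaking rule. Your presentation differs only cosmetically—you phrase it as a forward one-step propagation with an explicit case split on whether $u'$ lies on the $v$–$w'$ path, whereas the paper argues by contradiction from the first node on the path with $f(a_i)\ne w$—but the underlying mechanism is identical.
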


\begin{proof} 
$\Th[f^{-1}(w)]$ is nonempty because $w \in f^{-1}(w)$. So to show that $\Th[f^{-1}(w)]$ is a subtree it suffices to show that this subgraph is connected. To this end, for each node $u\in f^{-1}(w)$, it suffices to show that all nodes on the path $\pih(u,w)\subseteq \Th$ between $u$ and $w$ are in $f^{-1}(w)$. Denote by $a_1=u,a_2,\dots, a_t=w$ the path $\pih(u,w)$ and let $i = \min\{j: f(a_j)\neq w\}$ be the first node that is \emph{not} mapped to $w$. Write $w' = f(a_i)$. Then by the triangle inequality, 
$$
\dh(a_{i-1},w')\leq \dh(a_i,w') + \dh(a_{i-1},a_i) \stackrel{(\star)}\leq \dh(a_i,w) + \dh(a_{i-1},a_i) = \dh(a_{i-1},w)\,,
$$ 
where the inequality $(\star)$ follows by the definition of $f$, and the last equality step uses that $a_{i-1},a_i, \dots, a_t=w$ form a path in $\Th$. If $(\star)$ is strict, then $\dh(a_{i-1},w')< \dh(a_{i-1},w)$ and this contradicts the definition of $i$ as being the first in the path that is not mapped to $w$. Otherwise, if $\dh(a_{i-1},w')= \dh(a_{i-1},w)$, then $(\star)$ is an equality. However, by the definition of $f$, this means that $w'$ is
at least as close as $w$ to $a_i$ in graph distance. If $w'$ is strictly closer than $w$ to $a_i$ in graph distance, then $w'$ is also strictly closer than $w$ to $a_i$ in graph distance, meaning that $f(a_{i-1}) = w'$, a contradiction. Otherwise, if $w'$ is as close as $w$ to $a_i$ in graph distance, then $w' < w$ because $f(a_i) = w'$, and moreover $w'$ is at least as close as $w$ to $a_{i-1}$ in graph distance, so $f(a_{i-1}) = w'$, a contradiction.
\end{proof}

\begin{claim}\label{claim:graphIsTree}
The graph $\T'$ constructed in Step~\ref{step:newTree} is a tree. 
\end{claim}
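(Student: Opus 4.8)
The plan is to recognize that $\T'$ is nothing but the graph obtained from $\Th$ by contracting, for each $w \in \Vset$, the vertex set $f^{-1}(w)$ to the single vertex $w$, and then to invoke the elementary combinatorial fact that contracting a partition of a tree's vertices into connected blocks produces a tree. The sets $\{f^{-1}(w)\}_{w \in \Vset}$ do partition $V(\Th)$ (the map $f$ is defined on every node of $\Th$ and lands in $\Vset$), each block is nonempty since $w \in f^{-1}(w)$, and by Claim~\ref{claim:f-subtree} each block induces a subtree $\Th[f^{-1}(w)]$ of $\Th$. By the edge rule in Step~\ref{step:newTree}, the edge set of $\T'$ is exactly the set of pairs $\{u,v\}$ joined by some edge of $\Th$ that crosses between two distinct blocks, i.e.\ $\T'$ is the simple contraction of $\Th$ along this partition.

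First I would verify that $\T'$ is connected. This is immediate: $\Th$ is connected and contracting vertex subsets of a connected graph preserves connectivity, so $\T'$ is connected. (This step does not even use that the blocks are subtrees.)

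Then I would count edges to see that $\T'$ is acyclic. Partition $E(\Th)$ into \emph{internal} edges (both endpoints in the same block) and \emph{crossing} edges (endpoints in two distinct blocks). Since $\Th[f^{-1}(w)]$ is a tree on $|f^{-1}(w)|$ vertices, it contributes $|f^{-1}(w)| - 1$ internal edges, so the number of internal edges is $\sum_{w \in \Vset} \bigl(|f^{-1}(w)| - 1\bigr) = |V(\Th)| - |\Vset|$; since $\Th$ has $|V(\Th)| - 1$ edges in total, there are exactly $|\Vset| - 1$ crossing edges. Next I would argue that no two crossing edges can join the same unordered pair of blocks $\{f^{-1}(u), f^{-1}(v)\}$: two such edges, together with a path inside $\Th[f^{-1}(u)]$ and a path inside $\Th[f^{-1}(v)]$ linking their respective endpoints, would close up into a cycle in $\Th$, contradicting that $\Th$ is a tree. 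Hence the number of edges of the simple graph $\T'$ equals the number of crossing edges, namely $|\Vset| - 1$. A connected graph on $|\Vset|$ vertices with $|\Vset| - 1$ edges is a tree, which finishes the proof.

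I do not anticipate a genuine obstacle here; the argument is a standard fact about tree contractions. The only two points needing a little care are (i) that each block $f^{-1}(w)$ is nonempty and connected, which is precisely the already-established Claim~\ref{claim:f-subtree}, and (ii) that distinct crossing edges fall into distinct pairs of blocks, which is exactly where acyclicity of $\Th$ enters. An alternative route would be to lift a hypothetical cycle of $\T'$ to a closed walk in $\Th$ and derive a contradiction directly, or to induct on $|\Vset|$ by peeling off a ``leaf block''; I would prefer the edge-count argument as the most economical.
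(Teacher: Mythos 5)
Your proposal is correct and takes essentially the same approach as the paper: identify $\T'$ as the contraction of $\Th$ along the partition into the connected blocks $f^{-1}(w)$ (using Claim~\ref{claim:f-subtree}), note that contraction preserves connectivity, and rule out cycles using acyclicity of $\Th$. The only cosmetic difference is that you certify acyclicity by an edge count ($|\Vset|-1$ crossing edges, no two joining the same pair of blocks) whereas the paper lifts a hypothetical cycle back to $\Th$ via sequential contractions; both are valid.
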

\begin{proof}
As shown in the previous claim, the sets $f\inv(w)$ for $w\in \Vset$ partition the vertex set of $\Th$ into subtrees $\Th[f\inv(w)]$. The graph $\T'$ is obtained from $\Th$ by contracting the sets $\Th[f\inv(w)]$. In general, contraction of subsets of vertices maintains connectivity. Moreover, considering the contraction of sets performed sequentially, a first such contraction of $\Th[f\inv(w)]$ leading to a cycle would necessarily imply existence of a cycle before contraction due to connectivity of $\Th[f\inv(w)]$, contradicting that $\Th$ is a tree.
\end{proof}

Now that we know by that $\T'$ is a tree and so $\d'$ is a tree metric, it remains to bound the error. We next show that all distances to $\rho$ are accurately approximated. 

\begin{claim}\label{claim:errortorootdesteinerize}
After completing Step~\ref{step:assignLengths}, for all $u\in \Vset$ we have $\d'(\rho,u) - 2\crad\leq  \dh(\rho,u)\leq \d'(\rho,u)$.
\end{claim}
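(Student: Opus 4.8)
\textbf{Proof plan for Claim~\ref{claim:errortorootdesteinerize}.}
The plan is to prove the claim by induction on the depth of $u$ in the tree $\T'$, following the breadth-first order in which edge lengths are assigned in Step~\ref{step:assignLengths}. The key observation driving everything is the recursive definition $\d'(u,v) = \max(0, \dh(\rho,v) - \d'(\rho,u))$ for an edge $(u,v)$ with $u$ the parent of $v$; this immediately gives $\d'(\rho,v) = \d'(\rho,u) + \d'(u,v) = \max(\d'(\rho,u), \dh(\rho,v))$, which yields the upper bound $\dh(\rho,v) \le \d'(\rho,v)$ essentially for free (and is exactly the parenthetical remark in Step~\ref{step:assignLengths}). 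So the entire content of the claim is the lower bound $\d'(\rho,u) - 2\crad \le \dh(\rho,u)$, i.e.\ that $\d'(\rho,\cdot)$ does not overshoot $\dh(\rho,\cdot)$ by more than $2\crad$.

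For the inductive step, suppose $v$ is a child of $u$ in $\T'$ and the bound $\d'(\rho,u) \le \dh(\rho,u) + 2\crad$ holds. From the recursion, $\d'(\rho,v) = \max(\d'(\rho,u), \dh(\rho,v))$. In the case $\d'(\rho,v) = \dh(\rho,v)$ the lower bound is trivial ($0 \le 2\crad$). In the case $\d'(\rho,v) = \d'(\rho,u)$, I need $\d'(\rho,u) \le \dh(\rho,v) + 2\crad$. Here is where the structure of $\T'$ enters: the edge $(u,v)$ exists in $\T'$ only because there are nodes $u' \in f^{-1}(u)$ and $v' \in f^{-1}(v)$ with $(u',v') \in E(\Th)$. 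By Claim~\ref{claim:f-subtree}, $f^{-1}(u)$ and $f^{-1}(v)$ are subtrees of $\Th$, and $\dh(u', u) \le \crad$ and $\dh(v', v) \le \crad$ by the definition of $f$ and $\crad$ (every node, in particular $u'$, is within $\crad$ of its image $f(u') = u$; same for $v'$). Using the triangle inequality in the additive metric $\dh$ together with the fact that $u$ is an ancestor of $v$ (so the $\Th$-path from $\rho$ to $v$ passes ``through'' the cluster $f^{-1}(u)$ in the appropriate sense), one gets $\dh(\rho, u) \le \dh(\rho, v) + 2\crad$ or a comparable estimate; combined with the inductive hypothesis $\d'(\rho,u) \le \dh(\rho,u) + 2\crad$ this would actually give $4\crad$, so the bookkeeping needs to be done more carefully — the right move is to not carry the hypothesis through but instead directly bound $\d'(\rho,v)$ against $\dh(\rho,v)$ using that $\d'(\rho,v) = \dh(\rho, v^{*})$ for the particular ancestor $v^*$ of $v$ (in $\T'$) at which the running max was last achieved, and then $\dh(\rho,v^*) \le \dh(\rho,v) + 2\crad$ because $v^*$ lies on the $\T'$-path from $\rho$ to $v$ and hence (via the contraction structure of $\T'$ from $\Th$) the corresponding clusters lie along a $\Th$-path, so $v$ has a representative within $2\crad$ of a representative of $v^*$ along that path.

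The main obstacle I anticipate is making precise the geometric statement ``$\dh(\rho, v^*) \le \dh(\rho, v) + 2\crad$ when $v^*$ is a $\T'$-ancestor of $v$.'' This requires relating paths in $\T'$ to paths in $\Th$: since $\T'$ is obtained from $\Th$ by contracting the subtrees $f^{-1}(w)$ (Claim~\ref{claim:graphIsTree}), an edge of $\T'$ from $w_1$ to $w_2$ corresponds to a genuine $\Th$-edge between some $w_1' \in f^{-1}(w_1)$ and $w_2' \in f^{-1}(w_2)$, and each such representative is within $\crad$ of its cluster center. Walking along the $\T'$-path from $v^*$ to $v$ and patching together the corresponding $\Th$-subpaths, the total ``slack'' introduced at each cluster boundary is at most $2\crad$ — but this bound should hold \emph{per endpoint of the whole walk}, not per intermediate cluster, because within each cluster we travel along genuine $\Th$-edges. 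Carefully, one shows $\dh(\rho,v) \ge \dh(\rho, v') - \crad$ where $v' \in f^{-1}(v)$ is the chosen representative, $\dh(\rho, v') \ge \dh(\rho,u') - $ (nonneg, since $\Th$ is a tree and $u'$ is on the $\rho$-to-$v'$ path after the contraction structure is respected), and similarly peeling back to $v^*$; the two $\crad$ losses at the two ends of the relevant sub-walk give the $2\crad$. This is the step to get right; the rest is a clean induction.
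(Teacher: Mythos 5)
Your proposal is correct and follows essentially the same route as the paper's proof: the paper takes the $\T'$-path $\rho=a_0,\dots,a_t=u$, identifies the earliest index $i$ from which all subsequent vertices overshoot (your ancestor $v^*=a_{i-1}$ at which the running max was last achieved, after which all $\d'$ edge lengths are zero), and then uses exactly the monotonicity of $\dh(\rho,\cdot)$ along the $\Th$-path through the cluster representatives $\bout_{i-1},\bin_i,\dots,\bin_t$, paying $\crad$ only at the two endpoints. Your self-correction away from the naive induction (which would give $4\crad$) lands you precisely on the paper's argument.
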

\begin{proof} The bound $\dh(\rho,u)\leq \d'(\rho,u)$ is a consequence of the edge length assignment in step~\ref{step:assignLengths}, chosen specifically to ensure this inequality, so we proceed with proving the other inequality.

Consider an arbitrary node $u\in \Vset$ and denote by $a_0=\rho,\dots, a_t=u$ the path in $\T'$ between $\rho$ and $u$. If $\dh(\rho,u)\geq \d'(\rho,u)$ then there is nothing to prove, so assume that $\dh(\rho,u)< \d'(\rho,u)$.
Let $$i=\min\{i\in [t]: \dh(\rho,a_{j}) < \d'(\rho,a_j)\text{ for all } i\leq j\leq t \}$$
be the earliest index such that $a_i, a_{i+1},\dots, a_t$ are all too far from $\rho$ in metric $\d'$ as compared to $\dh$.
Note that $i\geq 2$ since $\dh(\rho,a_1) = \d'(\rho,a_1)$. Moreover, step~\ref{step:assignLengths} of $\Desteinerize$ assigns
$\d'(a_{j-1},a_{j})=0$ for $j$ in the range $i\leq j\leq t$, so $\d'(\rho,a_{i-1})=\d'(\rho,a_{i})=\cdots =\d'(\rho,a_t) $.

By step~\ref{step:newTree} of $\Desteinerize$, for each $j$, $i\leq j\leq t$, there is an edge $(a_{j-1},a_{j})$ only if the edge $(\bout_{j-1},\bin_{j})$ is in $\Th$ for some $\bout_{j-1}\in f\inv (a_{j-1}),\bin_{j} \in f\inv (a_{j})$, which implies that the path in $\Th$ from $\rho$ to $\bin_t$ passes through the sequence $\bout_{i-1}, \bin_{i},\bout_{i},\dots,\bout_{t-1},\bin_t$, whence $ \dh(\rho,\bout_{i-1})\leq \cdots\leq \dh(\rho,\bin_t)$. 

Finally, the definition of $\crad$ implies that $\dh(v,w) \leq \crad$ for any node $v \in f^{-1}(w)$, it follows that
\begin{align*}
  \dh(\rho,a_t)\geq \dh(\rho,\bin_t) - \crad \geq \dh(\rho,\bout_{i-1}) - \crad  & \geq  \d'(\rho,a_{i-1}) - 2\crad  \\&= \d'(\rho,a_{t}) - 2\crad\,,  
\end{align*}
which proves the claim. 
\end{proof}

To prove that the distance between any pair of nodes $u,v \in \Vset$ is well-approximated, we need to introduce some terminology.

\begin{definition}[Lowest common ancestor (LCA)]
Let $\T$ be a tree, $S$ be a subset of the vertices, and $\rho$ a (root) vertex. We define
$\LCAi \T{\rho}S$ to be the \emph{lowest common ancestor} of $S$ with respect to $\rho$.
More explicitly, it is the furthest vertex from $\rho$ in the intersection of all paths
from $u$ to $\rho$ for $u \in S$.
If $S = \{y,z\}$ is a set of size 2, we write $\LCAi\T\rho{\{y,z\}} = \LCAi\T\rho{y,z}$ as shorthand.
\end{definition}

The following crucial claim shows that if we know the distances between three nodes $u,v$, and $\rho$, then the distance of all of these points to their ``midpoint'' $\LCAi\T\rho{u,v} = \LCAi\T u{\rho,v} = \LCAi\T v{\rho,u}$ is also determined. This allows us to ``triangulate'' the location of nodes in the tree.
\begin{claim}\label{claim:crossing}
Let $\d$ be a metric on tree $\T$ rooted at node $\rho$. Then for any $u,v\in V(\T)$,
\[ \d(\LCAi\T\rho{u,v}, \rho) = \frac{\d(u,\rho) + \d(v,\rho) -\d(u,v)}{2}\,. \]
\end{claim}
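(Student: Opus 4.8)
The plan is to prove the identity by letting $w = \LCAi\T\rho{u,v}$ and expressing each of the three pairwise distances $\d(u,\rho)$, $\d(v,\rho)$, and $\d(u,v)$ in terms of the three ``leg'' distances $\d(u,w)$, $\d(v,w)$, and $\d(w,\rho)$, using the fact that $\d$ is a tree metric (Definition~\ref{def:treeMetric}). The key structural observation is that $w$ lies on the path $\pi(u,\rho)$ and on the path $\pi(v,\rho)$ (by definition of the lowest common ancestor, $w$ is the furthest vertex from $\rho$ on the intersection of these two paths), and moreover $w$ lies on the path $\pi(u,v)$. This last point is the crux and deserves care: I would argue that the path from $u$ to $v$ in the tree must pass through $w$ because the path $\pi(u,v)$ is contained in $\pi(u,\rho) \cup \pi(v,\rho)$ (the union of the two root-paths, with the common prefix from $\rho$ to $w$ traversed and then back, canceling) — equivalently, removing $w$ disconnects $u$ from $v$ since $u$ and $v$ lie in different subtrees hanging off the path from $\rho$ through $w$ (or one of them equals $w$, a trivial case).

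Once we know $w$ lies on all three paths, the tree-metric additivity gives three equations:
\begin{align*}
\d(u,\rho) &= \d(u,w) + \d(w,\rho), \\
\d(v,\rho) &= \d(v,w) + \d(w,\rho), \\
\d(u,v) &= \d(u,w) + \d(w,v).
\end{align*}
Adding the first two and subtracting the third yields $\d(u,\rho) + \d(v,\rho) - \d(u,v) = 2\d(w,\rho)$, which rearranges to the claimed formula $\d(\LCAi\T\rho{u,v},\rho) = \tfrac12(\d(u,\rho) + \d(v,\rho) - \d(u,v))$. This is a short algebraic manipulation once the additivity relations are in place.

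The main obstacle is the geometric lemma that $w$ lies on the path $\pi(u,v)$, together with the careful handling of degenerate cases (e.g. $u$ is an ancestor of $v$, so $w = u$; or $u = v$; or $u$ or $v$ equals $\rho$). I would handle this cleanly by working with the shortest-path characterization of the tree metric: in a tree there is a unique path between any two vertices, and for a rooted tree the path $\pi(u,v)$ decomposes as $\pi(u,w') \circ \pi(w',v)$ where $w' = \LCAi\T\rho{u,v}$ — this is a standard fact about trees, following from the observation that the $u$--$\rho$ and $v$--$\rho$ paths share exactly the segment from $w'$ to $\rho$. With that decomposition in hand, the three additivity equations above are immediate instances of Definition~\ref{def:treeMetric}, and the degenerate cases all reduce to setting one or more leg distances to zero, for which the formula still holds. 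So the proof is essentially: (i) invoke the standard tree fact that $\LCAi\T\rho{u,v}$ lies on $\pi(u,v)$; (ii) write down the three additivity relations; (iii) take the linear combination.
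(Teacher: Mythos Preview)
Your proposal is correct and takes essentially the same approach as the paper: write down the three additivity relations $\d(u,\rho) = \d(u,w) + \d(w,\rho)$, $\d(v,\rho) = \d(v,w) + \d(w,\rho)$, $\d(u,v) = \d(u,w) + \d(w,v)$ for $w = \LCAi\T\rho{u,v}$, then add the first two and subtract the third. The paper's proof is terser, simply asserting these three identities without your extra justification that $w$ lies on $\pi(u,v)$, but the argument is identical.
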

\begin{proof}
This follows from $\d(u,\rho) = \d(u,\LCAi\T\rho{u,v}) + \d(\LCAi\T\rho{u,v},\rho)$, the matching expression for $\d(v,\rho)$, and $\d(u,v) = \d(u,\LCAi\T\rho{u,v}) + \d(\LCAi\T\rho{u,v}, v)$.
\end{proof}

We now complete the proof of Lemma~\ref{lem:desteinerize-guarantee} by applying Claim~\ref{claim:crossing} to both $\dh(u,v)$ and $\d'(u,v)$. 
As shown in Claim~\ref{claim:errortorootdesteinerize}, 
for every pair of nodes $u,v \in \Vset$, the constructed tree metric $\d'$ approximates distances $\dh(u,\rho)$ and $\dh(v,\rho)$ up to at most $2\crad$. We also claim that
\begin{align}\big|\dh(\LCAi{\Th}\rho{u,v}, \rho) - \d'(\LCAi{\T'}\rho{u,v}, \rho)\big|\leq 4\crad\,,\label{ineq:desteinerizerootlca}\end{align}
which implies the $|\dh(u,v) - \d'(u,v)| \leq 12\crad$ by Claim~\ref{claim:crossing}.
To show \eqref{ineq:desteinerizerootlca}, first
\begin{align*}
\big|\dh(\LCAi{\Th}\rho{u,v}, \rho) - \d'(\LCAi{\T'}\rho{u,v}, \rho)\big|&\leq \big|\dh(\LCAi{\Th}\rho{u,v}, \rho) - \dh(\LCAi{\T'}\rho{u,v}, \rho)  \big|
\\&\quad + \big| \dh(\LCAi{\T'}\rho{u,v}, \rho)  -\d'(\LCAi{\T'}\rho{u,v}, \rho)\big| \,.
\end{align*}
The second quantity is at most $2\crad$ by Claim~\ref{claim:errortorootdesteinerize} (these are distances from $\rho$ under $\dh$ and $\d'$ of the same node $\LCAi{\T'}\rho{u,v}\in \Vset$). For the first quantity, note that all paths in $\T'$ are the same as in $\Th$ if we contract the sets of vertices $f\inv (w)$ for all $w\in \Vset$. Thus, both  $\LCAi{\Th}\rho{u,v}$ and $\LCAi{\T'}\rho{u,v}$ lie in the same subtree $\Th[f\inv (w)]$ for some $w$, and the bound follows from the fact that these subtrees have diameter at most $2\crad$, since by construction all nodes in $\Th[f\inv (w)]$ are at distance at most $\crad$ from $w$. 
\qed

\subsection{Proof of Lemma~\ref{lem:steinernodeclose}: $\crad$ is small}\label{ssec:desteinerizeadditionallemmas}
Lemma~\ref{lem:steinernodeclose} states that if $\dh$ is a good estimate of a tree metric $\d$ in the sense of Theorem~\ref{thm:desteinerize}, then after Step~\ref{step:destein0} of $\Desteinerize$ every Steiner node of $\dh$ has a $O(\eps)$-close non-Steiner node. First, we prove the following technical claim.
\begin{claim}\label{claim:unique-path}
Suppose that $\d$ is a tree metric on tree $\T$ and that $y$ is on the unique path from $x$ to $z$. Suppose that $y'$ satisfies $|\d(y',x) - \d(y,x)| \le \epsilon$ and $|\d(y',z) - \d(y,z)| \le \epsilon$. Then $\d(y,y') \le \epsilon$.
\end{claim}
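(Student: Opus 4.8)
The plan is to prove Claim~\ref{claim:unique-path} by a direct computation using the fact that in a tree metric, the distance from the ``branch point'' of three vertices to each of them is determined by the pairwise distances, exactly as in Claim~\ref{claim:crossing}. Concretely, let $\T$ be the ground-truth tree carrying $\d$, and recall $y$ lies on the path $\pi(x,z)$, so $\d(x,z) = \d(x,y) + \d(y,z)$.

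First I would locate $y'$ in the tree $\T$. Let $w$ be the vertex on the path $\pi(x,z)$ that is the projection of $y'$ onto this path — formally, $w = \LCAi{\T}{x}{y',z}$, which is also $\LCAi{\T}{z}{y',x}$ since $x,z$ are the endpoints of the path and $y'$ hangs off of it at a unique point. Then we have the decompositions $\d(y',x) = \d(y',w) + \d(w,x)$, $\d(y',z) = \d(y',w) + \d(w,z)$, and $\d(x,z) = \d(w,x) + \d(w,z)$. Adding the first two and subtracting the third gives the analogue of Claim~\ref{claim:crossing}:
\[ \d(y',w) = \frac{\d(y',x) + \d(y',z) - \d(x,z)}{2}. \]
On the other hand, since $y$ is literally on $\pi(x,z)$, the same identity applied to $y$ (whose projection onto $\pi(x,z)$ is $y$ itself) gives $\frac{\d(y,x) + \d(y,z) - \d(x,z)}{2} = 0$, i.e. $\d(y,x) + \d(y,z) = \d(x,z)$, which we already knew.

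Now I would combine these. From the two hypotheses $|\d(y',x) - \d(y,x)| \le \epsilon$ and $|\d(y',z) - \d(y,z)| \le \epsilon$, we get
\[ \d(y',w) = \frac{\d(y',x) + \d(y',z) - \d(x,z)}{2} \le \frac{(\d(y,x)+\epsilon) + (\d(y,z)+\epsilon) - \d(x,z)}{2} = \epsilon, \]
using $\d(y,x)+\d(y,z)=\d(x,z)$. So $y'$ is within $\epsilon$ of $w$, a point on $\pi(x,z)$. To finish, I need to show $w$ is within $\epsilon$ of $y$. Since both $w$ and $y$ lie on $\pi(x,z)$, their order along the path is comparable; measuring from $x$, $\d(x,w) = \d(x,y') - \d(y',w) \ge \d(x,y') - \epsilon \ge \d(x,y) - 2\epsilon$ is not quite tight enough, so instead I would argue directly: $\d(x,w) = \frac{\d(x,y')+\d(x,z)-\d(y',z)}{2}$ (again by Claim~\ref{claim:crossing}-type identity, now with apex $x$), which is within $\epsilon/2 + \epsilon/2 = \epsilon$ of $\frac{\d(x,y)+\d(x,z)-\d(y,z)}{2} = \d(x,y)$. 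Hence $|\d(x,w) - \d(x,y)| \le \epsilon$, and since $w,y$ are both on the segment from $x$, $\d(y,w) = |\d(x,w)-\d(x,y)| \le \epsilon$. Then by the triangle inequality $\d(y,y') \le \d(y,w) + \d(w,y') \le \epsilon + \epsilon = 2\epsilon$ — so at this level of care I only get $2\epsilon$, not $\epsilon$.

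The main obstacle is therefore shaving the constant down to exactly $\epsilon$; the clean way to do it is to observe that $y'$ and $y$ both have their projections onto $\pi(x,z)$, call them $w$ and $y$ respectively, and set $a = \d(x,w)$, $b = \d(x,y)$, $s = \d(y',w)$; then $\d(y',x) = a+s$, $\d(y',z) = \d(x,z) - a + s$, $\d(y,x) = b$, $\d(y,z) = \d(x,z)-b$. The hypotheses become $|a + s - b| \le \epsilon$ and $|{-a} + s + b| \le \epsilon$ (after cancelling $\d(x,z)$). Adding gives $2s \le 2\epsilon$ hence $s \le \epsilon$; subtracting gives $|2(a-b)| \le 2\epsilon$ hence $|a - b| \le \epsilon$. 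We want $\d(y,y') = |a - b| + s$ — but that is at most $2\epsilon$, so one genuinely cannot get $\epsilon$ from these hypotheses in general (consider $x$—$y$—$z$ with $y'$ attached to $y$ by an edge of length close to $\epsilon$ and... no, that violates the hypotheses). In fact, reconsidering: if $y'$ is attached at $w \ne y$ then moving along the path changes $\d(\cdot,x)$ and $\d(\cdot,z)$ in opposite directions, which forces $a = b$ up to the combined slack, so a sharper bookkeeping should recover $\d(y,y') \le \epsilon$; I would carry out this sign-tracking carefully as the one non-routine step, but if it resists, $\d(y,y') \le 2\epsilon$ already suffices for every downstream application in Section~\ref{sec:desteinerize} up to adjusting constants.
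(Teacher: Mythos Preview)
Your projection-based approach is correct and in fact already yields the sharp bound $\epsilon$, not just $2\epsilon$. In your notation with $t = a - b$ and $s = \d(y',w) \ge 0$, the two hypotheses give $t + s \le \epsilon$ and $-t + s \le \epsilon$; since $\d(y,y') = |t| + s$ equals $t+s$ when $t \ge 0$ and $-t+s$ when $t < 0$, a one-line case split on the sign of $t$ gives $\d(y,y') \le \epsilon$. So the ``sign-tracking'' step you flagged as non-routine is routine.

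That said, the paper's proof is shorter and bypasses the projection $w$ entirely. It simply observes that the concatenation of the tree paths from $x$ to $y'$ and from $y'$ to $z$ is a walk from $x$ to $z$, which in a tree must pass through $y$; hence one of the two paths, say the one from $x$ to $y'$, contains $y$, giving $\d(x,y') = \d(x,y) + \d(y,y')$ and thus $\d(y,y') = \d(x,y') - \d(x,y) \le \epsilon$ immediately from one hypothesis. Your approach is the coordinate/metric version of the same idea --- the sign of $t$ records on which side of $y$ the projection $w$ falls, i.e., which of the two paths contains $y$ --- but the paper's topological phrasing reaches the conclusion in a single line and uses only one of the two distance assumptions.
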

\begin{proof}
Consider the paths from $x$ to $y'$ and from $z$ to $y'$. One of these paths must pass through $y$, since their concatenation forms a (non-simple, i.e., with repeated nodes) path from $x$ to $z$. Without loss of generality assume this is the path from $x$ to $y'$. Then
\[ \d(x,y') = \d(x,y) + \d(y,y')\,, \]
which means that $\d(y,y') = \d(x,y') - \d(x,y) \le \epsilon$ using the assumption of the lemma. 
\end{proof}
Now we are ready to prove Lemma~\ref{lem:steinernodeclose}. Let $\T$ be the tree associated to tree metric $\d$ in the statement of Lemma~\ref{lem:steinernodeclose} and let $\Th$ be the tree associated with $\dh$ after step~\ref{step:destein0} of \Desteinerize.

\begin{proof}[Proof of Lemma~\ref{lem:steinernodeclose}]

Fix an arbitrary root $\rho\in \Vset$. Pick an arbitrary Steiner node $v \in \cX$ of $\dh$. The node $v$ must have at least two children $w_1,w_2$ with descendants in $\cV$, since otherwise Step~\ref{step:destein0} of $\Desteinerize$ would have removed $v$. We claim that in $\Th$ there must be at least one descendant $x_1 \in \Vset$ of $w_1$ such that $\dh(v,x_1) \le L + 2\epsilon$. To prove this, take an arbitrary node $x'_1 \in \Vset$ in the $\Th$-subtree under $w_1$, then by assumption there exists a path from $x'_1$ to $\rho$ in $\T$ using edges of length at most $L$, and by assumption these nodes are at most $(L + 2\epsilon)$-far apart in $\Th$; therefore going up this path, the last node which is a $\Th$-descendant of $w_1$ must satisfy the condition to be $x_1$. By the symmetrical argument, we can find a node $x_2$ descending from $w_2$ satisfying $\dh(v,x_2) \leq L+2\epsilon$. By a similar argument, we may find an ancestor $w \in \Vset$ of $v$ in the tree $\Th$ such that $\dh(v,w) \leq L+2\eps$.

Note that by construction $v = \LCAi\Th\rho{x_1,x_2}$, and furthermore that $v = \LCAi\Th w{x_1,x_2}$, since $w$ lies on the path from $\rho$ to $v$ in $\Th$. Define $v' = \LCAi\T w{x_1,x_2}$. By Claim~\ref{claim:crossing}, we have
\begin{equation}
\dh(v, w) = \frac{\dh(x_1,w) + \dh(x_2,w) - \dh(x_1,x_2)}{2} \mbox{ and } \d(v',w) = \frac{\d(x_1,w) + \d(x_2,w) - \d(x_1,x_2)}{2}.\label{eq:crossingidentities}\end{equation}
Now use the fact that for any $a,b \in \{x_1,x_2,w\}$, we have $\dh(a,b) \leq 2L + 4\eps$ by the triangle inequality. By applying \eqref{ineq:desteinerizeinputguarantee}, since $\eps / L \leq 1/2$ we obtain
$$\d(a,b) \leq (2L + 5\eps)/ (1 - \eps/L) \leq 4L + 10\eps,$$ and consequently applying \eqref{ineq:desteinerizeinputguarantee} again $$|\d(a,b) - \dh(a,b)| \leq ((4L + 10\eps)/L + 1)\epsilon \leq 10\epsilon.$$
Plugging this approximation guarantee into \eqref{eq:crossingidentities} yields
\begin{equation*}
|\dh(v,w) - \d(v',w)| \leq 15\eps
\end{equation*}

So $\d(v',w) \leq 15\eps + L + 2\eps \leq 2L$, and thus by \eqref{ineq:desteinerizeinputguarantee}, $|\d(v',w) - \dh(u,v)| \leq 4\eps$, which means that
\begin{equation}
|\dh(v,w) - \dh(v',w)| \leq 20\eps .
\label{eqn:v-rho-close}
\end{equation}

Note that $v'$ is on the $\T$ path from $x_1$ to $w$, so $\d(x_1,w) = \d(x_1,v') + \d(v',w),$ and also $v$ is on the $\Th$ path from $x_1$ to $w$, so $\dh(x_1,v) = \dh(x_1,v) + \dh(v,w)$. Combined with the fact that $|\dh(x_1,w) - \d(x_1,w)| \leq 10\eps$ and  \eqref{eqn:v-rho-close}, this means
\begin{equation}\label{eqn:v-y-close}
|\dh(v,x_1) - \dh(v',x_1)| \leq 30\eps.
\end{equation}
Therefore, applying Claim~\ref{claim:unique-path} with \eqref{eqn:v-rho-close} and \eqref{eqn:v-y-close} proves $\dh(v',v) \leq 30\epsilon.$ Since $v$ was arbitrary, $\crad \leq 30\epsilon$.
\end{proof}

\section{Discussion}

While this paper settles the problem of efficiently learning tree Ising models to within local total variation, including with model misspecification or adversarial modification of the data, a variety of questions remain. For one, we think of $k$ in the order of $\loctvk k$ as constant, and we leave open the task of efficiently learning a tree achieving optimal dependence on $k$ in the $\loctvk k$ error. 

We assumed an Ising model without external field and it would be interesting to generalize the results to allow for arbitrary external field or to arbitrary higher cardinality alphabets and continuous models such as Gaussian models. Beyond the binary alphabet case, the component of {\CLpp} based on tree metric reconstruction algorithms may need some new ingredients.

Perhaps the most compelling open direction is developing efficient algorithms for estimating non-tree graphical models within local total variation.

\appendix

\section{Proof of Proposition~\ref{prop:reducetoferromagnetic}}\label{app:toferromagnetic}

We prove Proposition~\ref{prop:reducetoferromagnetic}, which states that learning general models reduces to learning ferromagnetic models.
The reduction is given in Algorithm~\ref{alg:toferromagnetic}. 

\begin{algorithm}\SetAlgoLined
    \KwIn{$\eps$-approximate correlations $\mut{}$ for an unknown model $(\Ts,\thetas{})$, and $\eps > 0$}
    \KwOut{Model $(\Th,\thetah{})$ that $O(\eps)$-approximates $(\Ts,\thetas{})$ in $\loctvk2$ distance.}
    
    \BlankLine
    
    $(\Th,\theta') \gets \LearnFerromagneticModel(|\muh{}|,\eps)$
    
    $\thetah{e} \gets \sgn(\mut{e}) \cdot \theta'$ for all $e \in E(\Th)$

    \Return $(\Th, \thetah{})$
    \caption{$\ReduceToFerromagnetic(\mut,\eps)$, i.e., the {\CLpp} algorithm}
\label{alg:toferromagnetic}
\end{algorithm}
\begin{proposition}[Restatement of Proposition~\ref{prop:reducetoferromagnetic}] The $\epslearning$ problem for general models on $n$ vertices can be solved in $O(n^2)$ time and one call to an oracle $\LearnFerromagneticModel$ for the $\epslearning$ problem restricted to ferromagnetic models.
\end{proposition}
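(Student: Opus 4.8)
The plan is to exploit the simple structure of tree Ising models with no external field: the only information in the model relevant to $\loctvk2$ is the collection of pairwise correlations $\mu_{uv}$ (by Fact~\ref{fact:corrLocTV}), and flipping the sign of a variable $X_i$ is a symmetry of the model class. Concretely, I would use the \emph{gauge transformation} that picks a root, $2$-colors the tree, and flips all variables in one color class. Under such a flip each edge correlation $\theta_e$ changes sign exactly when its two endpoints receive different flips; more usefully, one can always choose signs $s_v \in \{\pm1\}$ for the vertices so that $s_u s_v \theta_{uv} = |\theta_{uv}|$ for every edge $(u,v)$ of the (unknown) true tree — this is possible because the tree is bipartite/acyclic, so there is no sign-consistency obstruction around cycles. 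The point is that the ferromagnetic model with edge parameters $|\theta_e|$ has pairwise correlations with the \emph{same absolute values} as the original, and the original is recovered by applying the vertex sign flips.

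The algorithmic reduction in Algorithm~\ref{alg:toferromagnetic} is: feed $|\mut{}|$ (the entrywise absolute values of the input correlations) to $\LearnFerromagneticModel$ with the same accuracy parameter $\eps$, obtain a ferromagnetic tree model $(\Th,\theta')$, and then set $\thetah{e} = \sgn(\mut{e})\,\theta'_e$. I would carry out the analysis in the following steps. First, verify the input is valid for the ferromagnetic oracle: $||\mut{ij}| - |\mu_{ij}|| \le |\mut{ij} - \mu_{ij}| \le \eps$, and $|\mu_{ij}|$ are the correlations of the ferromagnetic tree model $(\Ts, |\thetas{}|)$ (same tree, absolute-value edge parameters), which is a legitimate tree Ising model. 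So the oracle's guarantee applies: the returned $(\Th,\theta')$ has distribution $\Ph'$ with $\loctvk2(\Ph', \bar P) \le C\eps$, where $\bar P$ is the distribution of $(\Ts,|\thetas{}|)$; equivalently $\tfrac12\max_{u,v}|\muh{uv}' - |\mu_{uv}|| \le C\eps$, where $\muh{}'$ are the correlations of $(\Th,\theta')$. Second, relate the correlations of $(\Th,\thetah{})$ to those of $(\Th,\theta')$: since $\thetah{e} = \sgn(\mut{e})\theta'_e$, Fact~\ref{fact:corrPaths} gives $\muh{uv} = \big(\prod_{e \in \pih(u,v)} \sgn(\mut{e})\big)\,\muh{uv}'$, so $|\muh{uv}| = |\muh{uv}'|$ for all $u,v$. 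Third, I need the signs to come out right, i.e. $\sgn(\muh{uv}) = \sgn(\mu_{uv})$ whenever $|\mu_{uv}|$ is bounded away from $0$ relative to $\eps$; in the regime where $|\mu_{uv}|$ is small ($\lesssim \eps$), the sign is irrelevant because $|\muh{uv} - \mu_{uv}| \le |\muh{uv}| + |\mu_{uv}| = O(\eps)$ directly. Combining: either $|\mu_{uv}|$ is tiny and the bound is immediate, or $|\mu_{uv}|$ is not tiny, the sign agreement holds, and $|\muh{uv} - \mu_{uv}| = ||\muh{uv}'| - |\mu_{uv}|| \le 2C\eps$. Then Fact~\ref{fact:corrLocTV} gives $\loctvk2(\Ph, P) = O(\eps)$. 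The runtime is $O(n^2)$ for forming $|\mut{}|$ and reassigning signs, plus the one oracle call.

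The step I expect to be the main obstacle is pinning down the \textbf{sign bookkeeping} — showing $\sgn(\muh{uv}) = \sgn(\mu_{uv})$ in the non-tiny regime. The subtlety is that $\Th$ need not equal $\Ts$, so the path $\pih(u,v)$ visits a possibly different edge set than $\pis(u,v)$, and one must argue the product of edge-signs $\prod_{e\in\pih(u,v)}\sgn(\mut{e})$ equals $\sgn(\mu_{uv})$. I would handle this by the gauge argument: fix a root $\rho$ and, for each vertex $w$, let $s_w = \sgn(\mu_{\rho w})$ (with $s_\rho = 1$); when $|\mu_{uv}|$ is not tiny, the triangle inequality forces $|\mu_{\rho u}|, |\mu_{\rho v}|$ to also be not tiny for at least one choice of intermediate comparisons, so their signs are well-determined, and $\sgn(\mu_{uv}) = s_u s_v$. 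Meanwhile for any edge $e=(a,b)$ of $\Th$, if $|\mu_{ab}|$ is not tiny then $\sgn(\mut{e}) = \sgn(\mu_{ab}) = s_a s_b$, so the path product telescopes to $s_u s_v$; and edges where $|\mu_{ab}|$ is tiny can only appear on $\pih(u,v)$ when $|\muh{uv}|$ is correspondingly tiny, throwing us back into the easy regime. Making the "tiny" thresholds consistent across these cases (a fixed constant multiple of $\eps$, using $\eps < \epsuppbd$) is the delicate part but is ultimately a routine case analysis once the gauge picture is set up.
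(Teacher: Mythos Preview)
Your overall strategy matches the paper's proof: same algorithm (feed $|\mut{}|$ to the ferromagnetic oracle, then attach $\sgn(\mut{e})$ to each output edge), same verification that $|\mut{}|$ is a valid ferromagnetic input, and the same case split into ``correlation is $O(\eps)$'' versus ``correlation is bounded away from zero''. The one substantive divergence is in the sign bookkeeping, and there your global gauge has a gap.

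Defining $s_w = \sgn(\mu_{\rho w})$ for a fixed root $\rho$ only makes sense when $\mu_{\rho w} \neq 0$, and this can fail for intermediate vertices $w$ on $\pih(u,v)$ even when $|\mu_{uv}|$ is large: take $\Ts$ to be the path $\rho, a, u, v$ with $\theta_{\rho a} = 0$ and $\theta_{au} = \theta_{uv} = 1/2$; then $|\mu_{uv}| = 1/2$ is not tiny but $s_u,s_v$ are undefined. Your phrase ``for at least one choice of intermediate comparisons'' does not repair this. The paper sidesteps the root entirely and instead proves directly the local statement actually needed (Claim~\ref{claim:signsmultiply}): for any sequence $w_1,\ldots,w_t$ in a tree model with $|\mu_{w_i w_{i+1}}| > 0$ for all $i$, one has $\sgn(\mu_{w_1 w_t}) = \prod_{i} \sgn(\mu_{w_i w_{i+1}})$, because each edge of $\Ts$ appears in the concatenation of the paths $\pi(w_i,w_{i+1})$ with the same parity as in $\pi(w_1,w_t)$. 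With this in hand the paper splits cases on $|\muh{uv}|$ rather than $|\mu_{uv}|$: if $|\muh{uv}| > (C+1)\eps$ then every edge $(w_i,w_{i+1})$ of $\pih(u,v)$ satisfies $|\muh{w_i w_{i+1}}| \geq |\muh{uv}| > (C+1)\eps$, hence $|\mu_{w_i w_{i+1}}| > \eps$, hence $\sgn(\mut{w_i w_{i+1}}) = \sgn(\mu_{w_i w_{i+1}})$, and Claim~\ref{claim:signsmultiply} telescopes the product to $\sgn(\mu_{uv})$. You had already identified this chain of inequalities in your final paragraph; replacing the global gauge by this root-free telescoping is the only fix needed.
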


\begin{proof}[Proof of Proposition~\ref{prop:reducetoferromagnetic}]
The runtime guarantee is straightforward, as there are $n^2$ correlations $\mut{uv}$. For correctness, we begin by noting that $\big||\mut{uv}| - |\mus{uv}|\big|\leq |\mut{uv} -\mus{uv}|\leq \eps$ for each $u,v$, and that the ferromagnetic model $(\T, |\thetas{}|)$ has correlations $|\mus{uv}|$. In words, the entries of $|\mut{}|$ are within $\eps$ of the correlations for $(\T, |\thetas{}|)$ and therefore by the guarantee of $\LearnFerromagneticModel$ the returned model $(\Th,\theta')$ has pairwise correlations $\mup{}$ such that $\max_{u,v} \big||\mup{uv}| - |\mus{uv}|\big| \leq C\eps$ for some absolute constant $C$.

Let $\muh{}$ denote the correlations in the final model $(\Th,\thetah{})$.  
Since $|\thetah{}| = |\theta'|$ and hence $|\muh{}| = |\mup{}|$, we have 
\begin{equation}\label{eq:absCorr}
\max_{u,v} \big||\muh{uv}| - |\mus{uv}|\big| \leq C\eps.
\end{equation}
It remains to show that $\muh{uv}$ has roughly the same sign as $\mus{uv}$ for any $u,v$. We consider two cases. 

\paragraph{Case 1.} If $|\muh{uv}| \leq (C+1)\eps$, then we do not need the correct sign, since by the triangle inequality and \eqref{eq:absCorr} 
$$|\muh{uv} - \mus{uv}| \leq |\muh{uv}| + |\mus{uv}| \leq 2|\muh{uv}| + C\eps \leq (3C+2)\eps\,.$$

\paragraph{Case 2.} If $|\muh{uv}| > (C+1)\eps$, then it suffices to prove that $\sgn(\mus{uv}) = \sgn(\muh{uv})$, because then $$|\muh{uv} - \mus{uv}| = \big||\muh{uv}| - |\mus{uv}|\big| \leq C\eps\,.$$
We consider the vertices $w_1,\ldots,w_t$ on the path from $u$ to $v$ in $\Th$, where $u = w_1$ and $v = w_t$. We first show that the true correlations on this path are not too small: for any $i \in [t-1]$,
\begin{equation}\label{ineq:pathcorrlowerbound}
|\mus{w_i w_{i+1}}| \geq |\muh{w_i w_{i+1}}| - C\eps \geq |\muh{uv}| - C\eps > \eps\,.
\end{equation} 
The first inequality follows from \eqref{eq:absCorr}, and the second uses $|\muh{uv}| = \prod_{j=1}^{t-1} |\muh{w_j w_{j+1}}| \leq |\muh{w_i w_{i+1}}|$. 
Inequality~\eqref{ineq:pathcorrlowerbound} implies that the approximate correlations $\mut{}$ have the correct sign, that is,
\begin{equation} \label{eq:signsonpathequal}
\sgn(\mus{w_i w_{i+1}}) = \sgn(\mut{w_i w_{i+1}})\,.
\end{equation}
This yields
\begin{align*}\sgn(\mus{uv}) &= \prod_{i=1}^{t-1} \sgn(\mus{w_iw_{i+1}}) & \mbox{by Claim~\ref{claim:signsmultiply} since model is tree-structured} \\ 
&= \prod_{i=1}^{t-1} \sgn(\mut{w_i w_{i+1}}) & \mbox{by~\eqref{eq:signsonpathequal}} \\
&= \prod_{i=1}^{t-1} \sgn(\thetah{w_i w_{i+1}}) = \sgn(\muh{uv}) & \mbox{by construction of } (\Th,\thetah{})\,.
\end{align*}
We see that in both cases $|\mup{uv} - \mus{uv}| \leq (3C+2)\eps$.
\end{proof}

\begin{claim}\label{claim:signsmultiply}
Let $\mu$ the pairwise correlations for tree-structured model $(\T, \theta)$ with vertex set $\Vset$. Let $w_1,\ldots,w_t \in \Vset$ be some sequence of vertices. If $|\mu_{w_i w_{i+1}}| > 0$ for all $i \in [t-1]$, then 
$$
\sgn(\mu_{w_1 w_t}) = \prod_{i=1}^{t-1} \sgn(\mu_{w_i w_{i+1}})\,.
$$
\end{claim}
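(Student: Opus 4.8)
The plan is to prove Claim~\ref{claim:signsmultiply} by induction on $t$, using the multiplicative structure of correlations in tree models (Fact~\ref{fact:corrPaths}) together with the observation that the sign of a product is the product of signs whenever no factor is zero. The base case $t=2$ is trivial. For the inductive step, I would split the sequence $w_1,\ldots,w_t$ at $w_{t-1}$: since $|\mu_{w_i w_{i+1}}| > 0$ for all $i$, the inductive hypothesis gives $\sgn(\mu_{w_1 w_{t-1}}) = \prod_{i=1}^{t-2}\sgn(\mu_{w_i w_{i+1}})$. It then remains to relate $\sgn(\mu_{w_1 w_t})$ to $\sgn(\mu_{w_1 w_{t-1}})\sgn(\mu_{w_{t-1}w_t})$.

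The key step is therefore the following sub-claim: for any three vertices $a,b,c$ in a tree Ising model with $|\mu_{ab}|>0$ and $|\mu_{bc}|>0$, we have $\sgn(\mu_{ac}) = \sgn(\mu_{ab})\sgn(\mu_{bc})$. To see this, consider the paths $\pi(a,b)$, $\pi(b,c)$, and $\pi(a,c)$ in $\T$. By Fact~\ref{fact:corrPaths}, each correlation $\mu_{xy}$ equals $\prod_{e\in\pi(x,y)}\theta_e$, and since $|\mu_{ab}|,|\mu_{bc}|>0$ every edge on $\pi(a,b)\cup\pi(b,c)$ has $\theta_e\neq 0$. Let $w = \LCA$-type meeting point, i.e.\ the unique vertex lying on all three of the paths $\pi(a,b),\pi(b,c),\pi(a,c)$ (it exists because in a tree the three pairwise paths among $a,b,c$ share a common vertex). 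Then $\pi(a,b)$ decomposes as $\pi(a,w)\circ\pi(w,b)$, $\pi(b,c) = \pi(b,w)\circ\pi(w,c)$, and $\pi(a,c) = \pi(a,w)\circ\pi(w,c)$. Taking products of edge parameters and using that all relevant $\theta_e\neq 0$ so that $\sgn$ is multiplicative over these finite products,
\[
\sgn(\mu_{ab})\sgn(\mu_{bc}) = \sgn\!\Big(\prod_{e\in\pi(a,w)}\theta_e\Big)\sgn\!\Big(\prod_{e\in\pi(w,b)}\theta_e\Big)^2\sgn\!\Big(\prod_{e\in\pi(w,c)}\theta_e\Big) = \sgn(\mu_{ac}),
\]
since the middle factor is a square and equals $+1$ (here $|\mu_{ab}|>0$ guarantees $\prod_{e\in\pi(w,b)}\theta_e\neq 0$).

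Applying this sub-claim with $a = w_1$, $b = w_{t-1}$, $c = w_t$ — valid because $|\mu_{w_1 w_{t-1}}| = \prod_{i=1}^{t-2}|\mu_{w_i w_{i+1}}| > 0$ and $|\mu_{w_{t-1}w_t}|>0$ — completes the induction. I do not expect any serious obstacle here; the only point requiring mild care is confirming that in a tree the three pairwise paths among $a,b,c$ meet in a single common vertex $w$, which follows from the fact that the union of $\pi(a,b)$ and $\pi(b,c)$ contains $\pi(a,c)$ and any two overlapping simple paths in a tree intersect in a subpath. One could alternatively phrase the whole argument directly: every edge of $\T$ is either traversed an even or odd number of times in the closed walk $a\to b\to c\to a$, the odd-traversal edges are exactly $\pi(a,c)$, and $\sgn(\mu_{ab})\sgn(\mu_{bc})\sgn(\mu_{ac}) = \prod_{e}\theta_e^{(\text{number of traversals})} \cdot (\text{sign corrections})$, but the LCA-based decomposition above is cleaner and avoids casework.
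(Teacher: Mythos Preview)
Your proof is correct and takes a genuinely different route from the paper. The paper gives a one-shot parity argument: writing $\pi_i$ for the tree path from $w_i$ to $w_{i+1}$ and $\pi$ for the path from $w_1$ to $w_t$, it observes that an edge $e$ lies in $E(\pi)$ if and only if it lies in an odd number of the $\pi_i$, so $\prod_{i}\prod_{e\in E(\pi_i)}\theta_e$ and $\prod_{e\in E(\pi)}\theta_e$ differ only by squared factors and have the same sign. Your approach instead inducts on $t$, reducing to the three-vertex case via the median vertex of $\{a,b,c\}$ and explicit path decompositions. The paper's argument is shorter and avoids invoking the median-point property of trees; yours is more structural and makes the cancellation explicit via $\sgn(\cdot)^2=1$, which is transparent for three vertices but needs the inductive wrapper to reach general $t$. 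Interestingly, the parity alternative you sketch at the end (for the closed walk $a\to b\to c\to a$) is essentially the paper's argument specialized to $t=3$, except the paper applies it directly to the open walk $w_1\to\cdots\to w_t$.

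One small slip: you write $|\mu_{w_1 w_{t-1}}| = \prod_{i=1}^{t-2}|\mu_{w_i w_{i+1}}|$, but this equality is false in general (e.g.\ take $w_1=w_3\ne w_2$). What you need, and what actually holds, is $|\mu_{w_1 w_{t-1}}| \ge \prod_{i=1}^{t-2}|\mu_{w_i w_{i+1}}| > 0$, or equivalently the observation that $\pi(w_1,w_{t-1})\subseteq\bigcup_{i}\pi(w_i,w_{i+1})$ so every edge on it has nonzero weight. This does not affect the argument.
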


We emphasize that it is not the case that $\mu_{w_1 w_t} = \prod \mu_{w_i w_{i+1}}$ (or there would be nothing to prove).

\begin{proof}
For each $i \in [t-1]$, define $\pi_i \subset \T$ to be the path between $w_i$ and $w_{i+1}$. Let $\pi \subset \T$ be the path between $w_1$ and $w_t$. 
The key observation is that set $E(\pi)$ of edges in $\pi$ is the set of edges that occurs in an odd number of the paths $\pi_1,\ldots,\pi_{t-1}$. 
From this we obtain the desired statement, $$\sgn\left(\prod_{i=1}^{t-1}\mu_{w_i w_{i+1}}\right) = \sgn\left(\prod_{i=1}^{t-1} \prod_{e \in E(\pi_i)} \theta_e\right) =  \sgn\left(\prod_{e\in E(\pi)} \theta_e\right) = \sgn(\mu_{w_1 w_t})\,.$$
\end{proof}

\section{Prediction-Centric Learning Generalizes Structure Learning}
\label{sec:predImpliesStruct}
As discussed in the introduction, it is not possible to guarantee structure
recovery when the strength of interactions is unbounded: if we allow arbitrarily strong interactions, then any
spanning tree can perfectly represent the uniform distribution over the set of $n$ all-equal bits (when $n = 3$, the model $X \sim \text{Uni} \{(1,1,1),(-1,-1,-1)\}$); if we allow arbitrarily weak interactions, then very weak edges cannot be distinguished from non-edges with any bounded number of samples. 

In this section only, we consider the case where interactions are bounded, so structure recovery is possible. We show in this setting that prediction-centric learning automatically implies we get the correct structure. This justifies the claim that prediction-centric learning is the natural generalization of structure learning to models with arbitrarily weak or strong interactions, including settings with hard constraints or nearly independent variables. Afterwards, we show combining the reduction with our main result gives an algorithm with optimal sample complexity for structure learning.

Formally, we show the following:
\begin{theorem}\label{thm:prediction-to-structure}
Suppose that $(\T,\theta)$ is an Ising model $P$ on the tree $\T$ with edge weights $\theta$, and that for all edges $i \sim j$ that $|\theta_{ij}| \in [\alpha,1 - \beta]$ with $\alpha,\beta \in (0,1)$. Let $\epsilon = \alpha \beta/8$ and suppose  $(T', \theta')$ is any tree Ising model $P'$ with
$\loctvk{3}(P,P') < \epsilon$, then $\T = \T'$. 
\end{theorem}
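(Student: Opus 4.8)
The plan is to deduce structural equality from closeness in $\loctvk{3}$ by a two-step argument: first use $\loctvk2$-closeness to control the pairwise correlations, then use $\loctvk3$-closeness (via triples) to nail down which pairs are edges. By Fact~\ref{fact:corrLocTV}, the hypothesis $\loctvk3(P,P') < \epsilon$ implies in particular $|\mu_{uv} - \mu'_{uv}| \le 2\epsilon = \alpha\beta/4$ for all $u,v$, where $\mu,\mu'$ are the correlations of $P,P'$. So I will work entirely with the correlation vectors and the tree-product structure of Fact~\ref{fact:corrPaths}.

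First I would show that every true edge $(i,j) \in E(\T)$ must also be an edge of $\T'$. Fix such an edge. In $\T'$ the path $\pi'(i,j)$ passes through some sequence of vertices; if it is not the single edge $(i,j)$, it has length at least $2$, so $|\mu'_{ij}| = \prod_{e \in \pi'(i,j)}|\theta'_e| \le \max_e |\mu'_e| \cdot (\text{something} \le 1)$, and more usefully $|\mu'_{ij}| \le |\mu'_{ik}|$ for the first vertex $k \ne i$ on the path, with $k \ne j$. The idea is to pick a vertex $w$ that is a ``good witness'': a neighbor of $j$ in $\T$ on the opposite side from $i$ (or $j$ itself if $j$ is a leaf, handled separately), and to look at the triple $\{i, j, w\}$ and compare $\mu_{iw}$ with $\mu'_{iw}$. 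In $P$ we have $\mu_{iw} = \mu_{ij}\mu_{jw}$ exactly (since $j$ lies on the $i$--$w$ path), so $|\mu_{iw}| \ge \alpha^2$. But if $(i,j) \notin E(\T')$, then in $\T'$ the $i$--$w$ path and the $i$--$j$ path diverge in a way that forces, via the four-point / triangle-inequality structure (Fact~\ref{fact:triangleineq}), an inequality like $|\mu'_{iw}| \le |\mu'_{ij}|\cdot|\mu'_{jw}| / |\mu'_{\cdot}|$ that, combined with $|\mu'_{ij}|, |\mu'_{jw}|$ being bounded away from $1$ by roughly $\beta$ (because they approximate $\mu_{ij},\mu_{jw}$ which are at most $1-\beta$), yields a gap of order $\alpha\beta$ between $\mu_{iw}$ and $\mu'_{iw}$, contradicting the $2\epsilon = \alpha\beta/4$ bound. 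The cleanest way to run this: since $|\theta_{ij}| \le 1 - \beta$ the quantity $|\mu'_{ij}|$ is at most $1 - \beta + 2\epsilon \le 1 - \beta/2$, and if $(i,j) \notin E(\T')$ the path $\pi'(i,j)$ has at least two edges so $|\mu'_{ij}| \le (1-\beta/2)^{?}$ — actually I want the multiplicative loss from routing through extra vertices. The real leverage is: pick $w$ on the far side of $j$ so that in $\T'$ the path from $i$ to $w$ must still pass ``near'' $j$; then $|\mu'_{iw}| = |\mu'_{i x}||\mu'_{xw}|$ for $x = \LCA$, and one shows $|\mu_{iw}| - |\mu'_{iw}|$ is forced to be large. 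Symmetric reasoning shows every edge of $\T'$ is an edge of $\T$.

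Combining the two containments $E(\T) \subseteq E(\T')$ and $E(\T') \subseteq E(\T)$ gives $\T = \T'$, completing the proof. I expect the main obstacle to be pinning down the witness vertex $w$ and the exact inequality in the case analysis — one has to handle the possibility that $j$ (or $i$) is a leaf of $\T$, in which case there is no ``far side'' neighbor and one must instead use a witness on the $i$-side, and one must verify that the multiplicative slack introduced by routing through just one extra vertex is already enough to beat the $\alpha\beta/4$ error bar. Getting the constant $\epsilon = \alpha\beta/8$ exactly right (rather than some smaller constant) will require being careful that the extra vertex on a detour path contributes a factor at most $1-\beta/2 \le 1 - \beta + 2\epsilon$ and that the true correlation across the edge-plus-witness is at least $\alpha^2$, so the gap is at least $\alpha^2 - \alpha^2(1-\beta/2) = \alpha^2\beta/2 \ge \alpha\beta/2 > 2\epsilon$ when $\alpha \le 1$; arranging the bookkeeping so this goes through cleanly is the delicate part.
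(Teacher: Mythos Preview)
Your proposal has a genuine gap: the witness vertex is chosen from the wrong tree. You pick $w$ to be a neighbor of $j$ in $\T$ on the far side from $i$, but the hypothesis you need to exploit is that $(i,j)\notin E(\T')$, and your choice of $w$ gives you no leverage on that. Concretely, even when $(i,j)\notin E(\T')$, the median of $\{i,j,w\}$ in $\T'$ can still be $j$ (e.g.\ if $\T'$ inserts an extra vertex $a$ between $i$ and $j$ but leaves the $j$--$w$ edge intact). In that case $\mu'_{iw}=\mu'_{ij}\mu'_{jw}$ just as $\mu_{iw}=\mu_{ij}\mu_{jw}$, so $|\mu_{iw}-\mu'_{iw}|$ is automatically small and no contradiction arises. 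Your sketch also contains an arithmetic slip: $\alpha^2\beta/2\ge \alpha\beta/2$ is false for $\alpha<1$, so even if the rest went through you would only get $\epsilon=\Theta(\alpha^2\beta)$, not $\alpha\beta/8$.

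The paper's argument picks the witness from $\T'$, not $\T$: take any $k\ne i,j$ on the $\T'$-path from $i$ to $j$ (which exists exactly because $(i,j)\notin E(\T')$). Then $\mu'_{ij}=\mu'_{ik}\mu'_{kj}$ exactly, and transferring to $\mu$ via the $2\epsilon$ bound gives
\[
\mu_{ij}-2\epsilon\le(\mu_{ik}+2\epsilon)(\mu_{kj}+2\epsilon).
\]
Now use the structure of $\T$: since $(i,j)$ is an edge of $\T$, the vertex $k$ lies on one side of it, say the $i$-side, so $\mu_{jk}=\mu_{ij}\mu_{ik}$. Substituting and simplifying yields $\mu_{ij}(1-\mu_{ik}^2)\le 8\epsilon$; since $k\ne i$ and every edge of $\T$ has $|\theta|\le 1-\beta$, we have $|\mu_{ik}|\le 1-\beta$, hence $\alpha\le\mu_{ij}\le 8\epsilon/\beta$, contradicting $\epsilon=\alpha\beta/8$. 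Note that only one containment is needed (both trees have $n-1$ edges), and that the upper bound $1-\beta$ is applied to correlations in $\T$, where it is assumed, not in $\T'$, where it is not. You actually mention this vertex $k$ in passing before switching to $w$; that was the thread to pull.
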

\begin{proof}
We use the following notational convention: random vector $X$ denotes a sample from the Ising model $(\T,\theta)$ and random vector $X'$ denotes a sample from the Ising model $(\T',\theta')$. 

Suppose that $i \sim j$ in $\T$ but they are not neighbors in $\T'$, so there exists at least one other node $k$ which lies on the path from $i$ to $j$. Also suppose without loss of generality that $\E[X_i X_j] \ge 0$ (otherwise, flip the sign of $X_j$).
Using the local TV assumption, by Fact~\ref{fact:corrLocTV}, we have
\begin{equation}\label{eqn:structure-eqn}
\E[X_i X_j] - 2\epsilon \le \E[X'_i X'_j] = \E[X'_i X'_k] \E[X'_k X'_j] \le (\E[X_i X_k] + 2\epsilon)(\E[X_k X_j] + 2\epsilon). \end{equation}
In $\T$, either the path from $k$ to $j$ passes through $i$ or the path from $k$ to $i$ passes through $j$, depending on which node $k$ is closer to. Assume without loss of generality (by symmetry) that the former case holds: then $\E[X_j X_k] = \E[X_i X_j]\E[X_i X_k]$. Expanding the RHS of \eqref{eqn:structure-eqn} then gives
\[ (\E[X_i X_k] + 2\epsilon)(\E[X_i X_j]\E[X_i X_k] + 2\epsilon) = \E[X_i X_k]^2\E[X_i X_j] + 2\epsilon(\E[X_i X_k] + \E[X_i X_j]) + 4\epsilon^2 \]
and combining with \eqref{eqn:structure-eqn} we see that
\[ \E[X_i X_j] \le \E[X_i X_k]^2 \E[X_i X_j] + 8\epsilon \]
and so
\[ \alpha \le \E[X_i X_j] \le \frac{8 \epsilon}{1 - \E[X_i X_k]^2} \le \frac{8 \epsilon}{1 - |\E[X_i X_k]|} \le \frac{8 \epsilon}{\beta} \]
which gives a contradiction using $\epsilon < \alpha \beta/8$. 
\end{proof}
Recall from \eqref{eqn:loctv-k-comparison} that $\loctvk{3}$ and $\loctvk{2}$ are equivalent up to constants for tree models. Using this, we establish optimality of the result.
\begin{remark}[Optimality]
The following example shows the optimality, up to constants, of Theorem~\ref{thm:prediction-to-structure}. Suppose in $\T$ that $i$ is the parent of $j$ and $k$, $\theta_{ik} = 1 - \beta$ and $\theta_{ij} = \alpha$. Suppose that in $\T'$, $k$ is the parent of $i$ and $j$ with $\theta'_{ik} = 1 - \beta$ and $\theta'_{kj} = \alpha$. Then the two models match in all degree $2$ moments except two: $\E[X_iX_j] = \alpha$ whereas $\E[X'_i X'_j] = (1 - \beta) \alpha$, and $\E[X_j X_k] = (1 - \beta)\alpha$ whereas $\E[X'_j X'_k] = \alpha$. Since the two cases are symmetrical, by Fact~\ref{fact:corrLocTV} we see
\[ \loctvk{2}(P,P') = (\alpha - (1 - \beta)\alpha)/2 = \alpha \beta/2. \]
\end{remark}

Theorem~\ref{thm:prediction-to-structure} above combined with our {\CLpp} guarantee in Corollary~\ref{c:oracle2} implies that {\CLpp} successfully recovers the structure of a tree Ising model at the same optimal sample complexity as the Chow-Liu algorithm, up to constant factors. 

\begin{corollary}\label{cor:minimaxStructure}
Let $(\T,\theta)$ be a tree Ising model with edge correlations $|\theta_{ij}| \in [\alpha,1 - \beta]$ with $\alpha,\beta \in (0,1)$. There is a constant $C$ such that given $m\geq C \log (n/\delta) / \alpha^2\beta^2$ samples, {\CLpp} returns the correct tree $\T$ with probability at least $1-\delta$. 
\end{corollary}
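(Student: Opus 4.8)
The plan is to compose two results already established in the paper: the oracle inequality for $\loctvk{k}$ (Corollary~\ref{c:oracle2}) and the prediction-to-structure reduction (Theorem~\ref{thm:prediction-to-structure}). Since the data is generated by a genuine tree Ising model $(\T,\theta)$, the misspecification error $\Delta = \inf_{Q \in \cT}\loctvk{2}(Q,P)$ equals $0$, so we may invoke Corollary~\ref{c:oracle2} with $\Delta = 0$ and $k = 3$: writing $C_1$ for the absolute constant there, given $m \geq C_1 \log(n/\delta)/\epsilon^2$ i.i.d.\ samples, with probability at least $1-\delta$ the {\CLpp} algorithm returns a tree Ising model $(\Th,\thetah{})$ with distribution $\Ph$ satisfying $\loctvk{3}(\Ph,P) \leq C_1 \cdot 3 \cdot 2^3 \cdot \epsilon = 24 C_1 \epsilon$. (We note that $\Delta = 0$ is permitted as input, so no exponential search over $\Delta$ is needed in this well-specified setting.)

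Next I would pick the accuracy parameter $\epsilon$ small enough to fall below the threshold $\alpha\beta/8$ demanded by Theorem~\ref{thm:prediction-to-structure}. Taking $\epsilon := \alpha\beta/(384 C_1)$ gives $\loctvk{3}(\Ph,P) \leq 24 C_1 \epsilon = \alpha\beta/16 < \alpha\beta/8$, while the sample requirement becomes $m \geq C_1 \log(n/\delta)/\epsilon^2 = C_1(384 C_1)^2 \log(n/\delta)/(\alpha^2\beta^2)$. Hence setting $C := C_1 (384 C_1)^2$ (or any larger absolute constant), the hypothesis $m \geq C \log(n/\delta)/(\alpha^2\beta^2)$ suffices to guarantee $\loctvk{3}(\Ph,P) < \alpha\beta/8$ with probability at least $1-\delta$.

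Finally I would apply Theorem~\ref{thm:prediction-to-structure} with $(\T',\theta') = (\Th,\thetah{})$: since the ground-truth model has every edge correlation in $[\alpha,1-\beta]$ and we have arranged $\loctvk{3}(P,\Ph) < \alpha\beta/8$, the theorem yields $\Th = \T$, i.e.\ {\CLpp} recovers the correct tree on the $(1-\delta)$-probability event from Corollary~\ref{c:oracle2}. This completes the proof.

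I do not anticipate a genuine obstacle here: the statement is a direct composition of two theorems proved earlier, and the only care required is constant bookkeeping — tracking the factor $k2^k = 24$ coming from the $\loctvk{3}$-versus-$\loctvk{2}$ comparison \eqref{eqn:loctv-k-comparison} used inside Corollary~\ref{c:oracle2}, and the explicit threshold $\alpha\beta/8$ from the reduction — so that the final sample bound has the claimed $\Theta(\log(n/\delta)/(\alpha^2\beta^2))$ form. (Matching optimality, i.e.\ that $\Omega(\log n/(\alpha^2\beta^2))$ samples are also necessary, follows from the known lower bounds for Chow-Liu structure learning cited in the introduction, and from the optimality remark after Theorem~\ref{thm:prediction-to-structure}.)
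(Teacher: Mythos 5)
Your proposal is correct and matches the paper's own proof: both compose Corollary~\ref{c:oracle2} (with $\Delta = 0$ since the model is well-specified) with Theorem~\ref{thm:prediction-to-structure}, choosing $\eps$ so that the resulting $\loctvk{3}$ error falls below the $\alpha\beta/8$ threshold. The only difference is cosmetic: you track the constants explicitly, while the paper absorbs them by saying the constant $C$ in the sample bound can be taken sufficiently large.
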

\begin{proof} Let $\eps = \alpha \beta / 8$ and suppose the constant $C$ describing $m$ in the statement is sufficiently large.
By Corollary~\ref{c:oracle2}, the learned tree model $\Ph$ satisfies $\loctvk 3(\Ph, P)< c k 2^k \eps < \eps$, where $c$ can be chosen to be an arbitrarily small constant by growing $C$ and $P$ is the distribution of the original model $(\T,\theta)$. Theorem~\ref{thm:prediction-to-structure} now implies that the tree underlying $\Ph$ is correct. 
\end{proof}

Theorem 3.1 of \cite{breslerkarzand19}, when converted to our parameterization, shows that $(\log n)/\al^2\beta^2$ samples are necessary to reconstruct the correct tree with constant probability. Thus the guarantee in the corollary just above proves the sample complexity is optimal up to constant factors.

\section{Failure of pure tree metric reconstruction approach}
\label{ssec:treemetricfails}
As explained in Section~\ref{ssec:chowliuoverview} above, our final learning algorithm combines the tree metric reconstruction based algorithm with the Chow-Liu algorithm, where the Chow-Liu algorithm is used as a kind of ``clustering algorithm'' which handles edges with weak correlation (large evolutionary distance). We already saw in Section~\ref{sec:CLfails} that purely using Chow-Liu fails (in a quite strong sense) to prove a guarantee similar to Theorem~\ref{t:CL++}. In this section, we give a complementary result, showing that the general tree metric reconstruction approach outlined in Section~\ref{ssec:tmroverview} cannot by itself obtain the guarantee of Theorem~\ref{t:CL++} --- in other words, the Chow-Liu preprocessing step is truly necessary to handle weak/long edges. 

There are obvious technical issues trying to adapt the analysis of \textsc{AdditiveMetricReconstruction} to deal with misspecification, since $\delta$-misspecification (or equivalent finite sample limitations) mean that evolutionary distances of size $\Omega(\log(1/\delta))$ are not at all accurate compared to the target tree model. Beyond those issues in the analysis, we next explain two different ways in which the pure tree metric reconstruction algorithm will fail, even with an infinite number of samples, both related to difficulties handling long edges; these failures show that if we wanted to remove the Chow-Liu step, then we would at the very least have to make the tree metric reconstruction procedure more complex.

\paragraph{First cause of failure: infinite distances.} Since the tree metric reconstruction algorithm relies on reconstructing the evolutionary distance $\d(u,v) = \log(1/\mu_{uv})$, the most obvious point of failure is that the chosen root node $\rho$ may have correlation $\mu_{\rho u} = 0$ with all other nodes $u$, in which case the algorithm fails when run with population values for the correlations. Indeed, the evolutionary distance satisfies $\d(\rho,u) = \infty$ and the centroid metric constructed by the algorithm satisfies $\csf(u,v) = \infty$ for all $u,v \neq \rho$. The algorithm fails since it must subtract an infinite value from an infinite value in order to output the reconstruction. 

\paragraph{Second cause of failure: desteinerization and latent variables.} The failure of the tree metric reconstruction algorithm is not simply due to the pathological case when there are perfectly-uncorrelated vertices. We demonstrate the issue with a counterexample of a different nature, where pairwise correlations are all nonzero.  

We start by describing a distribution $P$ over $\{\pm 1\}$-valued random variables $X,Y_1,Y_2$ with $\E[X] = \E[Y_1] = \E[Y_2] = 0$ and $\E[Y_1 Y_2] = 1/4$, $\E[X Y_1] = \E[X Y_2] = \delta$. This distribution is not a tree Ising model, but it can be realized as the marginal law of a tree Ising model with an additional latent variable (i.e. Steiner node in the metric interpretation) $Z$:
\begin{enumerate}
    \item $X$ is sampled from a Rademacher distribution, i.e. uniform on $\{\pm 1\}$.
    \item $Z$ equals $X$ with probability $2\delta$, otherwise $Z$ is sampled independently from a Rademacher distribution.
    \item With probability $1/2$, $Y_1 = Z$ and otherwise $Y_1$ is sampled from a Rademacher distribution. $Y_2$ is generated in the same way.
\end{enumerate}
It's clear that $P$ is within $\loctvk2$ distance $\delta$ of a tree Ising model where $X$ is independent of $Y_1,Y_2$. We now explain why $P$ is a problematic example for the pure tree metric reconstruction algorithm.

Outside of the present work, reconstruction algorithms based on the evolutionary metric are largely used to reconstruct tree models with latent variables, see e.g. \cite{daskalakis2011evolutionary}, in which case it is considered a virtue that the output metric includes the Steiner nodes. However, in the present context it is a problem: when our algorithm runs the initial tree metric reconstruction step (\textsc{AdditiveMetricReconstruction}), the output will contain the Steiner node $Z$ which is at distance $\log(1/2)$ from $Y_1,Y_2$ and at distance $\log(1/\delta)$ from $X$. Essentially, the problem is that the Steiner node is not close to any of the observable nodes, and this is an issue as the final output cannot have latent variables. More precisely, there is no way to desteinerize this tree in the sense of either (1) deleting the Steiner node if it unecessary (i.e. of degree $2$) or (2) replacing the steiner node/latent variable with an observed variable while keeping the other distances in the tree unchanged (as in \textsc{Desteinerize}): replacing $Z$ by any of $Y_1,Y_2$, or $X$ will result in a tree Ising model with $\loctvk2$ distance $\Omega(1)$ from $P$. For example, replacing $Z$ by $Y_1$ results in a model with $\E[Y_1 Y_2] = 1/2$ instead of $1/4$.

\section{Optimal Lower bound for TV Learning}
\label{sec:TVlower}
In recent work, Devroye et al \cite{devroye2020minimax} studied minimax rates for learning various kinds of graphical models in Total Variation (TV) distance. They proved that for a tree Ising model on $n$ nodes, the minimax rate for reconstructing the tree model in TV from $m$ samples is upper bounded by $O\left(\sqrt{\frac{n \log(n)}{m}}\right)$ and posed the tightness of this result as an open question. In this Appendix, we resolve this open question by proving a matching information-theoretic lower bound, showing that the minimax rate is $\Theta\left(\sqrt{\frac{n \log n}{m}}\right)$ up to a universal constant. 

\paragraph{Construction.}
Recall from Stirling's formula that $\log(n!) \sim n\log(n)$. This motivates the following
simple construction for the lower bound:
\begin{enumerate}
    \item $\mathcal{S}$ is a family of permutations on $[n]$ to be specified later.
    \item Pick a permutation $\pi$ from family of permutations $\mathcal{S}$.
    \item Build an Ising model on the matching graph with covariance $\alpha/\sqrt{n}$ between vertices $i$ and $n + \pi(i)$ for every $i \in [n]$. (This corresponds to edge weight essentially $\alpha/\sqrt{n}$ since the correlation for edge weight $\beta$ is $\tanh(\beta) \approx \beta$)
\end{enumerate}
This yields a family of distributions $\{P_{\pi}\}_{\pi \in \mathcal{S}}$ on $2n$ nodes.

For the set of permutations, we choose a set which satisfies $\log |\mathcal{S}| = \Omega(n \log(n))$ and such that every set of permutations has Hamming distance at least $n/4$, where the Hamming distance is $|\{x: \pi(x) \ne \pi'(x)\}|$. In the survey of Quistorff \cite{quistorff2006survey}, %
such a result is given as (6) and attributed to Deza. 

\paragraph{Analysis.}
The proof reduces to the following claims:
\begin{enumerate}
    \item Every two elements have large total variation distance. More specifically, to distinguish models from $\pi_1,\pi_2$ look at statistic $\sum_{i = 1}^n X_i X_{\pi_1(i)}$. Under $P_{\pi_1}$ the expectation is $\alpha \sqrt{n}$ and variance is $\Theta(n)$, whereas under $P_{\pi_2}$ the expectation is less than $(3/4) \alpha \sqrt{n}$ and variance is $\Theta(n)$. Applying the Central Limit Theorem, this shows the total variation distance is $\Omega(\alpha)$.
    \item Every two elements have reasonably small KL. Recall by the Gibbs variational formula \cite{ellis2007entropy} that
    \[ KL(P_{\pi_1},P_{\pi_2}) = (\frac{1}{2} \E_{\pi_2}[ X^T J_{\pi_2} X] + H_{\pi_2}(X))  - (\frac{1}{2} \E_{\pi_1}[ X^T J_{\pi_2} X] + H_{\pi_2}(X)). \]
    Since the entropies are the same, this is just a difference of expectations. By similar reasoning to above, it is of order $\Theta(\alpha^2)$ (there are $n/4$ missing edges, and they each contribute $(\alpha/\sqrt{n}) \cdot (c \alpha/\sqrt{n})$).
    \item Any algorithm given $m \le C_2 n\log(n)/\alpha^2$ samples fails to reconstruct with probability at least $1/2$. Since between any two models the KL for $m$ samples is of order $m \alpha^2$ by tensorization and (2), and $\log |\mathcal{S}| \sim n \log(n)$, this follows directly from Fano's inequality (see e.g. \cite{rigollet2015high}).
\end{enumerate}
Combining these claims shows that $\sqrt{n\log(n)/m}$ is the tight rate for learning tree Ising models in TV distance. 

\printbibliography
\end{document}